\documentclass[11pt]{article}
\usepackage[margin=1in, letterpaper]{geometry}
\usepackage[T1]{fontenc}

\title{Bandit PCA with Minimax Optimal Regret \vspace{0.5cm}}

\author{
   \textbf{Mo\"ise Blanchard}\\
   Georgia Tech\\
   \small{\texttt{mblanchard41@gatech.edu}}
   \and 
    \textbf{Dmitrii Ostrovskii}\\
   Georgia Tech\\
   \small{\texttt{ostrov@gatech.edu}}
   \and
   \textbf{Aadirupa Saha}\\
   University of Illinois, Chicago\\
   \small{\texttt{aadirupa@uic.edu}}
}

\date{}

\usepackage{fullpage}
\usepackage{microtype}
\usepackage{centernot}
\usepackage{amssymb}
\usepackage{pifont}
\usepackage{nicefrac}
\usepackage{cancel}
\usepackage{array}
\usepackage{graphicx}
\usepackage{dsfont}
\usepackage{booktabs} 
\usepackage{amssymb}
\usepackage{color}
\usepackage{amsmath}
\usepackage{amsthm}
\usepackage{thmtools}
\usepackage{thm-restate}
\usepackage[linesnumbered,ruled,noend]{algorithm2e}
\usepackage{enumitem}
\usepackage{setspace}
\usepackage{tikz}
\usepackage{bm}
\usepackage{hyperref}
\usepackage[capitalize]{cleveref}
\hypersetup{colorlinks=true,linkcolor=blue,citecolor=blue}
\usepackage{caption}
\usepackage{float}
\usepackage{mdframed}

\usepackage[mathscr]{eucal}
\usepackage{thm-restate}

\newtheorem{theorem}{Theorem}
\newtheorem{lemma}[theorem]{Lemma}

  \newtheorem{proposition}[theorem]{Proposition}
  \newtheorem{remark}[theorem]{Remark}

\newcommand{\Bcal}{\mathcal{B}}
\newcommand{\Ccal}{\mathcal{C}}
\newcommand{\Dcal}{\mathcal{D}}
\newcommand{\Ecal}{\mathcal{E}}
\newcommand{\Fcal}{\mathcal{F}}
\newcommand{\Gcal}{\mathcal{G}}
\newcommand{\Hcal}{\mathcal{H}}
\newcommand{\Ical}{\mathcal{I}}

\newcommand{\Ncal}{\mathcal{N}}

\newcommand{\Scal}{\mathcal{S}}

\newcommand{\Ucal}{\mathcal{U}}

\newcommand{\Wcal}{\mathcal{W}}

\newcommand{\Rcal}{\mathcal{R}}

\newcommand{\Cbb}{\mathbb{C}}
\newcommand{\Ebb}{\mathbb{E}}
\newcommand{\Nbb}{\mathbb{N}}
\newcommand{\Pbb}{\mathbb{P}}

\newcommand{\Rbb}{\mathbb{R}}

\newcommand{\1}{\mathds{1}}

\definecolor{dark_red}{rgb}{0.2,0,0}

\newcommand{\tr}{\mathrm{tr}}
\newcommand{\diag}{\mathrm{diag}}
\newcommand{\Span}{\mathrm{span}}
\newcommand{\Vol}{\mathrm{Vol}}
\newcommand{\Reg}{\mathrm{Reg}}
\newcommand{\Var}{\mathrm{Var}}

\newcommand{\paren}[1]{\left( #1 \right)}
\newcommand{\sqb}[1]{\left[ #1 \right]}
\newcommand{\set}[1]{\left\{ #1 \right\}}
\newcommand{\floor}[1]{\left\lfloor #1 \right\rfloor}
\newcommand{\ceil}[1]{\left\lceil #1 \right\rceil}
\newcommand{\abs}[1]{\left|#1\right|}
\newcommand{\norm}[1]{\left\|#1\right\|}
\newcommand{\dotp}[2]{\langle #1,#2 \rangle}

\newcommand{\comment}[1]{}

\usepackage{amsthm}
\usepackage{mathrsfs}
\usepackage{bigints}

\newcommand{\cE}{\mathcal{E}}
\newcommand{\cB}{\mathcal{B}}
\newcommand{\cU}{\mathcal{U}}

\newcommand{\R}{\mathbb R}
\newcommand{\E}{\mathbb E}
\newcommand{\Pp}{\mathbb P}
\newcommand{\Tr}{\operatorname{tr}}
\newcommand{\rank}{\operatorname{rank}}
\newcommand{\Gr}{\operatorname{Gr}}
\newcommand{\op}{\mathrm{op}}

\newcommand{\Exp}{\operatorname{Exp}}
\newcommand{\GammaDist}{\operatorname{Gamma}}
\newcommand{\BetaDist}{\operatorname{Beta}}

\newcommand{\Unif}{\operatorname{Unif}}

\newcommand{\Beta}{\operatorname{Beta}}

\newcommand{\Algo}{\mathscr{A}}
\newcommand{\Disc}{\mathscr{D}}

\newcommand{\vphtop}{{\vphantom\top}}

\renewcommand{\preceq}{\preccurlyeq}
\renewcommand{\succeq}{\succcurlyeq}

\renewcommand{\le}{\leqslant}
\renewcommand{\ge}{\geqslant}

\renewcommand{\leq}{\le}
\renewcommand{\geq}{\ge}

\begin{document}

\pagenumbering{gobble} 

\maketitle

\vspace{1.5cm}

\begin{abstract}
We study the bandit-feedback version of online principal component analysis (Bandit PCA):
in each round~$t = 1,\dots,T$, the adversary selects a~$d \times d$ symmetric gain matrix~$G_t$ with spectrum in~$[0,1]$ and rank at most~$r$; the learner simultaneously selects a unit vector~$\smash{w_t \in S^{d-1}}$ and receives the reward~$\smash{w_t^\top G_t^\vphtop w_t^\vphtop}$. 
The learner receives no other feedback, and aims to minimize the regret against the best unit vector in hindsight.
This problem was introduced by Kot{\l}owski and Neu (2019), who gave an algorithm with regret~$O(d\sqrt{rT \log T})$ and showed the lower bound of~$\Omega(r\sqrt{T/\log T})$.
We improve upon both of these bounds and essentially bridge the gap between them, establishing the minimax regret of order~$\smash{r\sqrt{dT}}$ up to polylogarithmic factors in~$d$ and~$T$. 
The upper bound is attained by a novel algorithm, which combines online mirror descent on the spectrahedron of (real) density matrices with a multiscale exploration scheme in which the eigenspaces with different spectral magnitudes are updated at different rates. For the lower bound, we construct an adaptive adversary that refines a hidden large-reward subspace based on the learner's actions, in such a way that low regret is impossible without estimating the subspace; 
as a result, lower-bounding the regret reduces to studying the arising subspace estimation problem. 
Finally, we discuss connections of Bandit PCA with adaptive-measurement quantum tomography.
\end{abstract}

\newpage
\doublespacing
\tableofcontents
\singlespacing

\newpage 
\pagenumbering{arabic}

\section{Introduction}
\label{sec:intro}

Online learning with bandit feedback stands at the intersection of
statistical decision theory and sequential optimization: the learner must act without access to counterfactual outcomes, relying solely on the scalar reward of the chosen action.  Classical results for multi-armed and linear bandits~\cite{auer2002finite,bubeck2012minimax} give regret bounds scaling as $\tilde O(\mathrm{poly}(d)\sqrt{T})$ in the number of rounds~$T$, where $d$ is the ambient dimension.
In this paper, we study matrix-valued bandit setting, in which actions are unit vectors in~$\mathbb{R}^d$, rewards are evaluations of quadratic forms, and the adversary controls a sequence of positive-semidefinite (PSD) gain matrices. 
Following~\cite{kotlowski2019bandit} who first introduced and studied this setting, we shall refer to it as \emph{Bandit Principal Component Analysis} (Bandit PCA).
Simply put, Bandit PCA is the noncommutative generalization of the classic multi-armed bandit (MAB) problem, obtained by replacing the probability simplex by the spectrahedron of unit-trace PSD matrices; respectively, the set of~$d$ ``arms'' is replaced with that of~$d \times d$ rank-one projectors (extreme points of the spectrahedron).
Conversely, MAB can be interpreted as the restricted version of Bandit PCA, where both parties must play in a fixed orthonormal basis; removing this constraint creates an extra challenge on both sides. 
Before discussing this further, let us formalize the setting.

\paragraph{Bandit PCA game.}
In each round~$t \in [T] := \{1, \dots, T\}$, the adversary selects a PSD {gain matrix}~$G_t$ with operator norm at most 1; simultaneously, the learner selects a unit vector~$w_t \in S^{d-1}$ and receives the reward~$\smash{\tr(G_t^\vphtop w_t^\vphtop w_t^\top) = w_t^\top G_t^\vphtop w_t^\vphtop}$. 
The learner receives partial feedback, observing only the scalar reward and not~$G_t$; meanwhile, the adversary observes~$w_t$, but only {after} selecting~$G_t$.
The goal of the learner is to minimize the regret against the best fixed direction in hindsight:
\[
  \Reg_T \;:=\; \max_{w \in S^{d-1}} \Ebb \Bigg[\sum_{t \in [T]} \tr(G_t w w^\top)
             \;-\; \sum_{t \in [T]} \tr(G_t^\vphtop w_t^\vphtop w_t^\top) \Bigg].
\]
Throughout, we will study the case where gain matrices are constrained to have rank at most $\smash{r\in[d]}$, which we will refer to as {\em rank-$r$ Bandit PCA}. Thus, the setting is defined by three parameters~$T, d, r$.
Bandit PCA is nontrivially connected with other problems. Let us briefly discuss these connections.

\paragraph{Rank-one case: connections to classic online PCA and phase retrieval.} 
For rank-1 Bandit PCA, i.e.~when~$G_t^\vphtop = x_t^\vphtop x_t^\top$~with~$x_t $ in the unit ball, the reward simplifies to~$(w_t^\top x_t)^2$. As noted in~\cite{kotlowski2019bandit}, the full-information counterpart of this scenario is the classic online PCA as introduced in~\cite{warmuth2006randomized}, where the learner aims to project a sequence of unknown vectors onto chosen directions~$w_1, \dots w_T$ selected online to maximize the cumulative sum~$(w_t^\top x_t)^2$; the benchmark is the top eigenvector of the cumulative gain matrix\footnote{Subsequent literature made the point of removing the rank-one constraint and replacing~$x_t x_t^\top$ with a general PSD gain matrix~\cite{nie2016online}. Arguably, this renders the general rank-$r$ Bandit PCA no less relevant than its rank-1 case.} $\sum_{t \in [T]} x_t^\vphtop x_t^\top$.
On the other hand,~\cite{kotlowski2019bandit} also observe that rank-1 bandit PCA can be thought of as an {adversarial} online version of {\em phase retrieval} -- the task of recovering a hidden vector~$x$ from the measurements~$\smash{|w_t^\top x|^2}$. The more conventional formulation (see e.g.~\cite{lattimore2021bandit}) corresponds to sequential measurements of a fixed target~$x^\star$, whereas in our setting the adversary changes~$x_t$ in each round.

\paragraph{Connection to quantum tomography.} This problem also naturally connects to (incoherent) \emph{quantum state tomography} \cite{banaszek2013focus,chen2023does,flammia2024quantum}, in which a learner aims to estimate an unknown quantum state $\rho$---Hermitian matrix in $\Cbb^{d\times d}$ with unit trace---from a sequence of measurements on copies of this unknown state.\footnote{In the \emph{coherent} version of quantum state tomography, a learner is allowed to make arbitrary measurements on the product state $\rho^{\otimes n}$ \cite{haah2016sample,o2016efficient}. However, implementing such measurements in the real world seems challenging, and as such the \emph{incoherent} version of the problem has received increasing interest for its practicality \cite{huang2022quantum}.} In comparison, in Bandit PCA the learner tracks the top eigenvector of a changing gain matrix, which can be interpreted as tracking the most probable eigenstate of an evolving measurement operator. (In the following discussion, quantum states and operators are real.)

Furthermore, consider the scenario where gain matrices are restricted to the spectrahedron~$\Scal_d$ of {\em unit-trace} PSD matrices; note that in rank-1 Bandit PCA, this holds automatically.
In this scenario, a Bandit PCA learner performs {\em quantum tomography} of an evolving state~\cite{aaronson2007learnability}, with the~$r = 1$ case corresponding to the tomography of a {pure} state. 
Indeed, each matrix in~$\Scal_d$ corresponds to a quantum state~$\rho_t$, and selecting~$w_t\in S^{d-1}$ corresponds to performing a quantum measurement onto $\rho_t$ with the positive operator valued measurement (POVM)~$\{w_tw_t^\top,I-w_tw_t^\top\}$, which returns a Bernoulli random variable with success probability given by the reward~$\dotp{\rho_t}{w_tw_t^\top}=\tr(G_tw_tw_t^\top)$. As a minor difference with our setting, note that in Bandit PCA, one directly observes the mean of this measurement $\dotp{\rho_t}{w_tw_t^\top}$ rather than the random bit. Up to this minor difference, however, the Bandit PCA problem with gain matrices in~$\Scal_d$ corresponds to the following quantum stabilization problem: at each round $t\in[T]$, the learner faces an unknown quantum state $\rho_t$ and selects a pure state $w_tw_t^\top$ onto which it aims to stabilize~$\rho_t$ by making the POVM measurement $\{w_tw_t^\top, I-w_tw_t^\top\}$. The learner then receives the reward of~$1$ if the collapsed state is indeed~$w_tw_t^\top$, and of~$0$ otherwise. 

\paragraph{Prior work and challenges.}
Since the reward at each iteration is linear in the selected projection matrix $w_tw_t^\top$, the bandit PCA problem can be naively reduced to standard linear bandits, by vectorizing the matrices involved.\cite{warmuth2006randomized}. This vectorization needs ambient dimension $O(d^2)$, which leads to the regret $O(d^2\sqrt{T\log T})$ for the continuous version of the Exponential Weights algorithm (EXP2); see~\cite{dani2007price,hazan2010learning,hoeven2018many}. On the other hand, even when the eigenbasis of the gain matrix is fixed and known a priori (equivalently, if these matrices are diagonal), the problem reduces to classical MAB with $d$ arms, for which the regret admits an~$\Omega(\sqrt{dT})$ lower bound \cite{auer2002nonstochastic}.

The above gap was tightened by~\cite{kotlowski2019bandit} who provided an algorithm 
that achieves regret $O(d\sqrt{rT\log T})$ when the gain matrices are of rank at most $r$. In fact, their regret bound more generally holds when $r$ is the average squared Frobenius norm of the gain matrices. On the lower-bound side, they showed a $\Omega(r\sqrt{T/\log T})$ regret lower bound,\footnote{They prove this bound specifically for $r=d$ but this can be directly extended to lower rank by restricting the problem to a subspace of dimension $r$.} leaving a gap of order $\sqrt{d\min(r,d/r)}$.
Their algorithm follows the classical template of Online Mirror Descent (OMD) \cite{nemirovski1983problem,beck2003mirror,hazan2016introduction} over the spectrahedron
$\Scal_d$ in which the learner sequentially performs mirror descent updates on a density matrix $W_t\in\Scal_d$. Implementing this template relies on two crucial coupled challenges: (1) choosing how to sample at each round the vector $w_t$ in order to match the OMD density matrix in expectation $\Ebb_t[w_tw_t^\top]=W_t$; (2) using the corresponding scalar feedback $\tr(G_tw_tw_t^\top)$ to construct an ideally unbiaised estimator $\widehat G_t$ of the gain matrix $G_t$ with small variance. Addressing both these challenges required from~\cite{kotlowski2019bandit} to introduce novel techniques. First, they devised a sampling scheme that aims to separately estimate diagonal and off-diagonal entries of the gain matrix $G_t$; second, they used the negative log determinant $-\log\det(W_t)$ as a regularizer \cite{tsuda2005matrix}. 
It is worth noting that the standard analysis seems to fail for more standard choices of regularizers, including the standard and Tsallis matrix entropies~\cite{allen2015spectral}; see the discussion in~\cite[Appendix C]{kotlowski2019bandit}. This contrasts with the full-information setting, for which the standard matrix entropy yields optimal regret \cite{nie2016online}.

While we preserve the template of OMD with~$-\log\det(\cdot)$ regularizer, it turns out that closing the regret gap requires further innovations on both fronts -- both for sampling and for gain estimation.
At the high level, the suboptimal regret factors in the upper bound of~\cite{kotlowski2019bandit} arise from the estimation of the off-diagonal terms in the gain matrix, each scaling as~$\Theta(d^2)$. 
These are crucial for PCA, since they enclose crucial information on whether the top principal component has rotated: as a concrete example in dimension~2, consider~$G = (1-\theta^2) e_1^\vphtop e_1^\top \pm \theta (e_1^\vphtop e_2^\top + e_2^\vphtop e_1^\top)$ and note that~$\theta \gg \theta^2$ for~$\theta \ll 1$.
Hence, improving the estimation quality of the off-diagonal entries of the gain matrix, and exploiting the structure of such an estimate in a sampling scheme, is key to lowering the regret. 
As a concrete implementation of this idea, we introduce (1) a \emph{layered sampling scheme} for $w_t$, in which we group eigenvectors from the OMD matrix $W_t$ by eigenvalue magnitude, then perform a uniform exploration over carefully selected groups; (2) a layered \emph{epoch-based estimator} of the gain matrices. Intuitively, for each group, we batch the estimation of their off-diagonal blocks throughout their epoch. This turns out to significantly reduce the variance for off-diagonal blocks once amortized over the epoch.

\begin{table}[t]
\centering
\begin{tabular}{lccc}
\toprule
\textbf{Approach}
& \textbf{Upper bound}
& \textbf{Lower bound} 
& \textbf{Reference}\\
\midrule
Linear bandits / EXP2
& $d^2\sqrt{T}$
& --
&\cite{dani2007price,hoeven2018many}\\
MAB reduction & -- & $\sqrt{dT}$ &
\cite{auer2002nonstochastic}\\
OMD with log-det reg.
& $d\sqrt{rT}$
& $r\sqrt{T}$ 
&\cite{kotlowski2019bandit}\\
Epoch-OMD + layered exploration
& $r\sqrt{dT}$
& $r\sqrt{dT}$ 
&\textbf{This work}\\
\bottomrule
\end{tabular}
\caption{Regret bounds for bandit PCA with spectral norm at most one and rank $\leq r$ (or nuclear norm $\leq r$). Logarithmic factors in $d$ and $T$ are omitted for simplicity.}
\label{tab:bandit-pca-bounds}
\end{table}

\paragraph{Our contributions.} We characterize the minimax regret of rank-$r$ Bandit PCA up to logarithmic factors. Specifically, our first main result is an algorithm coming with the following regret guarantee.
\begin{theorem}[Simplified \cref{thm:ub}]
\label{thm:ub-intro}
  Fix $r\in[d]$. Suppose $\|G_t\|_{\mathrm{op}} \leq 1$ and $\|G_t\|_* \leq r$
  for all $t \in [T]$. Then, Algorithm~\ref{alg:final_alg} run with appropriate parameters achieves, for some universal constant $C>0$,
  \begin{equation*}
    \Reg_T \leq C  r\sqrt{d T}\,\log^3(edT).
  \end{equation*}
\end{theorem}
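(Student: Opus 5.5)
The proof follows the online mirror descent template over the spectrahedron $\Scal_d$ with the log-barrier regularizer $R(W) = -\log\det W$. It adds two modifications: a layered sampling scheme and an epoch-based, blockwise gain estimator.

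\textbf{Step 1: reduction to OMD on estimated gains.} Let $W_t$ denote the OMD iterate and $\widetilde W_t = (1-\gamma)W_t + \gamma I/d$ a mixed version. Sample $w_t$ so that $\Ebb_t[w_t w_t^\top] = \widetilde W_t$. With unbiased (or blockwise unbiased) estimators $\widehat G_t$, the standard log-barrier OMD analysis against a comparator smoothed toward $I/d$ gives
\[
\Reg_T \lesssim \gamma T + \frac{d\log(dT)}{\eta} + \eta\sum_t \Ebb\,\|W_t^{1/2}\widehat G_t W_t^{1/2}\|_F^2 ,
\]
provided the stability condition $\eta\, \|W_t^{1/2}\widehat G_t W_t^{1/2}\|_{\op} \lesssim 1$ holds. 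The whole difficulty is then to control the local-norm variance term by roughly $r^2 T/d$ up to logarithms. Indeed, choosing $\eta \asymp \sqrt{d}/(r\sqrt{T})$ then gives the claimed $r\sqrt{dT}$ bound, since $d\log/\eta \asymp r\sqrt{dT}\log$.

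\textbf{Step 2: layered decomposition.} Diagonalize $W_t$ and group its eigenvectors into $O(\log(dT))$ layers $L_k$ with eigenvalues in $(2^{-k-1},2^{-k}]$. Decompose $G_t$ into blocks $P_{L_k} G_t P_{L_l}$. The diagonal-in-layer part is estimated by sampling an eigenvector within the layer, exactly as in MAB; its contribution to the variance is $\sum_i \lambda_i^2 (G_t)_{ii}^2/\lambda_i \le \tr(G_t)\le r$ per round. For the off-diagonal blocks between layers $k$ and $l$, sample $w_t = (u \pm v)/\sqrt2$ with $u,v$ uniform on the unit spheres of the respective layer spans. The scaled weight is $\lambda_k\lambda_l$ times $|L_k||L_l|$ in squared magnitude, and this is where the naive $d^2$ factors of Kot{\l}owski and Neu appear. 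To avoid them, freeze the layer bases over an epoch of length $\tau_{k}$ adapted to the layer's magnitude. Within the epoch, estimate the \emph{average} block $\frac1{\tau}\sum G_t$ restricted to the frozen subspaces, and use that average in a lazy (epoch-level) OMD update. Averaging divides the variance by $\tau$, while the bias from lagging the update is controlled by a stability argument for the log-barrier: iterates change by a multiplicative factor of at most $1+O(\eta)$ per step in the local norm. Epoch lengths are then chosen to balance these terms layer by layer. Combining with the constraint $\|G_t\|_*\le r$, $\|G_t\|_{\op}\le1$ via $\sum_{i,j}\lambda_i\lambda_j G_{ij}^2 \le \|G\|_F^2\max\lambda \le r$, the goal is to show the total variance is $\lesssim r^2 T\log^{O(1)}/d \cdot d/\ldots$. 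Concretely, the target is the per-round amortized bound $\eta\,\Ebb\|\cdot\|_F^2 \lesssim \eta\, r^2 \log^2(dT)$, which after the choice of $\eta$ yields the overall $\log^3$.

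\textbf{Main obstacle.} The hard step is the coupled analysis of epochs across layers. Eigenvalues migrate between layers as OMD updates, so the frozen bases become stale. The argument must show that the lazy updates (i) remain valid OMD steps up to a controlled drift term, and (ii) satisfy the stability condition even though the batched estimator has a large operator norm within an epoch. My first attempt would be to restart layer assignments only at epoch boundaries (dyadic in $t$), and to bound the drift via the multiplicative stability of $-\log\det$. For the operator-norm condition, I would try a truncation or clipping of rare large estimates, paying an extra $\log$ factor. This accounts for one of the three logarithms; the others come from the number of layers and the regularizer range.
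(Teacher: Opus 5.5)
\textbf{Genuine gap: the mechanism by which batching pays off, and the handling of lag.}

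Your architecture matches the paper's: log-det OMD mixed toward $I/d$, layers by eigenvalue scale, bases frozen on dyadic epochs, and batched off-diagonal estimates. But the reason you give for why batching helps does not work, and the two facts that actually carry the proof are missing.

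You charge stability in the local Frobenius norm $\|W_t^{1/2}\widehat G_tW_t^{1/2}\|_F^2$. In that norm batching buys nothing. An epoch's estimates must enter OMD with their full weight, since otherwise they are biased against the comparator. In a fixed Hilbert norm, the expected squared norm of a sum of conditionally centered increments equals the sum of their expected squared norms. So the variance is not divided by $\tau$; spreading the average over later rounds only trades this for lag.

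The paper's gain is in the \emph{operator} norm. About $d_{\le\alpha}$ terms $w_sw_s^\top$ with $w_s$ uniform on the sphere of $E_{\le\alpha}$ sum to operator norm $O(1)$, while their Frobenius norm grows like $\sqrt N$. This gives $\|A_{t,\alpha}\|_{\op}\lesssim r\log^2(dT/\delta)/\mu_\alpha$ once per epoch of length $2^\alpha=1/\mu_\alpha$ (\cref{lemma:operator_norm_by_phase_final,lemma:bounding_operator_norm}). The paper cashes this in through the block stability bound of \cref{lemma:updated_bound_stability_term}: if $W$ is block diagonal, $G$ vanishes on the high-eigenvalue block, and $W_{22}\preceq\mu I$, then $\dotp{W_1-W}{G}\le 11\mu\|G\|_{\op}^2$ whenever $\|G\|_{\op}\le 1/(4\mu)$.

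That lemma replaces your condition $\eta\|W^{1/2}\widehat GW^{1/2}\|_{\op}\lesssim1$, which genuinely fails for deep layers. For example, the cross block between a weight-$\Theta(1)$ eigenvector and a layer with $\mu_\alpha d_{\le\alpha}\ll1$ typically has local operator norm of order $r/\sqrt{\mu_\alpha d_{\le\alpha}}$, which can reach $r\sqrt T$. Clipping biases exactly the off-diagonal blocks that carry the rotation signal.

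\textbf{Defect of the pairwise design.} With $w_t=(u\pm v)/\sqrt2$, the reward contains $\tfrac12u^\top G_tu$, which need not be small. Take $W$ with one eigenvector $u_1$ of weight $1/2$ and the rest $\approx1/(2d)$, and $G_t=u_1u_1^\top$. Then $\ell_t^2=1/2$ on every such draw, the target cross block is zero, and the local variance of the cross-block estimator is $\Omega(d)$ instead of $O(r^2)=O(1)$. The paper avoids this by sampling uniformly on the whole nested span $E_{t,\le\alpha}$, so any single direction carries only about $1/d_{\le\alpha}$ of the mass, with probability $p_{t,\alpha}=\frac14\mu_\alpha d_{t,\le\alpha}$. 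It then estimates the whole inverse-L region at once via $(d_{\le\alpha}+2)ww^\top-P_{E_{\le\alpha}}$. Also, your Step~1 target should be $r^2$ per round, i.e.\ $r^2T$ in total, not $r^2T/d$.

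\textbf{Lag.} Drift control cannot handle the lag. The $1+O(\eta)$ multiplicative stability you invoke is unavailable: the per-step change is governed by $\eta\|W_t^{1/2}\widehat G_tW_t^{1/2}\|_{\op}$, which is far larger than $\eta$ for these importance-weighted estimators. Moreover, under the covariance budget $p_\alpha\lesssim\mu_\alpha d_{\le\alpha}$, epochs must last about $1/\mu_\alpha$ rounds to collect about $d_{\le\alpha}$ samples. For the coarsest layer that is $2^L=d/\gamma=dT$ rounds when $\gamma=1/T$.

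The paper makes deferral exactly free. Within a layer-$\alpha$ epoch, $W_s$ stays block diagonal along $E_{t_b,<\alpha}$, along the frozen eigenvectors spanning $E_{t_b,\alpha}$, and along $E_{t_b,\beta}$ for $\beta>\alpha$ (\cref{lemma:well_defined_algorithm}). Since $B_{t_e,\alpha}$ lives on the inverse-L region with its diagonal removed, $\dotp{W_s}{B_{t_e,\alpha}}=0$ for every $s$ in the epoch.

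Keeping this structure meaningful requires the eigenvalue stability of \cref{lemma:stability_eigenvalues}, which you list as the main obstacle but leave open. It states that, with high probability, eigenvalues on $E_{t,\alpha}$ stay in $[\tfrac34\mu_\alpha,\tfrac94\mu_\alpha]$ throughout each epoch. The paper proves it by induction over the binary digits of $t$, using Weyl and Mirsky perturbation bounds fed by the operator-norm bound on batched blocks and a Freedman bound on the nuclear norm of the diagonal estimates. It is needed both for the hypothesis $W_{22}\preceq\mu I$ above and for the feasibility $R_t\succeq0$ of the layered sampling (\cref{lemma:R_not_negative}).
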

This is complemented by the following adaptive regret lower bound, tight up to logarithmic factors. (Note that, in addition to~\cref{thm:ub-intro}, we always have the trivial upper bound $\Reg_T\leq T$.)
\begin{theorem}
\label{thm:lb-intro}
  Fix $r\in[d]$. For any algorithm there is an adaptive instance satisfying $\|G_t\|_{\op}\leq 1$ and $\mathrm{rank}(G_t)\leq r$ for all $t\in[T]$ such that, for some universal constant $c>0$,
  \begin{equation*}
      \Reg_T \geq c\min \bigl(r\sqrt{dT/\log(ed)},T \bigr)/\log(eT).
  \end{equation*}
\end{theorem}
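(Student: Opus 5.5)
We prove \cref{thm:lb-intro} by constructing an adaptive adversary that hides a large-reward subspace and refines it using the learner's actions, following the strategy sketched in the abstract. We first reduce to a single ``block'' problem and then tensorize. Split $\Rbb^d$ into $r$ orthogonal blocks of dimension $m=\lfloor d/r\rfloor$ (when $r>d/2$ we instead embed into dimension $2r$ and keep $d$ as the ambient dimension; below we assume $d\ge 2r$). The gain is $G_t=\sum_{i\le r} g_t^{(i)}$, where each $g_t^{(i)}$ is a rank-one PSD matrix supported in block~$i$; then $\rank(G_t)\le r$ and $\|G_t\|_{\op}\le 1$. A unit vector $w_t$ splits its mass $\|P_i w_t\|^2=:p_{t,i}$ across blocks, and the reward decomposes as $\sum_i p_{t,i}\, \bar w_{t,i}^\top g_t^{(i)}\bar w_{t,i}$. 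The comparator may only pick one block. To obtain the factor $r$ rather than $\sqrt r$, we therefore use per-block gains of magnitude $\Theta(1)$ and additionally include a common baseline on all blocks. For example, set $g_t^{(i)}=\tfrac12 x_t^{(i)}x_t^{(i)\top}$ plus a planted bump, so that the comparator's advantage accumulates in the block that is hardest to learn. In the regime where the bound is nontrivial, we expect that a cleaner route is a single block of dimension $d$ with gains scaled by $r$. More precisely, take $G_t = r\,(\text{rank-one})$ restricted to have operator norm at most $1$: take $G_t=P_{U_t}$, the projector onto an $r$-dimensional subspace. The reward is then $\|P_{U_t}w_t\|^2$, while the comparator earns more on a hidden direction $u^\star$ lying inside every $U_t$.

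The core instance is as follows. Draw a hidden unit vector $u^\star$ uniformly on $S^{d-1}$, and choose a gap $\varepsilon\asymp \min(1, \sqrt{d/(T\log d)})\cdot$(appropriate scaling in $r$). In each round, the adversary outputs $G_t=(1/2+\varepsilon)\,u^\star u^{\star\top}$ plus $1/2$ times the projector onto $r-1$ fresh random directions chosen orthogonal to $u^\star$, so that the observation is noisy. The adaptivity is what makes the lower bound work. The adversary picks the noise directions correlated with $w_t$, so that the learner's reward is dominated by noise of order $r/d$ per unit of alignment, unless $w_t$ is already close to $u^\star$. This is how we plan to ``refine the subspace'' based on the learner's actions. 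Regret then satisfies $\Reg_T\gtrsim \varepsilon\sum_t \Ebb[1-\langle w_t,u^\star\rangle^2]$, minus whatever noise reward the learner can collect; the noise reward is designed to be the same for every learner.

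The main step is an information bound showing that estimating $u^\star$ is hard. Let $\theta_t=\langle w_t,u^\star\rangle^2$. We bound the per-round KL divergence (or the $\chi^2$ information) between the observation laws under $u^\star$ and under a perturbed $u^\star$ by $\varepsilon^2\,\Ebb\theta_t/\sigma^2$, where $\sigma^2\asymp 1/r$ is the effective noise variance induced by the adversarial noise directions. We then apply a Fano/Assouad-type argument over a $\delta$-packing of the sphere of size $e^{\Theta(d)}$, together with a bootstrapping/ladder argument over dyadic scales of $\theta$. The ladder is where the $\log(eT)$ loss comes from. Its content is that $\Ebb\theta_t$ can only grow from $1/d$ to constant after accumulated information $\sum_s\varepsilon^2\theta_s r\gtrsim d$ at each dyadic level. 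Summing over the levels gives $\sum_t(1-\theta_t)\gtrsim T$ whenever $T\varepsilon^2 r\lesssim d/\log d$. Optimizing, $\varepsilon\asymp\sqrt{d/(rT\log d)}$ yields regret $\varepsilon T\cdot r$ after rescaling the gap by $r$ (the gain per unit of alignment on $u^\star$ can be made of order $r\varepsilon$ by concentrating the trace of $G_t$), which gives $r\sqrt{dT/\log d}$. Capping at $\varepsilon\le 1$ gives the $\min(\cdot,T)$.

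We expect the main obstacle to be making the noise adaptive while keeping the per-round information proportional to $\theta_t$ rather than to a constant. A learner could otherwise probe directions at random and gain $\Omega(1/d)$ information per round regardless of its current estimate. This would only give $\sqrt{T}$-type bounds without the $\sqrt d$ factor, or it would break the $r$ scaling. To handle this, I would first try an adversary that conditions on the past and places its $r-1$ noise directions inside the learner's current posterior-uncertainty subspace. I would then verify through a careful chain-rule computation of the KL divergence that each observation leaks at most $O(\varepsilon^2 r\,\theta_t)$ nats.
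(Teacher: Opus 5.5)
Your proposal has a genuine gap: the core instance is not hard in the way you need, and the final scaling step does not produce the factor $r$. Take $G_t=(\tfrac12+\varepsilon)u^\star u^{\star\top}+\tfrac12P_{N_t}$ with $N_t$ a fresh random $(r-1)$-dimensional subspace of $u^{\star\perp}$. The mean gain is $\tfrac12+\varepsilon$ along $u^\star$ but only about $\tfrac{r}{2d}$ along every other direction. So what separates $u^\star$ from a perturbation is of order $\tfrac12$, not $\varepsilon$. Moreover, the noise $\tfrac12\|P_{N_t}w\|^2$ has variance $O(r/d^2)$, not order $1/r$. Hence even uniformly random queries have signal-to-noise ratio of order $1/r$ about $\dotp{w}{u^\star}^2$. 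A learner can therefore locate $u^\star$ within $\mathrm{poly}(d)$ rounds (e.g.\ by averaging the sphere estimator of \cref{lemma:unbiased_hat_L_final}), regardless of $\varepsilon$ and $T$.

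Your fix, choosing the noise ``correlated with $w_t$'', is not available: the adversary commits to $G_t$ before seeing $w_t$. Any placement computed from the past captures only an $O(r/d)$ fraction of a uniformly random query, and the ``posterior-uncertainty'' placement is never specified.

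The scaling step also fails on its own terms. Grant your KL bound and the dyadic ladder. The constraint $T\varepsilon^2r\lesssim d/\log d$ then gives regret $\varepsilon T\asymp\sqrt{dT/(r\log d)}$, which decreases with $r$. Multiplying the gap by $r$ multiplies your per-round KL by $r^2$, so the constraint must be re-solved and nothing is gained. In any case, with your $\varepsilon$ one gets $r\cdot\varepsilon T=\sqrt{r\,dT/\log d}$, not $r\sqrt{dT/\log d}$.

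Two ideas are missing, and both are what the paper relies on.

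First, the factor $r$ comes from multiplicative noise. \cref{alg:gain_sampling} produces rank-$r$ gains whose rank-one marginals are exactly $\tfrac rd\dotp{\overline G_t}{ww^\top}\Exp(1)$ (\cref{lem:sampling}). So the comparator's per-round advantage $\tfrac{r\nu}{d}$ and the observation noise scale together with $r$, and the information per query does not depend on $r$. The $\log(eT)$ loss comes from rescaling these unbounded gains to operator norm $1$, not from a ladder.

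Second, a single hidden direction is the wrong object, since once found it can be exploited for the rest of the game. The paper instead hides a $p$-dimensional subspace $E$ with $p\asymp d/\max(\nu,2)$ and $\nu\asymp\sqrt{d^3/(T\log d)}$. Its complexity $\asymp p(d-p)$ grows with $T$, unlike the $e^{\Theta(d)}$ packing in your Fano step. The paper uses adaptivity not to inject noise but to stop paying for what has already been found: $\overline G_t=I+\nu P_{W^\perp}HP_{W^\perp}$, where $W$ grows by randomized rounding of the learner's own queries whenever a running-average test fires. Low regret then forces $\Theta(p)$ discoveries inside $E$.

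The lower bound then follows in three steps:
\begin{enumerate}
\item Simulate this adversary from the intensity oracle via the Poisson--Gamma augmentation (\cref{lem:augmentation}).
\item Boost subspace discovery to covariance estimation; this is the source of the $\log d$.
\item Track posterior mass on Grassmannian balls. The log-volume ratio is $\gtrsim p(d-p)$ (\cref{lemma:ratio_volumes_bound}), while each query lowers it by only $O(\beta^2\nu^2/(p\max(1,\nu)^2))$ (\cref{lemma:tighter_computations}).
\end{enumerate}
Thus $\Omega(p^2d)=\Omega(d^3/\nu^2)$ queries are needed when $\nu\ge1$, which is more than the $O(T\log d)$ available at the chosen $\nu$. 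The regret is therefore at least of order $\tfrac{r\nu}{d}T\asymp r\sqrt{dT/\log d}$.
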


These results are summarized in Table~\ref{tab:bandit-pca-bounds}. Interestingly, these findings show that both previous upper and lower bounds were loose and tight for different regimes of the rank parameter~$r$. 
Specifically, for small rank~$r=\Theta(1)$, the naive MAB lower bound is, in fact, tight, and our algorithm achieves an~$\tilde O(\sqrt{d})$ improvement over previous work. 
On the other hand, for large rank~$r=\Theta(d)$, the previous algorithm from \cite{kotlowski2019bandit} was already tight up to logarithmic factors and we instead improve the lower bound by an $\tilde O(\sqrt{d})$ factor.
Our proposed algorithm (Algorithm~\ref{alg:final_alg}) follows the same OMD template with a negative log-determinant regularizer as in \cite{kotlowski2019bandit} but introduces the following new ideas:

\begin{enumerate}[label=(\roman*)]
    \item \textbf{Layered sampling scheme (\cref{alg:layered_sampling}).}
    Eigenvectors of the OMD iterate $W_t$ are partitioned into $L = O(\log(dT))$
    \emph{layers} indexed by eigenvalue magnitude~$\mu_\alpha =
    2^{-\alpha}$, forming the corresponding layer subspace $E_\alpha$. We then select $W_t$ by sampling the layer to be explored then sampling $w_t$ uniformly within the span of all previous layers $E_{\leq\alpha}:=\Span(E_\beta,\beta\leq \alpha)$.
    \item \textbf{Epoch-based gain estimator with batching (\cref{alg:gain_estimation}).} We estimate gain matrices by layer $\alpha\in[L]$ along the reverse-L-shaped region represented by $G^{(\alpha)}:=P_{E_{\leq\alpha}}GP_{E_{\leq\alpha}} - P_{E_{<\alpha}}GP_{E_{<\alpha}}$ (see~\cref{fig:layered_estimators}). This corresponds to the diagonal block and all off-diagonal blocks involving layer $E_{\alpha}$. To fully reap the benefit of the uniform sampling of $w_t$ onto $E_{\leq\alpha}$, we \emph{batch} $G^{(\alpha)}$ estimates along \emph{epochs}. This is because the variance analysis involves bounding the spectral norm of the gain matrix estimator; meanwhile, the uniform sampling of $w_t$ for layer $\alpha$ explores nearly orthogonal directions, which does not significantly increase the spectral norm of the estimate (at least for the first $O(\dim(E_{\leq\alpha}))$ samples in this layer). To achieve the optimal variance reduction (amortized over epochs), we 
    use a hierarchical schedule for epochs of each layer: epochs for layer $\alpha\in[L]$ last for $2^\alpha$ rounds, and so last for $2$ layer-$(\alpha-1)$ epochs.
    \item \textbf{Lazy update for layers (\cref{alg:subspace_update}) and diagonal handling.} Using a batching strategy within the OMD template requires handling two specific details. First, we need to ensure that the exploration subspace $E_{\leq\alpha}$ is kept constant throughout a layer-$\alpha$ epoch. To this end, we ``lazily'' update each~$E_\alpha$ only at the end of its respective epoch. Second, the projection step of OMD updates for the previous layers ($\beta<\alpha$) may indirectly affect the diagonal terms in the $E_\alpha\times E_\alpha$ block of the iterate $W_t$. To address this, it suffices to separately estimating the diagonal of~$G$ as done in \cite{kotlowski2019bandit}.
\end{enumerate}

\section{Problem statement}
\label{sec:problem}

\paragraph{Notation.}
We write~$[n] := \{1,\ldots,n\}$, 
We use the standard notation~$S^{d-1} := \{x \in \mathbb{R}^d :
\|x\|_2 = 1\}$ and~$B_d(x,r) := \{y \in \mathbb{R}^d : \|y-x\|_2
\leq r\}$ for the unit sphere and~$\ell_2$-balls in~$\R^d$. 
$\mathbb{S}_+^d$ denotes the cone of $d\times d$ real
symmetric PSD matrices; $\mathcal{S}_d := \{W \in \mathbb{S}_+^d :
\tr(W) = 1\}$ is its unit-trace section. $\|\cdot\|_{\mathrm{op}}$ and
$\|\cdot\|_*$ denote the operator and nuclear norms; $\langle A, B \rangle := \tr(AB^\top)$ the matrix inner product. For a subspace~$E \subseteq \mathbb{R}^d$, $P_E$ denotes the orthogonal projector onto~$E$. For a sequence $(M_t)_{t\geq 1}$ and $a \leq b$, we let~$M_{a:b} := \sum_{s=a}^b M_s$. 
We write $\mathbb{E}_t[\,\cdot\,]$ for conditional expectation given all history at the start of round $t$. We use the standard Landau $O,\Theta,\Omega$ notation.
More notation will be introduced as needed.

\paragraph{General Bandit PCA protocol.}
Fix a horizon $T \in \mathbb{N}$ and a rank budget $r \in[d]$. 
In each round $t=1,\ldots,T$, the interaction proceeds as follows:
\begin{enumerate}[label=(\arabic*)]
  \item The adversary selects a gain matrix
        $G_t \in \mathbb{S}_+^d$ satisfying
        $\|G_t\|_{\mathrm{op}} \leq 1$ and $\mathrm{rank}(G_t)\leq r$. When specified, we will consider the nuclear-norm convex relaxation of the rank constraint: $\|G_t\|_*\leq r$.
  \item The learner selects a vector in the unit ball:~$w_t \in B_d(0,1).$
  \item The learner observes the scalar reward
        \[
        \ell_t^2 := \tr(G_t\, w_t w_t^\top) = w_t^\top G_t w_t.
        \]
\end{enumerate}
The learner's choice of $w_t$ may depend on the past history $\Hcal_{t-1}:=\sigma(w_i,\ell_i^2,i<t)$ but \emph{not} on $G_t$. 
The learner aims to minimize the \emph{regret} against the best fixed direction:
\begin{equation}
\label{eq:regret}
  \Reg_T := \max_{w \in S^{d-1}}
  \Ebb\sqb{\sum_{t=1}^T \tr(G_t ww^\top)
  -
  \sum_{t=1}^T \tr (G_tw_tw_t^\top)},
\end{equation}
where the expectation is taken over the internal randomness of the learner.
This objective is often referred to as~\emph{pseudo-regret} (see e.g.\cite{bubeck2012regret}).
Note that, since all gain matrices are PSD, this coincides with the regret against the unit ball $B_d(0,1)$, and the best comparator is precisely the top eigenvector of the cumulative gain matrix $\Ebb[G_{1:T}] = \sum_{t=1}^T\Ebb[G_t].$

\section{Near-optimal algorithm for Bandit PCA}
\label{sec:ub}

\begin{figure}[t]
\centering

\begin{minipage}[t]{0.708\textwidth}
\vspace{0pt}
\centering

\begin{algorithm}[H]
\caption{OMD with Layered Exploration}
\label{alg:final_alg}

\DontPrintSemicolon

\SetKwInOut{Input}{Input}

\Input{stepsize $\eta$, exploration prob.\ $\gamma$, nb.~of layers $L\geq\log(\frac{d}{\gamma})$}

Initialization: set $E_{0,\leq L}=\Rbb^d$ and fix any orthonormal basis $u_{0,1},\ldots,u_{0,d}\in\Rbb^d$

\For{$t=1,\ldots,T$}{
    Let $W_t:=(c_t I-\eta\widehat G_{1:t-1})^{-1}$ with $c_t\in\Rbb$ such that $W_t\in\Scal_d$\label{line:OMD_update}
    
    Let $U_t:=(1-\gamma)W_t+\frac{\gamma}{d} I$ \label{line:uniform_mixing_U}

    $(E_{t,\leq [L]}, u_{t,[d]}) := \textsc{SubspaceUpdate}_t(U_t,E_{t-1,\leq [L]},u_{t-1,[d]})$.
    
    $(w_t,Z_t,I_t):= \textsc{LayeredSampling}(E_{t,[L]})$

    Select $w_t$ and receive feedback $\ell_t^2:=\tr(G_t w_tw_t^\top)$ \label{line:normalize_query+feedback}

    $\widehat G_t:= \textsc{GainMatrixEstimation}_t(w_t,\ell_t^2,Z_t,I_t, E_{t,[L]}, u_{t,[L]})$
}
\end{algorithm}

\end{minipage}
\hfill
\begin{minipage}[t]{0.284\textwidth}
\vspace{0pt}
\centering

\begin{tikzpicture}[font=\scriptsize,scale=0.66]
    \draw[thick] (0,0) rectangle (5,5);
    \draw (0,1)--(4,1)--(4,5);
    \draw (0,3)--(2,3)--(2,5);
    \draw (0,4)--(1,4)--(1,5);

    \node at (0.5,4.5) {$G^{(1)}$};
    \node at (1.5,3.5) {$G^{(2)}$};
    \node at (4.5,0.5) {$G^{(L)}$};

    \node at (-0.5,4.5){$E_1$};
    \node at (-0.5,3.5){$E_2$};
    \node at (-0.5,2){$\vdots$};
    \node at (-0.5,0.5){$E_L$};

    \node at (0.5,5.3){$E_1$};
    \node at (1.5,5.3){$E_2$};
    \node at (3,5.3){$\cdots$};
    \node at (4.5,5.3){$E_L$};
\end{tikzpicture}
\captionof{figure}{
Layered estimation of gains: each inverse-L-shaped submatrix is estimated separately ($\alpha \in [L]$).
}
\label{fig:layered_estimators}

\end{minipage}

\end{figure}

Before presenting our algorithm, we introduce some specific notation. Given non-decreasing subspaces $(E_{\leq\alpha})_{\alpha\geq 1}$, we let~$E_{t,<\alpha}:=E_{t,\leq \alpha-1}$, with the convention $E_{t,\leq 0}=\{0\}$. We also let $E_{t,\alpha}:=E_{t,\leq\alpha}\cap E_{t,<\alpha}^\perp$.
We now describe our main algorithm (Algorithm~\ref{alg:final_alg}); its regret analysis will be sketched in~\cref{subsec:regret_proof_sketch}.

\subsection{Algorithm construction}
\label{subsec:alg_description}
The algorithm uses the standard template of Online Mirror Descent (OMD) \cite{nemirovski1983problem,beck2003mirror,hazan2016introduction} over the spectrahedron $\Scal_d$ with negative log-determinant potential~$R(W):=-\log\det(W)$~\cite{kotlowski2019bandit}. Specifically, at each round $t\in[T]$, the algorithm maintains the OMD iterate
\[
W_t:=(c_t I-\eta\widehat G_{1:t-1})^{-1},
\]
where $\widehat G_1,\ldots,\widehat G_{t-1}$ are previously constructed gain estimates, and $c_t\in\Rbb$ is such that $W_t\in\Scal_d$. This is the closed-form OMD update under the negative log-determinant regularizer. 
For minor technical purposes, we inflate $W_t$ towards the uniform distribution:
\[
U_t := (1-\gamma) W_t + \frac{\gamma}{d},
\]
for a small term $\gamma\in(0,1)$ (which will be taken as $\gamma=1/T$).
Following this template, at round $t$ we select a direction $w_t$ matching the iterate $U_t$ in expectation: $\Ebb_t[w_tw_t^\top]=U_t$, then use the received feedback $\ell_t^2=\tr(G_tw_tw_t^\top)$ to construct a gain matrix estimate $\widehat G_t$. This gives the high-level structure given as~\cref{alg:final_alg}. 
Further algorithmic components are split into three subroutines---\textsc{SubspaceUpdate}, \textsc{LayeredSampling}, and \textsc{GainMatrixEstimation}---which we now describe.

\paragraph{Epoch-based update of exploration subspaces (Algorithm~\ref{alg:subspace_update}).} The algorithm crucially relies on a layered decomposition of the search space $\Rbb^d$. Concretely, we maintain a sequence of nested subspaces $\{0\}\subseteq E_{t,\leq 1}\subseteq \ldots \subseteq E_{t,\leq L}:=\Rbb^d$. We will refer to these as \emph{exploration subspaces}: when gatehring information about layer $\alpha$, the algorithm will uniformly sample unit vectors in $E_{t,\leq\alpha}$. Morally speaking, $E_{t,\leq\alpha}$ corresponds to the span of eigenvectors of $U_t$ with eigenvalues $\geq\mu_\alpha:=2^{-\alpha}$. 
As a remark, the added uniform exploration in $U_t$ compared to $W_t$ is precisely designed so that the total required number of layers $L$ is at most logarithmic ($U_t\succeq \tfrac{\gamma}{d}I$).

For our purposes, however, we instead update these subspaces according to the following hierarchical schedule of epochs: layer-$\alpha$ epochs last $2^\alpha$ rounds, and information about layer $\alpha$ is batched within each such epoch and exploited only at its end, to track eigenspaces of the iterate $W_t$ through the OMD iterate. This lazy update allows for a consistent subspace exploration which will be crucial for gain matrix estimations.
Layer-$\alpha$ exploration subspaces are therefore only updated at the beginning of coarser layer-$\beta$ epochs for $\beta>\alpha$.\footnote{Note that we do not update $E_{t,\leq\alpha}$ if $t$ is only the beginning of a layer-$\alpha$ epoch. This is because the previous OMD update will rotate eigenvalues within $E_{t,\leq\alpha}$ but the complete subspace $E_{t,\leq\alpha}$ is preserved. However, at the beginning of a layer-$\beta$ epoch for $\beta>\alpha$, eigenvalues within $E_{t,\leq\beta}$ will rotate, which also rotates the subspace $E_{t,\leq\alpha}$ within $E_{t,\leq\beta}$.}

Formally, at round $t\in[T]$ we define
\begin{equation*}
    \alpha_t := \max\{\alpha \in \{0,\ldots,L\} :
  t \equiv 1 \bmod 2^\alpha\},
\end{equation*}
so that round $t$ is exactly the start of layers $1,2,\ldots,\alpha_t$. Then, all explorations subspaces for layers $\alpha\geq \alpha_t$ are kept unchanged, while we update layer-$\beta$ exploration subspaces for $\beta\in[\alpha_t-1]$ via $E_{t,\leq\alpha}:=F_{t,\leq\alpha}\cap E_{t-1,\leq\alpha_t}$,
where $F_{t,\leq\alpha}$ is the span of eigenvectors of $U_t$ with eigenvalue $\geq\mu_\alpha$. Note that the second term in this update formula ensures that $E_{t,\leq\alpha}$ remains inside the coarser exploration subspace $E_{t,\leq\alpha_t}$. The corresponding \textsc{SubspaceUpdate} subroutine is summarized as~\cref{alg:subspace_update}.

\begin{algorithm}[t]
\caption{$\textsc{SubspaceUpdate}_t(U_t,E_{t-1,\leq [L]},u_{t-1,[d]})$}
\label{alg:subspace_update}

\DontPrintSemicolon

\SetKwInOut{Input}{Input}

\Input{round $t$, PSD matrix $U_t$, previous subspaces $E_{t-1,\leq \alpha}$ for $\alpha\in[L]$, eigenvectors $u_{t-1,[d]}$}

    Decompose $U_t=\sum_{i\in[d]}\lambda_{t,i} u_{t,i}u_{t,i}^\top$, ensuring that $u_{t,i}=u_{t-1,i}$ if~$u_{t-1,i}$ is an eigenvector of $U_t$ \label{line:eigenval_decompose_U}
    
    For $\alpha\in[L]$ let $F_{t,\leq\alpha}$ be the subspace of eigenvectors of $U_t$ with eigenvalues $\geq \mu_\alpha:=2^{-\alpha}$ \label{line:define_F}
    
    Let $\alpha_t=\max\{\alpha\in \{0,\ldots,L\}: t=1\bmod{2^\alpha}\}$ \tcp*{$t$ is the start of a layer-$\alpha_t$ epoch} \label{line:find_alpha_t}
    
    For $\alpha\in[L]$, define $E_{t,\leq \alpha}:=\begin{cases}
        F_{t,\leq \alpha} \cap E_{t-1,\leq \alpha_t} & \alpha<\alpha_t\\
        E_{t-1,\leq \alpha} &\alpha\geq\alpha_t
        \end{cases}$ \tcp*{Update subspaces} \label{line:update_subspaces}

    \Return $(E_{t,\leq\alpha})_{\alpha\in[L]}, (u_{t,i})_{i\in[d]}$
\end{algorithm}

\paragraph{Layered sampling scheme (Algorithm~\ref{alg:layered_sampling}).}
Given the exploration subspaces $(E_{t,\leq\alpha})_{\alpha\in[L]}$, the sampling scheme
subroutine constructs a distribution over decision vectors $w_t \in B_d(0,1)$
whose outer product satisfies $\Ebb_t[w_t w_t^\top] = U_t$. For technical purposes, we estimate diagonal and off-diagonal terms of the gain matrix separately, and a fair coin $Z_t\sim\mathrm{Bernoulli}(1/2)$ is tossed to choose between diagonal and off-diagonal estimation. For diagonal rounds ($Z_t=0$), we can use the standard strategy of sampling an eigenvector $w_t=u_{t,i_t}$ of $U_t$ proportionally to its eigenvalue $\lambda_{t,i_t}$ \cite{kotlowski2019bandit}. In the following, we focus on off-diagonal rounds ($Z_t=1$) which is the main case of interest.

We sample $w_t$ as a mixture of uniforms along each exploration subspace. Specifically, for each layer $\alpha \in [L]$, we denote $d_{t,\leq\alpha} := \dim(E_{t,\leq\alpha})$ and assign \emph{layer sampling probabilities} $p_{t,\alpha} := \tfrac{1}{4}\mu_\alpha\, d_{t,\leq\alpha}$. We then draw a layer index $I_t\in[L]$ with probability $\Pbb[I_t=\alpha]=p_{t,\alpha}$ and explore the corresponding layer by drawing $w_t$ \emph{uniformly} from the sphere within the layer:
\begin{equation*}
    w_t \sim \mathrm{Unif}(S^{d-1} \cap E_{t,\leq I_t}).
\end{equation*}

Note that on each layer $\alpha\in[L]$ the uniform sampling has expected covariance $P_{E_{T,\leq\alpha}}/d_{t,\leq\alpha}$.
With the remaining probability $1-\sum_\alpha p_{t,\alpha}$, we draw $w_t$ arbitrarily to cover the residual matrix
\begin{equation*}
    R_t := U_t - \sum_{\alpha\in[L]} p_{t,\alpha}
  (P_{E_{t,\leq\alpha}}/d_{t,\leq\alpha}).
\end{equation*}
To give some intuitions about the choice of layer sampling probability, recall that roughly speaking, each exploration subspace $E_{t,\leq\alpha}$ contains the eigenvectors of $U_t$ of eigenvalue $\geq \mu_\alpha$. Hence, we may afford to uniformly sample within $E_{t,\leq\alpha}$ with probability $O(\mu_\alpha d_{t,\leq\alpha})=p_{t,\alpha}$ without overflowing the covariance budget $U_t$, that is, $R_t\succeq 0$. In the proof, we show that this ``good scenario'' holds with high probability; otherwise, we may sample $w_t$ arbitrarily. The sampling scheme \textsc{LayeredSampling} is summarized in \cref{alg:layered_sampling} (we check that line~\ref{line:sample_w_to_complete_R} is valid in \cref{lemma:check_sampling_valid}).

\begin{algorithm}[t]
\caption{$\textsc{LayeredSampling}(E_{t,\leq [L]})$}
\label{alg:layered_sampling}

\DontPrintSemicolon

\SetKwInOut{Input}{Input}

\Input{exploration subspaces $E_{t,\leq \alpha}$ for $\alpha\in[L]$}

    Let~$\beta_t = \min\{\alpha\in[L] \hspace{-0.1cm}: \hspace{-0.05cm} E_{t,\leq\alpha}\hspace{-0.05cm}\neq\{0\}\hspace{-0.05cm}\}$. \hspace{-0.2cm} For\hspace{-0.05cm} $\alpha\in[L]$, let $d_{t,\leq\alpha} =\dim(E_{t,\leq\alpha} )$ and~$p_{t,\alpha} = \frac{1}{4}\mu_\alpha d_{t,\leq\alpha}$
    \label{line:define_dim_proba}
        
    Sample $Z_t\sim\mathrm{Bernoulli}(1/2)$ \label{line:sample_Z}

    \If{$Z_t=1$ and $R_t:=U_t - \sum_{\alpha=\beta_t}^L p_{t,\alpha}  \frac{P_{E_{t,\leq\alpha}}}{d_{t,\leq\alpha}} \succeq 0$ \label{line:good_event_holds}} 
    {
        
    
        Sample \hspace{-0.05cm}$I_t \hspace{-0.05cm}\in \hspace{-0.05cm}\{0\} \hspace{-0.05cm}\cup\hspace{-0.05cm} [L]$ with $\Pbb[I_t\hspace{-0.05cm}=\hspace{-0.07cm}\alpha]=p_{t,\alpha}$ for $\alpha\hspace{-0.05cm}\in\hspace{-0.05cm}[L]$ \hspace{-0.02cm}and\hspace{-0.02cm} $\Pbb[I_t\hspace{-0.05cm}=\hspace{-0.05cm}0]=p_{t,0}:=1\hspace{-0.05cm}-\hspace{-0.05cm}\sum_{\alpha\in[L]}p_{t,\alpha}$ \label{line:sample_I_t}
    
        \lIf{$I_t\in[L]$}{Sample $w_t\sim\mathrm{Unif}(S^{d-1}\cap E_{t,\leq I_t})$
        }\label{line:sample_w_good_scenario}
        \lElse{Sample $w_t$ with $\|w_t\|\leq 1$ arbitrarily such that $\smash{\Ebb[w_tw_t^\top]=\frac{R_t}{p_{t,0}}\1[p_{t,0}>0]}$}  \label{line:sample_w_to_complete_R}

        \Return $\smash{(w_t,Z_t,I_t)}$
    }
    \lElse{\DontPrintSemicolon
        Sample $i_t\sim\bm\lambda$ and \Return $(w_t:=u_{t,i_t},Z_t,\_)$ \tcp*[f]{Naive sampling}\hspace{-0.2cm} \label{line:sampling_diagonal}
    } 
    \end{algorithm}

\paragraph{Layered gain matrix estimation (Algorithm~\ref{alg:gain_estimation}).}
After observing $\ell_t^2 = \tr(G_t w_t w_t^\top)$, the subroutine
constructs an estimator $\widehat{G}_t$ of the gain matrix $G_t$. On diagonal rounds ($Z_t=0$) we use the standard importance sampling estimator 
\begin{equation*}
    O_t :=\frac{2\ell_t^2}{\lambda_{t,i_t}}u_{t,i_t} u_{t,i_t}^\top,
\end{equation*}
as in \cite{kotlowski2019bandit}. We now focus on the estimation of off-diagonal entries of the gain matrices, which is the main case of interest.
The purpose of the layered sampling in \cref{alg:layered_sampling} is to separately estimate the L-shaped regions $G_t^{(\alpha)}:=P_{E_{t,\leq\alpha}}G_t P_{E_{t,\leq\alpha}} - P_{E_{t,<\alpha}}G_t P_{E_{t,<\alpha}}$ of gain matrices $G_t$, as illustrated in \cref{fig:layered_estimators}. These regions arise relatively naturally from the estimator variance analysis. Additionally, for each layer $\alpha\in[L]$, responses are batched over the entire corresponding layer-$\alpha$ epoch (of length $2^\alpha$).

Specifically, for a layer-$\alpha$ epoch $[t_0,t]$, the accumulated estimator of~$G_{t_0:t}$ on~$E_{t,\leq\alpha}\times E_{t,\leq\alpha}$ is 
    \begin{equation*}
      A_{t,\alpha} \;:=\; \frac{4}{\mu_\alpha}
      \sum_{s \in [t_0,t]} \mathbf{1}[Z_s=1,\,R_s\succeq 0,\,I_s=\alpha]
      \cdot \ell_s^2\,\bigl[(d_{t,\leq\alpha}+2)\,w_s w_s^\top
      - P_{E_{t,\leq\alpha}}\bigr].
    \end{equation*}
    The form $\tfrac{1}{2}d_{t,\leq\alpha}[(d_{t,\leq\alpha}+2)w_s w_s^\top - P_{E_{t,\leq\alpha}}]$
    is an unbiased estimator of a matrix from uniform
    sphere samples~\cite{martinsson2020randomized}, which we rescale by $2/p_{t,\alpha}$ to
    compensate for the layer sampling probability. We then keep only the inverse-L-shaped region corresponding to $G_{t_1:t_2}^{(\alpha)}$ (see \cref{fig:layered_estimators}) and also delete the remaining diagonal (already captured by $O_t$), avoiding double-counted components:
    \begin{align*}
        B_{t,\alpha} := A_{t,\alpha}
      - P_{E_{t,<\alpha}} A_{t,\alpha} P_{E_{t,<\alpha}}
      - \sum_{u_{t,i}\in \Ucal_{t,\alpha}}
        \bigl(u_{t,i}^\top A_{t,\alpha}\, u_{t,i}\bigr)\,
        u_{t,i} u_{t,i}^\top,
    \end{align*}
    where $\Ucal_{t,\alpha}$ denotes the eigenvectors of $U_t$ spanning $E_{t,\alpha}$. 
    The final estimate is then the sum of all the estimated components 
    \begin{equation*}
        \widehat{G}_t := O_t + \sum_{\alpha \in [L]} B_{t,\alpha}.
    \end{equation*}
The estimation subroutine \textsc{GainMatrixEstimation} is summarized in \cref{alg:gain_estimation}.

\begin{algorithm}[t]
\caption{$\textsc{GainMatrixEstimation}_t(w_t,\ell_t^2,Z_t,I_t, E_{t,\leq [L]}, u_{t,[L]})$}
\label{alg:gain_estimation}

\DontPrintSemicolon

\SetKwInOut{Input}{Input}

\Input{$t$, $w_t$, response $\ell_t^2$, sampling parameters $(Z_t,I_t)$, subspaces $E_{t,\leq[L]}$, eigenvectors $u_{t,[d]}$}

    Let $O_t:=\1[Z_t=0]\frac{2}{\lambda_{t,i_t}} \ell_t^2 w_t w_t^\top$ \tcp*{Estimating diagonal terms}\label{line:estimate_diagonal}

    \For{$\alpha\in[L]$}{
        \If{$t=0\bmod{2^\alpha}$ or $t=T$ \texttt{(end of layer-$\alpha$ epoch)}}{
            $A_{t,\alpha}:= \frac{4}{\mu_\alpha}\sum_{s=2^\alpha\floor{(t-1)/2^\alpha}+1}^t \1[Z_s=1,R_s\succeq 0,I_s=\alpha]\ell_s^2 \paren{(d_{t,\leq \alpha}+2)w_sw_s^\top-P_{E_{t,\leq\alpha}}} $
            \label{line:estimate_off_diagonal_block_A}
            
            $B_{t,\alpha}:= A_{t,\alpha} - P_{E_{t,<\alpha}} A_{t,\alpha} P_{E_{t,<\alpha}} -\sum_{i\in J_{t,\alpha}} (u_{t,i}^\top A_{t,\alpha} u_{t,i}) u_{t,i}u_{t,i}^\top$ where $J_{t,\alpha}$ contains the indices of the eigenvectors $u_{t,i}$ spanning $E_{t,\alpha}$ \label{line:def_B_matrix}
        }
        \lElse{$A_{t,\alpha}=B_{t,\alpha}:=0$} \label{line:no_A_epoch_not_finished}
    }
    \Return $\widehat G_t:=O_t + \sum_{\alpha\in[L]} B_{t,\alpha}$ \tcp*{Final estimate}\label{line:final_gain_estimate}

\end{algorithm}

We emphasize that the batching step is crucial to achieve an improvement of the variance of each term $B_{t,\alpha}$ amortized over its layer-$\alpha$ epoch. 
This variance depends on the spectral norm $\|B_{t,\alpha}\|_{\op}\lesssim \|A_{t,\alpha}\|_{\op}$. Recall that the covariance terms involved in $A_{t,\alpha}$ correspond to uniform unit vectors in the corresponding layer-$\alpha$ exploration subspace $E_{t,\leq\alpha}$. Hence, the spectral norm is essentially not increased after batching $2^{\alpha-1} p_{t,\alpha}=\Theta(d_{t,\leq\alpha})$ covariance terms,\footnote{For instance, a $d$-dimensional standard Wishart with $\nu$ degrees of freedom $\Wcal_d(\nu, I)$ has spectral norm $\asymp \max(d,\nu)$.} which is roughly their expected number accross a layer-$\alpha$ epoch of length $2^\alpha$.

\paragraph{On the computational complexity of \cref{alg:final_alg}.} 
The computational bottleneck in each round is the eigenvalue decomposition of the iterate~$U_t$ to obtain the eigenvectors $(u_{t,i})_{i\in[d]}$  in line~\ref{line:eigenval_decompose_U} of \cref{alg:subspace_update}. A naive implementation would take an $O(d^3)$ runtime per iteration, yielding an $O(d^3 T)$ total runtime for \cref{alg:final_alg}. 
However, this implementation can be improved if we use the fact that eigenvectors of the matrix $U_t$ are unchanged except on the block $E_{t,\leq\alpha_t}\times E_{t,\leq\alpha_t}$ that has been updated at the end of round $t-1$. Indeed, 
\begin{equation*}
    \widehat G_{t-1} = O_{t-1} + \sum_{\alpha=1}^{\alpha_t} B_{t-1,\alpha},
\end{equation*}
since $t-1$ is only the end of layer-$\beta$ epochs for $\beta\leq \alpha_t$. As a result, the eigenvalue decomposition of $U_t$ can be computed in runtime $O(d_{t,\leq\alpha_t}^3)$. Note that by \cref{lemma:stability_eigenvalues_main_body}, with high-probability all eigenvalues of $U_t\in\Scal_d$ within $E_{t,\leq\alpha_t}$ are at least $\frac{3}{4}\mu_{\alpha_t}=\Omega(2^{-\alpha_t})$, which implies $d_{t,\leq\alpha_t}=O(\max(2^{\alpha_t},d))$. Together with the hierarchical epoch schedule, the previous arguments lead to the following estimate.
\begin{proposition}
\label{prop:computational_complexity}
    The total runtime of \cref{alg:final_alg} over $T$ rounds is~$O(d^2 T\log (ed))$.
\end{proposition}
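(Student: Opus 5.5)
Before bounding anything, I would separate the per-round cost into the eigendecomposition of $U_t$ and everything else. The remaining work in round $t$ is: sampling $w_t$ uniformly on a sphere inside $E_{t,\le I_t}$; computing $\ell_t^2$; forming $O_t$; and accumulating the rank-one terms of $A_{t,\alpha}$. Each of these costs $O(d^2)$. Sampling means drawing a Gaussian in a stored orthonormal basis and mapping it back, which is $O(d\, d_{t,\le\alpha})$. The outer product $w_tw_t^\top$ is $O(d^2)$.

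The check $R_t\succeq 0$ in line~\ref{line:good_event_holds} needs no extra linear algebra. The exploration subspaces are spanned by eigenvectors of $U_t$, so $R_t$ is diagonal in the basis $(u_{t,i})$, and the check reduces to $O(d)$ scalar comparisons. The residual sampling of line~\ref{line:sample_w_to_complete_R} can also be done in the eigenbasis, by sampling an eigenvector proportionally to the eigenvalues of $R_t$, in $O(d^2)$.

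At the end of each layer-$\alpha$ epoch we form $B_{t,\alpha}$ and add it to $\widehat G_{1:t}$, which costs $O(d^2)$. This happens $T/2^\alpha$ times for each $\alpha$. Summed over layers, the cost is $O(d^2T)$.

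The main step is the eigendecomposition. As noted before the statement, $\widehat G_{t-1}$ differs from the previous cumulative sum only through $O_{t-1}$ and through $B_{t-1,\alpha}$ for $\alpha\le \alpha_t$. These are supported on $E_{t-1,\le\alpha_t}$, except possibly for the rank-one diagonal term $O_{t-1}$ along an eigenvector $u_{t-1,i}$, which keeps the eigenbasis intact. Hence the new eigendecomposition only requires diagonalizing the $d_{t,\le\alpha_t}\times d_{t,\le\alpha_t}$ compressed block. The normalization $c_t$ is a one-dimensional root-finding problem on the known spectrum, costing $O(d)$ per bisection step, which I would treat as negligible or absorb into a logarithmic factor.

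The block diagonalization costs $O(d_{t,\le\alpha_t}^3)$. Rotating the new block eigenvectors back into $\Rbb^d$ costs $O(d\,d_{t,\le\alpha_t}^2)$, so the total is $O(d\, d_{t,\le\alpha_t}^2)\le O(d^2\, d_{t,\le\alpha_t})$. To bound the block size, I would use two facts: $d_{t,\le\alpha}\le d$ trivially, and, on the high-probability event of \cref{lemma:stability_eigenvalues_main_body}, the trace constraint gives $d_{t,\le\alpha}\cdot\tfrac34\mu_\alpha\le 1$, i.e.\ $d_{t,\le\alpha}\le \tfrac43\, 2^{\alpha}$. Thus $d_{t,\le\alpha_t}\le \min(d, C2^{\alpha_t})$.

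Now sum over rounds. The number of rounds with $\alpha_t=\alpha$ is at most $T/2^\alpha$, so the total cost is at most
\[
\sum_{\alpha=0}^{L} \frac{T}{2^\alpha}\, d^2 \min(d,C2^\alpha).
\]
For $2^\alpha\le d$, each term is $O(d^2T)$, and there are $O(\log(ed))$ such values of $\alpha$. For $2^\alpha> d$, the terms are $d^3T/2^\alpha$, a geometric series summing to $O(d^2T)$. Together this gives $O(d^2T\log(ed))$.

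The main obstacle is the off-event case, where \cref{lemma:stability_eigenvalues_main_body} fails and the block-size bound is lost. I would handle it in one of two ways. The first is to show that the cubic cost there is at most $O(d^3)$ per round on an event of probability $O(1/(dT))$, so its contribution to the expected runtime is negligible. The second is simpler: even deterministically, the per-level cost is $(T/2^\alpha)\,d^3$, and summing over $\alpha\ge \log_2 d$ again gives $O(d^2T)$. If the second route works, the probabilistic event is needed only for small $\alpha$, where the trivial bound $d_{t,\le\alpha}\le d$ already suffices. In fact, the trivial bound $d_{t,\le\alpha}\le d$ alone appears to give the claim deterministically.
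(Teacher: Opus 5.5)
Your main line is exactly the paper's proof and is fine. It re-diagonalizes only the $E_{t,\le\alpha_t}$ block, bounds $d_{t,\le\alpha}\le\tfrac{4}{3}\,2^{\alpha}$ via the trace constraint on the stability event of \cref{lemma:stability_eigenvalues_main_body}, counts about $T/2^\alpha$ rounds with $\alpha_t=\alpha$, and charges the failure event $O(d^3T)$ times its tiny probability.

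\textbf{The error.} Your ``simpler'' second route and your closing claim, that $d_{t,\le\alpha}\le d$ alone gives the bound deterministically, are false. With only that bound, level $\alpha$ costs $(T/2^\alpha)\,d^3$, and the expensive levels are the \emph{small} ones. Rounds with $\alpha_t=1$ occur about $T/4$ times, so $\sum_{2^\alpha<d}(T/2^\alpha)\,d^3=\Theta(d^3T)$. The trivial bound only handles levels with $2^\alpha\ge d$, which were never the problem. The bound $d_{t,\le\alpha}\lesssim 2^\alpha$ is needed precisely for small $\alpha$, so your sentence that the event is needed only for small $\alpha$, ``where the trivial bound already suffices,'' contradicts itself.

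\textbf{A related miscount.} Forming $B_{t,\alpha}$ is not $O(d^2)$ as you claim. Compressing $A_{t,\alpha}$ onto $E_{t,<\alpha}$ and removing its diagonal along the $|J_{t,\alpha}|$ eigenvectors costs $O(d^2 d_{t,\le\alpha})$, which is how the paper counts its final gain-update step. This happens about $T/2^\alpha$ times per level, so it falls under the same $\sum_\alpha (T/2^\alpha)\,d^2\min(d,C2^\alpha)$ accounting. It therefore also needs the small-dimension bound and cannot be absorbed into an $O(d^2T)$ term.

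\textbf{A correct deterministic argument.} The deterministic conclusion you were after is nevertheless true, for a different reason.
For $\alpha<L$, the subspace $E_{t,\le\alpha}$ is only reset at rounds $s$ with $\alpha<\alpha_s$, including $s=1$ since $\alpha_1=L$. At such a round $E_{s,\le\alpha}\subseteq F_{s,\le\alpha}$, and the subspace is frozen between resets.
Because $U_s\succeq 0$ with $\tr(U_s)=1$, it has at most $2^\alpha$ eigenvalues $\ge\mu_\alpha=2^{-\alpha}$. Hence $d_{t,\le\alpha}\le\min(d,2^\alpha)$ for all $t$ and $\alpha$; for $\alpha=L$, use $2^L\ge d/\gamma\ge d$.
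Plugging this into your sum gives $O(d^2T\log(ed))$ with no probabilistic event and no step-size condition. The paper instead invokes \cref{lemma:stability_eigenvalues} to lower-bound the \emph{current} eigenvalues on $E_{t,\le\alpha}$ by $\tfrac34\mu_\alpha$, which is more than this proposition requires.
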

The discussion of implementation details needed to achieve this runtime is deferred to~\cref{sec:computational_complexity}.

\subsection{Proof sketch of \cref{thm:ub-intro}}
\label{subsec:regret_proof_sketch}

The proof proceeds as follows: $(i)$ we derive structural properties on exploration subspaces including a crucial eigenvalue stability result, $(ii)$ bound the OMD regret with respect to the estimated gain matrices $\widehat G_t$---in particular, this requires carefully bounding stability terms arising from the variance of our estimators---and $(iii)$ relate the estimated gain matrices $\widehat G_t$ to the true gain matrices $G_t$.

\paragraph{Step 1: Well-posedness and eigenvalue stability.}
The exploration subspaces $(E_{t,\leq\alpha})_{\alpha\in[L]}$ are
useful only if they faithfully track the eigenspaces of $U_t$ with eigenvalues $\geq \mu_\alpha$. Writing $E_{t,\alpha} := E_{t,\leq\alpha} \cap E_{t,<\alpha}^\perp$, we first show that by construction of the layered estimator schedule, $U_t$ is block-diagonal along $(E_{t,\alpha})_{\alpha\in[L]}$ (Lemma~\ref{lemma:well_defined_algorithm},
claim~3). At the start of each layer-$(\alpha+1)$ epoch, when exploration subspaces are refreshed, the eigenvalues of $U_t$ within $E_{t,\alpha}$ lie in $[\mu_\alpha, 2\mu_\alpha)$ by definition of the subspace update. The following result shows they remain mostly stable throughout the epoch:

\begin{lemma}[Eigenvalue stability, simplified \cref{lemma:stability_eigenvalues}]
\label{lemma:stability_eigenvalues_main_body}
    Fix $\gamma\in(0,1/6]$, $L=\log_2(\tfrac{d}{\gamma})$, $\delta\in(0,1]$, and $\eta\lesssim 1/(rL^2\log^2(dLT/\delta))$. Suppose that for all $t\in[T]$, $\|G_t\|_{\op}\leq 1$ and $\|G_t\|_*\leq r$. With probability at least $1-\delta$, for any $t\in[T]$ and $\alpha\in[L]$, all eigenvalues of $U_t$ within $E_{t,\alpha}$ are in $[\tfrac{3}{4}\mu_\alpha,\tfrac{9}{4}\mu_\alpha]$.
\end{lemma}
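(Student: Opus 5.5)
We argue by induction over rounds, tracking each layer separately. Fix a layer $\alpha$ and a round $t$, and let $t_0\le t$ be the last round at which $E_{\cdot,\leq\alpha}$ was refreshed, i.e.\ the start of the current layer-$(\alpha+1)$ epoch (or a coarser one). At $t_0$, the subspace update in \cref{alg:subspace_update} together with the block-diagonal structure of \cref{lemma:well_defined_algorithm} (claim~3) gives that the eigenvalues of $U_{t_0}$ on $E_{t_0,\alpha}$ lie in $[\mu_\alpha,2\mu_\alpha)$. Between $t_0$ and $t$, the OMD iterate changes only through the accumulated estimates $\widehat G_{t_0:t-1}$. The plan is to show that the restriction of $W_t^{-1}=c_tI-\eta\widehat G_{1:t-1}$ to $E_{t,\alpha}$ moves by at most a small constant factor relative to $\mu_\alpha^{-1}$. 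Mixing with $\gamma/d$ only helps, since $\gamma\le 1/6$.

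Concretely, I would write
\[
W_t^{-1}=W_{t_0}^{-1}+(c_t-c_{t_0})I-\eta\widehat G_{t_0:t-1}.
\]
By block-diagonality, the compression of $W_t^{-1}$ to $E_{t,\alpha}$ is the inverse of the corresponding block of $W_t$. It therefore suffices to prove two bounds:
\begin{enumerate}[label=(\alph*)]
\item $\|P_{E_{t,\alpha}}\eta\widehat G_{t_0:t-1}P_{E_{t,\alpha}}\|_{\op}\le \tfrac{1}{10}\mu_\alpha^{-1}$;
\item $|c_t-c_{t_0}|\le \tfrac1{10}\mu_\alpha^{-1}$.
\end{enumerate}
The normalizer is controlled by the trace constraint. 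Since $\tr(W)=\sum_i(c-\eta g_i)^{-1}$ is monotone in $c$, a shift in $c$ is bounded by the largest relative perturbation over all layers. This reduces (b) to bounds of the form (a) for every layer $\beta$, weighted appropriately. I would therefore prove (a) uniformly over all $\beta$ and all windows, and derive (b) from it.

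For (a), decompose $\widehat G_{t_0:t-1}$ into the diagonal part $\sum O_s$ and the batched blocks $B_{s,\beta}$. Only blocks with $\beta\le\alpha+1$, or epochs ending inside the window, contribute to the $E_\alpha$ block. Each term is a martingale difference plus its mean $P(\cdot)G_sP(\cdot)$, and the mean contributes at most $\eta(t-t_0)\lesssim \eta 2^{\alpha}$ in operator norm. To control the fluctuation, use matrix Freedman. The diagonal estimator $O_s$ restricted to $E_\alpha$ has magnitude at most $2/\lambda_{s,i}\lesssim \mu_\alpha^{-1}$, valid by the inductive hypothesis $\lambda\ge\tfrac34\mu_\alpha$. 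Its conditional variance is at most $\mu_\alpha^{-1}\|G\|$ per round, with nuclear norm bounded by $r$ after summing. For $B_{s,\beta}$, use the Wishart-type bound: an epoch of length $2^\beta$ collects about $d_{\le\beta}$ uniform samples, so $\|A_{s,\beta}\|_{\op}\lesssim \mu_\beta^{-1}\cdot\log$ with high probability, where $\ell_s^2\le1$. Freedman over a window of length $2^{\alpha+1}$ then yields deviations of order
\[
\eta\bigl(\sqrt{2^{\alpha}\mu_\alpha^{-1} r\log}+\mu_\alpha^{-1}L\log\bigr)\lesssim \eta\,r L\log\cdot\mu_\alpha^{-1}.
\]
This is at most $\tfrac1{10}\mu_\alpha^{-1}$ under $\eta\lesssim 1/(rL^2\log^2(dLT/\delta))$. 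A union bound over $t$, $\alpha$, and windows costs $\log(dLT/\delta)$.

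The main obstacle is the circularity: the variance bounds require the stability conclusion at earlier rounds, and layer $\beta<\alpha$ estimates can leak into the $E_\alpha$ block through the normalizer and through the L-shaped regions. I would handle this with a stopping-time argument. Define $\tau$ as the first round at which the conclusion fails, apply Freedman to the stopped martingale, where all variance bounds are valid, and conclude that $\tau>T$ on the good event. The bound on $c_t$ needs the most care. I would bound it via $\frac{d}{dc}\tr(W)=-\tr(W^2)$, comparing the trace change against perturbations that are relatively small on every block. This step is the place where the extra factor of $L$ in the stepsize is consumed.
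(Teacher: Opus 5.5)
Two steps in your plan fail as written.

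\textbf{The normalizer bound (b) does not follow from (a).} You plan to get $|c_t-c_{t_0}|$ from per-block relative bounds of type (a), using $\frac{d}{dc}\tr(W)=-\tr(W^2)$. This breaks on both sides of layer $\alpha$.

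On coarser blocks $E_\beta$, $\beta>\alpha$, an operator-norm bound lets a whole block move coherently. Take $W$ with one eigenvalue $d^{-1/2}$ (layer $\alpha$) and $d-1$ eigenvalues $\approx 1/d$ (layer $\beta$). Perturb $W^{-1}$ by $-\epsilon d\,P_{E_\beta}$. This satisfies (a) on every block, yet restoring $\tr(W)=1$ forces $\Delta c\approx\epsilon d=\epsilon\sqrt d\,\mu_\alpha^{-1}$, which wipes out the layer-$\alpha$ eigenvalue. What you need on coarser blocks is trace-class control, $\|P_{E_\beta}\eta\widehat G_{t_0:t-1}P_{E_\beta}\|_*\lesssim\epsilon\,\mu_\beta^{-1}$. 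The paper gets this from Freedman's inequality on the nuclear norms of the truncated diagonal estimators (event $\mathcal F_\delta$, using $\mathbb E\|O_s\|_*\le r$), and uses it through Mirsky's inequality.

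On finer blocks $\beta<\alpha$ there is no relative control at all over a window of length $2^{\alpha+1}$. Their eigenvalues can legitimately change by large factors: the accumulated perturbation is only bounded by about $\eta rL\log^2(\cdot)\,\mu_\alpha^{-1}$, which far exceeds $\mu_\beta^{-1}$ when $\alpha-\beta$ is large. These subspaces are also re-cut many times inside the window. So (a) ``uniformly over all $\beta$ and all windows'' is simply unavailable.

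\cref{lemma:stability_one_step} resolves both issues. It lumps $E_{\le\alpha}$ into one block controlled only in absolute operator norm by $c/\mu_\alpha$ (via Weyl). It then uses that this block contains an eigenvalue $\ge\mu_\alpha/4$, whose trace sensitivity $\gtrsim\mu_\alpha^2$ beats the $\lesssim c\mu_\alpha$ trace change from coarser blocks. This gives $|\Delta c|\le 257c/\mu_\alpha$. The normalizer therefore costs only a constant. The paper's second factor of $L$ comes instead from accumulating drift over the $n_t\le L$ binary steps of its induction, not from the normalizer.

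\textbf{Anchoring at $t_0$ fails on half the window.} $E_{t,\alpha}$ is not constant on the layer-$(\alpha+1)$ epoch starting at $t_0$. At $t_m=t_0+2^\alpha$ one has $\alpha_{t_m}=\alpha$, so $E_{t_m,<\alpha}$ is re-cut as $F_{t_m,\le\alpha-1}\cap E_{t_0,\le\alpha}$. This happens after $B_{t_m-1,\alpha}$ has rotated the eigenvectors inside $E_{t_0,\le\alpha}$. Consequently, directions from $E_{t_0,<\alpha}$, whose eigenvalues at $t_0$ may be far above $2\mu_\alpha$, can enter $E_{t,\alpha}$.

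For $t\ge t_m$, the compression of $W_{t_0}^{-1}$ to $E_{t,\alpha}$ is still $\preceq\mu_\alpha^{-1}$ (up to the $\gamma$-mixing), which gives the lower eigenvalue bound. It is not $\succeq(2\mu_\alpha)^{-1}$, however, so you get no upper bound. The fix is to use two anchors:
\begin{itemize}
\item Anchor the upper bound at the start of the current layer-$\alpha$ epoch. There the re-cut gives eigenvalues $<2\mu_\alpha$ on $E_{t,\alpha}$, and that block and its eigenvectors are frozen for the rest of the epoch.
\item Anchor the lower bound at $t_0$, and prove it for the whole invariant block $E_{t_0,\le\alpha}$.
\end{itemize}
This is exactly why the paper treats $\alpha=\alpha_t$ separately: the upper bound comes from the re-cut and the lower bound from the perturbation lemma.

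With both repairs, i.e.\ applying \cref{lemma:stability_one_step} once over each full window, your single-anchor scheme becomes a valid alternative to the paper's binary induction, and it avoids the accumulation over $n_t$ steps. Separately, tolerances of $\tfrac{1}{10}\mu_\alpha^{-1}$ in both (a) and (b) only yield an upper bound of $\tfrac{10}{3}\mu_\alpha$ rather than $\tfrac94\mu_\alpha$; this just needs tighter constants.
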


This requires a delicate sensitivity analysis of the eigenvalue perturbations due to OMD updates.
In turn, this implies that with high-probability, the ``good scenario'' for the sampling scheme \cref{alg:layered_sampling} when residual matrices $R_t$ are all PSD holds with high probability (see \cref{lemma:R_not_negative}). We may assume that this event holds in the following discussion.

\paragraph{Step 2: OMD regret bound with respect to estimated gain matrices $\widehat G_t$.}
The standard analysis of mirror descent (e.g.~\cite{juditsky2011first}) bounds the regret of the OMD iterates $W_t$ in terms of an initial Bregman divergence term and the stability term $\sum_t \dotp{W_{t+1}-W_t}{\widehat G_t}$. We recall that $\widehat G_t$ is composed of a diagonal estimate $O_t$ and inverse-L-shaped estimates $B_{t,\alpha}$ for each layer. To decompose the analysis over each layer separately, we introduce fractional times and OMD iterates corresponding to observing the gain matrices $O_t,B_{t,1},\ldots,B_{t,L}$ in that order sequentially at times $t,t+\tfrac{1}{L+1},\ldots,t+\tfrac{L}{L+1}$.\footnote{It turns out that the OMD regret bound for these fractional times still bounds the desired original regret.} The resulting stability terms $\dotp{W_{t+\frac{\alpha+1}{L+1}}-W_{t+\frac{\alpha}{L+1}}}{B_{t,\alpha}}$ are then bounded using the following structural result, which exploits the block structure of both $W_t$ and $B_{t,\alpha}$.

\begin{restatable}[Stability term bound]{lemma}{lemmastabterm}
\label{lemma:updated_bound_stability_term}
Let the matrices 
\[
    W=\begin{bmatrix}
        W_{11}&0\\
        0&W_{22}\\
    \end{bmatrix}\in\Scal_d
    \quad \text{and} \quad 
    G=\begin{bmatrix}
        0 & G_{12}\\
        G_{12}^\top &G_{22}\\
    \end{bmatrix}
\]
satisfy $\|G\|_{\op}\leq \frac{1}{4\mu}$ and $W_{22}\preceq \mu I_{22}$.
Let $W_1=(W^{-1}+cI-G)^{-1}$ for~$c \in \R$ such that~$W_1\in\Scal_d$. Then
    \begin{equation*}
        \dotp{W_1-W}{G} \leq 11\mu\|G\|_{\op}^2.
    \end{equation*}
\end{restatable}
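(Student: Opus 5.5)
The plan is to compare $W_1$ with the unnormalized update $\widetilde W:=(W^{-1}-G)^{-1}$ and then control the normalization constant $c$ separately. The relevant identity is
\[
W_1-W = W_1(G-cI)W = W(G-cI)W_1 .
\]
Taking the inner product with $G$ and using the block structure gives a useful simplification: $\dotp{W G W}{G}$ involves only $W_{22}$ on the $G_{22}$ part. Indeed, since $G_{11}=0$, we have $\tr(WGWG)=2\tr(W_{11}G_{12}W_{22}G_{12}^\top)+\tr(W_{22}G_{22}W_{22}G_{22})$.

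\textbf{Step 1: first-order expansion.} Write $W_1=(I+W(cI-G))^{-1}W$ as a Neumann series. This requires $\|W^{1/2}(G-cI)W^{1/2}\|_{\op}<1$. That is not obvious, since $W_{11}$ may have eigenvalues up to $1$ while $\|G\|_{\op}\le 1/(4\mu)$.

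The key observation is how $W^{1/2}GW^{1/2}$ decomposes:
\begin{itemize}
\item the $22$ block is $W_{22}^{1/2}G_{22}W_{22}^{1/2}$, with norm at most $\mu\|G\|_{\op}\le 1/4$;
\item the off-diagonal block is $W_{11}^{1/2}G_{12}W_{22}^{1/2}$, with norm at most $\sqrt{\mu}\,\|G\|_{\op}\le \tfrac14\mu^{-1/2}$, which is not small.
\end{itemize}
So I will not expand naively. Instead, note that $W^{-1}-G\succeq 0$ whenever $W^{-1}\succeq G$. This I check via a Schur complement: the $22$ block satisfies $W_{22}^{-1}-G_{22}\succeq \tfrac34\mu^{-1}$, and the off-diagonal block must be dominated by $W_{11}^{-1}\succeq I$ together with it. That needs $\|G_{12}\|^2\le \tfrac34\mu^{-1}$, i.e.\ $\|G_{12}\|\lesssim\mu^{-1/2}$, whereas we only know $\|G_{12}\|\le 1/(4\mu)$, which is larger when $\mu<1$. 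Hence $c$ genuinely matters. Since $W_1\in\Scal_d$ must be PSD, $c$ is pinned by $\tr W_1=1$ with $W^{-1}+cI-G\succ 0$.

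\textbf{Step 2: bound $c$.} The inner product is monotone in $c$: $\tfrac{d}{dc}\tr W_1=-\tr W_1^2<0$. I will show that $c\ge 0$ is impossible unless the trace drops. More usefully, I will write the target quantity as
\[
\dotp{W_1-W}{G}=\dotp{W_1-W}{G-cI}
\]
(valid since both traces are $1$), and this equals
\[
\tr\bigl((G-cI)W(G-cI)W_1\bigr)=\bigl\|W^{1/2}(G-cI)W_1^{1/2}\bigr\|_F^2\ge 0 .
\]
The task is therefore to upper bound $\tr\bigl(W(G-cI)W_1(G-cI)\bigr)$.

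\textbf{Step 3: bound the quadratic form.} I will use the comparison $W_1\preceq 2W$ on the relevant part, or more simply the convexity bound: by convexity of $-\log\det$, $\dotp{W_1-W}{G-cI}$ is dominated by the Bregman divergence/local-norm quantity $\|G-cI\|_{W}^{*2}$ with $W\#W_1$ replaced by $W$ up to a constant factor. Then I bound $\tr(W(G-cI)W(G-cI))$ after choosing $c$ to make it small. Taking $c=0$ gives $2\tr(W_{11}G_{12}W_{22}G_{12}^\top)+\tr((W_{22}G_{22})^2)$. The first term is at most $2\mu\|G\|_{\op}^2\tr W_{11}$ and the second at most $\mu^2\|G\|_{\op}^2\tr(\cdot)$. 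Both are $\lesssim \mu\|G\|_{\op}^2$, using $\tr W=1$ and $\|W_{22}\|\le\mu$.

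\textbf{Main obstacle.} The hardest part is justifying the local-norm comparison, i.e.\ showing $W_1\preceq CW$ despite the large off-diagonal block. I would first try a variational argument. The quantity $\dotp{W_1-W}{G}$ is at most $\max_{V\in\Scal_d}\bigl[\dotp{V-W}{G}-D_R(V,W)\bigr]$ (this is the OMD step). I would then bound $\dotp{V-W}{G}-D_R(V,W)\le \sup_\Delta\bigl[\dotp{\Delta}{G}-c'\|W^{-1/2}\Delta W^{-1/2}\|_F^2\bigr]$ using a lower bound on the log-det Bregman divergence. That lower bound holds only in a region where $W^{-1/2}\Delta W^{-1/2}$ is bounded; I would handle this by restricting to the segment toward $W_1$ and using the condition $\|G\|\le1/(4\mu)$ to show that the maximizer stays in that region. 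Combined with the $c=0$ computation above, this yields the constant $11$ after bookkeeping.
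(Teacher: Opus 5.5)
Your identity $\dotp{W_1-W}{G}=\dotp{W_1-W}{G-cI}=\|W^{1/2}(G-cI)W_1^{1/2}\|_F^2$ is correct. However, both steps you use to bound it fail, and the bound on $c$ promised in Step~2 is never produced.

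\textbf{The comparison $W_1\preceq CW$ is false.} Equivalently, $W^{-1/2}W_1W^{-1/2}$ does not stay bounded, so no quadratic lower bound on $D_R(\cdot,W)$ is available up to $W_1$. Take $W=\diag(\tfrac12,\mu I_k)$ with $k\mu=\tfrac12$, and $G=g(e_1f^\top+fe_1^\top)$, where $f$ is a unit vector in the second block and $g=\|G\|_{\op}=\tfrac{1}{4\mu}$. This satisfies all the hypotheses. Solving $\tr W_1=1$ on $\Span(e_1,f)$ gives, as $\mu\to0$:
\begin{itemize}
\item $c\to x_0/\mu$ with $x_0(1+x_0)=\tfrac{1}{16}$, so $c\approx 0.24\|G\|_{\op}$;
\item $e_1^\top W_1e_1\to\tfrac12$, $e_1^\top W_1f\to\tfrac{1}{8(1+x_0)}$ and $f^\top W_1f\to\tfrac{x_0}{2(1+x_0)}$;
\item whereas $f^\top Wf=\mu\to0$.
\end{itemize}
So $W_1$ leaves every Dikin-type neighbourhood of $W$. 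The large off-diagonal block you flagged in Step~1 is not an artifact: here the dual local norm $\tr(WGWG)=\tfrac{1}{16\mu}$ is itself $\gg 1$. Hence your step ``the maximizer stays in the region'' cannot be carried out.

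\textbf{You cannot take $c=0$.} The constant $c$ is pinned by the normalization. In this example $\tr(W(G-cI)W(G-cI))\ge c^2\tr(W^2)\gtrsim \mu^{-2}$, far above the target $11\mu\|G\|_{\op}^2\asymp\mu^{-1}$. So even a valid comparison $W_1\preceq CW$ would not yield the lemma. The true value is $2g\,e_1^\top W_1 f\approx \mu\|G\|_{\op}^2/(1+x_0)$, so the lemma holds; your route just cannot reach it.

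\textbf{The variational inequality points the wrong way.} We have $\dotp{W_1-W}{G}=D_R(W_1,W)+D_R(W,W_1)$, while $\max_{V\in\Scal_d}[\dotp{V-W}{G}-D_R(V,W)]=D_R(W,W_1)$, which is smaller.

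\textbf{What is missing.} You need an argument that never compares $W_1$ with $W$ but computes its off-diagonal block exactly; this is what the paper does.
\begin{enumerate}
\item Show $|c|\le\|G\|_{\op}$. Since $W^{-1}+\|G\|_{\op}I-G\succeq W^{-1}$, monotonicity of $c\mapsto\tr(W^{-1}+cI-G)^{-1}$ gives $c\le\|G\|_{\op}$. If $c<-\|G\|_{\op}$, then $W^{-1}+cI-G\prec W^{-1}$, forcing $\tr W_1>1$.
\item Set $Z:=W_{22}^{-1}+cI-G_{22}$. Then $Z\succeq(\tfrac1\mu-2\|G\|_{\op})I\succeq\tfrac{1}{2\mu}I$.
\item The Schur complement formula for $W_1^{-1}$ gives $(W_1)_{12}=(W_1)_{11}G_{12}Z^{-1}$. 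Hence $\dotp{(W_1)_{12}}{G_{12}}=\tr\bigl((W_1)_{11}G_{12}Z^{-1}G_{12}^\top\bigr)\le \tr\bigl((W_1)_{11}\bigr)\cdot 2\mu\|G\|_{\op}^2\le 2\mu\|G\|_{\op}^2$.
\item For the $22$ part, $W_1-W=W_1(G-cI)W$ gives $(W_1-W)_{22}=(W_1)_{21}G_{12}W_{22}+(W_1)_{22}(G_{22}-cI)W_{22}$, with $(W_1)_{21}=Z^{-1}G_{12}^\top(W_1)_{11}$. Therefore $\dotp{(W_1-W)_{22}}{G_{22}}\le 2\mu^2\|G\|_{\op}^3+2\mu\|G\|_{\op}^2\le 3\mu\|G\|_{\op}^2$, using $\|G\|_{\op}\le\tfrac{1}{4\mu}$.
\item Adding twice the off-diagonal term gives at most $11\mu\|G\|_{\op}^2$.
\end{enumerate}
Only $\tr(W_1)_{11}\le 1$ enters step~3, which is why the order-one entries of $W_1$ in the example are harmless.
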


We apply this result to each layer-$\alpha$ stability term with $\mu=\frac{9}{4}\mu_\alpha$ using \cref{lemma:stability_eigenvalues_main_body} and $G=B_{t,\alpha}$ (up to restricting to $E_{t,\leq\alpha}\times E_{t,\leq\alpha}$). We then bound the operator norm $\|B_{t,\alpha}\|_{\op}\lesssim \|A_{t,\alpha}\|_{\op}$ using Wishart-style concentration bounds (see \cref{lemma:bounding_operator_norm}). This is precisely where we reap benefits of batching: the operator norm $\|A_{t,\alpha}\|_{\op}$ is not significantly increased after batching $O(d_{t,\leq\alpha})$ covariance terms of the form $w_tw_t^\top$ for uniform unit vectors $w_t\sim \mathrm{Unif}(S^{d-1}\cap E_{t,\leq\alpha})$.
Altogether we obtain:

\begin{lemma}[OMD regret bound, simplified \cref{lemma:compute_OMD_regret}]
\label{lemma:compute_OMD_regret_intro}
Under the same assumptions of \cref{lemma:stability_eigenvalues_main_body}, when running \cref{alg:final_alg}, for any $U\in\Scal_d$ we have for some universal constant $c>0$,
    \begin{equation*}
        \Ebb\sqb{\sum_{t=1}^T \dotp{U-W_t}{\widehat G_t}} \leq \frac{D_R(U\parallel I/d)}{\eta} + c \eta r^2 \log^4(dT/\delta) L T + c\frac{d^2T}{\gamma}\delta.
    \end{equation*}
\end{lemma}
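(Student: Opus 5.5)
We follow the standard OMD analysis with the log-determinant regularizer $R(W)=-\log\det W$, organized along the fractional-time decomposition described above. Within round $t$ the gain pieces $O_t, B_{t,1},\dots,B_{t,L}$ are revealed sequentially at times $t, t+\tfrac{1}{L+1},\dots,t+\tfrac{L}{L+1}$. This produces intermediate iterates $W_{t+\frac{k}{L+1}}=(c\,I-\eta\widehat G_{1:t-1}-\eta(\text{pieces revealed so far}))^{-1}\in\Scal_d$, with $W_{t+1}$ recovered at the end.

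\textbf{Step 1: reduction to stability terms.} For the sequence of linear gains (each scaled by $\eta$), the usual one-step inequality for mirror ascent with closed-form (FTRL/lazy) updates on $\Scal_d$ gives
\[
\sum_t\dotp{U-W_t}{\widehat G_t}\;\le\;\frac{D_R(U\parallel W_1)}{\eta}+\sum_t\sum_{k}\dotp{W_{t+\frac{k+1}{L+1}}-W_t}{\widehat G_t^{(k)}}.
\]
Here $W_1=I/d$, and $\widehat G_t^{(k)}$ denotes the $k$-th piece. We then replace $W_t$ by the fractional iterate preceding piece $k$; the cross terms $\dotp{W_{t+\frac{k}{L+1}}-W_t}{\widehat G_t^{(k)}}$ are exactly what telescoping the fractional-time regret absorbs. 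This is the footnoted claim, and we would verify it by running the standard FTRL ``be-the-leader'' argument over the fractional sequence.

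\textbf{Step 2: bounding each stability term on the good event.} Let $\Ecal$ be the event of \cref{lemma:stability_eigenvalues_main_body}, intersected with a high-probability event on which $\|A_{t,\alpha}\|_{\op}\lesssim \frac{r}{\mu_\alpha}\log^2(dLT/\delta)$ for all $t,\alpha$ (via \cref{lemma:bounding_operator_norm}). The operator-norm bound holds because $\ell_s^2\le \|G_s\|_{\op}\wedge$ the trace against uniform samples. The expected number of layer-$\alpha$ samples per epoch is $2^\alpha p_{t,\alpha}/2\asymp d_{t,\leq\alpha}$, and matrix Bernstein for the rescaled Wishart sums keeps the norm at the single-sample order. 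The expected reward is $\lesssim r/d_{t,\le\alpha}$, which cancels the $d_{t,\le\alpha}+2$ factor.

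On $\Ecal$, $W$ is block diagonal along $E_{t,<\alpha}\oplus E_{t,\ge\alpha}$ restricted to $E_{t,\le\alpha}$ (by \cref{lemma:well_defined_algorithm}), and $W_{22}\preceq \frac94\mu_\alpha I$ on $E_{t,\alpha}$. The matrix $B_{t,\alpha}$ has zero $E_{t,<\alpha}$ block. We apply \cref{lemma:updated_bound_stability_term} with $\mu=\frac94\mu_\alpha$ and $G=\eta B_{t,\alpha}$, which requires $\eta\|B_{t,\alpha}\|_{\op}\le 1/(9\mu_\alpha)$. This is guaranteed by the stated choice of $\eta$. It gives a bound $\lesssim\mu_\alpha\eta^2\|B_{t,\alpha}\|_{\op}^2\lesssim \eta^2 r^2\log^4/\mu_\alpha$ per epoch end, and hence, after dividing by $\eta$, a contribution $\eta r^2\log^4/\mu_\alpha$. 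Layer-$\alpha$ epochs occur $T/2^\alpha$ times, so the layer contributes $\eta r^2\log^4\cdot T$, and summing over $L$ layers yields $c\eta r^2\log^4(dT/\delta)LT$.

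The diagonal piece $O_t$ is handled as in \cite{kotlowski2019bandit}: for the log-det regularizer the stability term is $\lesssim \eta\sum_i\lambda_{t,i}^2\Ebb[(O_t)_{ii}^2]\lesssim\eta\sum_i \lambda_{t,i}^{-1}\lambda_{t,i}^2 (G_t)_{ii}^2/\lambda_{t,i}\cdot\lambda_{t,i}\le\eta\|G_t\|_*$. This is lower order. The one subtlety is that the $O_t$ update may shift $E_{t,\alpha}$-diagonal eigenvalues; \cref{lemma:stability_eigenvalues_main_body} already accounts for this.

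\textbf{Step 3: the bad event.} On $\Ecal^c$ we crudely bound $|\dotp{U-W_t}{\widehat G_t}|\le 2\|\widehat G_t\|_{\op}\lesssim d^2/\gamma$. This uses $\lambda_{t,i}\ge\gamma/d$, together with $\frac{4}{\mu_\alpha}d\cdot 2^\alpha$ and $\mu_L=\gamma/d$ summed over $L$ layers; the logs are absorbed into the constant by adjusting $\delta$. Multiplying by $\Pbb(\Ecal^c)\le\delta$ and summing over $t$ gives $c\,d^2T\delta/\gamma$.

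\textbf{Main obstacle.} The main obstacle is Step 1/2 bookkeeping. Lemma~\ref{lemma:updated_bound_stability_term} requires exact block-diagonality of the current fractional iterate relative to the support of $B_{t,\alpha}$, and the eigenvalue bound must hold for the fractional iterate rather than $U_t$. I would first check that the updates by $B_{t,\beta}$, $\beta<\alpha$, live on $E_{t,\le\beta}\subseteq E_{t,<\alpha}$ and therefore preserve the block structure. I would then transfer the bound $W\preceq\frac94\mu_\alpha$ from $U_t$ (which carries a factor $1-\gamma$) to $W_t$, using $\gamma\le1/6$ and slack in the constants.
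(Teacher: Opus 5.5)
Your architecture matches the paper's: fractional-time OMD with the log-det regularizer, \cref{lemma:updated_bound_stability_term} applied per layer with $\|B_{t,\alpha}\|_{\op}$ controlled via \cref{lemma:bounding_operator_norm}, a rank-one stability bound for $O_t$, and a deterministic crude bound on the bad event. However, the step that makes the fractional-time device legitimate is justified incorrectly. Your displayed inequality in Step~1 is valid, but it carries $\dotp{W_{t+\frac{k+1}{L+1}}-W_t}{\widehat G_t^{(k)}}$. \Cref{lemma:updated_bound_stability_term} only controls the one-step quantity $\dotp{W_{t+\frac{\alpha+1}{L+1}}-W_{t+\frac{\alpha}{L+1}}}{B_{t,\alpha}}$. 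The discrepancy $\dotp{W_{t+\frac{\alpha}{L+1}}-W_t}{B_{t,\alpha}}$ is not absorbed by any telescoping. A be-the-leader argument over the fractional sequence bounds only the fractional-time regret $\sum_t\bigl(\dotp{U-W_t}{O_t}+\sum_\alpha\dotp{U-W_{t+\frac{\alpha}{L+1}}}{B_{t,\alpha}}\bigr)$. For generic pieces, its gap to $\sum_t\dotp{U-W_t}{\widehat G_t}$ is exactly these cross terms, i.e.\ the usual price of delayed feedback, and they can be positive.

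The missing idea is an exact orthogonality: $\dotp{W_t}{B_{t,\alpha}}=\dotp{W_{t+\frac{\alpha}{L+1}}}{B_{t,\alpha}}=0$. The inverse $W_{t+\frac{\alpha}{L+1}}^{-1}$ differs from $W_t^{-1}$ only by three things: a multiple of $I$, the term $\eta O_t\propto u_{t,i_t}u_{t,i_t}^\top$, and $\eta\sum_{\beta<\alpha}B_{t,\beta}$, which lives on $E_{t,<\alpha}\times E_{t,<\alpha}$. So this iterate is block-diagonal with respect to $E_{t,<\alpha}\oplus E_{t,<\alpha}^\perp$ and diagonal in the eigenvectors $u_{t,i}\in E_{t,\geq\alpha}$. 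On the other hand, $B_{t,\alpha}$ is supported in $E_{t,\le\alpha}$ and vanishes on $E_{t,<\alpha}\times E_{t,<\alpha}$. Its diagonal along $u_{t,i}$, $i\in J_{t,\alpha}$, has also been removed. That removal is precisely the role of the subtracted term $\sum_{i\in J_{t,\alpha}}(u_{t,i}^\top A_{t,\alpha}u_{t,i})u_{t,i}u_{t,i}^\top$, which your proposal never uses. Hence the fractional-time regret coincides exactly with $\sum_t\dotp{U-W_t}{\widehat G_t}$, and the rest of your argument then goes through. You should take the fractional-time eigenvalue bounds from the full \cref{lemma:stability_eigenvalues}, not from the integer-time statement.

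A smaller slip is in Step~3: $\|\widehat G_t\|_{\op}\lesssim d^2/\gamma$ fails per round, because an epoch-end $A_{t,\alpha}$ batches up to $2^\alpha$ samples. What holds is $\sum_t\|\widehat G_t\|_{\op}\lesssim d^2T/\gamma$. You get this by charging each sample once, at cost $O(d/\mu_{I_s})\le O(d^2/\gamma)$, or $2/\lambda_{t,i_t}\le 2d/\gamma$ for diagonal rounds.
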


\paragraph{Step 3: From batched estimates $\widehat{G}_t$ to true gains $G_t$.} 
The final step relates the left-hand side of Lemma~\ref{lemma:compute_OMD_regret_intro} to the regret of Algorithm~\ref{alg:final_alg} by replacing estimators $\widehat{G}_t$ with true gains $G_t$. By Lemma~\ref{lemma:unbiased_hat_L_final}, the diagonal estimator $O_t$ and the inverse-L-shaped estimators $B_{t,\alpha}$ are unbiased for the diagonal part of $G_t$ and the complementary inverse-L-shaped blocks $G_t^{(\alpha)}$ depicted in Figure~\ref{fig:layered_estimators}, respectively (see \cref{lemma:unbiased_hat_L_final}).

The main subtlety is that these estimators $B_{t,\alpha}$ are \emph{batched} by epochs within the final estimate $\widehat{G}_t$: for a layer-$\alpha$ epoch $[t_0, t_1]$, all estimators $B_{t,\alpha}$ for $t \in [t_0,t_1]$ are deferred to $\widehat{G}_{t_1}$. As a result, Lemma~\ref{lemma:compute_OMD_regret_intro} bounds regret as if $W_{t_1}$ were selected throughout the entire epoch. This is not the case since the OMD $W_t$ is updated at every round. The key observation is that within a layer-$\alpha$ epoch, $W_t$ is updated only on the finer block $E_{t,<\alpha} \times E_{t,<\alpha}$ (when finer-layer epochs end) and along the diagonal (due to the $O_t$ updates and projection step of the OMD step), leaving the off-diagonal inverse-L-shaped region of $W_t$ associated with $B_{t,\alpha}$ identically zero throughout the epoch. Consequently, the inner product $\langle W_t, B_{t,\alpha}\rangle = 0$ holds for all $t \in [t_0, t_1]$, which is what is needed to exchange $W_{t_1}$ for $W_t$ in the regret bound without error. In turn, up to an additive $\gamma T$ term arising from relating $W_t$ to $U_t$, this yields the same regret bound for \cref{alg:final_alg} as the right-hand side of Lemma~\ref{lemma:compute_OMD_regret_intro}. Optimizing over the stepsize parameter $\eta$ and choosing the mixing parameter $\gamma=1/T$ gives Theorem~\ref{thm:ub-intro}.

\section{Lower bound for Bandit PCA}
\label{sec:lb}

In this section, we describe the main ingredients in the proof of~\cref{thm:lb-intro}, our lower bound for rank-$r$ Bandit PCA. 
The cornerstone of the proof is an explicit construction of~\emph{adaptive adversary}, summarized in~\cref{sec:construction_adversary}. 
The adversary samples a hidden subspace~$E \sim \Unif(\Gr(p,d))$ at the start.
During the game, gain matrices are constructed in such a way that the learner is encouraged to discover sequentially orthogonal directions closely aligned with~$E$. The learner's queries are used to infer the already discovered directions; these are then penalized to encourage further discovery.

With this construction at hand, the first stage of the proof is showing that our adversarial strategy indeed works as intended: a low-regret Bandit PCA learner provably discovers all~$p$ directions in~$E$, and thus estimates~$E$ in a suitable sense. 
We then formulate the isolated problem of {\em subspace estimation from adaptive queries}, and prove a query lower bound for this problem. 
This is done via a posterior likelihood argument inspired from~\cite{chen2023does}; further details will be given in~\cref{sec:lb_proof-sketch}.

\subsection{Adaptive adversary construction}
\label{sec:construction_adversary}
Our construction (see~\cref{alg:adversary}) is as follows.
Initially, we sample a subspace~$E \sim \Unif(\Gr(p,d))$ and perturbation parameters~$a,b \sim \Unif([0,\alpha])$\footnote{Parameters~$a,b$ are randomized for some technical reasons, whose discussion is omitted here due to space limitation.} at a small constant scale~$\alpha\ll 1$.
Next, we construct 
\begin{equation}
\label{eq:hidden-matrix}
H := (1+a) P_{E^{\vphantom{\perp}}} - b\tfrac{p}{d} P_{E^\perp}, 
\end{equation}
a slight perturbation of the projector $P_E$, which is then used to generate the gain matrices. This matrix, as well the parameters~$E,a,b$ specifying the instance, are kept hidden from the learner.
During the game, we maintain a growing subspace~$W = W_j \subset \R^d$ of dimension~$j = j(t)$, which models the subspace explored by the learner so far.
In each round~$t \in [T]$, we operate in two stages: interaction with the learner  (lines~\ref{line:expected_gain}--\ref{line:issue-reward}); maintenance/update of the explored subspace~$W$ (lines~\ref{line:trigger-rule}--\ref{line:new-epoch-start}).

\begin{algorithm}[h]
\caption{Adaptive Adversary for Bandit PCA}
\label{alg:adversary}

\DontPrintSemicolon

\SetKwInOut{Input}{Input}

\Input{dimension~$d$, ranks~$r\leq p$, signal and dithering strengths~$\nu,\alpha$, max nb.~of epochs~$J_{\max}$}
    
    Sample $E \sim \Unif(\Gr(p,d))$, i.i.d.\ $a,b \sim \Unif([0,\alpha])$, and let~$H := (1+a)P_E^{\vphantom\perp}- b \frac{p}{d} P_{E^\perp}$ \label{line:adversary-data}
    
    Initialize~$j \leftarrow 0$, $\tau_0 := 0$, $W_0 := \{0\} \subset \R^{d}$, and $L_T := \log(eJ_{\max})+\log\log(eT)$ \label{line:adversary-init}
    
\For{$t=1,\ldots,T$}{

    $\smash{\overline G_t := (I+ \nu P_{W_j^\perp} H P_{W_j^\perp})\1[j<J_{\max}]}$ \label{line:expected_gain}

    
    $G_t := \textsc{GainMatrixSampling}(\overline G_t, r)$ \label{line:gain-sampling}
    
    Receive~$w_t$ and issue reward~$\ell_t^2 := \langle G_t, w_t w_t^\top \rangle$ \label{line:issue-reward}

    
    \If{$\frac{1}{t - \tau_j} \sum_{s=\tau_j+1}^{t} \ell_s^2
        \ge
        \frac{r}{d}
        (1+\nu(1-\alpha))+ C_{\mathrm{adv}}\frac{r}{p}\paren{\sqrt{\frac{L_T}{t-\tau_j}} + \frac{L_T}{t-\tau_j}}$ \textup{\bf and}   $j < J_{\max}$ \label{line:trigger-rule}}{ 
    

    $W_{j+1} := \Span(W_j,w_{\boldsymbol{i}})$ where $\boldsymbol{i} \sim \Unif(\{\tau_j+1,\dots,t\})$ \tcp*[r]{Randomized rounding} \label{line:rounding}
    
    $\tau_{j+1} := t$ and~$j \leftarrow j+1$ \tcp*[r]{New epoch start} \label{line:new-epoch-start}

    }
}
\end{algorithm}


\begin{algorithm}[h]
\caption{$\textsc{GainMatrixSampling}(\overline{G},r)$}
\label{alg:gain_sampling}

\SetKwInOut{Input}{Input}


\Input{expected matrix~$\overline{G} \in \mathbb{S}^d_+$, requested rank~$r \in [4,d/2]$}


Sample~$\beta \sim \BetaDist\left(1, \frac{r}{2}-1\right)$,~$\gamma \sim\GammaDist(\frac{d}{2},1)$ and~$V \sim\Unif(\Gr(r,d))$ independently

\Return $G=\frac{\beta\gamma r}{d} \overline{G}^{1/2} P_V \, \overline{G}^{1/2}$ 
\end{algorithm}

\paragraph{Interaction stage.}
Fixing~$\nu > 0$, we sample~$G_t \in \mathbb{S}^d_+$ of rank~$r$, with expectation proportional to
\[
\overline G_t := I + \nu P_{W^\perp} H P_{W^\perp}.
\]
This form of~$\overline G_t$ gives the learner an extra reward for aligning~$w_t$ along a direction in~$E \cap W^\perp$, cf.~\eqref{eq:hidden-matrix}, so that the already explored directions---those in~$W$---are {not} rewarded.
The sampling mechanism here, implemented in~\cref{alg:gain_sampling}, is such that, conditionally on the history in the previous rounds, each rank-one PSD marginal~$\langle G_t, w w^\top \rangle$ of~$G_t$ has the law
$
\frac{r}{d} \langle \overline G_t, w w^\top \rangle \Exp(1).
$
This ``exponential intensity measurement'' mechanism is instrumental in the simulation process discussed in Section~\ref{sec:lb_proof-sketch}.

\paragraph{Maintenance stage.}
In each round, we first test whether a ``discovery event'' has taken place, and perform maintenance only in that case.
The test (see line~\ref{line:trigger-rule}) consists of computing the running average of the rewards received since the last discovery event, and comparing it against a threshold chosen based on the structure of~$H$ and a confidence margin (to account for the randomness in~$G_t$). 
If the threshold is exceeded, we register a new discovery event in this round. 
We then update the explored subspace (line~\ref{line:rounding}) by expanding it in the ``typical direction of this epoch,'' sampled uniformly from the queries in the just completed epoch~$j$, i.e., in~the rounds after the previous discovery event.
The maintenance stage is omitted altogether once the number of discoveries has exceeded~$J_{\max} \asymp \alpha p$: at this point, the learner has discovered a constant fraction $\alpha$ of the $p$-dimensional subspace $E$, which already captures the main hardness of estimating $E$ (further details of this are given in~\cref{sec:lb_proof-sketch}).

The core property of the adversarial construction described above is that achieving low regret requires making all $J_{\max}$ discoveries with good probability. Intuitively, otherwise any comparator $u\in S^{d-1} \cap (E\cap W^\perp)$ has remained active throughout the game, and thus had consistently higher rewards than the learner. In practice, since we aim to bound the pseudo-regret, we instead use a randomized comparator $u\sim\mathrm{Unif}(S^{d-1}\cap E)$ which is therefore independent of the learner's actions.
Finally, we note that the adversary {\em does not hide} from the learner the ``exposed'' subspace~$W$. This fact is crucial to the simulation procedure which allows for the reduction in the proof (see~\cref{sec:lb_proof-sketch}).

\subsection{Proof sketch of Theorem~\ref{thm:lb-intro}}
\label{sec:lb_proof-sketch}


\paragraph{From Bandit PCA to subspace discovery.} Using the adversarial construction from \cref{alg:adversary}, we first reduce the Bandit PCA game to what we call the~$k$-\emph{subspace discovery problem}. 
Here, a learner makes adaptive queries to an intensity oracle $w\in\Rbb^d\mapsto \dotp{I+\nu H}{ww^\top}\mathrm{Exp}(1)$, with~$H$ given by~\eqref{eq:hidden-matrix}, and aims to recover the subspace~$E$. Crucially, the sampling mechanism of \cref{alg:gain_sampling} ensures that such oracle queries allow to \emph{simulate} the adversary in \cref{alg:adversary}. 
Using this simulation process, we then show that a Bandit PCA algorithm that has low regret when interacting with the adversary of Algorithm~\ref{alg:adversary} can be turned into a successful subspace discovery algorithm, namely one discovering~$k\approx J_{\max}\asymp \alpha p$ directions of $E$.

\paragraph{From subspace discovery to covariance estimation.} For convenience, we further reduce subspace discovery to the problem of \emph{covariance estimation with adaptive queries}. 
In the latter problem, the learner has access to the same intensity oracle but aims to estimate~$E$, thereby estimating the covariance matrix $I+\nu H$. The estimation performance quantified by the operator-norm discrepancy between the true and estimated subspace projectors.
Intuitively, up to a symmetrization argument, an algorithm solving the $k$-subspace discovery problem finds a uniformly-random $k$-dimensional subspace of $E$. 
Hence, making~$\tilde O(\frac{\dim(E)}{k})=\tilde O(\frac{1}{\alpha})$ repeated runs of the subspace discovery algorithm allows to estimate~$E$.

\paragraph{Query lower bound for the covariance estimation problem.} The last step of the proof is to show a query lower bound for the covariance estimation problem using a posterior likelihood argument inspired from \cite{chen2023does}. 
In a nutshell, we follow the posterior distribution on $E$ after $m$ queries and show that its mass within a small Grassmannian ball $B(E,\beta)$ is significantly smaller than that on the dilated ball $B(E,2\beta)$, thereby preventing the algorithm from estimating $E$ up to accuracy $\beta$. Before making any queries, the log-volume ratio between the two balls is~$\log \left(\frac{\mathrm{Vol}(B(E,2\beta))}{\mathrm{Vol}(B(E,\beta))}\right) \asymp p(d-p)$, which corresponds to the effective dimension of the Grassmannian~$\Gr(p,d)$. 
The crux of the argument is then to upper-bound how much information one gets from  a query: we show that each query reduces the log-volume ratio by~$O(\frac{\beta^2\nu^2}{p\max(1,\nu)^2})$ (see \cref{lemma:tighter_computations}). 
Altogether, this gives a $\Omega(\frac{p^2d\max(1,\nu)^2}{\beta^2\nu^2})$ query lower bound for estimating $E$ with accuracy $\beta$.

When the two reductions and the covariance estimation query lower bound are combined together, we are left with the scalings
\[
\nu\asymp \min\left(\sqrt{\frac{d^3}{T\log d}},\frac{d}{r}\right), \quad  p \approx\frac{d}{\max(\nu,2)}
\]
for the signal strength and subspace dimension parameters. This leads to the bound in~\cref{thm:lb-intro}.

\section{Conclusion and perspectives}
\label{sec:conclusion}

In this paper, we study the minimax regret for Bandit PCA with gain matrices of rank~$r$, establishing it to be of order~$r\sqrt{dT}$ up to logarithmic factors.
The upper bound is attained by an efficient algorithm \textsc{OnlineBanditPCA}, which nontrivially modifies the OMD-based approach of~\cite{kotlowski2019bandit} by adding a multiscale exploration scheme; this scheme allows to update the eigenspaces of different spectral magnitudes should be updated at different rates. 
To prove the lower bound, we construct an adaptive adversary which forces the learner to solve a subspace estimation problem with adaptive queries. 
This effect is attained by using the transcript to infer which directions have already been ``exposed'' by the learner, and penalizing for these directions in the future; thus, adaptivity is crucial.
In conclusion, let us discuss some future directions and open problems.

\paragraph{Closing the polylogarithmic gap.}
The~$\log(dT)^3$ overhead in the upper bound of Theorem~\ref{thm:ub-intro} has three identifiable sources: the~$\log\det$ divergence term $d\log(1/\gamma)$, the $\log^2(dT)$ operator-norm concentration of the epoch estimators $A_{t,\alpha}$, and the layering factor~$L = O(\log(dT))$; each of these might be improvable. 
On the other hand, the factor~$\log d$ in Theorem~\ref{thm:lb-intro} originates from the repeated estimation (``boosting'') step in the subspace-discovery-to-covariance-estimation reduction, and it is unclear how to avoid it without directly proving a query lower bound on the subspace discovery problem. 
Meanwhile, we suspect that the~$\log(eT)$ factor in Theorem~\ref{thm:lb-intro} is removable, though it would require a nontrivial modification of our approach. 
Indeed, we have already managed to remove one potential occurrence of this factor, by employing a martingale-concentration analysis of the epoch triggering rule, instead of simply taking the union bound (see Step 2 in the proof of Proposition~\ref{prop:regret-discovery}).
The remaining bottleneck comes from final rescaling of the instance, accounting for the unbounded (by exponentially-concentrated) operator norms of the gain matrices. 
This, in turn, originates from the exponential sampling technique in Algorithm~\ref{alg:gain_sampling}. The exponentially-distributed dilation, while crucial in the proof of Proposition~\ref{prop:regret-discovery}, does not conform to the boundedness assumption on the gains; 
yet, our attempts to prove an analogue of Proposition~\ref{prop:regret-discovery} with other distributions were unsuccessful.

\paragraph{Improving the computational complexity.} In our proposed algorithm, intuitively, the main computational bottleneck is the eigenvalue decomposition of the OMD iterate $U_t$ (see line~\ref{line:eigenval_decompose_U} in \cref{alg:subspace_update}). While a naive implementation would have $O(d^3 T)$ complexity, thanks to the epoch-layered structure of our algorithm, this can be reduced to $O(d^2 T\log d)$ complexity as shown in \cref{prop:computational_complexity}. While we believe that the extra factor $\log d$ could be removed using a more involved implementation (see \cref{remark:computational_complexity}), it is unclear whether the runtime can be significantly improved---for instance depending on the rank $r$ of gain matrices---while preserving tight regret bounds. In comparison, we recall that the sparse sampling algorithm from \cite{kotlowski2019bandit} has $O(dT)$ runtime at the price of a suboptimal regret $\tilde O(d\sqrt{rT})$.

\comment{
A naive implementation of our algorithm therefore has $O(d^3 T)$ complexity. This can however be improved to $O(d^2 T)$ expected complexity by noting that at each round, we only need perform an eigenvalue decomposition of $U_t$ along the block $E_{t,\leq\alpha_t}\times E_{t,\leq\alpha_t}$, which is where eigenvectors may have rotated. Note that under the ``good event'' where eigenvalues remain stable (see \cref{lemma:stability_eigenvalues_main_body}), a subspace $E_{t,\leq\alpha}$ may have dimension at most $O(1/\mu_\alpha)=O(2^\alpha)$ since $U_t\in\Scal_d$. And hence, computing the eigenvalue decomposition of $U_t$ on $E_{t,\leq\alpha_t}\times E_{t,\leq\alpha_t}$ has complexity $O(\min(d,2^{\alpha_t})^3)$. Together with the epoch hierarchical schedule, this yields a total $O(d^2 T)$ expected complexity, where the expectation results from the low-probability ``bad event'' of \cref{lemma:stability_eigenvalues_main_body}. 
}

\paragraph{Extension to the $k$-PCA setting.} A natural generalization of our setting is the \emph{$k$}-PCA problem where, instead of selecting a rank-one projector $w_tw_t^\top$ at each round, the learner selects a rank-$k$ projector. 
The full-information formulation of this problem has been extensively studied~\cite{nie2016online}; meanwhile, for the bandit formulation to the best of our knowledge we are only aware of reductions to linear bandits \cite{dani2007price,hazan2010learning,hoeven2018many} which yield suboptimal regret.
The layered structure introduced in this work may extend to this setting, potentially improving the existing regret bounds.

\paragraph{Further connections with quantum tomography.}
In addition to the interpretation of rank-1 Bandit PCA as quantum tomography of an evolving pure state given in~\cref{sec:intro}, one could draw further connections between Bandit PCA and quantum state tomography with adaptive measurements~\cite{chen2023does}, on a technical level. In particular, the general idea of piecewise-flat spectral approximation of a quantum state or operator (i.e., grouping eigenstates with close eigenvalues together), which is at the heart of our layering exploration scheme in~\cref{alg:final_alg}, was also instrumental in obtaining the sharp rate of quantum tomography in terms of trace norm~\cite{chen2023does}. 

\paragraph{Connection with quantum state estimation.}
Another possibility is to interpret Bandit PCA as an online quantum state estimation (QSE) problem, see e.g.~\cite{zimmert2022pushing,tseng2025online} and references therein. 
Here, the best comparator~$w^\star$---the top eigenvector of the cumulative gain~$G_{1:T}$---is interpreted as the estimated state, and~$G_t$ as linear measurements performed on a sequence of adaptively selected states~$w_1, \dots w_T$. Note that the purity of~$w^\star$ here is superficial: due to the linearity of~$\langle \cdot , G \rangle$, the maximum in~\eqref{eq:regret} can be taken over all~$W \in \Scal_d$ instead of~$ww^\top$, yielding a pure-state maximizer automatically.
Thus, Bandit PCA corresponds to online QSE with linear losses and bandit feedback. 
One can then ask, e.g., what happens if the learner can play arbitrary states~$W_t \in \Scal_d$ in each round.

\newpage
\appendix

\section{Proof of the regret upper bound}
\label{app:ub}

In this section, we prove the regret upper bound on \cref{alg:final_alg}, which implies \cref{thm:ub-intro}.

\begin{theorem}[Regret bound for \cref{alg:final_alg}]
\label{thm:ub}
    For some absolute constant $c_0>0$, the following holds. 
    Fix~$T\geq 1$, $\gamma\in(0,\frac{1}{6})$, $L=\ceil{\log_2(d/\gamma)}$, and~$\eta\leq \frac{c_0}{r \log^4(dT/\gamma)}$. 
    Suppose that $\|G_t\|_{\op}\leq 1$ and $\|G_t\|_*\leq r$ for all $t\in[T]$. Then,
    \begin{equation*}
        \Reg_T(\cref{alg:final_alg}) \lesssim  \frac{d}{\eta}\log(1/\gamma) + \eta r^2 \log^5(dT/\gamma) T + \gamma T.
    \end{equation*}
    In particular, taking~$\gamma=T^{-1}$ and $\eta=\frac{1}{r\log^2 (dT)}\sqrt{\frac{d}{T}}$ results in
    \begin{equation*}
        \Reg_T(\cref{alg:final_alg}) \lesssim r\sqrt{dT}\log^3(dT).
    \end{equation*}
\end{theorem}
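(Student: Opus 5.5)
The proof follows the three-step outline of \cref{subsec:regret_proof_sketch}, with an explicit failure probability $\delta$ that is chosen at the end. Fix a comparator $u\in S^{d-1}$ and take the smoothed comparator $U:=(1-\gamma)uu^\top+\frac{\gamma}{d}I\in\Scal_d$. Then $D_R(U\parallel I/d)=-\log\det(dU)\lesssim d\log(1/\gamma)$, and replacing $uu^\top$ by $U$ costs at most $\gamma T$. On the good event of \cref{lemma:stability_eigenvalues} (probability $\ge 1-\delta$), \cref{lemma:R_not_negative} gives $R_t\succeq 0$ for every $t$. Hence the sampling is valid (\cref{lemma:check_sampling_valid}) and $\Ebb_t[w_tw_t^\top]=U_t$. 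The pseudo-regret therefore splits as
\[
\Ebb\sum_t\dotp{uu^\top-U_t}{G_t}\le \gamma T+\Ebb\sum_t\dotp{U-W_t}{G_t}+T\delta .
\]
The term $T\delta$ bounds the contribution of the bad event, since per-round rewards lie in $[0,1]$. The $\gamma$-mixing between $U_t$ and $W_t$ costs a further $O(\gamma T)$.

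Next, replace $G_t$ by $\widehat G_t$. By \cref{lemma:unbiased_hat_L_final}, $O_t$ is conditionally unbiased for the part of $G_t$ diagonal in the eigenbasis $u_{t,\cdot}$. Likewise $B_{t_1,\alpha}$, issued at the end $t_1$ of a layer-$\alpha$ epoch $[t_0,t_1]$, is unbiased for $\sum_{s\in[t_0,t_1]}$ of the off-diagonal inverse-L-shaped blocks $G_s^{(\alpha)}$ relative to the frozen subspaces $E_{t_1,\le\cdot}$. Unbiasedness holds because these subspaces are fixed throughout the epoch by the lazy update. On the comparator side, $\dotp{U}{\cdot}$ is linear, so $\Ebb\sum_t\dotp{U}{\widehat G_t}=\Ebb\sum_t\dotp{U}{G_t}$ up to the bad event. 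On the learner side, I will use the key observation from Step~3. By claim~3 of \cref{lemma:well_defined_algorithm}, $W_t$ is block-diagonal along $(E_{t,\alpha})_\alpha$ and diagonal inside each $E_{t,\alpha}$ in the basis $u_{t,\cdot}$. Hence $\dotp{W_s}{G_s^{(\alpha),\mathrm{offdiag}}}=0$ for all $s$ in the epoch, and likewise $\dotp{W_{t_1}}{B_{t_1,\alpha}}=0$. Consequently $\Ebb\dotp{W_t}{G_t}=\Ebb\dotp{W_t}{\mathrm{diag\ part}}=\Ebb\dotp{W_t}{O_t}=\Ebb\dotp{W_t}{\widehat G_t}$. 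This reduces the regret to $\Ebb\sum_t\dotp{U-W_t}{\widehat G_t}$ plus $O(\gamma T+T\delta)$.

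Finally, invoke \cref{lemma:compute_OMD_regret} (the full version of \cref{lemma:compute_OMD_regret_intro}). Internally it runs OMD over fractional times, applies \cref{lemma:updated_bound_stability_term} with $\mu=\frac94\mu_\alpha$ to each $B_{t,\alpha}$, and bounds $\|B_{t,\alpha}\|_{\op}$ by Wishart-type concentration via \cref{lemma:bounding_operator_norm}. This gives
\[
\frac{d\log(1/\gamma)}{\eta}+c\eta r^2\log^4(dT/\delta)LT+c\frac{d^2T}{\gamma}\delta .
\]
Choose $\delta=\gamma^2/(d^2T)$. Then the last term and $T\delta$ are both $O(\gamma)$, $\log(dT/\delta)\lesssim\log(dT/\gamma)$, and $L\lesssim\log(dT/\gamma)$, so the middle term becomes $\eta r^2\log^5(dT/\gamma)T$. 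The required condition $\eta\lesssim 1/(rL^2\log^2(dLT/\delta))$ of \cref{lemma:stability_eigenvalues} is implied by $\eta\le c_0/(r\log^4(dT/\gamma))$. Plugging in $\gamma=1/T$ and $\eta=\frac{1}{r\log^2(dT)}\sqrt{d/T}$ gives $r\sqrt{dT}\log^3(dT)$. If $T$ is small enough that this $\eta$ violates the constraint, then $r\sqrt{dT}\log^3\gtrsim T$ and the trivial bound $\Reg_T\le T$ suffices.

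The main obstacle is the exchange between $W_t$ and $W_{t_1}$ in the second paragraph. It requires that the off-diagonal inverse-L region of $W_s$ stays exactly zero throughout a layer-$\alpha$ epoch, even though finer layers and $O_t$ update $W_s$. It also requires that the diagonal handling is consistent when eigenbases rotate within $E_{\le\beta}$. I would verify both by induction on $t$ using the block structure guaranteed by \cref{lemma:well_defined_algorithm}.
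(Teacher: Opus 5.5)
Your proposal is correct and follows the paper's proof essentially step for step. It uses the smoothed comparator $U_\gamma=(1-\gamma)uu^\top+\gamma I/d$ with $D_R(U_\gamma\parallel I/d)\le d\log(1/\gamma)$, and the unbiasedness of $O_t$ and of the epoch-batched $B_{t_e,\alpha}$ through the epoch-invariant operator $\Psi_{t_b,\alpha}$. It uses $\langle W_s,\Psi_{s,\alpha}(\cdot)\rangle=0$ to swap $W_{t_e}$ for $W_s$, then \cref{lemma:compute_OMD_regret} with polynomially small $\delta$ and the same small-$T$ fallback.

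One minor simplification: the $T\delta$ charge in the sampling step is unnecessary, because the fallback naive sampling also gives $\Ebb[w_tw_t^\top\mid\Hcal_{t-1},G_t]=U_t$. The bad event enters only through the indicator $\1[R_s\succeq 0]$ in $\Ebb[A_{t_e,\alpha}]$, which costs $O(\delta LT)$.
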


We start by checking that the algorithm is well-defined. Note that $\alpha_1=L$ hence initializing $E_{0,\leq L}$ is sufficient to run the algorithm. We also check that line~\ref{line:sample_w_to_complete_R} of the sampling scheme subroutine (\cref{alg:layered_sampling}) is well-defined below:

\begin{lemma}\label{lemma:check_sampling_valid}
    For any $t\in[T]$, if $R_t\succeq 0$, then either (1) $R_t=0$ in which case almost surely $I_t\in[L]$, or (2) $R_t\succ 0$ in which case $\sum_{\alpha\in[L]}p_{t,\alpha}<1$ and $\|\frac{R_t}{p_{t,0}}\|_*=1$; as a result we can indeed generate $w_t$ to satisfy line~\ref{line:sample_w_to_complete_R} of \cref{alg:layered_sampling} since $\frac{R_t}{p_{t,0}}\in\Scal_d$.
\end{lemma}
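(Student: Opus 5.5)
The approach is a trace computation. Note first that $\tr(P_{E_{t,\leq\alpha}}/d_{t,\leq\alpha})=1$ whenever $d_{t,\leq\alpha}\geq 1$. The sum in line~\ref{line:good_event_holds} runs over $\alpha\geq\beta_t$, and for $\alpha<\beta_t$ we have $d_{t,\leq\alpha}=0$, so $p_{t,\alpha}=0$ and those terms vanish in any case. Since $U_t\in\Scal_d$ (a convex combination of $W_t\in\Scal_d$ and $I/d$), we get
\[
\tr(R_t)=\tr(U_t)-\sum_{\alpha\in[L]}p_{t,\alpha}=1-\sum_{\alpha\in[L]}p_{t,\alpha}=p_{t,0}.
\]
Consequently, under $R_t\succeq 0$ we have $p_{t,0}=\tr(R_t)\geq 0$, so line~\ref{line:sample_I_t} defines a genuine probability distribution on $\{0\}\cup[L]$: all $p_{t,\alpha}\geq 0$ and they sum to one.

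The cases then split according to whether $\tr(R_t)$ vanishes. For a PSD matrix, $\tr(R_t)=0$ holds if and only if $R_t=0$.

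\textbf{Case $R_t=0$.} Here $p_{t,0}=0$, so $\Pbb[I_t=0]=0$ and almost surely $I_t\in[L]$. Line~\ref{line:sample_w_to_complete_R} is never triggered, and its right-hand side (with the indicator $\1[p_{t,0}>0]=0$) is vacuous anyway.

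\textbf{Case $R_t\neq 0$.} Here $\tr(R_t)>0$, so $\sum_{\alpha\in[L]}p_{t,\alpha}<1$ and $p_{t,0}>0$. Since $R_t/p_{t,0}$ is PSD with trace one, its nuclear norm equals its trace, which is $1$; hence $R_t/p_{t,0}\in\Scal_d$. The statement's dichotomy is written as $R_t=0$ versus $R_t\succ 0$. The intended reading is $R_t\succeq 0$ with $R_t\neq 0$, and the argument only uses that; I would note this reading explicitly rather than attempt to prove strict positive definiteness, which is not needed.

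Finally, to realize line~\ref{line:sample_w_to_complete_R}, write the eigendecomposition $R_t/p_{t,0}=\sum_i \sigma_i v_iv_i^\top$ with $\sigma_i\geq0$ and $\sum_i\sigma_i=1$. Sampling $w_t=v_i$ with probability $\sigma_i$ gives a unit vector with $\Ebb[w_tw_t^\top]=R_t/p_{t,0}$. There is no real obstacle in this lemma. The only point requiring care is confirming that $p_{t,\alpha}=0$ for layers with $E_{t,\leq\alpha}=\{0\}$, which holds because $p_{t,\alpha}=\tfrac14\mu_\alpha d_{t,\leq\alpha}$ and $d_{t,\leq\alpha}=0$. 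This ensures that the trace identity is consistent with summing over $\alpha\geq\beta_t$ in $R_t$ but over all of $[L]$ in $p_{t,0}$.
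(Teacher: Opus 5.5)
Your proposal is correct and follows the paper's proof: the same trace identity $\tr(R_t)=1-\sum_{\alpha\in[L]}p_{t,\alpha}=p_{t,0}$, followed by the split on whether $R_t$ vanishes. You are also right that case (2) only needs $R_t\succeq 0$ with $R_t\neq 0$ rather than $R_t\succ 0$ (the latter would leave the dichotomy non-exhaustive), and the paper's argument uses nothing more than that.
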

\begin{proof}
    Note that
    \begin{equation*}
        \tr(R_t)= \tr(U_t) - \sum_{\alpha\in[L],d_{t,\leq\alpha}>0} p_{t,\alpha} \frac{\tr(P_{E_{t,\leq\alpha}})}{d_{t,\leq\alpha}} = 1-\sum_{\alpha\in[L]} p_{t,\alpha} = p_{t,0}.
    \end{equation*}
    This proves the desired result when $R_t\succ 0$.
    If $R_t=0$ this gives $\sum_{\alpha\in[L]}p_{t,\alpha}=1$ and as a result, almost surely $I_t\in[L]$.
\end{proof}

We divide the proof of \cref{thm:ub} into four sections: $(i)$ in \cref{subsec:lemma_estimators} we prove simple properties on the form of our gain matrix estimators, $(ii)$ in \cref{subsec:eigenval_stability} we prove eigenvalue stability properties for exploration subspaces, $(iii)$ in \cref{subsec:OMD_regret_bound} we derive a regret upper bound for the OMD iterates with respect to estimated gain matrices $\widehat G_t$, and $(iv)$ conclude in \cref{subsec:ubthm} by relating this upper bound to the regret of \cref{alg:final_alg}. Throughout the proof, we denote by $\Hcal_t$ the history up to round~$t$ included.

\subsection{General properties of gain matrix estimators}
\label{subsec:lemma_estimators}

We start with the following lemma which gives an unbiaised estimator of a gain matrix $L$ when having access to the observation $\dotp{L}{ww^\top}$ for isotropic vectors $w$. This result known (e.g., see \cite[Section 4.7.1, Example (3)]{martinsson2020randomized}); we include a short proof for completeness.

\begin{lemma}[Sphere estimator unbiasedness]
\label{lemma:unbiased_hat_L_final}
Let $L$ be a symmetric matrix, $w\sim\Ncal(0,I/d)$ and $\ell^2:=\dotp{L}{w w^\top}$. Then,
    \begin{equation*}
        \Ebb\sqb{d^2\ell^2(w w^\top - I/d)}=2L.
    \end{equation*}
    Also, if $w\sim\mathrm{Unif}(S^{d-1})$ and $\ell^2 = \dotp{L}{ww^\top}$, then
    \begin{equation*}
        \Ebb\sqb{d\ell^2((d+2)ww^\top - I)} = 2L.
    \end{equation*}
\end{lemma}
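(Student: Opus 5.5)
The plan is to compute fourth moments directly, first for the Gaussian case and then transfer to the sphere via the polar decomposition.

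\emph{Gaussian case.} Write $w=g/\sqrt d$ with $g\sim\Ncal(0,I)$. Then $d^2\ell^2 ww^\top = (g^\top L g)\,gg^\top$ and $d^2\ell^2 I/d = (g^\top L g) I$. For the second term, $\Ebb[g^\top L g]=\tr(L)$. For the first, Isserlis' (Wick's) theorem gives, entrywise,
\[
\Ebb[g_ig_jg_kg_l]=\delta_{ij}\delta_{kl}+\delta_{ik}\delta_{jl}+\delta_{il}\delta_{jk}.
\]
Contracting with $L_{kl}$ and using symmetry of $L$ yields $\Ebb[(g^\top Lg)gg^\top]=\tr(L)I+2L$. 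Subtracting $\tr(L)I$ gives $2L$, which is the first identity.

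\emph{Sphere case.} Write $g=\rho\,w$ with $\rho=\|g\|$ independent of $w\sim\Unif(S^{d-1})$, and note $\Ebb[\rho^4]=d(d+2)$. The previous computation gives
\[
\Ebb[\rho^4]\,\Ebb[(w^\top Lw)ww^\top]=\tr(L)I+2L,
\]
so
\[
\Ebb[\ell^2 ww^\top]=\frac{\tr(L)I+2L}{d(d+2)}.
\]
Also $\Ebb[\ell^2]=\tr(L)/d$, since $\Ebb[ww^\top]=I/d$. Hence
\[
\Ebb\bigl[d\ell^2((d+2)ww^\top-I)\bigr]=\tr(L)I+2L-\tr(L)I=2L.
\]

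Neither step is a real obstacle. The only care needed is the Wick contraction, in particular that symmetry of $L$ merges the two cross-pairings into $2L$, and the independence of norm and direction for Gaussian vectors.
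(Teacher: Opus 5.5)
Your proof is correct and follows the same route as the paper. Both compute the Gaussian fourth moment, then pass to the sphere through the polar decomposition $g=\rho w$, using that $\rho$ is independent of $w$ and $\Ebb[\rho^4]=d(d+2)$. The only cosmetic difference is that you get $\Ebb[(g^\top Lg)gg^\top]=\tr(L)I+2L$ from Isserlis' theorem, whereas the paper diagonalizes $L$ and uses that $dY_i^2\sim\chi^2(1)$ has variance $2$.
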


\begin{proof}
    Let $L=O\diag(\lambda_1,\ldots,\lambda_d)O^\top$ be the SVD of $L$. Denote by $u_1,\ldots,u_d$ the columns of $O$.
    Since $\Ncal(0,I/d)$ is isometric, without loss of generality, we may generate $w=\sum_{i\in[d]}Y_iu_i$ where $Y_i\overset{iid}{\sim}\Ncal(0,1/d)$ are sampled independently. Then, $\ell^2=\sum_{i\in[d]}\lambda_i Y_i^2$. Next,
    \begin{equation*}
        w w^\top -\frac{I}{d} = \sum_{i\in[d]} \paren{Y_i^2-\frac{1}{d}}u_iu_i^\top + \sum_{i<j}Y_iY_j(u_iu_j^\top+u_ju_i^\top) 
    \end{equation*}
    Noting that the variables $Y_i$ are all symmetric and independent, with $\Ebb[Y_i^2]=1/d$, we obtain
    \begin{equation*}
        \Ebb\sqb{d^2\ell^2\paren{w w^\top - \frac{I}{d}}} = \sum_{i\in[d]} \Ebb\sqb{d^2\ell^2\paren{Y_i^2-\frac{1}{d}}} u_iu_i^\top =  \sum_{i\in[d]} \lambda_i\Ebb\sqb{(dY_i^2-1)^2} u_iu_i^\top = 2\sum_{i\in[d]}\lambda_iu_iu_i^\top.
    \end{equation*}
    In the last equality we noted that $dY_i^2\sim\chi^2(1)$, which has variance $2$. This ends the proof of the first claim. 
    
    We next turn to the second claim. note that if $w_0\sim\Ncal(0,I/d)$ and we denote $w=w_0/\|w_0\|$, then $w\sim\mathrm{Unif}(S^{d-1})$ and $d\|w_0\|^2\sim \chi^2(d)$ is independent of $w$. As a result, denoting $\ell_0^2:=\dotp{L}{w_0w_0^\top}$ and $\ell^2 = \dotp{L}{ww^\top}$, one has
    \begin{equation*}
        2L \overset{(i)}{=} \Ebb\sqb{d^2\ell_0^2(w_0 w_0^\top - I/d)} = \Ebb\sqb{\ell^2(d^2\|w_0\|^4w w^\top - d\|w_0\|^2 I)}= \Ebb\sqb{\ell^2(d(d+2)w w^\top - d I)}.
    \end{equation*}
    In $(i)$ we used the previous identity. This ends the proof.
\end{proof}

This motivates the choice of our gain matrices estimators.
We will also use the following result on the operator norm of such estimators when batched over a few iterations.

\begin{lemma}\label{lemma:operator_norm_by_phase_final}
    Consider a random sequence of $d\times d$ symmetric matrices and $d$-dimensional vectors $(M_l,w_l)_{l\geq 1}$ with $\|M_l\|_*\leq C$ and $\|M_l\|_{\op}\leq 1$ and $w_l\mid \Hcal_l\sim \mathrm{Unif}(S^{d-1})$ for all $l\geq 1$, where $\Hcal_l=\sigma(M_{a}, a\leq l,w_{l'},l'<l)$ is the history. Let $L\in\Nbb$ be a stopping time for the filtration $(\Hcal_l)_{l\geq 1}$ and define
    \begin{equation*}
        A:=d\sum_{l\in[L]} \dotp{M_l}{w_lw_l^\top} ((d+2)w_lw_l^\top-I).
    \end{equation*}
    Then, for some universal constant $c_0>0$, with probability at least $1-\delta$, we have
    \begin{equation*}
        \|A\|_{\op}^2 \leq c_0C\paren{(d+L)\log\frac{L+1}{\delta} +\log^2\frac{1}{\delta}}
    \end{equation*}
\end{lemma}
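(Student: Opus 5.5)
The plan is to split $A$ into a martingale part and a drift part and bound each separately. Write $X_l := d\,\dotp{M_l}{w_lw_l^\top}\bigl((d+2)w_lw_l^\top - I\bigr)$. Conditionally on $\Hcal_l$, the matrix $M_l$ is fixed and $w_l\sim\mathrm{Unif}(S^{d-1})$, so by \cref{lemma:unbiased_hat_L_final} we have $\Ebb[X_l\mid\Hcal_l]=2M_l$. Hence
\[
A=\sum_{l\in[L]}(X_l-2M_l)+2\sum_{l\in[L]}M_l=:Y+D ,
\]
and $\|A\|_{\op}^2\le 2\|Y\|_{\op}^2+2\|D\|_{\op}^2$.

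\textbf{Martingale part.} To bound $\|Y\|_{\op}$ I would use matrix Freedman together with a truncation, run at the stopping time $L$. The ingredients are as follows.
\begin{itemize}
\item \emph{Heavy-tailed scalar.} Diagonalize $M_l=\sum_i m_i v_iv_i^\top$ with $\sum_i|m_i|\le C$ and $|m_i|\le 1$. Then $d\dotp{M_l}{w_lw_l^\top}=\sum_i m_i\, d(v_i^\top w_l)^2$ is a weighted sum of approximately $\chi^2_1$ variables. It is sub-exponential, with variance proxy $\lesssim\sum_i m_i^2\le C$ and scale $\lesssim 1$. So with probability $1-\delta/(2(L+1))$ it is at most $O(\sqrt{C}+\sqrt{C\log(L/\delta)}+\log(L/\delta))$.
\item \emph{Matrix factor.} The factor $(d+2)w_lw_l^\top-I$ has operator norm at most $d+1$. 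Combined with the scalar bound, this gives the uniform bound $R$ for the truncated increments.
\item \emph{Predictable quadratic variation.} I would compute $\Ebb[(X_l-2M_l)^2\mid\Hcal_l]\preceq \Ebb[X_l^2\mid\Hcal_l]$. Since $((d+2)ww^\top-I)^2=(d^2+2d)ww^\top+I$, rotation invariance yields a bound of order $d\,\Ebb[\dotp{M_l}{ww^\top}^2 d^2]\,I/d\lesssim C\,I$. I expect it to be at most $(d\cdot\|M_l\|_F^2/d+\ldots)I\lesssim C\,I$ after the $d^2$ factors cancel. Summed over $l$, the total variance is $\lesssim C L$.
\item \emph{Freedman.} Applying matrix Freedman with dimension factor $d$ and a union over $L$ (the stopping time is handled by a peeling/union bound over its values) gives $\|Y\|_{\op}^2\lesssim C(d+L)\log\frac{L+1}{\delta}+\log^2\frac1\delta$. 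I expect the $d$ to enter through the $R\log$ term, using $R\approx d\cdot O(\log)$ for the worst-case truncation.
\end{itemize}

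\textbf{Drift part (the main obstacle).} For $D$ the only available estimate is $\|D\|_{\op}\le 2\sum_l\|M_l\|_{\op}\le 2L$. This estimate is compatible with the claimed bound only if $L^2\lesssim C(d+L)\log\frac{L+1}{\delta}$. Without further structure it cannot be improved. Take $M_l\equiv e_1e_1^\top$ (so $C=1$) and $L\gg d$: then $A\approx 2Le_1e_1^\top$, so $\|A\|_{\op}^2\approx 4L^2$, which exceeds $c_0(d+L)\log$. Therefore the statement as literally written cannot hold for all stopping times $L$.

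My plan is to prove the bound in the form actually needed downstream, namely for the centered sum $Y$, and to bound $D$ separately by $2\sum_l\|M_l\|_{\op}$. Alternatively, the stated inequality can be proved under the additional restriction $L\lesssim\sqrt{C d\log(1/\delta)}$, in which regime $\|D\|_{\op}^2\le 4L^2$ is absorbed into the right-hand side. In the application, the drift is the true batched gain, whose contribution is handled by the regret accounting rather than by this operator-norm bound.
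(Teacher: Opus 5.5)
\textbf{The obstruction is not the drift, and your replacement bound is also false.} You are right that the inequality with $\|A\|_{\op}^2$ cannot hold, and your drift example does refute it. The trouble is that the centered part $Y$ fails in the same way, already at $L=1$. Take $M_1=e_1e_1^\top$, so $C=1$. The matrix $(d+2)w_1w_1^\top-I$ has eigenvalue $d+1$ along $w_1$, so
\[
\|X_1\|_{\op}=(d+1)\,d(w_1^\top e_1)^2 .
\]
Since $d(w_1^\top e_1)^2$ is asymptotically $\chi^2_1$, we have $\|X_1\|_{\op}\ge (d+1)/2$ with probability bounded away from zero. Hence $\|Y\|_{\op}^2=\|X_1-2M_1\|_{\op}^2\gtrsim d^2$, whereas the right-hand side is of order $d\log(1/\delta)$.

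So the bound $\|Y\|_{\op}^2\lesssim C(d+L)\log\frac{L+1}{\delta}+\log^2\frac1\delta$ is false, and so is your restricted-$L$ variant. Your own ingredients already show this: a truncation level $R\approx d\cdot O(\log)$ puts $d^2\,\mathrm{polylog}$ into the squared bound.

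The square in the statement is a typo. The paper's proof concludes
\[
\|A\|_{\op}\le c_0C\bigl((d+L)\log\tfrac{L+1}{\delta}+\log^2\tfrac1\delta\bigr),
\]
and this unsquared form is what \cref{lemma:bounding_operator_norm} uses. Under this reading the drift is harmless, since $\|D\|_{\op}\le 2\sum_l\|M_l\|_{\op}\le 2CL$, and no splitting or restriction on $L$ is needed. Bounding only $Y$ would also not serve the downstream argument. \cref{lemma:stability_eigenvalues} and \cref{lemma:compute_OMD_regret} use $\|B_{t,\alpha}\|_{\op}\le 3\|A_{t,\alpha}\|_{\op}$ for the full, uncentered epoch estimator.

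\textbf{Two computations in the martingale part are off.} First, the conditional second moment is not $O(C)\,I$, because the $d^2$ factors do not cancel. Using $((d+2)ww^\top-I)^2=(d^2+2d)ww^\top+I$, the first term contributes
\[
d^2(d^2+2d)\,\Ebb\bigl[\dotp{M_l}{ww^\top}^2ww^\top\bigr]\approx d\bigl((\tr M_l)^2+2\|M_l\|_F^2\bigr)I ,
\]
which is of order $dC^2$ per step. Second, your scalar tail bound omits the mean $\tr M_l$, which can be as large as $C$.

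With these corrected, matrix Freedman with peeling over $L$ gives roughly
\[
\|Y\|_{\op}\lesssim C\sqrt{dL\log(d/\delta)}+d\bigl(C+\log\tfrac{L}{\delta}\bigr)\log\tfrac d\delta .
\]
This matches the unsquared target only up to logarithmic factors, including a $\log d$ coming from the matrix dimension.

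\textbf{How the paper proceeds.} It avoids martingale matrix concentration by a deterministic decoupling. Set $\tilde g_l:=\dotp{|M_l|}{w_lw_l^\top}\ge|\dotp{M_l}{w_lw_l^\top}|$. Then
\[
\|A\|_{\op}\le 3\max_{l\le L}(d\tilde g_l)\Bigl(L+\bigl\|\textstyle\sum_{l\le L}d\,w_lw_l^\top\bigr\|_{\op}\Bigr).
\]
Hanson--Wright plus a union bound give $\max_{l\le k}d\tilde g_l\lesssim C\log\frac{k+1}{\delta}$ simultaneously for all $k$. The $w_l$ are in fact i.i.d.\ uniform, since their conditional law given the past is fixed. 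A sub-Gaussian covariance bound therefore gives $\|\sum_{l\le k}d\,w_lw_l^\top\|_{\op}\lesssim d+k+\log\frac{k+1}{\delta}$ for all $k$. Evaluating both bounds at $k=L$ handles the stopping time and yields the unsquared bound, with $\log\frac{L+1}{\delta}$ and no dimension factor.
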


\begin{proof}
    For convenience, we denote $g_l:=\dotp{M_l}{w_lw_l^\top}$. We have $|g_l|\leq \tilde g_l:=\dotp{|M_l|}{w_lw_l^\top}$, where $|M_l|$ denotes the same matrix as $|M_l|$ but where we took absolute values on each eigenvalue. We recall that $w_l$ is a uniform unit vector in $S^{d-1}$. In particular, $\Ebb[d\tilde g_l]\leq \|M_l\|_*\leq C$. Hence, from \cite[Theorem 5.1.3]{vershynin2018high}, for any $1$-Lipschitz function $\phi:\Rbb^d\to\Rbb$, $\phi(\sqrt d w_l)$ is subGaussian for some universal parameter $c'>0$. Hence, we can apply a version of Hanson-Wright's inequality for isotropic vectors \cite[Theorem 2.5]{adamczak2015note} which implies that for some universal constant $\tilde c>0$, 
    \begin{equation*}
        \Pbb[d\tilde g_l - C \geq tC] \leq 2e^{-c\min(t^2,t)},
    \end{equation*}
    where we used $\|M_l\|_{\op}\leq \min(1,C)$, $\|M_l\|_F^2 \leq \|M_l\|_{\op}\|M_l\|_*\leq C^2$, and $\mathrm{Cov}(\sqrt d w_l)=I$.
    Hence, the union bound shows that on an event $\Ecal_\delta$ of probability at least $1-\delta/2$, simultaneously for all $k\geq 1$ one has
    \begin{equation*}
        \max_{l\in[k]}(d\tilde g_l) \leq c_0 C\log ((k+1)/\delta),\quad k\geq 1,
    \end{equation*}
    for some universal constant $c_0>0$.
    
    Next, for each $k\geq 1$ we denote $B_k:=\sum_{l\in[k]} dw_lw_l^\top$. Since $\sqrt d w_l$ is an isotropic subgaussian vector for some universal constant parameter \cite[Theorem 3.4.5]{vershynin2018high}, a standard concentration bounds on subgaussian matrices \cite[Theorem 4.6.1]{vershynin2018high} implies that for some universal constant $c'>0$,
    \begin{equation*}
        \Pbb\sqb{\|B_k\|_{\op} \geq c'(\sqrt d+\sqrt k+t)^2} \leq \Pbb\sqb{\|\sqrt d[w_1,\ldots,w_k]\|_{\op} \geq \sqrt{c'}(\sqrt d+\sqrt k+t)} \leq 2e^{-t^2},\quad t\geq 0.
    \end{equation*}
    Hence, on an event $\Fcal_\delta$ of probability at least $1-\delta/2$, simultaneously for all $k\geq 1$ we have
    \begin{equation*}
        \|B_k\|_{\op}\leq c_1(d+k+\log ((k+1)/\delta)),\quad k\geq 1,
    \end{equation*}
    for some universal constant $c_1\geq 1$.

    We can now bound the operator norm of $A$. 
    Under $\Ecal_\delta\cap\Fcal_\delta$ which has prob.~at least $1-\delta$, one has
    \begin{align*}
        \|A\|_{\op} &\leq d\abs{\sum_{l\in[L]}g_l} + d(d+2)\norm{\sum_{l\in[L]} g_l w_lw_l^\top}_{op} \\
        &\leq 3\max_{l\in[L]}(d\tilde g_l)\cdot (L+ \|B_L\|_{\op})\\
        &\leq 3c_0 c_1 C(d+2L+\log ((L+1)/\delta))\log ((L+1)/\delta)\\
        &\leq c_2 C((d+L)\log((L+1)/\delta)+\log^2(1/\delta)),
    \end{align*}
    for some universal constant $c_2>0$. This ends the proof.
\end{proof}

\subsection{Properties of exploration subspaces}
\label{subsec:eigenval_stability}

We start by proving the following structural result on the layered subspaces defined throughout \cref{alg:final_alg} (line \ref{line:update_subspaces}).

\begin{lemma}\label{lemma:well_defined_algorithm}
    The following claims on the subspaces defined by \cref{alg:final_alg} hold:
    \begin{enumerate}
        \item For any $t\geq 1$ one has $E_{t,\leq 1}\subseteq\ldots\subseteq E_{t,\leq L}=\Rbb^d$.
        \item For any layer $\alpha\in[L]$, fix any beginning of a layer-$\alpha$ epoch $t_b=1\bmod 2^\alpha$, and the corresponding end of epoch $t_e=t_b-1+2^\alpha$. Then, the layer-$(\geq\alpha-1)$ subspaces are unchanged throughout the epoch:
        \begin{equation*}
            E_{t,\leq\beta} = E_{t_b,\leq\beta},\quad t\in[t_b,t_e],\beta\geq\alpha-1.
        \end{equation*}
        In particular, for all $t\in[t_b,t_e]$, we have $E_{t,\alpha}=E_{t_b,\alpha}$ as well.
        \item For any $t\geq 1$, $W_t$ and $U_t$ are diagonal by block along the eigenspaces $E_{t,\alpha}$ for all $\alpha\in[L]$. In particular, $J_{t,\alpha}$ in the construction of $B_{t,\alpha}$ is well-defined.
        \item For any $t\geq 1$, and let $t_0:=t-2^{\alpha_t}$ be the beginning of the previous layer-$\alpha_t$ epoch. Then, for any $i\in[d]$ if $u_{t_0,i}\in E_{t_0,\leq\alpha_t}^\perp$ then $u_{t_0,i}$ is also an eigenvector of $U_s$ for all $s\in[t_0,t]$.
    \end{enumerate}
    
\end{lemma}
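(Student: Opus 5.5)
The plan is a simultaneous induction on $t$, with claims 1--4 as the induction hypothesis at round $t-1$. The base case is $t=1$. There $\alpha_1=L$, so every $E_{1,\leq\alpha}$ with $\alpha<L$ equals $F_{1,\leq\alpha}\cap\Rbb^d=F_{1,\leq\alpha}$, and $E_{1,\leq L}=\Rbb^d$. The $F_{1,\leq\alpha}$ are nested spectral subspaces of $U_1=I/d$, so claims 1 and 3 hold trivially.

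\emph{Claim 1.} The case $\alpha\geq\alpha_t$ is inherited from round $t-1$. For $\alpha<\alpha_t$, the sets $F_{t,\leq\alpha}$ are nested in $\alpha$, so their intersections with the fixed space $E_{t-1,\leq\alpha_t}=E_{t,\leq\alpha_t}$ are nested and contained in $E_{t,\leq\alpha_t}$.

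\emph{Claim 2.} Take $t\in(t_b,t_e]$. Then $\alpha_t<\alpha$, because $t\not\equiv 1 \bmod 2^\alpha$. By line~\ref{line:update_subspaces}, $E_{t,\leq\beta}$ is unchanged for every $\beta\geq\alpha_t$, and in particular for $\beta\geq\alpha-1$. Since both $E_{t,\leq\alpha}$ and $E_{t,<\alpha}$ are fixed, $E_{t,\alpha}$ is fixed as well.

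\emph{Claim 3.} This is the core step. I will show by induction that $\widehat G_{1:t-1}$ is block diagonal along the decomposition $E_{t,\alpha}$; then $W_t$ and $U_t$ are block diagonal, since they are functions of $\widehat G_{1:t-1}$ and $I$. I would argue about the increment $\widehat G_{t-1}=O_{t-1}+\sum_{\alpha\leq\alpha_t}B_{t-1,\alpha}$. Here $B_{t-1,\alpha}$ is nonzero only when $t-1$ ends a layer-$\alpha$ epoch, that is, $\alpha\leq\alpha_t$.

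\begin{itemize}
\item Each such $B_{t-1,\alpha}$ is supported on $E_{t-1,\leq\alpha}\times E_{t-1,\leq\alpha}\subseteq E_{t-1,\leq\alpha_t}\times E_{t-1,\leq\alpha_t}$.
\item $O_{t-1}$ is a multiple of $u_{t-1,i}u_{t-1,i}^\top$, where $u_{t-1,i}$ is an eigenvector of $U_{t-1}$. By the induction hypothesis for claim 3, this eigenvector can be chosen inside a single block $E_{t-1,\beta}$, so $O_{t-1}$ respects the old blocks.
\end{itemize}

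So $U_t$ is block diagonal along $(E_{t-1,\beta})_{\beta>\alpha_t}$ together with the combined block $E_{t-1,\leq\alpha_t}=E_{t,\leq\alpha_t}$. Inside $E_{t,\leq\alpha_t}$, the new subspaces $E_{t,\leq\alpha}=F_{t,\leq\alpha}\cap E_{t,\leq\alpha_t}$ are spectral subspaces of the restriction of $U_t$ to an invariant subspace. Hence they are invariant, and their successive differences $E_{t,\alpha}$ are spanned by eigenvectors of $U_t$. This gives block diagonality along all $E_{t,\alpha}$, and it also makes $J_{t,\alpha}$ well defined.

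The main obstacle is a possible degeneracy. If $O_{t-1}$ is built from an eigenvector lying in a block with $\beta>\alpha_t$, the eigenvalues in that block change. The subspaces $E_{t,\beta}$ are not refreshed, but they stay invariant, and that is all that is needed. Invariance under $(cI-\eta M)^{-1}$ follows from invariance under $M$, which is the key observation. I will also need the paper's convention on line~\ref{line:eigenval_decompose_U}, that $u_{t,i}=u_{t-1,i}$ whenever $u_{t-1,i}$ is still an eigenvector, so that eigenvectors inside each block can be chosen consistently.

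\emph{Claim 4.} Take $s\in(t_0,t]$. Every update $\widehat G_{s-1}$ consists of $B$-terms supported on $E_{s-1,\leq\alpha_s}\times E_{s-1,\leq\alpha_s}$ with $\alpha_s\leq\alpha_t$, which by claim 2 lies inside $E_{t_0,\leq\alpha_t}$, plus $O_{s-1}$, a rank-one term along some $u_{s-1,j}$. Suppose $u_{t_0,i}\perp E_{t_0,\leq\alpha_t}$ is an eigenvector of $\widehat G_{1:s-2}$ and is retained as $u_{s-1,i}$. Then $O_{s-1}$ is either along $u_{t_0,i}$ itself or along a different member of the orthonormal eigenbasis, hence orthogonal to it. The $B$-terms annihilate $u_{t_0,i}$. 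So $u_{t_0,i}$ remains an eigenvector of $\widehat G_{1:s-1}$ and therefore of $U_s$, and the retention convention keeps it in the basis. The induction over $s$ closes the argument.
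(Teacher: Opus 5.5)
Your proposal is correct. Claims 1, 2 and 4 are argued essentially as in the paper; the difference is in Claim 3.

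\emph{How the two routes differ.} The paper inducts across a whole epoch. With $t_0=t-2^{\alpha_t}$, it starts from block diagonality at $t_0$. It then shows that the accumulated increment $\widehat G_{t_0:t-1}$ lives on $E_{t_0,\le\alpha_t}\times E_{t_0,\le\alpha_t}$, plus diagonal terms along the retained eigenvectors $u_{t_0,j}\perp E_{t_0,\le\alpha_t}$. Claims 3 and 4 therefore come out of a single argument.

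You instead induct one round at a time. You use that $\widehat G_{t-1}=O_{t-1}+\sum_{\alpha\le\alpha_t}B_{t-1,\alpha}$ is supported on $E_{t-1,\le\alpha_t}$ plus a rank-one term in a single block. You then refresh the finer blocks as spectral subspaces of $U_t$ restricted to the invariant subspace $E_{t,\le\alpha_t}$. This is more local and decouples Claim 3 from Claim 4.

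\emph{A point to make explicit.} Your one-step argument depends on the claim that $O_{t-1}$ sits in a single block. That requires the algorithm's chosen basis $(u_{t-1,i})$ to be adapted to the blocks. Knowing that ``$E_{t,\alpha}$ is spanned by eigenvectors of $U_t$'' does not by itself give this adaptation at round $t$ when eigenvalues coincide across blocks.

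The fix is the one-step version of your Claim 4 argument:
\begin{itemize}
\item Each $u_{t-1,i}$ with $i\in J_{t-1,\beta}$ and $\beta>\alpha_t$ is annihilated by the $B$-terms.
\item It is either the direction of $O_{t-1}$ or orthogonal to it.
\item So it remains an eigenvector of $U_t$ and is kept by the retention convention of \textsc{SubspaceUpdate}.
\item These vectors span $E_{t,\le\alpha_t}^\perp$ block by block, so all other basis vectors lie in $E_{t,\le\alpha_t}$.
\item Inside $E_{t,\le\alpha_t}$, your spectral-subspace argument splits them along $E_{t,1},\dots,E_{t,\alpha_t}$.
\end{itemize}
With this adaptation carried in the induction hypothesis as a statement about the chosen basis, not just about $U_t$, your simultaneous induction closes.
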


\begin{proof}
    \textbf{Claim 2.} 
    Note that by construction, $E_{t-1,\leq\alpha}$ is not updated unless $\alpha_t\geq\alpha+1$, or equivalently $t=1\bmod 2^{\alpha+1}$ is the beginning of a layer-$(\alpha+1)$ epoch. 

    \noindent\textbf{Claim 1.} 
    Hence, to prove the first claim by induction it suffices to check that at each iteration $t$ one has $E_{t,\leq 1}\subseteq\ldots\subseteq E_{t,\leq\alpha_t-1}$ which is immediate by construction since the eigenvector subspaces $F_{t,\leq\alpha}$ are non-decreasing with $\alpha$.

    Next, from the second claim, $E_{t,\leq L}=E_{0,\leq L}=\Rbb^d$ is never updated. This ends the proof of Claim 1.

    \noindent\textbf{Claim 3.} 
    We now prove the third claim by induction. Note that $\widehat G_{1:t-1}$, $W_t$, $U_t$ always share the same eigenvectors, hence we may focus on say $\widehat G_{1:t-1}$ at time $t$. At $t=1$, note that $\widehat G_{1:0}=0$ and as a result $W_t=I/d$, that is, all eigenvalues are equal. Since $\alpha_1=L$ and $E_{0,\leq L}$, we can check that one of the $E_{t,\alpha}$ is full $\Rbb^d$ while all others are empty $\{0\}$. These subspaces trivially diagonalize $\widehat G_{1:0}$, which ends the initialization.

    Next, fix $t>1$ and define $t_0:=t-2^{\alpha_t}$, which is the beginning of the previous layer-$\alpha_t$ epoch. From the second claim, we have $E_{t,\alpha}=E_{t_0,\alpha}$ for layers $\alpha> \alpha_t$, and $E_{t,\leq\alpha_t}=E_{t_0,\leq\alpha_t}$. Hence, using the induction, $\widehat G_{1:t_0-1}$ is diagonalizable along the subspaces $E_{t,\leq\alpha_t},E_{t,\alpha_t+1},\ldots,E_{t,L}$. 
    
    Note that since $[t_0,t-1]$ is a layer-$\alpha_t$ epoch, we have $B_{s,\alpha}=A_{s,\alpha}=0$ for all $\alpha>\alpha_t$ and $s\in[t_0,t-1]$. Also, recalling that $E_{t,\leq\alpha_t}$ is constant on this epoch, this implies that
    \begin{equation}\label{eq:affect_only_previous_subspaces}
        \tilde B_s =P_{E_{t_0,\leq\alpha_t}} \tilde B_s P_{E_{t_0,\leq\alpha_t}} \quad \text{where}\quad \tilde B_s:=\sum_{\alpha\in[L]} B_{s,\alpha},\quad s\in[t_0,t-1].
    \end{equation}

    Next, denote by $J_{t_0}$ the indices of the eigenvectors $u_{t_0,1},\ldots,u_{t_0,d}$ of $\widehat G_{1:t_0-1}$ (equivalently $U_{t_0}$) lying in any of the subspaces $E_{t_0,\alpha_t+1},\ldots,E_{t_0,L}$, equivalently, in $E_{t_0,\leq\alpha_t}^\perp$. We check by induction that these vectors are all eigenvectors of $\widehat G_{1:s-1}$ (equivalently $U_s$) for any time $s\in[t_0,t]$, and hence, are included within $u_{s,1},\ldots,u_{s,d}$ when we eigendecompose $U_s$ in \cref{alg:final_alg} (see line \ref{line:eigenval_decompose_U}). This is immediate for $s=t_0$. Now suppose this holds for $s\in[t_0,t-1]$. We already checked in \cref{eq:affect_only_previous_subspaces} that the terms $B_{s,\alpha}$ are restricted to $E_{t_0,\leq\alpha_t}$. Hence, it suffices to focus on the terms $O_s$. Since by induction hypothesis, the vectors $u_{s,1},\ldots,u_{s,d}$ correspond to the vectors $u_{t_0,j}$ for $j\in[J_{t_0}]$ and complementary vectors in $E_{t_0,\leq \alpha_t}$, we can always decompose:
    \begin{equation*}
        O_s = O_{s,\leq\alpha_t} + \sum_{j\in [J_{t_0}]} y_{s,j} u_{t_0,j}u_{t_0,j}^\top, \quad \text{where} \quad O_{s,\leq\alpha_t} = P_{E_{t_0,\leq\alpha_t}} O_{s,\leq\alpha_t} P_{E_{t_0,\leq\alpha_t}},
    \end{equation*}
    for some scalars $y_{s,j}\in\Rbb$ and matrix $O_{s,\leq\alpha_t}$.
    Summing all previous equations together, we can write
    \begin{equation*}
        \widehat G_{1:s} = \widehat G_{1:s-1}+O_s+\tilde B_s  = \widehat G_{1:s,\leq\alpha_t} + \sum_{j\in [J_{t_0}]} z_{s,j} u_{t_0,j}u_{t_0,j}^\top,\quad \text{where} \quad \widehat G_{1:s,\leq\alpha_t} = P_{E_{t_0,\leq\alpha_t}} \widehat G_{1:s,\leq\alpha_t} P_{E_{t_0,\leq\alpha_t}},
    \end{equation*}
    for some scalars $z_{s,j}\in\Rbb$ and matrix $\widehat G_{1:s,\leq\alpha_t}$. Recall that by construction, the vectors $u_{t_0,j}$ for $j\in[J_{t_0}]$ are perpendicular to $E_{t_0,\leq \alpha_t}$. Altogether, this shows that the vectors $u_{t_0,j}$ for $j\in[J_{t_0}]$ are still eigenvectors of $\widehat G_{1:s}$.

    In summary, we showed that $\widehat G_{1:t-1}$ (and hence $U_t$) is diagonalizable along the subspace $E_{t_0,\leq\alpha_t}$ and vectors $u_{t_0,j}$ for $j\in[J_{t_0}]$. In particular, $\widehat G_{1:t-1}$ (equivalently $U_t$) is diagonal by block along $E_{t_0,\leq \alpha_t},E_{t_0,\alpha_t+1},\ldots,E_{t_0,L}$, which we proved above are equal to $E_{t,\leq\alpha_t},E_{t,\alpha_t+1},\ldots,E_{t,L}$. It only remains to check that along $E_{t,\leq\alpha_t}$, $U_t$ is diagonal by block on $E_{t,1},\ldots,E_{t,\alpha_t}$. This is immediate from their definition since for all $\alpha\in[L]$, $F_{t,\leq\alpha}$ is an eigenspace of $U_t$ (see line~\ref{line:define_F} of \cref{alg:final_alg}): given that we proved $E_{t,\leq\alpha_t}$ is also an eigenspace of $U_t$, so is their intersection. Altogether, this proves that $U_t$ is diagonal by block along $E_{t,1},\ldots,E_{t,L}$.

    \textbf{Claim 4.} This was proved in the previous paragraph while proving Claim 3. 
\end{proof}

Using this structural result and the operator norm bound on the general form of matrix estimators in~\cref{lemma:operator_norm_by_phase_final}, we now bound the norm of estimators $A_{t,\alpha}$ at the end of each layer-$\alpha$ epoch, $\alpha\in[L]$.

\begin{lemma}[Operator norm of epoch estimators]
\label{lemma:bounding_operator_norm}
Fix a layer $\alpha\in[L]$, the beginning of a layer-$\alpha$ epoch $t_b=1\bmod 2^\alpha$, and the corresponding end of epoch $t_e=t_b-1+2^\alpha$. Suppose that with probability one, for all $t\in[t_b,t_e]$ one has $\|G_t\|_{\op}\leq 1$ and $\|G_t\|_*\leq r$. Then,
    \begin{equation*}
        \Pbb\sqb{\|A_{t_e,\alpha}\|_{\op} \geq c_0\frac{r\log(d_{t_b,\leq\alpha}/\delta)}{\mu_\alpha}  \paren{1 + \frac{\log(1/\delta)}{d_{t_b,\leq\alpha}} } \mid\Hcal_{t_b-1}} \leq \delta.
    \end{equation*}
\end{lemma}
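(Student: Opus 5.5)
We reduce the claim to \cref{lemma:operator_norm_by_phase_final}, applied inside the exploration subspace $E:=E_{t_b,\leq\alpha}$ with ambient dimension $d':=d_{t_b,\leq\alpha}$. By Claim~2 of \cref{lemma:well_defined_algorithm}, $E_{t,\leq\alpha}=E$ for all $t\in[t_b,t_e]$. Hence $d_{t,\leq\alpha}=d'$ throughout the epoch, and the projector $P_{E_{t_e,\leq\alpha}}=P_E$ appearing in $A_{t_e,\alpha}$ is $\Hcal_{t_b-1}$-measurable. If $d'=0$ then $A_{t_e,\alpha}=0$ and there is nothing to prove, so assume $d'\geq 1$.

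Fix an orthonormal basis of $E$ and identify $E\cong\Rbb^{d'}$. The selected rounds are $s\in[t_b,t_e]$ with $Z_s=1$, $R_s\succeq 0$ and $I_s=\alpha$. On those rounds $w_s\sim\mathrm{Unif}(S^{d'-1})$ in these coordinates, conditionally on the past and on the selection indicators, since $Z_s,I_s$ and $R_s$ are determined before $w_s$ is drawn. Moreover, $\ell_s^2=\dotp{M_s}{w_sw_s^\top}$ with $M_s:=P_EG_sP_E$ restricted to $E$. This matrix satisfies $\|M_s\|_{\op}\leq 1$ and $\|M_s\|_*\leq r$, and it is chosen by the adversary before $w_s$.

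Enumerate the selected rounds as $l=1,2,\ldots$ and let $N$ be their total number, where $N\leq 2^\alpha$. Then
\[
A_{t_e,\alpha}=\frac{4}{\mu_\alpha d'}\cdot d'\sum_{l\in[N]}\dotp{M_l}{w_lw_l^\top}\bigl((d'+2)w_lw_l^\top-I_{d'}\bigr),
\]
which is exactly $\frac{4}{\mu_\alpha d'}$ times the matrix $A$ of \cref{lemma:operator_norm_by_phase_final}, with $C=r$. One technical point is that $N$ must be a stopping time for the filtration indexed by $l$. To handle this cleanly, we would pad with dummy steps $M_l=0$ after the epoch ends, or equivalently define the sum over all $l\leq 2^\alpha$ with $M_l$ set to zero for unselected rounds. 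The latter gives a deterministic horizon, since zero-matrix terms contribute nothing.

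The main obstacle is that a deterministic horizon $2^\alpha$ yields the factor $(d'+2^\alpha)$, whereas we need $(d'+N)$ with $N$ of order $p_{t,\alpha}2^\alpha\asymp d'$. We therefore keep only the selected rounds and treat their count as a stopping time. Concretely, we index by selection order and let the sequence terminate when the epoch ends; this is a stopping time, because whether a further selection occurs is decided before the next $w$ is drawn. Next we bound $N$. Each round is selected with probability at most $\tfrac12 p_{t_b,\alpha}=\tfrac18\mu_\alpha d'$, independently of the past given the history. A Bernstein/Freedman bound for Bernoulli sums then gives, with probability $1-\delta/2$,
\[
N\leq \mu_\alpha d'2^{\alpha}+O(\log(1/\delta))=O(d'+\log(1/\delta)),
\]
using $\mu_\alpha 2^\alpha=1$.

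We then apply \cref{lemma:operator_norm_by_phase_final} with confidence $\delta/2$ and take a union bound with the bound on $N$. Note that \cref{lemma:operator_norm_by_phase_final} bounds $\|A\|_{\op}$ linearly in $C$; the squared form in its statement is a misprint relative to its proof. This gives
\[
\|A\|_{\op}\lesssim r\bigl((d'+N)\log\tfrac{N+1}{\delta}+\log^2\tfrac1\delta\bigr)\lesssim r\bigl(d'\log\tfrac{d'}{\delta}+\log^2\tfrac1\delta\bigr).
\]
For the last step, $\log((N+1)/\delta)\lesssim\log(d'/\delta)$ whenever $N\lesssim d'+\log(1/\delta)$, and the term $\log(1/\delta)\log(d'/\delta)$ is absorbed into the two displayed terms. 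Multiplying by $\frac{4}{\mu_\alpha d'}$ gives
\[
\|A_{t_e,\alpha}\|_{\op}\lesssim\frac{r\log(d'/\delta)}{\mu_\alpha}\Bigl(1+\frac{\log(1/\delta)}{d'}\Bigr),
\]
which is the claimed bound.
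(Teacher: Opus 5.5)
Your proposal is correct and follows essentially the same route as the paper. You restrict to the frozen subspace $E_{t_b,\leq\alpha}$ and apply \cref{lemma:operator_norm_by_phase_final} with $C=r$ to the selected rounds. You then bound their count $N$ by a binomial/Chernoff argument using $\mu_\alpha 2^\alpha=1$ and take a union bound.

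Two of your refinements are sound, though not needed for correctness. The stopping-time care about $N$ is harmless: the proof of \cref{lemma:operator_norm_by_phase_final} holds uniformly over all horizons $k$, so any random $N$ works. Reading that lemma as linear in $C$ rather than squared matches how the paper itself uses it.
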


\begin{proof}
    Fix $\alpha\in[L]$ and a layer-$\alpha$ epoch $[t_b,t_e]$. From \cref{lemma:well_defined_algorithm}, for all times in the epoch $t\in[t_b,t_e]$ one has $E_{t,\leq\alpha}=E_{t_b,\leq\alpha}$. Hence, if $Z_t=1$, $R_t\succeq 0$, and $I_t=\alpha$, conditionally on the past history we always have
    \begin{equation*}
        (w_t\mid \Hcal_t,Z_t=1,R_t\succeq 0,I_t=\alpha) \sim \mathrm{Unif}(S^{d-1}\cap E_{t_b,\leq\alpha}),
    \end{equation*}
    see the definition of $w_t$ in \cref{alg:final_alg} when $Z_t=1$, $R_t\succeq 0$, and $I_t\in[L]$. 
    Then, we can apply \cref{lemma:operator_norm_by_phase_final} restricted to $E_{t_b,\leq\alpha}$, which shows that with probability at least $1-\delta$, one has
    \begin{align*}
        \|A_{t_e,\alpha}\|_{\op} &\leq \frac{c_0}{d_{t_b,\leq\alpha}\mu_\alpha} \paren{(d_{t_b,\leq\alpha}+N)\log\frac{N+1}{\delta}  + \log^2\frac{1}{\delta}} \max_{t\in[t_b,t_e]} \|P_{E_{t_b,\leq\alpha}}G_tP_{E_{t_b,\leq\alpha}}\|_*\\
        &\leq \frac{c_0 r}{d_{t_b,\leq\alpha}\mu_\alpha} \paren{(d_{t_b,\leq\alpha}+N)\log\frac{N+1}{\delta}  + \log^2\frac{1}{\delta}},
    \end{align*}
    for some universal constant $c_0>0$, where
    \begin{equation*}
        N=\sum_{t=t_b}^{t_e} \1[Z_t=1,R_t\succeq 0,I_t=\alpha].
    \end{equation*}
    Now note that $N$ is stochastically dominated by a binomial $\mathrm{Binom}(t_e-t_b+1,p_{t_b,\alpha}/2)$ which has mean $2^\alpha p_{t_b,\alpha}/2=d_{t_b,\leq\alpha}/8$, since conditionally on the history $\1[Z_t=1,I_t=\alpha]\sim\mathrm{Ber}(p_{t_b,\alpha}/2)$. Then, by Chernoff's bound, conditionally on $\Hcal_{t_b-1}$ and with probability at least $1-\delta$ one has
    \begin{equation*}
        N \leq \frac{d_{t_b,\leq\alpha}}{4} + c_1\sqrt{d_{t_b,\leq\alpha}\log(1/\delta) } + c_1 \log(1/\delta),
    \end{equation*}
    for some $c_1>0$.
    Altogether, this shows that with probability at least $1-2\delta$ conditionally on $\Hcal_{t_b-1}$,
    \begin{equation*}
        \|A_{t_e,\alpha}\|_{\op} \leq \frac{c_2 r \log(d_{t_b,\leq\alpha}/\delta)}{\mu_\alpha} \paren{1 + \frac{\log(1/\delta)}{d_{t_b,\leq\alpha}} } ,
    \end{equation*}
    for some universal $c_2>0$. Up to changing $\delta$ by a factor $2$, this ends the proof.
\end{proof}

The previous result lemma will be useful to show that the eigenvalues remain stable within the eigenspaces $E_{t,1},\ldots,E_{t,L}$ as detailed below. 
For the sake of the proof, it will be useful to define the following quantities: for any $t\in[T]$ and $\alpha\in[L]$, we introduce
    \begin{equation*}
        W_{t+\frac{\alpha}{L+1}} := (c_{t+\frac{\alpha}{L+1}} I - \eta (\widehat G_{1:t-1} + O_t +\sum_{\beta<\alpha} B_{t,\beta}))^{-1},
    \end{equation*}
    where $c_{t+\frac{\alpha}{L+1}}$ is chosen so that $W_{t+\frac{\alpha}{L+1}}\in\Scal_d$. In other terms, $W_t,W_{t+\frac{1}{L+1}},\ldots,W_{t+\frac{L}{L+1}}, W_{t+1}$ correspond to the OMD iterates for sequentially observing the gain matrices $O_t,B_{t,1},\ldots,B_{t,L}$---recall that by definition $\widehat G_t=O_t+\sum_{\alpha\in[L]}B_{t,\alpha}$. If $u_{t,i}$ for $i\in[d]$ is an eigenvector of $W_{t+\frac{\alpha}{L+1}}$, we denote by $\lambda_{t+\frac{\alpha}{L+1},i}$ its corresponding eigenvalue.

    The next lemma proves stability of eigenvalues within each exploration subspace, including on fractional times. This will use a technical sensitivity analysis result, which we defer to \cref{lemma:stability_one_step}.

\begin{lemma}\label{lemma:stability_eigenvalues}
    Fix $\gamma\in(0,1/6]$ such that $L=\log_2(d/\gamma)$ is an integer. Suppose that for all $t\in[T]$ one has $\|G_t\|_{\op}\leq 1$ and $\|G_t\|_*\leq r$ and fix $\delta\in(0,1]$. There is a universal constant $c_4>0$ such that if $\eta\leq \frac{c_4}{rL^2\log^2(dLT/\delta)}$, then with probability at least $1-\delta$ the following holds. For any $t\in[T]$, $\alpha\in[L]$, and index $i\in J_{t,\alpha}$ one has
    \begin{equation*}
        \lambda_{t,i},\lambda_{t+\frac{1}{L+1},i},\ldots,\lambda_{t+\frac{\alpha}{L+1},i}\in\sqb{\frac{3}{4}\mu_\alpha,\frac{9}{4}\mu_\alpha}.
    \end{equation*}
\end{lemma}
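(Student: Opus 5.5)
Our plan is to fix a layer $\alpha$ and a layer-$(\alpha+1)$ epoch, and to track the eigenvalues $\lambda_{s,i}$, $i\in J_{t,\alpha}$, from the start of that epoch to the round $t$ in question. At the start $t_0$ of a layer-$(\alpha+1)$ epoch, the subspace $E_{t_0,\leq\alpha}$ is refreshed to $F_{t_0,\leq\alpha}\cap E_{t_0-1,\leq\alpha_{t_0}}$. By \cref{lemma:well_defined_algorithm} (claim 3), $U_{t_0}$ is block diagonal along the $E_{t_0,\beta}$, so every eigenvalue of $U_{t_0}$ within $E_{t_0,\alpha}$ lies in $[\mu_\alpha,2\mu_\alpha)$. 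By claim 2, $E_{s,\alpha}=E_{t_0,\alpha}$ for all $s$ in the epoch, which has length $2^{\alpha+1}$. It therefore suffices to show that the cumulative multiplicative drift of these eigenvalues over at most $2^{\alpha+1}$ rounds, including the fractional sub-steps up to $t+\frac{\alpha}{L+1}$, stays within a factor $[3/4,9/8]$. Two further points need care. First, $U$ versus $W$ differ only by the $\gamma/d$ mixing, which shifts eigenvalues by a controlled amount. Second, for layers not refreshed at $t_0$ (the coarsest ones), the same argument applies with the appropriate last refresh time.

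The core is a per-step sensitivity bound; this is the deferred \cref{lemma:stability_one_step}, which we would formulate and prove along the way. Consider a single OMD update $W'=(W^{-1}+cI-\eta M)^{-1}$ with $M\in\{O_s,B_{s,\beta}\}$. For an eigenvector $u$ of both $W$ and $W'$ lying in the block $E_\alpha$, we would show
\[
|\lambda'/\lambda-1|\lesssim \eta\lambda\,|u^\top M u| + \eta^2\lambda\mu_\beta\|M\|_{\op}^2 + |c|\lambda,
\]
and separately bound the normalizer shift $c$ by a Lagrange-multiplier/trace argument, as in the proof of \cref{lemma:updated_bound_stability_term}. The updates split into three cases.
\begin{itemize}
\item \emph{Terms $B_{s,\beta}$ with $\beta<\alpha$.} These are supported on $E_{<\alpha}$ and leave $E_\alpha$'s eigenvalues untouched, except through $c$.
\item \emph{Terms $B_{s,\beta}$ with $\beta\geq\alpha$.} These vanish mid-epoch for $\beta>\alpha$, and they have zero diagonal on $E_\alpha$ eigenvectors, since the diagonal is removed in line~\ref{line:def_B_matrix}. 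Their effect is therefore second order, of size $\eta^2\mu_\beta\|B\|^2$, analogous to \cref{lemma:updated_bound_stability_term}.
\item \emph{Diagonal terms $O_s$.} These move $\lambda_{s,i}$ multiplicatively by a factor $1/(1-\eta\lambda\cdot 2\ell_s^2/\lambda)\approx 1+2\eta\ell_s^2$.
\end{itemize}

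Summing these contributions over the epoch gives three bounds.
\begin{itemize}
\item \emph{Diagonal drift.} Since $\sum_s\ell_s^2\le$ (number of rounds where $u_{i}$ is drawn) and $i$ is drawn with probability $\lambda_i/2\approx\mu_\alpha/2$, the expected count over $2^{\alpha+1}$ rounds is $O(1)$. A Freedman/Bernstein bound then gives $O(\log(dLT/\delta))$ with high probability, so the drift is $\eta\log(\cdot)\ll 1$.
\item \emph{Off-diagonal drift.} By \cref{lemma:bounding_operator_norm}, each $\|A_{s,\beta}\|_{\op}\lesssim r\log^2(\cdot)/\mu_\beta$ with high probability, and $\|B\|\le 3\|A\|$. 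The second-order contribution is then $\eta^2\mu_\beta\|B\|^2\lesssim \eta^2r^2\log^4/\mu_\beta$ per layer-$\beta$ epoch end. Counting $2^{\alpha+1-\beta}$ such ends yields $\eta^2r^2\log^4\,2^{\alpha+1}$ in relative terms, multiplied by $\lambda\approx\mu_\alpha$, which gives $O(\eta^2r^2\log^4)$ per layer and $O(L\eta^2r^2\log^4)$ in total.
\item \emph{Normalizer drift.} The shifts $c$ sum to a quantity controlled by the trace-weighted version of the same terms. In particular, $\eta\,\mathrm{tr}(W^2 O_s)$-type quantities are bounded by $\eta\cdot O(1)$ per round times $\mu_\alpha$, giving $O(\eta\log)$ relative drift over $1/\mu_\alpha$ rounds.
\end{itemize}
With $\eta\le c_4/(rL^2\log^2)$, all three sums are at most a small constant, so the eigenvalues stay in $[\frac34\mu_\alpha,\frac94\mu_\alpha]$. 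A union bound over $t,\alpha,i$ absorbs the $\log(dLT/\delta)$ factors.

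The main obstacle is the circularity: the second-order bound on the $B_{s,\beta}$ contributions needs $W_{22}\preceq\mu I$ on finer layers, which is exactly the stability being proved. We would resolve this with a stopping-time/induction argument. Let $\tau$ be the first (fractional) time at which any eigenvalue leaves its band. Up to $\tau$, all hypotheses hold, so the bounds above show that the drift up to and including $\tau$ is too small for an exit, giving a contradiction on the high-probability event. Controlling the normalizer $c$ uniformly, since it couples all layers, is the delicate calculation within this step.
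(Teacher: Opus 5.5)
\textbf{There is a genuine gap} at the rounds $t$ with $\alpha_t=\alpha$, i.e.\ the start of the second layer-$\alpha$ epoch inside a layer-$(\alpha+1)$ epoch.

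\emph{Claim~2 does not freeze $E_{s,\alpha}$.} Applied to a layer-$(\alpha+1)$ epoch, Claim~2 of \cref{lemma:well_defined_algorithm} only freezes $E_{s,\leq\beta}$ for $\beta\geq\alpha$. So $E_{s,\leq\alpha+1}$ and $E_{s,\leq\alpha}$ are invariant, but $E_{s,\alpha}$ is not. At the midpoint round $t$, $E_{t,\leq\alpha-1}$ is recomputed as $F_{t,\leq\alpha-1}\cap E_{t-1,\leq\alpha}$. Hence $E_{t,\alpha}$ becomes the span of the eigenvectors of $U_t$ inside $E_{t,\leq\alpha}$ with eigenvalue below $2\mu_\alpha$.

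\emph{There is nothing to track individually.} Round $t-1$ ends a layer-$\alpha$ epoch, so the update $B_{t-1,\alpha}$ has just rotated the eigenbasis of the whole block $E_{t,\leq\alpha}$. This update has nonzero $E_\alpha\times E_\alpha$ off-diagonal entries and nonzero $E_{<\alpha}\times E_\alpha$ entries. The vectors indexed by the new $J_{t,\alpha}$ were therefore not eigenvectors of $W_s$ for $s<t$, and they may mix in former finer-layer directions. Your per-step bound, stated for ``an eigenvector $u$ of both $W$ and $W'$'', is vacuous for this step.

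\emph{What is actually needed.} At such rounds only the upper bound $2\mu_\alpha$ comes for free. The lower bound $\frac34\mu_\alpha$ requires controlling the smallest eigenvalue of $U_t$ over the entire block $E_{t,\leq\alpha}$, under a perturbation controlled only in operator norm.

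\textbf{The second-order claim is false for eigenvalues.} You assert that $B_{s,\alpha}$ acts on the $E_\alpha$ eigenvalues only at second order, ``analogous to \cref{lemma:updated_bound_stability_term}''. That lemma bounds the inner product $\dotp{W_1-W}{G}$, not eigenvalues. A zero diagonal does not prevent first-order splitting when the eigenvalues in $E_\alpha$ are (nearly) degenerate. For example, take $W^{-1}=\mu^{-1}I$ on a two-dimensional piece of $E_\alpha$ and $B=b(e_1e_2^\top+e_2e_1^\top)$ there. Then $W^{-1}-\eta B$ has eigenvalues $\mu^{-1}\pm\eta b$ on that piece. With $b\asymp r\log^2/\mu$, this is a relative change $\asymp\eta r\log^2$, which is first order.

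\textbf{How the paper handles this.} The missing ingredient is the first claim of \cref{lemma:stability_one_step}:
\begin{itemize}
\item it aggregates all updates over a dyadic chunk $[t-2^{\alpha_t},t-1]$;
\item it applies Weyl's inequality on the fine block $E_{\leq\tilde\alpha_t}$, using operator norm $\lesssim rL\log^2/\mu_{\tilde\alpha_t}$;
\item it applies Mirsky on the coarse blocks, using the nuclear norm of the projected $O_s$;
\item it bounds the normalizer using a fine-block eigenvalue $\geq\frac14\mu_{\tilde\alpha_t}$.
\end{itemize}
Accumulation over at most $L$ such chunks is absorbed by the budget $\mu_\alpha n_t/(4(L+1))$. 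This is where the $L^2$ in the step-size condition comes from.

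\textbf{What in your plan is sound, and a likely repair.} Your per-eigenvector treatment of the $O_s$ hits is a reasonable alternative for the layers whose eigenvectors stay fixed. So is your treatment of the normalizer, exploiting $\dotp{W^2}{B_{s,\beta}}=0$ so that the $B$-contribution is second order. The plan can likely be repaired by inserting a Weyl step at each regrouping round. Finer-layer eigenvalues stay above $\frac34\mu_{\alpha-1}=\frac32\mu_\alpha$. So the block minimum just before $B_{t-1,\alpha}$ is governed by the tracked $E_\alpha$ eigenvalues, and the rotation costs only $O(\eta r\log^2)$ relatively. As written, however, this case is not covered, and the invariance of $E_{s,\alpha}$ you invoke does not hold.
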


\begin{proof}
    Before proving the desired bounds, we need to introduce a few events on which the final result will hold. First, let
    \begin{equation*}
        \Ecal_\delta:=\set{\forall\alpha\in[L],\forall t\in[T],\quad t=0\bmod{2^\alpha} \implies\|A_{t,\alpha}\|_{\op}\leq 2c_0\frac{r\log^2(2dT/\delta)}{\mu_\alpha} },
    \end{equation*}
    which by \cref{lemma:bounding_operator_norm} and the union bound has probability at least $1-\delta/2$---for $t=T$ we can imagine completing the epoch with null gain matrices if necessary. Here, we used the fact that the constraint is non-vacuous only if $\alpha\leq \log_2 T$ (in particular, we may consider for this claim that $L\leq\log_2 T$).

    We next turn to bounding the terms $O_t$ for $t\in[T]$. By construction, these satisfy
    \begin{equation}\label{eq:expectation_op_norm_O_t}
        \Ebb\sqb{\|O_t\|_*\mid \Hcal_{t-1},G_t} = \sum_{i\in[d]} \frac{\lambda_{t,i}}{2}\cdot \frac{2}{\lambda_{t,i}} \dotp{G_t}{u_{t,i}u_{t,i}^\top} = \|G_t\|_*\leq r.
    \end{equation}
    For any $\alpha\in[L]$ consider the beginning of a layer-$\alpha$ epoch $t_b=1\bmod 2^\alpha$ and denote by $t_e=t_b+2^\alpha-1$ the end of that epoch. Again if the end of the epoch should be $T$ we can imagine completing it with null gain matrices. For simplicity, if the epoch terminated early at time $T$, we imagine that we complete the epoch with null gain matrices. From \cref{lemma:well_defined_algorithm} the subspace $E_{t_b,\leq\alpha}$ is invariant over this epoch and is an eigenspace of $U_t$ for $t\in[t_b,t_e]$. We start by defining
    \begin{equation*}
        \widetilde O_{t,\leq\alpha} := P_{E_{t_b,\leq\alpha}} O_t P_{E_{t_b,\leq\alpha}} \1\sqb{\lambda_{t,i_t}\geq \frac{\mu_\alpha}{2}}.
    \end{equation*}
    In other terms, $\widetilde O_{t,\leq\alpha}=O_t$ if $u_{t,i_t}\in E_{t_b,\leq\alpha}$ and $\lambda_{t,i_t}\geq \mu_\alpha/2$. Otherwise, $\widetilde O_{t,\leq\alpha}=0$.
    Then, using \cref{eq:expectation_op_norm_O_t} we directly have
    \begin{equation}\label{eq:expecation_tilde_O}
        \Ebb[\|\widetilde O_{t,\leq\alpha}\|_* \mid \Hcal_{t-1},G_t] \leq \Ebb\sqb{\|O_t\|_*\mid \Hcal_{t-1},G_t} \leq r.
    \end{equation}
    Further, by definition we always have
    \begin{equation}\label{eq:upper_bound_tilde_O}
        0\leq \|\widetilde O_{t,\leq\alpha}\|_*\leq \frac{2}{\lambda_{t,i_t}}\1\sqb{\lambda_{t,i_t}\geq  \frac{\mu_\alpha}{2}} \leq \frac{4}{\mu_\alpha}.
    \end{equation}
    We can also bound its variance as follows:
    \begin{align}
        \Ebb\sqb{\|\widetilde O_{t,\leq\alpha}\|_*^2\mid \Hcal_{t-1},G_t} &= \sum_{i\in[d]}\frac{\lambda_{t,i}}{2}\cdot \paren{\frac{2}{\lambda_{t,i}}\dotp{G_t}{u_{t,i}u_{t,i}^\top}}^2 \1[u_{t,i}\in E_{t_b,\leq\alpha},\lambda_{t,i}>\mu_\alpha/2] \notag\\
        &\leq \frac{4}{\mu_\alpha}\|G_t\|_{\op} \sum_{i\in[d]}\dotp{G_t}{u_{t,i}u_{t,i}^\top} \leq \frac{4}{\mu_\alpha}\|G_t\|_*\leq \frac{4r}{\mu_\alpha}. \label{eq:variance_tilde_O}
    \end{align}
    Then, Freedman's inequality together with \cref{eq:expecation_tilde_O,eq:upper_bound_tilde_O,eq:variance_tilde_O} shows that with $\lambda=\frac{\mu_\alpha}{4}$, with probability at least $1-\delta$,
    \begin{align}
        \sum_{t=t_b}^{t_e} \|\widetilde O_{t,\leq\alpha}\|_*&\leq \sum_{t=t_b}^{t_e} \paren{r + 2\lambda \Ebb[\|\widetilde O_{t,\leq\alpha}\|_*^2\mid \Hcal_{t-1},G_t] }  + \frac{1}{\lambda}\log\frac{1}{\delta}\notag\\
        &\leq 3r(t_e-t_b+1) + \frac{4}{\mu_\alpha}\log\frac{1}{\delta}\leq \frac{4}{\mu_\alpha}(r+\log(1/\delta)).
        \label{eq:freedman_tilde_O_bound}
    \end{align}

    We now consider the event $\Fcal$ on which all these inequalities hold:
    \begin{equation*}
        \Fcal_\delta:=\set{\forall \alpha\in[L],\forall t\in[T],\quad  t=1\bmod 2^\alpha\implies \sum_{s=t}^{\min(t+2^\alpha-1,T)} \|\widetilde O_{s,\leq\alpha}\|_* \leq \frac{4}{\mu_\alpha}(r+\log(2LT/\delta)) },
    \end{equation*}
    which has probability at least $1-\delta/2$ by the union bound. From now, we will consider that the event 
    \begin{equation*}
        \Gcal_\delta:=\Ecal_\delta\cap\Fcal_\delta
    \end{equation*}
    holds. Note that by the union bound, this event has probability at least $1-\delta$.

    \paragraph{Setting up the induction.} We now prove the desired result by induction on $t\in[T]$ under $\Gcal_\delta$. For the sake of the proof, we need to prove something slightly stronger. This requires a few notations: we decompose any $t\geq 1$ in binary $t-1=\sum_{\alpha=0}^{L-1} u_{t,\alpha} 2^\alpha \bmod 2^L$ where $u_{t,\alpha}\in\{0,1\}$ for $\alpha\in\{0,\ldots,L-1\}$, and for convenience we denote
    \begin{equation*}
        n_t:=\sum_{\alpha=0}^{L-1}u_{t,\alpha},
    \end{equation*}
    the number of 1's in the binary decomposition of $t-1$ (ignoring digits $\geq L$). Note that we always have $n_t\leq L$. Last, we use $d(x,C)$ to denote the Euclidean distance of $x\in\Rbb^l$ to a convex set $C\subseteq \Rbb^l$. 

    We aim to show that for any $t\in[T]$, $\alpha\in[L]$, and index $i\in J_{t,\alpha}$, we have
    \begin{equation}\label{eq:main_case_induction}
        d(\lambda_{t,i},[\mu_\alpha,2\mu_\alpha])\leq \mu_\alpha\frac{n_t}{4(L+1)}.
    \end{equation}

    We first prove this result for any $t\in[T]$ and $\alpha<\alpha_t$. By construction, $t$ is the start of a layer-$\alpha_t$ epoch (see line~\ref{line:find_alpha_t}) and hence all subspaces for $\alpha<\alpha_t$ are updated at that time as follows (see line~\ref{line:update_subspaces}): $E_{t,\leq\alpha}=F_{t,\leq\alpha}\cap E_{t-1,\leq\alpha_t}$. We also recall that $U_t$ is diagonalizable along the block $E_{t-1,\leq\alpha_t}$ from \cref{lemma:well_defined_algorithm}. In particular,
    \begin{equation*}
        E_{t,\alpha}=(F_{t,\leq\alpha}\cap F_{t,\leq\alpha-1}^\perp)\cap E_{t-1,\leq \alpha_t},
    \end{equation*}
    which implies that all eigenvalues within $E_{t,\alpha}$ belong to $[\mu_\alpha,2\mu_\alpha)$ by definition of the subspaces $F$ (see line~\ref{line:define_F}). This proves \cref{eq:main_case_induction}.

    For $\alpha=\alpha_t$, the same arguments still apply for the upper bounds on the eigenvalues: all eigenvalues within $E_{t,\alpha}$ belong to $[0,2\mu_\alpha)$. Note that if $\alpha_t=L$ then by construction of $U_t$ (see line~\ref{line:uniform_mixing_U}) all eigenvalues are at least $\gamma/d = \mu_L$. In particular, this proves the desired property \cref{eq:main_case_induction} for all times $t=1\bmod 2^L$ corresponding to beginning of layer-$L$ epochs.

    \paragraph{Inductive argument.} We now fix $t\in[T]$ with $t\neq 1\bmod 2^L$, and suppose that the induction holds for all $s<t$. By construction, here $\alpha_t<L$ and more specifically, $\alpha_t$ is the largest index $\alpha\in\{0,\ldots,L-1\}$ such that $t-1=0\bmod 2^\alpha$ (see line~\ref{line:find_alpha_t}). Equivalently, it is the smallest index such that $u_{t,\alpha}=1$. Let $t_0:=t-2^{\alpha_t}$, which corresponds to the beginning of the previous layer-$\alpha_t$. It also corresponds to the beginning of the layer-$(\alpha_t+1)$ epoch containing $t$. Importantly, note that
    \begin{equation}\label{eq:relation_nbs_beginning_epoch}
        n_t=n_{t_0}+1.
    \end{equation}
    In particular, since $n_t\leq L$ we also have $n_{t_0}<L$.
    Recall that we already proved the desired property \cref{eq:main_case_induction} for $\alpha<\alpha_t$. Additionally, if $\beta_{t_0}>\alpha_t$, since the subspaces $E_{s,\leq\alpha}$ are invariant on $[t_0,t]$ for $\alpha\geq\alpha_t$ from \cref{lemma:well_defined_algorithm}, we have $\beta_t=\beta_{t_0}$, and hence the desired properties are void for $\alpha<\beta_t=\beta_{t_0}$: $E_{t,\leq\alpha}=\{0\}$. 
    
    In summary, it only remains to prove the desired properties for $\alpha\geq\max(\alpha_t,\beta_{t_0})$. For convenience, we introduce the notation $\tilde\alpha_t:=\max(\alpha_t,\beta_{t_0})$. We first briefly argue that $\tilde\alpha_t<L$, that is, $\beta_{t_0}<L$. Indeed, by the induction hypothesis, all $\lambda_{t_0,i}$ eigenvalues in $E_{t_0,L}$ are at most $\frac{9}{4}\mu_L\leq 3\gamma/d< 1/d$. On the other hand, $W_{t_0}\in\Scal_d$ must have an eigenvalue at least $1/d$ and hence $U_{t_0}$ also has an eigenvalue of at least $1/d$.
    
    We start by decomposing
    \begin{equation}\label{eq:easy_decomposition}
        \widehat G_{t_0:t-1} = O_{t_0:t-1} + \sum_{\alpha\in[L]}\sum_{s=t_0}^{t-1} B_{s,\alpha} = O_{t_0:t-1} + \sum_{\alpha\in[\alpha_t]} \sum_{\substack{s\in[t_0,t-1]\\ s=0\bmod 2^\alpha}} B_{s,\alpha}
    \end{equation}
    In the last equality we used the definition of $\alpha_t$ and $t_0$. As a result, using $\Ecal_\delta$ we obtain
    \begin{align}
        \|\widehat G_{t_0:t-1} - O_{t_0:t-1}\|_{\op} 
        &\overset{(i)}{\leq} 3\sum_{\alpha\in[\alpha_t]}\sum_{\substack{s\in[t_0,t-1]\\ s=0\bmod 2^\alpha}} \|A_{s,\alpha}\|_{\op} \notag\\
        &\overset{(ii)}{\leq} 6c_0 r \log^2(2dT/\delta) \sum_{\alpha\in[\alpha_t]}\frac{2^{\alpha_t-\alpha}}{\mu_\alpha} = \frac{6c_0 r \alpha_t\log^2(2dT/\delta) }{\mu_{\alpha_t}} \label{eq:bound_A_terms}
    \end{align}
    In $(i)$ we used the fact that $\|P_{E_{s,<\alpha}}A_{s,\alpha}P_{E_{s,<\alpha}}\|_{\op}\leq \|A_{s,\alpha}\|_{\op}$ and similarly $\|\sum_{i\in J_{s,\alpha}} (u_{s,i}^\top A_{s,\alpha} u_{s,i}) u_{s,i}u_{s,i}^\top\|_{\op} \leq \max_{i\in[d]} |u_{s,i}^\top A_{s,\alpha} u_{s,i}|\leq \|A_{s,\alpha}\|_{\op}$. In $(ii)$ we used $\Ecal_\delta$.
    
    Next, by induction hypothesis, for all $s\in[t_0,t-1]$ and index $i\in J_{s,\alpha}$ one has $\lambda_{s,i}\geq \frac{3}{4}\mu_\alpha$.
    Hence,
    \begin{equation*}
        \widetilde O_{s,\leq\alpha} = P_{E_{s,\leq\alpha}} O_s P_{E_{s,\leq\alpha}},\quad s\in[t_0,t-1].
    \end{equation*}
    Recall that any $\alpha\geq\alpha_t$, the subspace $E_{s,\leq\alpha}$ is invariant over the period $[t_0,t]$ from \cref{lemma:well_defined_algorithm}.
    Therefore, using $\Fcal$ we obtain for any $\alpha\geq\alpha_t$,
    \begin{equation}\label{eq:bound_sum_O_projected_terms}
        \|P_{E_{t_0,\leq\alpha}} O_{t_0:t-1} P_{E_{t_0,\leq\alpha}}\|_* \leq \sum_{s=t_0}^{t-1} \|P_{E_{t_0,\leq\alpha}} O_s P_{E_{t_0,\leq\alpha}}\|_* \overset{(i)}{\leq} \frac{4}{\mu_\alpha}(r+\log(2LT/\delta)),
    \end{equation}
    where in $(i)$ we noted that $[t_0,t-1]$ is a layer-$\alpha_t$ epoch and hence is fully contained within a layer-$\alpha$ epoch. The left-hand side is also equal to $\|P_{E_{t_0,\leq\alpha}} \widehat G_{t_0:t-1} P_{E_{t_0,\leq\alpha}}\|_*$ when $\alpha>\tilde\alpha_t$ by construction.
    Next, we have
    \begin{equation*}
        \| P_{E_{t_0,\leq\tilde \alpha_t}} \widehat G_{t_0:t-1} P_{E_{t_0,\leq\tilde\alpha_t}}\|_{\op}  \leq \|\widehat G_{t_0:t-1} - O_{t_0:t-1}\|_{\op}  + \|P_{E_{t_0,\leq\tilde\alpha_t}}O_{t_0:t-1}P_{E_{t_0,\leq\tilde\alpha_t}}\|_{\op} \overset{(i)}{\leq} c_1r\frac{L\log^2(2dT/\delta)}{\mu_{\tilde\alpha_t}},
    \end{equation*}
    for some universal $c_1>0$, where in $(i)$ we used \cref{eq:bound_A_terms} and \cref{eq:bound_sum_O_projected_terms} for $\alpha=\tilde\alpha_t$. We recall that $\widehat G_{t_0:t-1}$ is diagonal along the blocks $E_{t_0,\leq\tilde\alpha_t}$ and $E_{t_0,\beta}$ or $\beta\in\{\tilde\alpha_t+1,\ldots,L\}$. 
    Additionally, by the induction hypothesis, for all $\alpha>\tilde\alpha_t$ and $i\in J_{t_0,\alpha}$ one has $\lambda_{t_0,i}\in[\frac{3}{4}\mu_\alpha,\frac{9}{4}\mu_\alpha]$. Last, for any $i\in J_{t_0,\leq\tilde\alpha_t}:=\bigcup_{\beta\leq \tilde\alpha_t} J_{t_0,\beta}$ we have $\lambda_{t_0,i}\geq \frac{3}{4}\mu_\alpha$. For any $t\geq 1$ and $i\in[d]$ we denote by
    \begin{equation*}
        \tilde \lambda_{t,i}:=\phi(\lambda_{t,i})\quad \text{where}\quad \phi(x)=\frac{x-\gamma/d}{1-\gamma},
    \end{equation*}
    the corresponding eigenvalue of $W_t$. Since $\gamma\in[0,1/6]$, for all $\alpha>\tilde\alpha_t$, one has $\tilde\lambda_{t_0,i}\leq \frac{6}{5}\cdot \frac{9}{4}{\mu_\alpha}\leq 3\mu_\alpha$. Also, if $\alpha<L$ then $\tilde\lambda_{t_0,i} \geq \phi(\frac{3}{4}\mu_\alpha)\geq \frac{1}{4}\mu_\alpha$ (here we used $\mu_\alpha\geq\mu_{L-1}=2\gamma/d$).

    As a result, we can apply \cref{lemma:stability_one_step} to~$G=\eta \widehat G_{t_0:t-1}$, the subspaces $E_{t_0,\leq\tilde\alpha_t},E_{t_0,\tilde\alpha_t+1},\ldots,E_{t_0,L}$, and the constant 
    \begin{equation*}
        \hat c:=\eta\max(4(r+\log(2LT/\delta)), c_1rL\log^2(2dT/\delta))\leq \frac{1}{2^{12}\cdot 4(L+1)},
    \end{equation*}
    where in the last inequality we used the assumption on $\eta$. Therefore, for any $i\in J_{t_0,\alpha}$ where $\alpha\in(\tilde\alpha_t,L)$, we have
    \begin{align*}
        d(\lambda_{t,i},[\mu_\alpha,2\mu_\alpha]) \leq d(\lambda_{t_0,i},[\mu_\alpha,2\mu_\alpha]) + |\lambda_{t,i}-\lambda_{t_0,i}| &\leq d(\lambda_{t_0,i},[\mu_\alpha,2\mu_\alpha]) + |\tilde\lambda_{t,i}-\tilde\lambda_{t_0,i}|\\
        &\overset{(i)}{\leq} \mu_\alpha\frac{n_{t_0}}{4(L+1)} + 2^{12}\hat c\mu_\alpha \overset{(ii)}{\leq} \mu_\alpha\frac{n_t}{4(L+1)},
    \end{align*}
    where in $(i)$ we used the induction and \cref{lemma:stability_one_step}, and in $(ii)$ we used \cref{eq:relation_nbs_beginning_epoch}. This proves the desired induction property when $\alpha\in(\alpha_t,L)$.
    For $\alpha=L$, the desired lower bound is immediate since all eigenvalues of $U_t$ are at least $\gamma/d=\mu_L$. For the upper bound, \cref{lemma:stability_one_step} implies
    \begin{equation*}
        \lambda_{t,i} - \paren{2\mu_L+\mu_L \frac{n_{t_0}}{4(L+1)}} \leq \paren{\tilde \lambda_{t,i} - \phi\paren{2\mu_L+\mu_L \frac{n_{t_0}}{4(L+1)}} }_+\leq  2^{12}\hat c \mu_L \leq \frac{\mu_L}{4(L+1)}.
    \end{equation*}
    Hence we obtain $\lambda_{t,i}-2\mu_L\leq \mu_L\frac{n_t}{4(L+1)}$.
    
    It remains to focus on $\alpha=\tilde\alpha_t$, for which \cref{lemma:stability_one_step} implies
    \begin{equation*}
        \tilde\lambda_{t,i} \geq \min\paren{\phi\paren{\mu_\alpha-\mu_\alpha \frac{n_{t_0}}{4(L+1)}} , 2\mu_\alpha} - 2^{12}\hat c \mu_\alpha = \phi\paren{\mu_\alpha-\mu_\alpha \frac{n_{t_0}}{4(L+1)}}- 2^{12}\hat c \mu_\alpha.
    \end{equation*}
    Then, the same arguments as above imply $\lambda_{t,i}\geq \mu_\alpha-\mu_\alpha\frac{n_t}{4(L+1)}$. This ends the proof of the induction.

    \paragraph{Extending to fractional times in $(t,t+1)$.} The induction already proves the desired result for integer times $t$. We now fix $\alpha\in[L]$ and prove the desired eigenvalue bounds for time $t_\alpha:=t+\frac{\alpha}{L+1}$. By construction, $W_{t_\alpha} = (W_t^{-1} - \eta \widehat G_{t_\alpha} + c_{t_\alpha}I)^{-1}$ where $\widehat G_{t_\alpha}=O_t+\sum_{\beta<\alpha}B_{t,\beta}$ and $c_{t_\alpha}\in\Rbb$ is such that $W_{t_\alpha}\in\Scal_d$. Hence, $\widehat G_{t_\alpha}$ is diagonal along the blocks $E_{t,<\alpha},E_{t,\alpha},\ldots,E_{t,L}$, and the same arguments as above show that we can appropriately bound $\|P_{E_{t,<\alpha}}\widehat G_{t_\alpha}P_{E_{t,<\alpha}}\|_{\op} $ and $\|P_{E_{t,\beta}}\widehat G_{t_\alpha}P_{E_{t,\beta}}\|_* $ for $\beta>\alpha$. The exact same arguments therefore show that for any $\beta\geq\alpha$ and $i\in J_\beta$,
    \begin{equation*}
        d(\lambda_{t_\alpha,i},[\mu_\beta,2\mu_\beta]) \leq \mu_\beta\frac{n_t+1}{4(L+1)} \leq \frac{\mu_\beta}{4}.
    \qedhere
    \end{equation*}
\end{proof}

\begin{lemma}\label{lemma:stability_one_step}
    Let $\alpha\geq 1$ and orthogonal subspaces $\{0\}\subsetneq E_{\leq\alpha},E_{\alpha+1},\ldots,E_{L-1},E_L$ spanning all $\Rbb^d$. Let $W_0\in\Scal_d$ be a matrix diagonal along these subspaces and such that its eigenvalues along each subspace belong to $[\frac{1}{4}\mu_\alpha,1]$ for $E_{\leq\alpha}$, $[\frac{1}{4}\mu_\beta,3\mu_\beta]$ for $E_\beta$ when $\beta\in(\alpha,L)$, and $[0,3\mu_L]$ for $E_L$.
    
    Let $G$ be a matrix diagonal along the blocks $E_{\leq\alpha},E_{\alpha+1},\ldots,E_L$ and such that
    \begin{equation*}
        \|P_{E_{\leq\alpha}} G P_{E_{\leq\alpha}} \|_{\op} \leq \frac{c}{\mu_\alpha} \quad\text{and} \quad \|P_{E_{\beta}} G P_{E_{\beta}} \|_* \leq \frac{c}{\mu_\beta},\quad \beta>\alpha,
    \end{equation*}
    for some $c\in[0,2^{-12}]$.
    Let $W_1 = (W_0^{-1} + c_1 I-G)^{-1}$ where $c_1\in\Rbb$ is such that $W_1\in \Scal_d$. 
    Next, denote by $\mu_{0,\leq\alpha}$ the smallest eigenvalue of $W_0$ along $E_{\leq\alpha}$, and $\mu_{0,L}$ the largest eigenvalue along $E_L$. Then,
    \begin{equation*}
        P_{E_{\leq\alpha}} W_1 P_{E_{\leq\alpha}} \succeq \paren{\min(\mu_{0,\leq \alpha},2\mu_\alpha) - 2^{12}c \mu_\alpha} P_{E_{\leq\alpha}},\quad P_{E_L} (W_1-W_0) P_{E_L} \preceq   2^{12}c \mu_L P_{E_L},
    \end{equation*}
    and,
    \begin{equation*}
        -2^{12}c\mu_\beta P_{E_{\beta}} \preceq P_{E_{\beta}} (W_1-W_0) P_{E_{\beta}} \preceq 2^{12}c\mu_\beta P_{E_{\beta}},\quad \beta>\alpha.
    \end{equation*}
\end{lemma}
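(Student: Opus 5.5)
Since $W_0$ and $G$ are block-diagonal along $E_{\leq\alpha},E_{\alpha+1},\ldots,E_L$, so is $W_1=(W_0^{-1}+c_1I-G)^{-1}$. The problem therefore splits into independent blocks coupled only through the scalar normalization $c_1$. On each block $E$ we write $W_1|_E=(W_0|_E^{-1}+c_1I-G|_E)^{-1}$. The plan has two steps: first localize $c_1$ in a small interval around $0$, then perform a blockwise perturbation of the map $M\mapsto (M^{-1}+c_1I-G)^{-1}$.

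\textbf{Step 1: controlling $c_1$.} The map $c\mapsto \tr((W_0^{-1}+cI-G)^{-1})$ is strictly decreasing, and $c_1$ is its unique root of value $1$. We bound the trace at $c=\pm K c$ for some constant $K$, expecting $|c_1|\leq Kc$ with $K$ of order $2^{5}$ or so. The key estimate is a first-order bound: the trace change is roughly $\sum_E\tr(W_0|_E(G|_E-c_1I)W_0|_E)$ plus second-order terms. On $E_\beta$ for $\beta>\alpha$ we have $\|W_0|_E\|_{\op}\leq 3\mu_\beta$ and $\|G|_E\|_*\leq c/\mu_\beta$, so $|\tr(W_0GW_0)|\leq 9\mu_\beta^2\cdot c/\mu_\beta=9c\mu_\beta$. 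Summed over $\beta$, this is at most $18c\mu_\alpha$, a geometric series. On $E_{\leq\alpha}$ the operator-norm bound gives $|\tr(W_0GW_0)|\leq \|G\|_{\op}\tr(W_0^2)\leq (c/\mu_\alpha)\tr(W_0)$ restricted to that block; this is at most $c/\mu_\alpha$ times $\sum\lambda^2$, which is bounded by $c/\mu_\alpha$ times a quantity at most $1$, so this first estimate is too weak. Instead we use the cleaner monotone form: on $E_{\leq\alpha}$ the eigenvalues satisfy $\lambda\geq\mu_\alpha/4$. Hence $\lambda\|G\|_{\op}\leq 4c\lambda^2/\mu_\alpha\cdot(\mu_\alpha/(4\lambda))\leq 4c\lambda$, that is, $\|W_0^{1/2}GW_0^{1/2}\|_{\op}\leq 4c$ on that block. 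Similarly, $\|W_0^{1/2}GW_0^{1/2}\|_{\op}\leq 3c$ on each $E_\beta$ with $\beta<L$. Writing $W_1=W_0^{1/2}(I+W_0^{1/2}(c_1I-G)W_0^{1/2})^{-1}W_0^{1/2}$, we obtain two-sided bounds $W_0/(1+|c_1|\cdot\|W_0\|_{\op}+4c)$ and the reverse. Plugging these into $\tr W_1=1=\tr W_0$ then forces $|c_1|\lesssim c$; the main contribution comes from the first-order trace computation above, using $\tr W_0^2\leq\|W_0\|_{\op}\leq1$. For $E_L$, where eigenvalues may approach $0$, we use the nuclear-norm bound together with $\lambda\leq 3\mu_L$ directly.

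\textbf{Step 2: blockwise bounds.} On $E_\beta$ with $\alpha<\beta<L$, the resolvent expansion gives $\|W_1|_E-W_0|_E\|_{\op}\leq \|W_0\|_{\op}^2(|c_1|+\|G|_E\|_{\op})/(1-\text{small})$. Using $\|G|_E\|_{\op}\leq\|G|_E\|_*\leq c/\mu_\beta$ and $\|W_0\|\leq 3\mu_\beta$, this is $O(c\mu_\beta)$, which yields the two-sided claim with constant $2^{12}$. For $E_L$ only the upper bound is needed, and the same expansion applies. On $E_{\leq\alpha}$ we need a lower bound on the smallest eigenvalue: we have $W_1^{-1}\preceq W_0^{-1}+(|c_1|+c/\mu_\alpha)I$, and hence $\lambda_{\min}(W_1)\geq 1/(1/\mu_{0,\leq\alpha}+O(c/\mu_\alpha))$. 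With $m=\min(\mu_{0,\leq\alpha},2\mu_\alpha)$, monotonicity gives $\lambda_{\min}(W_1)\geq m/(1+O(c)m/\mu_\alpha)\geq m-O(c)m^2/\mu_\alpha\geq m-O(c)\mu_\alpha$, using $m\leq2\mu_\alpha$. Replacing $\mu_{0,\leq\alpha}$ by $\min(\cdot,2\mu_\alpha)$ is precisely what makes the error relative to $\mu_\alpha$.

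\textbf{Main obstacle.} The delicate part is Step 1: bounding $|c_1|$ by $O(c)$ with no dependence on $d$ or $L$. I would handle it by bounding $\tr(W_1)-\tr(W_0)$ at trial values $c_1=\pm Kc$ using the sandwich $W_0^{1/2}(I+X)^{-1}W_0^{1/2}$, where $X=W_0^{1/2}(c_1I-G)W_0^{1/2}$. The trace deviation splits into a $c_1\tr(W_0^2)$ term of definite sign and a $G$-term bounded by $\sum_\beta O(c\mu_\beta)+4c\tr(W_0|_{E_{\leq\alpha}})$. The difficulty is that $\tr W_0^2$ may be as small as about $\mu_\alpha$-scale while the $G$-term is of order $c$. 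This needs care: I expect to resolve it by also exploiting that $G$ acts on the large-eigenvalue block only through $W_0GW_0$, and, if necessary, by allowing $|c_1|$ up to $O(c/\mu_{\alpha})$-weighted quantities that still suffice for the blockwise bounds.
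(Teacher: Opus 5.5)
Your proposal has a genuine gap in Step 1, and the problem is more than incompleteness: the target is wrong. The natural scale of $c_1$ is $c/\mu_\alpha$, not $c$. For example, if $E_{\le\alpha}=\Rbb^d$ and $G=(c/\mu_\alpha)I$, then $W_1^{-1}=W_0^{-1}+(c_1-c/\mu_\alpha)I$, and $\tr W_1=1$ forces $c_1=c/\mu_\alpha$. This is much larger than $Kc$ once $\mu_\alpha$ is small, and the same thing happens when the lower blocks carry little mass. You hedge toward $O(c/\mu_\alpha)$ in your last sentence, but give no argument for it.

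The tools you propose for Step 1 also break down on the top block. The claim $\|W_0^{1/2}GW_0^{1/2}\|_{\op}\le 4c$ on $E_{\le\alpha}$ does not follow from $\lambda\ge\mu_\alpha/4$, which is only a lower bound. Eigenvalues of $W_0$ on that block can be of order $1$, so $\lambda\|G\|_{\op}$ can be as large as $c/\mu_\alpha\gg 1$. Consequently the sandwich $W_0/(1\pm\epsilon)$, the Neumann/resolvent expansion, and the first-order heuristic $\tr(W_0(G-c_1I)W_0)$ are all invalid on $E_{\le\alpha}$: the perturbation there need not be small relative to the eigenvalues.

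The missing idea is to not linearize on $E_{\le\alpha}$ at all. The paper's argument runs as follows.
\begin{enumerate}
\item By Weyl's inequality (Loewner order), $W_0^{-1}-\frac{c}{\mu_\alpha}I\preceq W_0^{-1}-G\preceq W_0^{-1}+\frac{c}{\mu_\alpha}I$ on that block. So the effect of $G$ there is exactly absorbed by shifting $c_1$ by $\pm c/\mu_\alpha$.
\item On the blocks $E_\beta$, $\beta>\alpha$, linearization is legitimate (your own estimate, or Mirsky's inequality in nuclear norm together with $\lambda\le 3\mu_\beta$). The total change of $\sum_i 1/\Lambda_i$ there is at most $10c\sum_{\beta>\alpha}\mu_\beta\le 10c\mu_\alpha$.
\item Since $E_{\le\alpha}\neq\{0\}$ and all its $W_0$-eigenvalues are at least $\mu_\alpha/4$, an extra shift $\Delta=2^8c/\mu_\alpha$ lowers the top-block trace by at least $\frac{1}{24}\mu_\alpha^2\Delta>10c\mu_\alpha$. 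The map $x\mapsto\frac{1}{x+\Delta}-\frac1x$ is increasing, so an eigenvalue larger than $\mu_\alpha/4$ contributes at least as much as one equal to $\mu_\alpha/4$, with no smallness needed for large eigenvalues.
\item Monotonicity of $s\mapsto\tr\big((W_0^{-1}+sI-G)^{-1}\big)$ then yields $|c_1|\le c/\mu_\alpha+\Delta\le 257c/\mu_\alpha$.
\end{enumerate}

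With this bound your Step 2 goes through. On $E_\beta$ the perturbation size is $3\mu_\beta(|c_1|+c/\mu_\beta)=O(c)$ because $\mu_\beta\le\mu_\alpha$, and the resolvent bound gives genuine Loewner-order estimates there, slightly cleaner than the paper's eigenvalue pairing. On $E_{\le\alpha}$, your monotone bound $W_1^{-1}\preceq W_0^{-1}+(|c_1|+c/\mu_\alpha)I$, combined with the $\min(\cdot,2\mu_\alpha)$ truncation, is exactly what is needed.
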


\begin{proof}
    We start by introducing a few notations. We denote by $\Lambda_{0,i}$ for $i\in[d]$ the eigenvalues of $W_0^{-1}$, grouped along $E_{\leq\alpha},E_{\alpha+1},\ldots,E_L$---we denote by $J_{\leq\alpha},J_{\alpha+1},\ldots,J_L$ the set of indices of eigenvalues $\Lambda_{0,i}$ within their corresponding subspaces. By convention, we suppose that the eigenvalues along $E_{\leq\alpha}$ are ordered by non-increasing order. Since $W_0\in\Scal_d$ we have
    \begin{equation}\label{eq:definition_c_t_0}
        \sum_{i\in J_{\leq\alpha}}\frac{1}{\Lambda_{0,i}} + \sum_{\beta=\alpha+1}^L \sum_{i\in J_{\beta}} \frac{1}{\Lambda_{0,i}}=1.
    \end{equation}
    Similarly, we denote by $\Lambda_{1,i}$ the eigenvalues of $W_0^{-1} - G$ grouped along the same subspaces $E_{\leq\alpha},E_{\alpha+1},E_{\alpha+2}\ldots$. As above, by convention, we suppose that the eigenvalues along $E_{\leq\alpha}$ are ordered by non-increasing order. Then, Weyl's inequality implies
    \begin{equation}\label{eq:weyls_inequality}
        \max_{i\in J_{\leq\alpha}}|\Lambda_{1,i}-\Lambda_{0,i}|\leq \|P_{E_{\leq\alpha}}GP_{E_{\leq\alpha}}\|_{\op} \leq \frac{c}{\mu_\alpha}.
    \end{equation}
    On the other hand, for any $\beta>\alpha$, Mirsky’s eigenvalue perturbation theorem \cite{mirsky1960symmetric} gives
    \begin{equation}\label{eq:mirsky_inequality}
        \sum_{i\in J_\beta} |\Lambda_{1,i}-\Lambda_{0,i}| \leq \|P_{E_\beta}GP_{E_\beta}\|_* \leq \frac{c}{\mu_\beta}.
    \end{equation}
    On the other hand, by assumption, we have 
    $\frac{1}{\Lambda_{0,i}} \leq  3 \mu_\beta$ for all $i\in J_\beta$ and $\beta>\alpha$.
    As a result,
    \begin{equation}\label{eq:upper_bound_second_sum}
        \sum_{\beta>\alpha}\sum_{i\in J_\beta}\abs{\frac{1}{\Lambda_{1,i}}-\frac{1}{\Lambda_{0,i}}} \overset{(i)}{\leq} \sum_{\beta>\alpha}\sum_{i\in J_\beta} 10\mu_\beta^2 |\Lambda_{1,i}-\Lambda_{0,i}| \overset{(ii)}{\leq} 10c \sum_{\beta>\alpha} \mu_\beta \leq  10c \mu_\alpha.
    \end{equation}
    In $(i)$ we used $c<1/63$ and the inequality $|\frac{1}{x+y}-\frac{1}{x}|\leq \frac{1.05|y|}{x^2}$ for all $x>0$ and $y\in[-x/21,x/21]$. In $(ii)$ we used \cref{eq:mirsky_inequality}.
    On the other hand, by assumption, $\frac{1}{\Lambda_{0,1}} \geq \frac{1}{4}\mu_\alpha$. Therefore, fixing $\Delta:=2^8\frac{c}{\mu_\alpha}$, we have
    \begin{equation}\label{eq:upper_bound_first_term}
        \sum_{i\in J_{\leq\alpha}} \paren{\frac{1}{\Lambda_{1,i}+\frac{c}{\mu_\alpha} + \Delta}-\frac{1}{\Lambda_{0,i}}} \overset{(i)}{\leq} \frac{1}{\Lambda_{0,1}+\Delta} - \frac{1}{\Lambda_{0,1}} \overset{(ii)}{\leq} \frac{1}{4/\mu_\alpha+\Delta} - \frac{1}{4/\mu_\alpha} \overset{(iii)}{<} -\frac{1}{24}\mu_\alpha^2 \Delta.
    \end{equation}
    In $(i)$ we used $\Delta\geq 0$ and \cref{eq:weyls_inequality}. In $(ii)$ we used $\Lambda_{0,1}\leq \frac{4}{\mu_\alpha}$ and in $(iii)$ we used the fact that $\frac{1}{x+\Delta}-\frac{1}{x}< -\frac{2\Delta}{3x^2}$ whenever $\Delta\in[0,x/2)$. Similarly, we have
    \begin{equation}\label{eq:lower_bound_first_term}
        \sum_{i\in J_{\leq\alpha}} \paren{\frac{1}{\Lambda_{1,i}-\frac{c}{\mu_\alpha} - \Delta}-\frac{1}{\Lambda_{0,i}}} \overset{(i)}{\geq} \frac{1}{\Lambda_{0,1}-\Delta} - \frac{1}{\Lambda_{0,1}} \overset{(ii)}{\geq} \frac{\Delta}{\Lambda_{0,1}^2}  \geq \frac{1}{16}\mu_\alpha^2 \Delta.
    \end{equation}
    In $(i)$ we again used $\Delta\geq 0$ and \cref{eq:weyls_inequality} and in $(ii)$ we used the inequality $\frac{1}{x-y}-\frac{1}{x}\geq \frac{y}{x^2}$ for $y\in[0,x)$. Hence,
    \begin{align*}
        &\sum_{i\in J_{\leq\alpha}}\frac{1}{\Lambda_{1,i}+\frac{c}{\mu_\alpha}+\Delta} + \sum_{\beta>\alpha} \sum_{i\in J_\beta} \frac{1}{\Lambda_{1,i}+\frac{c}{\mu_\alpha}+\Delta} -1\\
        &\overset{(i)}{\leq }\sum_{i\in J_{\leq\alpha}}\paren{\frac{1}{\Lambda_{1,i}+\frac{c}{\mu_\alpha}+\Delta} - \frac{1}{\Lambda_{0,i}}} + \sum_{\beta>\alpha} \sum_{i\in J_\beta} \paren{\frac{1}{\Lambda_{1,i}}-\frac{1}{\Lambda_{0,i}}} \\
        &\overset{(ii)}{<} -\frac{1}{24}\mu_\alpha^2\Delta + 10c\mu_\alpha <0.
    \end{align*}
    In $(i)$ we used \cref{eq:definition_c_t_0} and in $(ii)$ we used \cref{eq:upper_bound_second_sum,eq:upper_bound_first_term}. By definition, one has $\tr((W_0^{-1}-G +c_1I)^{-1})=1$. The previous inequality therefore implies $c_1 \leq \frac{c}{\mu_\alpha}+\Delta$. Similarly, we can combine \cref{eq:upper_bound_second_sum,eq:lower_bound_first_term} to obtain
    \begin{equation*}
        \sum_{i\in J_{\leq\alpha}}\frac{1}{\Lambda_{1,i}-\frac{c}{\mu_\alpha}-\Delta} + \sum_{\beta>\alpha} \sum_{i\in J_\beta} \frac{1}{\Lambda_{1,i}-\frac{c}{\mu_\alpha}-\Delta} -1\geq \frac{1}{16}\mu_\alpha^2\Delta - 10c\mu_\alpha>0.
    \end{equation*}
    which implies $c_1\geq -\frac{c}{\mu_\alpha}-\Delta$. In summary, we precisely obtained $|c_1| \leq \frac{c}{\mu_\alpha} + \Delta \leq \frac{257c}{\mu_\alpha}$. This proves the first claim. 
    
    We now turn to the second claim. Fix $\beta\geq\alpha$ and $i\in J_\beta$ (or $i\in J_{\leq\alpha}$ if $\beta=\alpha$). Then,
    \begin{equation*}
        |\Lambda_{1,i}+c_1 - \Lambda_{0,i}| \leq |\Lambda_{1,i}-\Lambda_{0,i}| + |c_1| \overset{(i)}{\leq} \frac{257c}{\mu_\alpha} + \frac{c}{\mu_\beta}\leq \frac{258c}{\mu_\beta}.
    \end{equation*}
    In $(i)$ we used the previous estimate on $c_1$ and \cref{eq:mirsky_inequality} if $\beta>\alpha$ and \cref{eq:weyls_inequality} if $\beta=\alpha$. 
    
    If $\beta<L$, by assumption we have $\frac{1}{\Lambda_{0,i}}\geq \frac{1}{4}\mu_\alpha$. For convenience, denote $\Lambda_{0,i}^+=\max(\Lambda_{0,i},\frac{1}{2\mu_\beta})$. Then,
    \begin{equation*}
        \frac{1}{\Lambda_{1,i}+c_1}-\frac{1}{\Lambda_{0,i}} \geq \frac{1}{\Lambda_{0,i}^+ +|\Lambda_{1,i}+c_1-\Lambda_{0,i}|} - \frac{1}{\Lambda_{0,i}^+} \geq -2\frac{|\Lambda_{1,i}+c_1 -  \Lambda_{0,i}|}{{\Lambda_{0,i}^+}^2} \overset{(i)}{\geq} - 2^{11}c \mu_\beta.
    \end{equation*}
    In $(i)$ we used $c<1/(258\cdot 4)$, $\frac{1}{4}\mu_\beta\leq \frac{1}{\Lambda_{0,i}^+}\leq 2\mu_\beta$, and the inequality $|\frac{1}{x+y}-\frac{1}{x}|\leq 2\frac{|y|}{x^2}$ for $y\in[-x/2,x/2]$. 

    Last, if $\beta>\alpha$, by assumption we have $\frac{1}{\Lambda_{0,i}}\leq 3\mu_\alpha$. Then,
    \begin{equation*}
        \frac{1}{\Lambda_{1,i}+c_1}- \frac{1}{\Lambda_{0,i}}\leq \frac{1}{\Lambda_{0,i} -|\Lambda_{1,i}+c_1-\Lambda_{0,i}|} - \frac{1}{\Lambda_{0,i}} \leq \frac{284c}{\mu_\beta \Lambda_{0,i}^2}\leq  2^{12}c \mu_\beta.
    \end{equation*}
    where we again used the inequality $\frac{1}{x-y}-\frac{1}{x}\leq 1.1\frac{y}{x^2}$ for $y\in[0,x/11]$. This ends the proof.
    \end{proof}

\subsection{OMD regret bound with respect to estimated gain matrices}
\label{subsec:OMD_regret_bound}

We start by proving a general bound on the stability terms that will arize in the OMD analysis.

\lemmastabterm*

\begin{proof}
    First, note that $W^{-1} +\|G\|_{\op} I-G \succeq W^{-1}$ and $W^{-1} -\|G\|_{\op} I-G \preceq W^{-1}$. Given that $W_1,W\in\Scal_d$, this implies that
    \begin{equation}\label{eq:bound_deviation_c}
        |c| \leq \|G\|_{\op}.
    \end{equation}
    Next, we can decompose
    \begin{equation*}
        W_1^{-1} =: \begin{bmatrix}
            X  &Y \\
            Y ^\top& Z ,
        \end{bmatrix} \quad \text{where} \quad \begin{cases}
            X &=W_{11}^{-1}+c I_{11}\\
            Y &=-G_{12}\\
            Z &=W_{22}^{-1}+c I_{22}-G_{22}
        \end{cases}
    \end{equation*}
    Note that
    \begin{equation*}
        Z \succeq \paren{\frac{1}{\mu}+c }I_{22}-G_{22} \overset{(i)}{\succeq} \paren{\frac{1}{\mu}-2\|G\|_{\op}}I_{22} \succeq \frac{I_{22}}{2\mu},
    \end{equation*}
    where in $(i)$ we used \cref{eq:bound_deviation_c} and $\|G_{22}\|_{\op}\leq\|G\|_{\op}$. As a result,
    \begin{equation}\label{eq:upper_bound_op_norm_Z}
        \|Z ^{-1}\|_{\op}\leq 2\mu.
    \end{equation}

    The Schur complement is defined via $S=X - YZ^{-1}Y^\top$, which in particular satisfies $S^{-1}=(W_1)_{11}$. Schur's complement formula then implies
    \begin{align*}
        \dotp{(W_1-W)_{12}}{G_{12}} =\dotp{(W_1)_{12}}{G_{12}} &=-\dotp{S^{-1}YZ^{-1}}{G_{12}}\\
        &=\dotp{(W_1)_{11}G_{12}Z^{-1}}{G_{12}}\\
        &\leq \|(W_1)_{11}\|_* \|Z^{-1}\|_{\op} \|G_{12}\|_{\op}^2 \overset{(i)}{\leq} 2\mu \|G_{12}\|_{\op}^2\leq 4\mu\|G\|_{\op}^2.
    \end{align*}
    In $(i)$ we used \cref{eq:upper_bound_op_norm_Z} and $\|(W_1)_{11}\|_*\leq\|W_1\|_*=1$.

    Since $W^{-1}-W_1^{-1}=G-cI$, multiplying left and right by $W_1$ and $W$ respectively implies $W_1-W=W_1(G-cI)W$. Therefore,
    \begin{align*}
        \dotp{(W_1-W)_{22}}{G_{22}}&=\tr\paren{(W_1(G-cI)W)_{22}G_{22}} \\
        &= \tr((W_1)_{21}G_{12} W_{22}G_{22}) + \tr((W_1)_{22} (G_{22}-cI_{22}) W_{22}G_{22}) \\
        &\overset{(i)}{\leq} -\tr(S^{-1}YZ^{-1}G_{12} W_{22}G_{22}) + 2\mu \|G\|_{\op}^2\\ 
        &\overset{(ii)}{\leq} 4\mu^2\|G\|_{\op}^3 +2\mu\|G\|_{\op}^2 \overset{(iii)}{\leq} 3\mu\|G\|_{\op}^2.
    \end{align*}
    In $(i)$ we used $\|(W_1)_{22}\|_*\leq 1$, \cref{eq:bound_deviation_c}, and $\|G_{22}\|_{\op}\leq\|G\|_{\op}$. In $(ii)$ we used the same arguments as above. In $(iii)$ we used $\|G\|_{\op}\leq \frac{1}{4\mu}$. Altogether, we obtained
    \begin{equation*}
        \dotp{W_1-W}{G} =2\dotp{(W_1-W)_{12}}{G_{12}} +\dotp{(W_1-W)_{22}}{G_{22}} \leq 11\mu\|G\|_{\op}^2.
    \qedhere
    \end{equation*}
\end{proof}

We are now ready to bound the regret of the OMD subroutine within \cref{alg:final_alg}. Throughout, we use the notation $R(W)=-\log\det(W)$ for the negative log-determinant regularizer on~$\Scal_d$.

\begin{lemma}[OMD regret bound]\label{lemma:compute_OMD_regret}
    Fix $\gamma\in(0,1/6]$ such that  $L= \log_2(d/\gamma)$ is integer, and $\delta\in(0,1]$. Suppose that for all $t\in[T]$ one has $\|G_t\|_{\op}\leq 1$ and $\|G_t\|_*\leq r$. Fix $\eta\leq \frac{c_4}{rL^2\log^2(dLT/\delta)}$.
    Then, when running \cref{alg:final_alg}, for any $U\in\Scal_d$ we have
    \begin{equation*}
        \Ebb\sqb{\sum_{t=1}^T \dotp{U-W_t}{\widehat G_t}} \leq \frac{D_R(U\parallel I/d)}{\eta} + c_2 \eta r^2 \log^4(dT/\delta) L T + c_2\frac{d^2T}{\gamma}\delta,
    \end{equation*}
    for some universal constant $c_2>0$.
\end{lemma}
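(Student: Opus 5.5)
We follow the standard FTRL/OMD template on $\Scal_d$ with regularizer $R(W)=-\log\det W$, refined along the fractional times $t+\frac{\alpha}{L+1}$ introduced before \cref{lemma:stability_eigenvalues}. Since $W_{t+\frac{\alpha}{L+1}}$ is exactly the FTRL iterate (equivalently the lazy OMD iterate) for the sequence of gains $O_1,B_{1,1},\dots,B_{1,L},O_2,\dots$, the usual ``be-the-leader'' argument gives, for any $U\in\Scal_d$,
\begin{equation*}
\sum_{t=1}^T\Bigl(\dotp{U-W_t}{O_t}+\sum_{\alpha\in[L]}\dotp{U-W_{t+\frac{\alpha}{L+1}}}{B_{t,\alpha}}\Bigr)\le \frac{D_R(U\parallel I/d)}{\eta}+\sum_{t=1}^T\Bigl(\dotp{W_{t+\frac{1}{L+1}}-W_t}{O_t}+\sum_{\alpha\in[L]}\dotp{W_{t+\frac{\alpha+1}{L+1}}-W_{t+\frac{\alpha}{L+1}}}{B_{t,\alpha}}\Bigr).
\end{equation*}
The left-hand side differs from $\sum_t\dotp{U-W_t}{\widehat G_t}$ only through the terms $\dotp{W_{t+\frac{\alpha}{L+1}}-W_t}{B_{t,\alpha}}$. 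I would show that these vanish. By \cref{lemma:well_defined_algorithm} (claim 3 and its fractional-time analogue), $W_{t+\frac{\alpha}{L+1}}$ is block diagonal along $E_{t,1},\dots,E_{t,L}$. Meanwhile, $B_{t,\alpha}$ is supported on the inverse-L region of layer $\alpha$ with its diagonal in the eigenbasis $J_{t,\alpha}$ removed. Hence $\dotp{W_{t+\frac{\alpha}{L+1}}}{B_{t,\alpha}}$ only sees the $E_{t,\alpha}$ diagonal block, which pairs with the eigenvectors $u_{t,i}$, $i\in J_{t,\alpha}$. I need these to remain eigenvectors through the fractional updates by $O_t$ and $B_{t,\beta}$, $\beta<\alpha$, which touch only $E_{t,<\alpha}$ and rank-one eigen-directions. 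Granting that, both inner products equal zero, and the left-hand side is exactly $\sum_t\dotp{U-W_t}{\widehat G_t}$.

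The stability terms are handled on the good event $\Gcal_\delta$ of \cref{lemma:stability_eigenvalues}, intersected with $\Ecal_\delta$ from its proof. For the $B_{t,\alpha}$ terms I would restrict to $E_{t,\leq\alpha}$: the blocks outside are unaffected up to the scalar normalisation $c$, and the contribution of that normalisation vanishes since $B_{t,\alpha}$ has zero trace on each block paired with a scalar multiple of the identity. Decomposing $E_{t,\leq\alpha}=E_{t,<\alpha}\oplus E_{t,\alpha}$ puts $B_{t,\alpha}$ in exactly the form of \cref{lemma:updated_bound_stability_term}, with zero $(1,1)$ block. I take $\mu=\frac94\mu_\alpha$, justified by \cref{lemma:stability_eigenvalues}, and $G=\eta B_{t,\alpha}$. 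The hypothesis $\|G\|_{\op}\le \frac1{4\mu}$ follows from $\|B_{t,\alpha}\|_{\op}\le 3\|A_{t,\alpha}\|_{\op}\lesssim r\log^2(dT/\delta)/\mu_\alpha$ together with the constraint on $\eta$. The lemma then yields a contribution $\lesssim \eta\mu_\alpha\|B_{t,\alpha}\|_{\op}^2\lesssim \eta r^2\log^4(dT/\delta)/\mu_\alpha$. This occurs once per layer-$\alpha$ epoch, i.e.\ $T/2^\alpha=T\mu_\alpha$ times, so each layer contributes $\eta r^2\log^4 T$, and summing over $L$ layers gives the middle term. For the diagonal $O_t$ terms, $O_t$ is rank one along the eigenvector $u_{t,i_t}$. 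A direct scalar computation, or a one-dimensional version of the same Schur argument, gives a stability term of at most $\lesssim\eta\lambda_{t,i_t}^2\|O_t\|_{\op}^2\cdot\lambda_{t,i_t}^{-1}$. Its conditional expectation is at most $\eta\sum_i \lambda_{t,i}\cdot\frac{4}{\lambda_{t,i}}\dotp{G_t}{u_{t,i}u_{t,i}^\top}^2\lesssim\eta r$, using the local-norm bound $W\succeq$ small perturbation; the same requirement $\eta\lambda^{-1}\cdot\lambda\ll1$ holds automatically for log-det.

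Off the good event, whose probability is at most $\delta$, I would use the crude bounds $\|\widehat G_t\|_{\op}\lesssim d^2/\gamma$, valid since $\lambda_{t,i}\ge\gamma/d$ and $\mu_\alpha\ge\gamma/d$. Each round then contributes at most $O(d^2/\gamma)$ in expectation, giving the last term $c_2 d^2T\delta/\gamma$. The main obstacle I anticipate is making the zero-inner-product claim and the reduction of each fractional step to the block form of \cref{lemma:updated_bound_stability_term} fully rigorous. In particular, I must verify that the diagonal of $B_{t,\alpha}$ on $E_{t,\alpha}$ is removed in the eigenbasis of the fractional iterate, and not merely in that of $U_t$.
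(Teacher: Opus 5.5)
Your proposal is correct and follows the paper's proof essentially step for step. The shared steps are the fractional-time OMD inequality, the vanishing of $\langle W_{t+\frac{\alpha}{L+1}},B_{t,\alpha}\rangle$ and $\langle W_t,B_{t,\alpha}\rangle$ from the block/eigenbasis structure, \cref{lemma:updated_bound_stability_term} applied to $\eta B_{t,\alpha}$ and to the rank-one $\eta O_t$, and a crude bound off $\Gcal_\delta$. In the application to $\eta B_{t,\alpha}$, the paper works on the whole space with blocks $E_{t,\alpha}^\perp, E_{t,\alpha}$ rather than restricting to $E_{t,\leq\alpha}$, and takes $\mu=3\mu_\alpha$, since the bound must be on eigenvalues of $W$ rather than $U$.

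One slip: $\|\widehat G_t\|_{\op}\lesssim d^2/\gamma$ does not hold per round, because the only deterministic bound on a layer-$\alpha$ batch is of order $4^\alpha d$. What you actually need is the deterministic aggregate bound $\sum_t\|\widehat G_t\|_{\op}\lesssim d^2T/\gamma$. It holds by amortizing each batch over its $2^\alpha$-round epoch, and gives the same last term.
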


\begin{proof} 
    Throughout, we suppose that $\Gcal_\delta$ holds, where $\Gcal_\delta$ is the event defined within the proof of \cref{lemma:stability_eigenvalues}, under which the guarantees of \cref{lemma:stability_eigenvalues} hold.
    Following the standard analysis of mirror descent (e.g.~\cite{juditsky2011first}), for any fixed $U\in\Scal_d$ we have
    \begin{multline*}
        \sum_{t=1}^T \paren{\dotp{U-W_t}{O_t} + \sum_{\alpha=1}^L\dotp{U-W_{t+\frac{\alpha}{L+1}}}{B_{t,\alpha}}} \leq \frac{D_R(U\parallel W_1)}{\eta} + \sum_{t=1}^T \left(\dotp{W_{t+\frac{1}{L+1}}-W_t}{O_t} \right. \\
        + \sum_{\alpha\in[L]} \left. \dotp{W_{t+\frac{\alpha+1}{L+1}} - W_{t+\frac{\alpha}{L+1}}}{B_{t,\alpha}} \right).
    \end{multline*}
    We note that by construction $W_1=I/d$. 
    We start by focusing on the left-hand side. Fix $t\in[T]$. By construction, $W_t$ and $O_t$ have the same eigenvectors $u_{t,1},\ldots,u_{t,d}$. As a result, $W_{t+\frac{1}{L+1}}$ also shares the same eigenvectors. Next, for any $\alpha\in[L]$ we have $P_{E_{t,<\alpha}}\sum_{\beta<\alpha}B_{t,\beta} P_{E_{t,<\alpha}}=\sum_{\beta<\alpha}B_{t,\beta}$. As a result, $W_{t+\frac{\alpha}{L+1}}$ is diagonalizable along $E_{t,<\alpha}$ and any vector $u_{t,i}\in E_{t,\geq\alpha}$ for $i\in[d]$ is also an eigenvector of $W_{t+\frac{\alpha}{L+1}}$. On the other hand, note that $B_{t,\alpha}$ does not have any component on these block diagonals. This shows that $\dotp{W_{t+\frac{\alpha}{L+1}}}{B_{t,\alpha}}=0$. Altogether, this implies that
    \begin{equation*}
        \dotp{U-W_t}{O_t} + \sum_{\alpha=1}^L\dotp{U-W_{t+\frac{\alpha}{L+1}}}{B_{t,\alpha}} = \dotp{U-W_t}{O_t} + \sum_{\alpha=1}^L\dotp{U}{B_{t,\alpha}} = \dotp{U}{\widehat G_t} - \dotp{W_t}{O_t} \overset{(i)}{=} \dotp{U-W_t}{\widehat G_t}.
    \end{equation*}
    In $(i)$ we used the fact that $\sum_{\alpha\in[L]}B_{t,\alpha}$ do not have any component along $u_{t,i} u_{t,i}^\top$ by construction. In summary, the previous inequality becomes
    \begin{equation}\label{eq:OMD_bound_final}
        \sum_{t=1}^T \dotp{U-W_t}{\widehat G_t}  \leq \frac{D_R(U\parallel I/d)}{\eta} + \sum_{t=1}^T \paren{\dotp{W_{t+\frac{1}{L+1}}-W_t}{O_t} 
        + \sum_{\alpha\in[L]}  \dotp{W_{t+\frac{\alpha+1}{L+1}}- W_{t+\frac{\alpha}{L+1}}}{B_{t,\alpha}} }.
    \end{equation}

    Fix $t\in[T]$ and $\alpha\in[L]$. For notational convenience, denote $t_\alpha:=t+\frac{\alpha}{L+1}$. Under the event $\Gcal_\delta$, 
    \begin{equation*}
        \|\eta B_{t,\alpha}\|_{\op}\leq 3\eta\|A_{t,\alpha}\|_{\op} \leq 6\eta c_0 \frac{r \log^2(dT/\delta)}{\mu_\alpha} \overset{(i)}{\leq} \frac{1}{12\mu_\alpha},
    \end{equation*}
    where in $(i)$ we used the assumption on $\eta$.
    On the other hand, under the event $\Gcal_\delta$, the eigenvalues of $U_{t_\alpha}$ along $E_{t,\alpha}$ are at most $\frac{9}{4}\mu_\alpha$ by \cref{lemma:stability_eigenvalues}. Therefore, the eigenvalues of $W_{t_\alpha}$ along $E_{t,\alpha}$ are at most $\frac{6}{5}\cdot \frac{9}{4}\mu_\alpha\leq 3\mu_\alpha$ (since $\gamma\in[0,1/6]$). Hence, we can apply \cref{lemma:updated_bound_stability_term} to $W=W_{t_\alpha}$, $G=\eta B_{t,\alpha}$, and the blocks $E_{t,\alpha}^\perp$ and $E_{t,\alpha}$. This gives
    \begin{equation}\label{eq:bound_stability_term_alpha_nonzero}
        \dotp{W_{t_{\alpha+1}}- W_{t_\alpha}}{\eta B_{t,\alpha}} \leq 33 \mu_\alpha \eta^2\|B_{t,\alpha}\|^2_{op} \leq 200 \eta^2 c_0^2 r^2 \log^4(dT/\delta) \frac{\1[t=0\bmod 2^\alpha]}{\mu_\alpha}.
    \end{equation}
    It remains to handle the diagonal term $\dotp{W_{t+\frac{1}{L+1}}-W_t}{O_t} $, which is non-zero when $Z_t=0$. In that case, $\|O_t\|_{\op}=\frac{2\ell_t^2}{\lambda_{t,i_t}}\leq \frac{3}{\tilde\lambda_{t,i_t}}$ since $\gamma\in[0,1/6]$. Hence, we can apply \cref{lemma:updated_bound_stability_term} to $W=W_t$, $G=\eta O_t$, and the blocks $\{u_{t,i_t}\}^\perp$ and $u_{t,i_t}\Rbb$, on which $W_t$ precisely has eigenvalue $\tilde\lambda_{t,i_t}$. This gives
    \begin{equation}\label{eq:bound_stability_term_alpha_zero}
        \dotp{W_{t+\frac{1}{L+1}}- W_t}{\eta O_t} \leq 33 \tilde\lambda_{t,i_t} \eta^2\|O_t\|^2_{op} \leq 40 \eta^2 \frac{\ell_t^4}{\lambda_{t,i_t}} \1[Z_t=0].
    \end{equation}

    Plugging these estimates within \cref{eq:OMD_bound_final} shows that under $\Gcal_\delta$,
    \begin{equation*}
        \sum_{t=1}^T \dotp{U-W_t}{\widehat G_t} \leq \frac{D_R(U\parallel I/d)}{\eta} + c_5\eta\sum_{t=1}^T \frac{\ell_t^4\1[Z_t=0]}{\lambda_{t,i_t}} + c_5 \eta r^2 \log^4(dT/\delta) L T,
    \end{equation*}
    for some universal constant $c_5>0$. Note that for any $t\in[T]$ one has
    \begin{equation*}
        \Ebb\sqb{\frac{\ell_t^4 \1[Z_t=0]}{\lambda_{t,i_t}}\mid \Hcal_{t-1},G_t} = \sum_{i\in[d]} \frac{\lambda_{t,i}}{2} \frac{\dotp{G_t}{u_{t,i}u_{t,i}^\top}^2}{\lambda_{t,i}} \overset{(i)}{\leq} \frac{1}{2}\sum_{i\in[d]}\dotp{G_t}{u_{t,i}u_{t,i}^\top} = \frac{\|G_t\|_*}{2}\leq \frac{r}{2}.
    \end{equation*}
    In $(i)$ we used $\|G_t\|_{\op}\leq 1$. In summary, we obtained
    \begin{equation*}
        \Ebb\sqb{\1[\Gcal_\delta]\sum_{t=1}^T \dotp{U-W_t}{\widehat G_t}} \leq \frac{D_R(U\parallel I/d)}{\eta} + 2c_5 \eta r^2 \log^4(dT/\delta) L T.
    \end{equation*}

    Additionally, we always have
    \begin{align*}
        \sum_{t=1}^T \dotp{U-W_t}{\widehat G_t} \overset{(i)}{\leq} 2\sum_{t=1}^T\|\widehat G_t\|_{\op} &\overset{(ii)}{\leq} \frac{24}{\mu_L}dT\leq \frac{48d^2T}{\gamma}.
    \end{align*}
    In $(i)$ we used $\|U-W_t\|_*\leq \|U\|_*+\|W_t\|_*=2$. In $(ii)$ we used the fact that $U_t\succeq \frac{\gamma}{d}I\succeq \mu_L I$ and $\|G_t\|_{\op}\leq 1$ so that $\ell_t^2\leq 1$.
    Combining the last three inequalities, we obtain
    \begin{equation*}
        \Ebb\sqb{\sum_{t=1}^T \dotp{U-W_t}{\widehat G_t}} \leq \frac{D_R(U\parallel I/d)}{\eta} + c_5 \eta r^2 \log^4(dT/\delta) L T + \frac{48d^2T}{\gamma}\delta,
    \end{equation*}
    where we used $\Pbb[\Gcal_\delta^c]\leq\delta$. This ends the proof.
\end{proof}

As a last step before proving the final result, we check that with high probability, when $Z_t=1$ we indeed sample the queries $w_t$ according to the procedure in lines \ref{line:sample_I_t}--\ref{line:sample_w_to_complete_R}. This shows that the ``good scenario'' for the layered sampling scheme holds with high probability.

\begin{lemma}\label{lemma:R_not_negative}
    Suppose that the assumptions of \cref{lemma:compute_OMD_regret} and $\Gcal_\delta$ hold. Then, for all $t\in[T]$,
    \begin{equation*}
        R_t = U_t-\sum_{\alpha\in[L],d_{t,\leq\alpha}>0} p_{t,\alpha}\frac{P_{E_{t,\leq\alpha}}}{d_{t,\leq\alpha}} \succeq 0.
    \end{equation*}
\end{lemma}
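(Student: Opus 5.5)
The plan is to verify $R_t \succeq 0$ blockwise, using the block-diagonal structure from \cref{lemma:well_defined_algorithm} (Claim 3) and the eigenvalue bounds from \cref{lemma:stability_eigenvalues}. By Claim 3, $U_t$ is block diagonal along the orthogonal decomposition $E_{t,1}\oplus\cdots\oplus E_{t,L}=\Rbb^d$. Since $E_{t,\leq\alpha}=\bigoplus_{\beta\leq\alpha}E_{t,\beta}$, we can write
\begin{equation*}
    \sum_{\alpha\in[L],\,d_{t,\leq\alpha}>0} p_{t,\alpha}\frac{P_{E_{t,\leq\alpha}}}{d_{t,\leq\alpha}} = \sum_{\alpha} \frac{\mu_\alpha}{4} P_{E_{t,\leq\alpha}} = \sum_{\beta\in[L]} \Bigl(\sum_{\alpha\geq\beta}\frac{\mu_\alpha}{4}\Bigr) P_{E_{t,\beta}}.
\end{equation*}
Here we used $p_{t,\alpha}/d_{t,\leq\alpha}=\mu_\alpha/4$. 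Layers with $d_{t,\leq\alpha}=0$ contribute nothing, since then $E_{t,\beta}=\{0\}$ for all $\beta\leq\alpha$. So the subtracted matrix is also block diagonal along the same decomposition, and it acts as a scalar on each block.

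It therefore suffices to show that on each nonzero block $E_{t,\beta}$, every eigenvalue of $U_t$ is at least $\sum_{\alpha\geq\beta}\mu_\alpha/4$. The geometric sum gives $\sum_{\alpha\geq\beta}2^{-\alpha}/4 < 2\mu_\beta/4=\mu_\beta/2$. Under $\Gcal_\delta$, \cref{lemma:stability_eigenvalues} says that every eigenvalue $\lambda_{t,i}$ with $i\in J_{t,\beta}$ lies in $[\tfrac34\mu_\beta,\tfrac94\mu_\beta]$, so in particular $\lambda_{t,i}\geq\tfrac34\mu_\beta>\tfrac12\mu_\beta$. Hence $P_{E_{t,\beta}}R_tP_{E_{t,\beta}}\succeq(\tfrac34-\tfrac12)\mu_\beta P_{E_{t,\beta}}\succeq 0$ on each block. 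Because $R_t$ is block diagonal, this gives $R_t\succeq0$. The bound is uniform over all $t\in[T]$, since the good event $\Gcal_\delta$ covers all times simultaneously.

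The only step needing care is the bookkeeping. First, the index set $J_{t,\beta}$ must exactly span $E_{t,\beta}$; this is what Claim 3 guarantees. Second, the restriction $d_{t,\leq\alpha}>0$ must be consistent with $\beta_t$ in \cref{alg:layered_sampling}. Third, the lowest layer $L$ must be covered: there $\mu_L=\gamma/d$ and $U_t\succeq\frac{\gamma}{d}I=\mu_L I$ directly, which beats the required $\mu_L/4$. I do not expect a genuine obstacle; the substantive work was already done in \cref{lemma:stability_eigenvalues}.
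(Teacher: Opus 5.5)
Your proposal is correct and follows essentially the same argument as the paper. Both use the block-diagonal structure of $U_t$ along $E_{t,1},\ldots,E_{t,L}$ (Claim 3 of \cref{lemma:well_defined_algorithm}), the lower bound $\frac{3}{4}\mu_\beta$ from \cref{lemma:stability_eigenvalues}, and the geometric-sum bound $\sum_{\alpha\geq\beta}\mu_\alpha\leq 2\mu_\beta$ on each block; the paper writes the conclusion as $U_t\succeq\frac{3}{2}\sum_{\beta}p_{t,\beta}P_{E_{t,\leq\beta}}/d_{t,\leq\beta}$, and your separate treatment of layer $L$ via $U_t\succeq\frac{\gamma}{d}I$ is harmless but unnecessary, since \cref{lemma:stability_eigenvalues} already covers $\beta=L$.
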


\begin{proof}
    From \cref{lemma:stability_eigenvalues}, under $\Gcal_\delta$, for any $t\in[T]$ and $\alpha\in[L]$ one has
    \begin{equation*}
        P_{E_{t,\alpha}}U_tP_{E_{t,\alpha}} \succeq \frac{3\mu_\alpha}{4}P_{E_{t,\alpha}}.
    \end{equation*}
    Summing these equations and recalling that $U_t$ is diagonal along the blocks $E_{t,1},\ldots,E_{t,L}$ from \cref{lemma:well_defined_algorithm}, we obtain
    \begin{align*}
        U_t = \sum_{\alpha\in[L]} P_{E_{t,\alpha}}U_tP_{E_{t,\alpha}} &\succeq \frac{3}{4} \sum_{\alpha\in[L]}  P_{E_{t,\alpha}} \mu_\alpha\\
        &\succeq \frac{3}{8} \sum_{\alpha\in[L]}  P_{E_{t,\alpha}} \sum_{\beta=\alpha}^L \mu_\beta = \frac{3}{8} \sum_{\beta=1}^L  \mu_\beta P_{E_{t,\leq\beta}} = \frac{3}{2}  \sum_{\beta\in[L],d_{t,\leq\beta}>0}  p_{t,\beta} \frac{P_{E_{t,\leq\beta}}}{d_{t,\leq\beta}}.
        \qedhere
    \end{align*}
\end{proof}

\subsection{Proof of \cref{thm:ub}}
\label{subsec:ubthm}

Given the above lemmas, we are now ready to prove our final regret bound. The last remaining step is to relate the left-hand side of \cref{lemma:compute_OMD_regret} to the actual (pseudo--)regret of the final algorithm \cref{alg:final_alg}, and in particular, relating the estimates $\widehat G_t$ to the true gain matrices $G_t$.

\begin{proof}[Proof of \cref{thm:ub}]
    We start by relating the estimates $\widehat G_t$ to the true gain matrices $G_t$. From \cref{lemma:unbiased_hat_L_final}, for any $t\in[T]$ and $\alpha\in[L]$, we have
    \begin{equation}\label{eq:unbiaised_by_subsapace_leq_alpha}
        \Ebb\sqb{\ell_t^2((d_{t,\leq\alpha}+2)w_t w_t^\top -P_{E_{t,\leq\alpha}}) \mid\Hcal_{t-1},G_t,Z_t=1,R_t\succeq 0,I_t=\alpha} = \frac{2}{d_{t,\leq\alpha}}P_{E_{t,\leq\alpha}}G_t P_{E_{t,\leq\alpha}}.
    \end{equation}
    We next fix $t_b\in[T]$ the beginning of a layer-$\alpha$ epoch---that is $t_b=1\bmod 2^\alpha$---and let $t_e=\min(t_b+2^\alpha-1, T)$ be the end of the corresponding epoch. We recall that the subspaces $E_{s,\leq\alpha},E_{s,<\alpha},E_{s,\alpha}$ are invariant on the epoch for $s\in[t_b,t_e]$ by \cref{lemma:well_defined_algorithm}. Further, for any $s\in[t_b,t_e]$, $W_s$ is diagonal along the blocks $E_{t_b,<\alpha},E_{t_b,\alpha+1},\ldots,E_{t_b,L}$ and the eigenvectors $u_{t_b,i}$ for $i\in J_{t_b,\alpha}$. As a result, for any $s\in[t_b,t_e]$ and matrix $A$, if we denote
    \begin{equation*}
        \Psi_{s,\alpha}(A):= P_{E_{s,\leq\alpha}}AP_{E_{s,\leq\alpha}}-P_{E_{s,<\alpha}}AP_{E_{s,<\alpha}} - \sum_{i\in J_{t_b,\alpha}} (u_{t_b,i}^\top A u_{t_b,i}) u_{t_b,i}u_{t_b,i}^\top,
    \end{equation*}
    then we always have $\dotp{W_s}{\Psi_{s,\alpha}(A) } = 0$. From the previous discussion, $\Psi_{s,\alpha}=\Psi_{t_b,\alpha}$ for all $s\in[t_b,t_e]$.
    Therefore,
    \begin{equation*}
        \Ebb\sqb{\dotp{W_{t_e}}{B_{t_e,\alpha}}}=\Ebb\sqb{\dotp{W_{t_e}}{\Psi_{t_b,\alpha}(A_{t_e,\alpha})}}=0=\Ebb\sqb{\sum_{s=t_b}^{t_e} \dotp{W_s}{\Psi_{s,\alpha}(G_s)}}.
    \end{equation*}
    On the other hand, for any fixed $U\in\Scal_d$, by the linearity of the operator $\Psi_{t_b,\alpha}$ we have 
    \begin{equation*}
        \Ebb\sqb{\dotp{U}{B_{t_e,\alpha}}} = \dotp{U}{\Psi_{t_b,\alpha}(\Ebb[A_{t_e,\alpha}])},
    \end{equation*}
    and
    \begin{align*}
        \Ebb\sqb{A_{t_e,\alpha} } &\overset{(i)}{=} \frac{8}{\mu_\alpha d_{t_b,\leq\alpha}}\Ebb\sqb{\sum_{s=t_b}^{t_e} \1[R_s\succeq 0]\1[Z_s=1,I_s=\alpha] P_{E_{s,\leq\alpha}}G_s P_{E_{s,\leq\alpha}} }\\
        &=  \Ebb\sqb{\sum_{s=t_b}^{t_e} \1[R_s\succeq 0] P_{E_{s,\leq\alpha}}G_s P_{E_{s,\leq\alpha}} }
    \end{align*}
    In $(i)$ we used \cref{eq:unbiaised_by_subsapace_leq_alpha}. We also recall that $E_{s,<\alpha}$ is constant on the epoch $s\in[t_b,t_e]$ from \cref{lemma:well_defined_algorithm}. Combining the last three equalities then gives
    \begin{align*}
        \Ebb\sqb{\dotp{U-W_{t_e}}{B_{t_e,\alpha}}} &= \Ebb\sqb{\sum_{s=t_b}^{t_e} \dotp{U-W_s}{\Psi_{s,\alpha}(G_s)}} - \sum_{s=t_b}^{t_e}\dotp{U}{\Ebb\sqb{\1[R_s\prec 0] \Psi_{s,\alpha}(G_s)}}\\
        &\overset{(i)}{\geq} \Ebb\sqb{\sum_{s=t_b}^{t_e} \dotp{U-W_s}{\Psi_{s,\alpha}(G_s)}} - 3\delta(t_e+1-t_b) .
    \end{align*}
    In $(i)$ we used $\|\Psi_{s,\alpha}(G_s)\|_{\op}\leq 3\|G_s\|_{\op}\leq 3$ and \cref{lemma:R_not_negative} which shows that $\Pbb[R_s\prec 0]\leq\Pbb[ \Gcal_\delta^c]\leq\delta$.

    We next sum all these inequalities for all layer-$\alpha$ epochs $[t_b,t_e]$ and all $\alpha\in[L]$ which gives
    \begin{equation*}
        \Ebb\sqb{\sum_{t=1}^T \dotp{U-W_t}{\sum_{\alpha\in[L]}B_{t,\alpha}}} \geq \Ebb\sqb{\sum_{t=1}^T \dotp{U-W_t}{\widetilde G_t}} - 3\delta LT,
    \end{equation*}
    where 
    \begin{align*}
        \widetilde G_t:= \sum_{\alpha\in[L]}\Psi_{t,\alpha}(G_t) \overset{(i)}{=} G_t -\sum_{i\in[d]} (u_{t,i}^\top G_t u_{t,i}) u_{t,i}u_{t,i}^\top.
    \end{align*}
    In $(i)$ we used the fact that $P_{E_{t,\leq L}}=I$ and $P_{E_{t,<1}}=0$.
    On the other hand,
    \begin{equation*}
        \Ebb[O_t\mid \Hcal_t] = \sum_{i\in[d]} \frac{\lambda_{t,i}}{2}\cdot \frac{2}{\lambda_{t,i}} \dotp{u_{t,i}u_{t,i}^\top}{G_t} u_{t,i}u_{t,i}^\top = \sum_{i\in[d]} (u_{t,i}^\top G_t u_{t,i}) u_{t,i}u_{t,i}^\top.
    \end{equation*}
    We recall that $W_t$ is $\Hcal_t$-measurable since it only depends on $\widehat G_{1:t-1}$. Altogether, we obtained
    \begin{align}
        \Ebb\sqb{\sum_{t=1}^T \dotp{U-W_t}{\widehat G_t}} &= \Ebb\sqb{\sum_{t=1}^T \dotp{U-W_t}{O_t}}  + \Ebb\sqb{\sum_{t=1}^T \dotp{U-W_t}{\sum_{\alpha\in[L]}B_{t,\alpha}}} \notag\\
        &\geq \Ebb\sqb{\sum_{t=1}^T \dotp{U-W_t}{ G_t}} - 3\delta LT. \label{eq:link_estimators_true_G}
    \end{align}

    We now observe that by construction, for any $t\in[T]$ we have
    \begin{equation}\label{eq:play_OMD_matrix}
        \Ebb[w_t w_t^\top\mid\Hcal_{t-1},G_t] = U_t.
    \end{equation}
    Indeed, when $Z_t=1$ and $R_t\succeq 0$ we can check that
    \begin{equation*}
        \Ebb[w_t w_t^\top\mid\Hcal_t,Z_t=1,R_t\succeq 0] = p_{t,0} \frac{R_t}{p_{t,0}} + \sum_{\alpha\in[L]} p_{t,\alpha} \frac{P_{E_{t,\leq\alpha}}}{d_{t,\leq\alpha}} = U_t,
    \end{equation*}
    where by convention $0/0=0$ when $p_{t,\alpha}=0$ for some $\alpha\in\{0,\ldots,L\}$.
    On the other hand, when $Z_t=0$ or $R_t\prec 0$ we have
    \begin{equation*}
        \Ebb[w_t w_t^\top\mid\Hcal_t,Z_t=0 \text{ or }R_t\prec 0] = \sum_{i\in[d]} \lambda_{t,i} u_{t,i} u_{t,i}^\top = U_t.
    \end{equation*}

    For convenience, for a fixed $U\in\Scal_d$ we introduce $U_\gamma:=(1-\gamma)U+\gamma I/d$.
    Then, we can compute
    \begin{align*}
        \Ebb\sqb{\sum_{t=1}^T \dotp{U-w_t w_t^\top}{G_t}}& \overset{(i)}{=} \Ebb\sqb{\sum_{t=1}^T \dotp{U-U_t}{ G_t}}\\
        &=\Ebb\sqb{\sum_{t=1}^T \dotp{U_\gamma-W_t}{ G_t}} + \gamma \Ebb\sqb{\sum_{t=1}^T \dotp{U+W_t-2I/d}{ G_t}} \\
        &\overset{(ii)}{\leq }\Ebb\sqb{\sum_{t=1}^T \dotp{U_\gamma-W_t}{\widehat G_t}} + 3\delta LT + 4\gamma T\\
        &\overset{(iii)}{\leq} \frac{D_R(U_\gamma\parallel I/d)}{\eta} + c_2 \eta r^2 \log^4(dT/\delta) L T + c_2\frac{d^2T}{\gamma}\delta + 4(\delta L+\gamma)T.
    \end{align*}
    In $(i)$ we used \cref{eq:play_OMD_matrix}. In $(ii)$ we used \cref{eq:link_estimators_true_G}, $\|U+W_t-2I/d\|_*\leq \|U\|_* +\|W_t\|_* +2\leq 4$, and $\|G_t\|_{\op}\leq 1$. In $(iii)$ we used \cref{lemma:compute_OMD_regret}. By construction, $U_\gamma\succeq \gamma I/d$. Hence,
    \begin{equation*}
        D_R(U_\gamma\parallel I/d) \overset{(i)}{=} \log\frac{\det(I/d)}{\det(U_\gamma)} \leq d\log\frac{1}{\gamma}.
    \end{equation*}
    In $(i)$ we used $U_\gamma\in\Scal_d$. Plugging this within the previous estimate shows that
    \begin{equation*}
        \Reg_T(\cref{alg:final_alg}) \lesssim  \frac{d}{\eta}\log(1/\gamma) + \eta r^2 \log^4(dT/\delta) L T + \frac{d^2}{\gamma}\delta T + (\gamma+\delta L) T.
    \end{equation*}
    Choosing $\delta=(\frac{\gamma}{dT})^2$ and appropriately simplifying the constraint $\eta\leq \frac{c_4}{rL^2\log^2(dLT/\delta)}$ ends the proof of the first claim.

    We now turn to the second claim. The only part that needs checking is when $\eta \geq \frac{c_0}{r\log^4(dT^2)}\geq \frac{c_0}{16r\log^4(dT)}$. In this case, we have
    \begin{equation*}
        \sqrt{dT} \geq \frac{c_0 T}{16 \log^2(dT)},
    \end{equation*}
    and as a result, the desired bound is vacuous.
\end{proof}

\section{Computational complexity of \cref{alg:final_alg}}
\label{sec:computational_complexity}

In this section, we detail an implementation of \cref{alg:final_alg} that gives $O(d^2 T\log d)$ total runtime. We fix a round $t\in [T]$ and detail the steps performed at that round.

We recall that $\alpha_t$ denotes the coarsest layer $\beta\in[L]$ such that $t$ is the beginning of a layer-$\beta$ epoch.
We denote by $t_0:=t-2^{\alpha_t}$ the beginning of the previous layer-$\alpha_t$ epoch. From Claim 4 of \cref{lemma:well_defined_algorithm}, all eigenvectors $u_{t_0,i}\in E_{t_0,\leq\alpha_t}^\perp$ for round $t_0$ are still eigenvectors of $U_s$ throughout the epoch $s\in[t_0,t]$. Note that $E_{t_0,\leq\alpha_t}$ is also constant throughout the epoch $s\in[t_0,t]$ by Claim 2 of \cref{lemma:well_defined_algorithm}. By construction of the eigenvectors (see line~\ref{line:eigenval_decompose_U} of \cref{alg:subspace_update}) this shows that all eigenvectors
\begin{equation*}
    u_{t,i}=u_{t-1,i},\quad i\in J_{t-1,\beta}=J_{t,\beta},\, \beta>\alpha_t,
\end{equation*}
are kept unchanged at round $t$. Here, we recall that $J_{t,\beta}$ denotes the indices of eigenvectors $u_{t,i}$ spanning $E_{t,\alpha}$ ($U_t$ and $W_t$ are diagonal by block along the exploration subspaces $E_{t,1},\ldots,E_{t,L}$ by Claim 3 of \cref{lemma:well_defined_algorithm} and hence $J_{t,1},\ldots,J_{t,L}$ partition all $[d]$ eigenvectors).

In summary, to complete the eigendecomposition of $U_t$, it suffices to compute the eigendecomposition of $U_t$ along the block $E_{t,\leq\alpha_t}\times E_{t,\leq\alpha_t}$, and a basis of $E_{t,\leq\alpha_t} = E_{t-1,\alpha_t}$ is precisely given by the previous eigenvectors
\begin{equation*}
    \set{u_{t-1,j}:j\in J_{t-1,\beta},\beta\leq\alpha_t}.
\end{equation*}
Before detailing the corresponding implementation of the eigenvalue decomposition of $U_t$, note that the OMD iterate $W_t$ and the mixed iterate $U_t$ share the same eigenvectors $(u_{t,i})_{i\in[d]}$ and the same eigenvalues up to a $\gamma$-mixing with the uniform. In fact, they share the same eigenvalues as 
\begin{equation*}
    W_{t-1}^{-1} - \eta\widehat G_{t-1} = c_{t-1}I -\eta \widehat G_{1:t-1} = W_t^{-1} - (c_t-c_{t-1})I.
\end{equation*}

Hence, we may compute the OMD iterate $W_t$ and the eigendecomposition of $U_t$ together as follows:
\begin{enumerate}
    \item Let $\alpha_t:=\max\{\alpha\in\{0,\ldots,L\}: t=1\bmod 2^\alpha\}$. Let $X_t$ be the concatenation of vectors $u_{t-1,i}$ for $i\in J_{t-1,\beta}$ for $\beta\in[\alpha_t]$. Last, for $i\in[d]$, define $\zeta_{t-1,i}:=\frac{1-\gamma}{\lambda_{t-1,i}-\frac{\gamma}{d}}$
    which correspond to eigenvalues of $W_{t-1}^{-1}$.
    \label{item:start_implementation}
    \item Define $\widehat G_{t-1,\leq\alpha_t}:= X_t^\top \widehat G_{t-1} X_t$ and $\Lambda_{t-1,\leq\alpha_t}^{-1}:= \diag\paren{\zeta_{t-1,i}:i\in J_{t,\beta},\beta\in[\alpha_t]}$.\footnote{Note that by construction $\Lambda_{t-1,\leq\alpha_t}^{-1} = X_t^\top W_{t-1}^{-1} X_t$.}\label{item:reduce_to_smaller_matrix_bad}
    
    \item Compute the eigendecomposition \label{item:compute_decomposition_implementation}
    \begin{equation*}
        \Lambda_{t-1,\leq\alpha_t}^{-1} -\eta \widehat G_{t-1,\leq\alpha_t} = \sum_{i\in[d_{t-1,\leq\alpha_t}]} \nu_{t,i} v_{t,i}v_{t,i}^\top.
    \end{equation*}
    \item For any $\beta>\alpha_t$ and $i\in J_{t-1,\beta}$ we pose $(u_{t,i},\zeta_{t,i}):=(u_{t-1,i} ,\zeta_{t-1,i} - \eta u_{t,i}^\top\widehat G_{t-1} u_{t,i})$.
    We complete this sequence with the vectors and scalars $(X_t v_{t,i},\nu_{t,i})_{i\in [d_{t-1,\leq\alpha_t}]}$. Last, we pose
    \begin{equation*}
        \lambda_{t,i}:= (1-\gamma)\tilde\lambda_{t,i} + \frac{\gamma}{d},\quad \text{where} \quad \tilde\lambda_{t,i}:= \paren{\zeta_{t,i} +\frac{1-s_{t}}{d}}^{-1} \quad\text{and} \quad s_{t}:=\sum_{i\in[d]}\zeta_{t,i}.
    \end{equation*}
    This ends the construction of the eigenvectors and eigenvalues $(u_{t,i},\lambda_{t,i})_{i\in[d]}$ of $U_t$.
\end{enumerate}
It is immediate to check the validity of the procedure, the only observation required is that we can easily check, $\widehat G_{t-1} = O_{t-1} + \sum_{\alpha\in[\alpha_t]}B_{t,\alpha}$, and hence this gain estimator only has non-zero entries on the block $E_{t-1,\leq\alpha_t}\times E_{t-1,\leq\alpha}$ and the remaining diagonal. 
As a result, these steps replace lines~\ref{line:OMD_update}--\ref{line:uniform_mixing_U} of \cref{alg:final_alg} as well as line~\ref{line:eigenval_decompose_U} of \cref{alg:subspace_update}. 

The remainder of \cref{alg:subspace_update} can be implemented in a straightforward manner once we have this eigendecomposition of $U_t$. For completeness, we can use the following steps.
\begin{enumerate}[start=5]
    \item Denote $J_{t-1,\leq\alpha_t}:=\bigcup_{\beta\in[\alpha_t]}J_{t-1,\beta}$. Then, we set
    \begin{equation*}
        J_{t,\alpha}:=\begin{cases}
            J_{t-1,\alpha} & \alpha>\alpha_t\\
            \set{i\in J_{t-1,\leq\alpha_t}: \lambda_{t,i}\geq\mu_{\alpha_t}} &\alpha=\alpha_t\\
            \set{i\in J_{t-1,\leq\alpha_t}: \lambda_{t,i}\in[\mu_\alpha,\mu_{\alpha+1})} &\alpha<\alpha_t.
        \end{cases}
    \end{equation*}
\end{enumerate}
With this construction, the subspace $E_{t,\leq\alpha}$ corresponds to the span of its basis $\{u_{t,i}:i\in \bigcup_{\beta\in[\alpha]} J_{t,\alpha}\}$: this representation is sufficient for the rest of the algorithm. 

We start by checking the steps within the layered sampling procedure \cref{alg:layered_sampling}.
The residual matrix $R_t$ has the same eigenbasis $(u_{t,i})_{i\in[d]}$ and has corresponding eigenvalue 
\begin{equation*}
    r_{t,i}:=\lambda_{t,i}-\frac{p_{t,\alpha}}{d_{t,\leq\alpha}}\1[d_{t,\leq\alpha}>0],\quad i\in J_{t,\alpha},\alpha\in[L].
\end{equation*}
With the eigendecomposition of the residual matrix, we can readily check the condition $R_t\succeq 0$ in line~\ref{line:good_event_holds} and implement the corresponding residual sampling of line~\ref{line:sample_w_to_complete_R} of \cref{alg:layered_sampling}. We can also easily sample uniformly from $S_d\cap E_{t,\leq\alpha}$ as in line~\ref{line:sample_w_good_scenario} of \cref{alg:layered_sampling}, given the basis of $E_{t,\leq\alpha}$. For completeness, we can for instance use the following implementation for \cref{alg:layered_sampling}.

\begin{enumerate}[start=6]
    \item For $\alpha\in[L]$ denote $d_{t,\leq\alpha}:=\sum_{\beta\in[\alpha]}|J_{t,\beta}|$, $p_{t,\alpha}:=\frac{1}{4}\mu_\alpha d_{t,\leq\alpha}$, $p_{t,0}:=1-\sum_{\alpha\in[L]}p_{t,\alpha}$, and sample $Z_t\sim\mathrm{Bernoulli}(1/2)$.
    \item {\bf If} $Z_t=1$ and $\forall i\in[d],r_{t,i}\geq 0$:
    \begin{enumerate}
        \item Sample $I_t\sim (p_{t,\alpha})_{\alpha\leq L}$.
        \item {\bf If} $I_t\in[L]$: set $w_t:=\tfrac{\tilde w_t}{\|\tilde w_t\|}$ where $\tilde w_t:=\sum_{i\in J_{t,\beta},\beta\in[I_t]} Z_{t,i} u_{t,i}$ and $Z_{t,i}\overset{iid}{\sim}\Ncal(0,1)$ are independent Gaussians.
        \item {\bf Else} Sample $i_t\sim (\tfrac{r_{t,i}}{p_{t,0}})_{i\in[d]}$ and set $w_t:=u_{t,i}$.
    \end{enumerate}
    \item {\bf Else} Sample $i_t\sim (\lambda_{t,i})_{i\in[d]}$ and set $w_t:=u_{t,i}$.
\end{enumerate}

It only remains to implement the gain matrix estimation procedure \cref{alg:gain_estimation}. The computation of the matrices $A_{t,\alpha}$ for $\alpha\in[L]$ only takes $O(d^2)$ per round because we only need to update one of these at a time---the estimate for the layer that was picked $I_t\in[L]$ if it exists. To clarify this point, we consider that the procedure computes online the various estimates $A_{t,\alpha}$ at all times, instead of computing them only at the end of their layer-$\alpha$ epoch. Additionally, we can compute the projection matrices $P_{t,\leq\alpha}$ at the beginning of layer-$\alpha$ epochs only: this matrix will be denoted by $P_{\leq\alpha}$ in the following steps. 
The corresponding implementation of \cref{alg:gain_estimation} is given below.

\begin{enumerate}[start=9]
    \item For any $\alpha\in[\alpha_t]$, reset $A_{t-1,\alpha}= 0\in\Rbb^{d\times d}$, define $X_{\leq\alpha}$ as the concatenation of all vectors $\{u_{t,i},i\in J_{t,\beta},\beta\in[\alpha]\}$, and compute $P_{\leq\alpha}:=X_{\leq\alpha}X_{\leq\alpha}^\top$. Receive feedback $\ell_t^2$.
    \item {\bf If} $Z_t=1$, $\forall i\in[d],r_{t,i}\geq 0$, and $I_t\in[L]$: $A_{t,I_t}:=A_{t-1,I_t} + \frac{4}{\mu_{I_t}} \ell_t^2 \paren{(d_{t,\leq I_t}+2)w_tw_t^\top-P_{\leq I_t}}$
    \item Let $\widehat G_t\gets \1[Z_t=0]\frac{2}{\lambda_{t,i_t}} \ell_t^2 w_t w_t^\top$. For any $\alpha\in[L]$ such that $t=0\bmod 2^\alpha$ or $t=T$, do \label{item:end_implementation}
    \begin{equation}\label{eq:implementation_unpdate_G}
        \widehat G_t\gets \widehat G_t + A_{t,\alpha} - X_{\leq\alpha}(X_{\leq\alpha}^\top A_{t,\alpha}X_{\leq\alpha})X_{\leq\alpha}^\top - \sum_{i\in J_{t,\alpha}} (u_{t,i}^\top A_{t,\alpha} u_{t,i}) u_{t,i}u_{t,i}^\top. 
    \end{equation}
    
\end{enumerate}
In summary, steps \ref{item:start_implementation} to \ref{item:end_implementation} exactly implement the steps at round $t$ for \cref{alg:final_alg,alg:subspace_update,alg:layered_sampling,alg:gain_estimation}. We compute the corresponding runtime below.

\begin{proof}[Proof of \cref{prop:computational_complexity}]
    Fix a round $t\in[T]$. All steps in the implementation can be performed in $O(d^2)$ time per iteration, except for
    \begin{itemize}
        \item Step \ref{item:compute_decomposition_implementation} which involves the eigendecomposition of a $d_{t,\leq\alpha_t}\times d_{t,\leq\alpha_t}$ matrix---$O(d_{t,\leq\alpha_t}^3)$ runtime.
        \item Step \ref{item:reduce_to_smaller_matrix_bad} which involves the multiplication of $d\times d$ and $d\times d_{t,\leq\alpha_t}$ matrices when computing $X_t^\top \widehat G_{t-1} X_t$---$O(d^2 d_{t,\leq\alpha_t})$ runtime.
        \item Step \ref{item:end_implementation} when computing \cref{eq:implementation_unpdate_G} for all layers $\alpha\in[L]$ such that $t=0\bmod 2^\alpha$ or $t=T$, equivalently, $\alpha\in[\alpha_{t+1}]$ with the convention $\alpha_{T+1}:=L$. For each $\alpha\in[\alpha_{t+1}]$ the computation \cref{eq:implementation_unpdate_G} can be implemented in $O(d^2 d_{t,\leq\alpha})$ runtime.
    \end{itemize}
    Finally, we recall that under the good event from \cref{lemma:stability_eigenvalues}, with high probability (say probability $\geq 1- 1/(d^3 T)$), for all $t\in[T],\alpha\in[L]$ and $i\in J_{t,\alpha}$ one has $\lambda_{t,i}\geq \frac{3}{4}\mu_\alpha$.
    Since $U_t\in\Scal_d$ has eigenvalues $(\lambda_{t,i})_{i\in[d]}$ this implies
    \begin{equation*}
        d_{t,\leq\alpha} \leq\abs{\set{i\in[d]:\lambda_{t,i}\geq \frac{3}{4}\mu_\alpha}} \leq \frac{4}{3\mu_\alpha} = O(2^\alpha),\quad t\in[T],\alpha\in[L].
    \end{equation*}
    Denote by $\Ecal$ the corresponding event.
    Of course, we also have $d_{t,\leq\alpha}\leq d$.
    Altogether, under $\Ecal$ the runtime of \cref{alg:final_alg} on round $t$ is of order
    \begin{equation*}
        d^2\sum_{t\in[T]}\paren{d_{t,\leq\alpha_t} +\sum_{\alpha\in[\alpha_{t+1}]} d_{t,\leq\alpha}}  \lesssim d^2 \sum_{\alpha\in[L]}\paren{\min(2^{\alpha_t},d) + \min(2^{\alpha_{t+1}},d\alpha_{t+1}) }.
    \end{equation*}
    Note that $\alpha_t=\alpha$ exactly when $t=2^{\alpha}+1 \bmod 2^{\alpha+1}$ if $\alpha<L$, or $t=1\bmod 2^L$ if $\alpha=L$.
    Hence, summing the previous per-round runtime over all $t\in[T]$ yields a total computational complexity $C(d,T)$ satisfying
    \begin{equation*}
        C(d,T) \lesssim d^2\sum_{\alpha\in[L]} \frac{T}{2^\alpha}\cdot \min(2^\alpha,d\alpha) \lesssim d^2 T\paren{2\log_2 d + d\sum_{\alpha\geq \ceil{2\log_2 d}}\frac{\alpha}{2^\alpha}} \lesssim d^2 T \log d,
    \end{equation*}
    under the event $\Ecal$. On $\Ecal^c$ which has small probability ($\leq 1/(d^3 T)$), we may bound all dimensions $d_{t,\leq\alpha}\leq d$ for $\alpha\in[L]$, which gives a $O(d^3 T)$ worst-case runtime. Taking the expectation yields the claimed~$O(d^2 T\log d)$ total runtime.

    As a remark, if one aims to have $O(d^2 T\log d)$ worst-case runtime, we can always stop the procedure whenever the learner observes that the event $\Ecal$ is violated.
\end{proof}

\begin{remark}\label{remark:computational_complexity}
    Within the proof of \cref{prop:computational_complexity}, the runtime bottleneck are both the matrix multiplication in step \ref{item:reduce_to_smaller_matrix_bad} and the gain matrix computation in step \ref{item:end_implementation}. These lead to the $O(d^2 T\log d)$ runtime. We believe that this extra $\log d$ factor is an artifact of the implementation and that using a more involved representation of gain matrices/OMD iterates these steps could be performed more efficiently (for instance a separate basis representation of $E_{t,\leq\alpha}$ within $E_{t,\leq\alpha+1}$ for each $\alpha\in[L]$).
    The main computational bottleneck would then be eigendecomposition in step \ref{item:compute_decomposition_implementation} which leads to a $O(d^2 T)$ runtime. We omit details for the sake of simplicity.
\end{remark}

\section{Proof of the regret lower bound}
\label{app:lb}

The ultimate goal of this section is to carry out the proof of~\cref{thm:lb-intro}. We start with some supporting lemmas, then implement the two reductions mentioned in Section~\ref{sec:lb}, and finally prove the lower bound on the query complexity of the covariance estimation problem; the regret bound of Theorem~\ref{thm:lb-intro} then follows by setting the parameters as explained in Section~\ref{sec:lb_proof-sketch}. 
Throughout, gamma distributions are always parameterized in terms of shape and scale.

\subsection{Supporting lemmas}


The following lemma verifies correctness of the sampling procedure implemented in~\cref{alg:gain_sampling}. 

\begin{lemma}[Exponential sampling]
\label{lem:sampling}
Let $M \in \mathbb{S}^{d}_+$ be deterministic, $r \in [4,d/2]$, and let~$G$ be generated by Algorithm~\ref{alg:gain_sampling} initialized with~$(M,r)$.
Then $G \in \mathbb{S}^{d}_+$ has~$\rank(G) = r$, and
\[
        \langle G,w^\vphtop w^\top\rangle
        \sim 
        \frac{r}{d}\langle M,w^\vphtop w^\top\rangle\Exp(1).
\]
\end{lemma}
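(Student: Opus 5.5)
We reduce to the law of a single quadratic form of a uniformly random projector, and then use the Beta--Gamma algebra to turn the resulting Beta variable into an exponential. The PSD and rank claims come first and are immediate. Since $G=\frac{\beta\gamma r}{d}M^{1/2}P_VM^{1/2}$ with $\beta,\gamma>0$ almost surely, $G$ is PSD and $\rank(G)=\rank(M^{1/2}P_VM^{1/2})$. If $M$ is invertible this equals $\rank(P_V)=r$. In general it equals $\dim(M^{1/2}V)$, which is $\min(r,\rank M)$ almost surely. So the rank statement should be read as "at most $r$", with equality when $\rank M\ge r$; this is all that is needed downstream, since $\overline G_t\succeq I$ in \cref{alg:adversary} whenever $\overline G_t \neq 0$.

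For the distributional claim, fix $w$ and set $x:=M^{1/2}w$, so that $\langle M,ww^\top\rangle=\|x\|^2$ and
\[
\langle G,ww^\top\rangle=\frac{\beta\gamma r}{d}\,x^\top P_V x .
\]
If $x=0$ both sides vanish, so assume $x\neq 0$. By rotation invariance of $V\sim\Unif(\Gr(r,d))$, the quantity $x^\top P_V x/\|x\|^2$ has the law of $\|P_Vu\|^2$ for a fixed unit vector $u$. Equivalently, it is the squared norm of the first $r$ coordinates of a uniform point on $S^{d-1}$, which is $\BetaDist(\frac r2,\frac{d-r}{2})$ (write a Gaussian vector $g$ and take $\sum_{i\le r}g_i^2/\sum_{i\le d}g_i^2$). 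Call this variable $\xi$; it is independent of $(\beta,\gamma)$. It therefore suffices to show
\[
\beta\,\xi\,\gamma\sim\Exp(1),\qquad \beta\sim\BetaDist\!\left(1,\tfrac r2-1\right),\ \xi\sim\BetaDist\!\left(\tfrac r2,\tfrac{d-r}2\right),\ \gamma\sim\GammaDist\!\left(\tfrac d2,1\right).
\]

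This is the standard Beta--Gamma algebra, applied twice. If $\gamma\sim\GammaDist(a+b,1)$ and $\xi\sim\BetaDist(a,b)$ are independent, then $\xi\gamma\sim\GammaDist(a,1)$. With $a=\frac r2$ and $b=\frac{d-r}2$, this gives $\xi\gamma\sim\GammaDist(\frac r2,1)$. Applying the same fact with $a=1$ and $b=\frac r2-1$, together with the independent $\beta\sim\BetaDist(1,\frac r2-1)$, gives $\beta\xi\gamma\sim\GammaDist(1,1)=\Exp(1)$. The identity itself can be proved with Gaussians: write $\gamma=\frac12\|g\|^2$ for a $(2a+2b)$-dimensional Gaussian $g$ (for integer $2a,2b$), or, for general parameters, use the density change of variables $(\xi,\gamma)\mapsto(\xi\gamma,(1-\xi)\gamma)$, which produces independent $\GammaDist(a,1)$ and $\GammaDist(b,1)$ variables. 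The constraint $r\in[4,d/2]$ ensures all shape parameters are positive; in particular $\frac r2-1\geq 1>0$.

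I expect no real obstacle here. The only delicate points are justifying the Beta law of $x^\top P_Vx/\|x\|^2$, including its independence from $\beta,\gamma$ and its invariance in the direction of $x$, and handling the degenerate case $x=0$, which is trivial.
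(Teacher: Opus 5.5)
Your proof is correct and is essentially the paper's argument. Both reduce to $x^\top P_V x/\|x\|^2\sim\BetaDist(\frac r2,\frac{d-r}2)$, independent of $(\beta,\gamma)$, and then apply Beta--Gamma algebra. The only difference is the order: the paper first uses the closure property $\BetaDist(\frac r2,\frac{d-r}2)\cdot\BetaDist(1,\frac r2-1)\sim\BetaDist(1,\frac d2-1)$ and then multiplies by $\gamma$, whereas you absorb $\gamma$ first to obtain $\GammaDist(\frac r2,1)$.

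Your caveat that the rank is $\min(r,\rank M)$ almost surely is a valid correction of the paper's ``clearly rank $r$''; for example, $\overline G_t=0$ once $j=J_{\max}$. One small slip there: when $\overline G_t\neq 0$ you only have $\overline G_t\succeq(1-\alpha)I$, not $\overline G_t\succeq I$, but this still gives full rank.
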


\newcommand{\bbeta}{\beta}
\newcommand{\bgamma}{\gamma}
\newcommand{\bxi}{\xi}

\begin{proof}
Recall that, by our parity convention, Algorithm~\ref{alg:gain_sampling} samples jointly independently: $\bbeta \sim\BetaDist(1,\frac{r}{2}-1)$, $\bgamma \sim\GammaDist(\frac{d}{2},1)$, and $V \sim\Unif(\Gr(r,d))$,
where~$\GammaDist$ is in the shape-scale parameterization, and returns
\[
        G=\frac{r\bbeta\bgamma}{d}
        M^{1/2}P_VM^{1/2}.
\]
Since~$r \ge 4$, the distribution of~$\bbeta$ is well-defined.
Clearly, \(G\succeq0\) and \(\rank(G) = r\) with probability~$1$.

Now, fix a unit vector \(w\in\R^{d}\) and write \(z=M^{1/2}w\).  If \(z=0\), the claim is trivial, so assume \(z\ne0\).  By rotational invariance of~$\Unif(\Gr(r,d))$, 
\[
        \frac{\|P_Vz\|^2}{\|z\|^2}\sim
        \BetaDist\left(\frac{r}{2},\frac{d-r}{2}\right).
\]
We use the closure property of beta distributions: if~\(X\sim\BetaDist(a,b)\) and \(Y\sim\BetaDist(c,a-c)\) are independent, with \(0<c<a\), then \(XY\sim\BetaDist(c,a+b-c)\).  Applying this with
$
(a,b,c) = (\frac{r}{2},\frac{d-r}{2},1),
$
\[
        \bbeta\frac{\|P_Vz\|^2}{\|z\|^2}\sim \BetaDist\paren{1,\frac{d}{2}-1}.
\]
Finally, if~\(\bxi \sim\BetaDist(1,\frac{d}{2}-1)\) and \(\bgamma\sim\GammaDist(\frac{d}{2},1)\) are independent, then \(\bxi\bgamma \sim \Exp(1)\).  Therefore
\[
        w^\top G w
        =\frac{r\bbeta\bgamma}{2d}\|P_V M^{1/2} w\|^2
        \sim \frac{r}{2d} (w^\top M w) \Exp(1).
\qedhere
\]
\end{proof}

Next, we prove an upper bound on the norm of the gain matrices constructed in~\cref{alg:adversary}.

\begin{lemma}\label{lemma:bound_operator_norm_overall}
        There is a universal $c_G>0$ such that running \cref{alg:adversary} for any $\nu\in[0,\frac{d}{p}]$ yields 
        \begin{equation*}
            \Pbb\sqb{\max_{t\in[T]}\|G_t\|_{\op} \geq c_G\log(eT/\delta)} \leq \delta.
        \end{equation*}
    \end{lemma}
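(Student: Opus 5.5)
We bound each $\|G_t\|_{\op}$ separately and then take a union bound over $t\in[T]$. Fix $t$. By line~\ref{line:expected_gain} of \cref{alg:adversary}, $\overline G_t$ is either $0$ or $I+\nu P_{W_j^\perp}HP_{W_j^\perp}$. Since $\|H\|_{\op}\le \max(1+a, b\frac{p}{d})\le 1+\alpha$, we get
\[
\|\overline G_t\|_{\op}\le 1+\nu(1+\alpha)\lesssim 1+\frac{d}{p}.
\]
This crude bound is too weak, so we instead write $\overline G_t=I+\nu P_{W^\perp}HP_{W^\perp}$ with $H=(1+a)P_E-b\frac pd P_{E^\perp}$. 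Then $\overline G_t\preceq I+\nu(1+\alpha)P_{W^\perp}P_EP_{W^\perp}$, so $\overline G_t$ is at most the identity plus a part of rank at most $p$ and size at most $\nu(1+\alpha)\le 2d/p$.

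Next, by \cref{alg:gain_sampling}, $G_t=\frac{\beta\gamma r}{d}\overline G_t^{1/2}P_V\overline G_t^{1/2}$ with $\beta,\gamma,V$ fresh and independent of $\overline G_t$ given the history. The operator norm equals $\frac{\beta\gamma r}{d}\|P_V\overline G_tP_V\|_{\op}$. We bound the scalar factors first. Since $\beta\sim\Beta(1,\frac r2-1)$, we have $\Pbb[\beta\ge s]=(1-s)^{r/2-1}\le e^{-s(r/2-1)}$, so $\beta\lesssim \frac{\log(1/\delta')}{r}$ with probability $1-\delta'$. Since $\gamma\sim\GammaDist(\frac d2,1)$, we have $\gamma\lesssim d+\log(1/\delta')$. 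Hence $\frac{\beta\gamma r}{d}\lesssim \log(1/\delta')(1+\frac{\log(1/\delta')}{d})$. To avoid the extra log factor, it is better to handle the product directly: $\beta\gamma$ has an exponential-type tail, because $\beta\gamma\cdot\frac{\|P_Vz\|^2}{\|z\|^2}$-type products are $\Exp(1)$ by \cref{lem:sampling}, and $\beta\gamma$ alone is a $\GammaDist(1,\cdot)$-like mixture with subexponential tail at scale $d/r$. This gives $\frac{\beta\gamma r}{d}\lesssim\log(1/\delta')$.

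It remains to show $\|P_V\overline G_tP_V\|_{\op}=O(1)$ with high probability over $V\sim\Unif(\Gr(r,d))$. We bound
\[
\|P_V\overline G_tP_V\|_{\op}\le 1+\frac{2d}{p}\,\|P_VP_{F}P_V\|_{\op},
\]
where $F=E\cap$(relevant part), a subspace of dimension at most $p$. This is the main obstacle: we need $\|P_VP_FP_V\|_{\op}\lesssim \frac{p+r+\log(1/\delta')}{d}$, i.e.\ the squared largest singular value of the cosine matrix between a random $r$-space and a fixed $p$-space. Representing $V$ via a $d\times r$ Gaussian matrix and using the sub-Gaussian matrix bounds \cite[Theorem 4.6.1]{vershynin2018high} (as in \cref{lemma:operator_norm_by_phase_final}) gives $\|P_FP_V\|_{\op}^2\lesssim\frac{(\sqrt p+\sqrt r+\sqrt{\log(1/\delta')})^2}{d}$. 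With $r\le p$, the term $\frac{2d}{p}\cdot\frac{p+\log(1/\delta')}{d}$ is $O(1+\frac{\log(1/\delta')}{p})$.

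The subtlety is that $F$ depends on the history, but $V$ is independent of it, so we condition first. Multiplying the two bounds gives $\|G_t\|_{\op}\lesssim \log(1/\delta')(1+\log(1/\delta'))$, which has one log factor too many. To get the stated single $\log(eT/\delta)$, we instead bound $\|P_V\overline G_tP_V\|_{\op}$ by a constant with probability $1-e^{-cp}$ when $\log(1/\delta')\lesssim p$, and otherwise by $\nu+1\le 2d/p$ combined with the sharper tail of $\beta\gamma$ at that scale. If this case analysis becomes messy, we accept constants and use the fact that the adversary's $\nu$ regime keeps $d/p$ controlled. Finally, we set $\delta'=\delta/T$ and apply the union bound over $t\in[T]$.
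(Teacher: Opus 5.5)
Your skeleton is the paper's: write $\overline G_t$ as the identity plus a rank-$\le p$ bump of size $O(d/p)$, control the overlap of the fresh $V$ with that bump, bound the scalars $\beta,\gamma$, and union bound over $t$. Your intermediate estimates are correct, namely
\begin{itemize}
\item the identity $\|G_t\|_{\op}=\frac{r\beta\gamma}{d}\|P_V\overline G_tP_V\|_{\op}$;
\item the Gaussian bound $\|P_FP_V\|_{\op}^2\lesssim\min\bigl(1,\frac{p+\ell}{d}\bigr)$ with $\ell:=\log(T/\delta)$, which replaces the paper's net-plus-Beta argument;
\item the bound $\frac{r}{d}\beta\gamma\lesssim\ell$.
\end{itemize}
But the product of your two bounds is $\ell(1+\ell/p)$, not $\ell(1+\ell)$. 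So the surplus logarithm appears only when $\ell\gtrsim p$. That regime does occur: the lemma is claimed for every $\delta$, and $T$ may be as large as $e^{d^2}$ while $p$ may be $O(\log d)$.

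The gap is your treatment of $\ell\gtrsim p$. Falling back on $\|P_V\overline G_tP_V\|_{\op}\le 1+2\nu\lesssim d/p$ cannot work. Since $\mathbb{E}[\frac{r}{d}\beta\gamma]=1$, the resulting bound $\frac{r}{d}\beta\gamma\cdot\frac{d}{p}$ is of order $d/p$ with constant probability. That exceeds $\ell$ throughout $p\lesssim\ell\ll d/p$, a range that is nonempty whenever $p\ll\sqrt d$, whatever tail you use for $\beta\gamma$. Nor can you assume $d/p$ is controlled. The lemma is stated for all $\nu\in[0,d/p]$ with a universal constant, and in \cref{thm:lower_bound_online_PCA} one has $d/p\asymp\max(\nu,2)$, which can be of order $d/r$.

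The missing idea is to keep your refined overlap bound $\frac{d}{p}\|P_VP_F\|_{\op}^2\lesssim\min(\ell,d)/p$ in every regime. Pair it not with $\frac{r}{d}\beta\gamma\lesssim\ell$ but with the bound that uses the cap $\beta\le 1$:
\[
\beta\lesssim\min(\ell,r)/r,\qquad \gamma\lesssim d+\ell,\qquad\text{so}\qquad \tfrac{r}{d}\beta\gamma\lesssim\min(\ell,r)(1+\ell/d).
\]
The product $\min(\ell,r)(1+\ell/d)(1+\min(\ell,d)/p)$ is then $O(\ell)$ in each of the regimes $\ell\le r$, $r<\ell\le p$, $p<\ell\le d$ and $\ell>d$, precisely because $r\le p$. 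This is how the paper concludes. Collapsing $\frac{r}{d}\beta\gamma$ to $O(\ell)$ throws away exactly the factor-$r$ cap that cancels the $\ell/p$ overlap factor.
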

    \begin{proof}
        We note that $\overline G_t$ has $p$ eigenvalues of at most $1+2\nu$ and all other eigenvalues are at most $1$ hence we can write $\overline G_t^{1/2}\preceq \sqrt{1+2\nu}P_F + P_{F^\perp}$ for some fixed subspace $F$ which depends only on $W_t$ and $E$. Therefore, 
        \begin{align*}
            \|G_t\|_{\op} 
            &\leq \frac{r}{d} \beta_t\gamma_t  ((1+2\nu)\|P_FP_{V_t}P_F\|_{\op} + 2\sqrt{1+2\nu}\|P_{V_t}P_F\|_{\op} + 1)\\
            &= \frac{r}{d} \beta_t\gamma_t (\sqrt{1+2\nu}\|P_{V_t}P_F\|_{\op}+1)^2 \overset{(i)}{\leq }\frac{4r}{d} \beta_t\gamma_t \paren{\frac{d}{p} \|P_{V_t}P_F\|_{\op}^2+1}.
        \end{align*}
        In $(i)$ we used $\nu\leq \frac{d}{p}$. 
        In the rest of the proof, we bound each term in this product. 
        We start with the last factor. Since $V_t\sim\mathrm{Unif}(\mathrm{Gr}(r,d))$, conditionally on the past history, $\|P_{V_t}P_F\|_{\op}^2$ has the same distribution as $\|P_{F_0}P_{F_1}\|_{\op}$ where $F_0$ is a fixed $r$-dimensional subspace and $F_1\sim\mathrm{Unif}(\mathrm{Gr}(p,d))$. Further,
        \begin{equation*}
            \|P_{F_0}P_{F_1}\|_{\op}^2 = \sup_{x\in S^{d-1}\cap F_0} \|P_{F_1}(x)\|^2 \overset{(i)}{\leq} 2 \sup_{x\in \Ncal_0} \|P_{F_1}(x)\|^2,
        \end{equation*}
        where $\Ncal_0$ is a $1/4$-net of $S^{d-1}\cap F_0$, which can be taken so that $|\Ncal_0|\leq 9^{r}$. In $(i)$ we used the standard quadratic form-net lemma, e.g, \cite[Lemma 4.4.1]{vershynin2018high}. Note that by construction, since $F_1\sim\mathrm{Unif}(\mathrm{Gr}(p,d))$, for any $x\in S^{d-1}$ one has $\|P_{F_1}(x)\|^2\sim\mathrm{Beta}(\frac{p}{2},\frac{d-p}{2})$. As a result, denoting $B\sim \mathrm{Beta}(\frac{p}{2},\frac{d-p}{2})$, we have
        \begin{equation*}
            \Pbb\sqb{\|P_{V_t}P_F\|_{\op}^2 \geq 2t} = \Pbb\sqb{\|P_{F_0}P_{F_1}\|_{\op}^2 \geq 2t} \leq 9^{r} \Pbb \sqb{B\geq t} \leq 9^p \Pbb \sqb{B\geq t},
        \end{equation*}
        where in the last inequality we used $r\leq p$. Next, we use \cref{lem:beta-tail-local} to bound for any $t\geq 2$,
        \begin{equation*}
            \Pbb\sqb{B\geq \frac{2p}{d}t} \leq e^{-c(d-p)} + e^{-cpt},
        \end{equation*}
        for some universal constant $c>0$.
        Of course, we also always have $B\in[0,1]$. Altogether, this implies that the event
        \begin{equation*}
            \Ecal_\delta:=\set{\forall t\in[T],\frac{d}{p}\|P_{V_t}P_F\|_{\op}^2+1 \leq c_0\paren{1 + \frac{\min(\log(T/\delta),d)}{p}}}
        \end{equation*}
        has probability at least $1-\delta$ for some choice of universal constant $c_0>0$.
        
        Next, recall that by construction $\beta_t\sim \mathrm{Beta}(1,\frac{r}{2}-1)$ and $\gamma_t\sim\mathrm{Gamma}(\frac{d}{2},1)$. Then, \cref{lem:beta-tail-local} again shows that the event $\Fcal_\delta:=\{\forall t\in[T],\beta_t\leq \frac{c_1}{r}\min(\log (T/\delta),r)\}$ has probability at least $1-\delta$
        for some universal choice of $c_1>0$, and standard Chernoff bounds for $\mathrm{Gamma}(n,1)$ show that the event $\Gcal_\delta=\{\forall t\in[T],\gamma_t\leq d+2\log(T/\delta)\}$ has probability at least $1-\delta$. Under $\Ecal_\delta\cap\Fcal_\delta\cap\Gcal_\delta$, which has probability at least $1-3\delta$,
        \begin{equation*}
            \max_{t\in[T]}\|G_t\|_{\op}\leq c_2\min(\log(T/\delta),r)\paren{1+\frac{\log(T/\delta)}{d}}\paren{1+\frac{\min(\log(T/\delta),d)}{p}} \overset{(i)}{\leq} c_3 \log(T/\delta),
        \end{equation*}
        for some universal constants $c_2,c_3>0$. In $(i)$ we used $r\leq p$. 
        This ends the proof.
    \end{proof}

The next lemma provides a {\em sample augmentation} procedure based on the one in~\cref{alg:gain_sampling}.
\begin{lemma}[Augmentation trick]
\label{lem:augmentation}
Let $X\sim \theta\Exp(1)$ with $\theta>0$.  
There is a Markov kernel that, using only~$X$ and~$\Delta > 0$, gives~$Z|X$ with marginal law $Z \sim (\theta+\Delta)\Exp(1)$.
Namely, sample~$Z|X$ by
\[
N | X \sim \operatorname{Poisson}(X/\Delta), \quad 
Z | N \sim \GammaDist(N+1,\Delta).
\]
\end{lemma}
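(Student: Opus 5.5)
The plan is to verify the marginal law of $Z$ directly, through its Laplace transform (equivalently, by computing its density). The claim is distributional, and the kernel is explicit, so a transform computation is the most economical route. Since $X\ge 0$, $N\in\{0,1,2,\dots\}$ and $Z>0$, it suffices to compute $\E[e^{-sZ}]$ for $s>0$ and match it with the transform of $(\theta+\Delta)\Exp(1)$, namely $1/(1+s(\theta+\Delta))$. Laplace transforms determine laws on $[0,\infty)$.

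The computation proceeds by conditioning in stages. First, since $Z\mid N\sim\GammaDist(N+1,\Delta)$ in the shape-scale parameterization, we have
\[
\E[e^{-sZ}\mid N]=(1+s\Delta)^{-(N+1)}.
\]
Next, we average over $N\mid X\sim\operatorname{Poisson}(X/\Delta)$ using the Poisson probability generating function $\E[u^N]=e^{(X/\Delta)(u-1)}$ with $u=(1+s\Delta)^{-1}$. The exponent simplifies to
\[
\frac{X}{\Delta}\Bigl(\frac{1}{1+s\Delta}-1\Bigr)=-\frac{sX}{1+s\Delta}.
\]
This gives
\[
\E[e^{-sZ}\mid X]=\frac{1}{1+s\Delta}\exp\Bigl(-\frac{sX}{1+s\Delta}\Bigr).
\]
Finally, we average over $X\sim\theta\Exp(1)$, whose transform at $s'=s/(1+s\Delta)$ equals $1/(1+\theta s')$. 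Multiplying yields
\[
\frac{1}{1+s\Delta}\cdot\frac{1+s\Delta}{1+s\Delta+\theta s}=\frac{1}{1+s(\theta+\Delta)},
\]
which is exactly the required transform.

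I would also remark that the kernel uses only $X$ and $\Delta$, not $\theta$, so it is a valid Markov kernel independent of the unknown scale. This is the point of the lemma, since it lets one simulate a larger-intensity measurement from a smaller one. There is no real obstacle in this argument. The only care needed is to keep the Gamma parameterization consistent (shape-scale, as stated at the start of the section) and to justify the interchange of the conditional expectations. Both are routine because all quantities involved are nonnegative and bounded by $1$.
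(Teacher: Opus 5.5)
Your proposal is correct and follows essentially the same route as the paper's proof: compute the Laplace transform of $Z$ by conditioning first on $N$ (Gamma transform $(1+s\Delta)^{-(N+1)}$), then on $X$ (Poisson generating function), and finally averaging over $X\sim\theta\Exp(1)$ to obtain $1/(1+s(\theta+\Delta))$. Your intermediate computations all check out.
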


\begin{proof}
Since all distributions mentioned in the premise are positive, it is convenient to use their Laplace transforms over~$\R_+$, and use that such distributions are defined by their Laplace transforms.
In particular, for~$\xi \sim \GammaDist(k,s)$ in the shape-scale parameterization,
$
\E[e^{-u \xi}]=(1+us)^{-k},
$
~$u \ge 0$.
Conditioning first on $N$ and then on $X$, we calculate the Laplace transform of~$Z$ from the premise:
\begin{align*}
        \E [e^{-uZ}]
        =\E [(1+u\Delta)^{-1-N}] 
        =(1+u\Delta)^{-1} \E\left[\exp\left(-\frac{uX}{1+u\Delta}\right)\right] 
        =\frac{1}{1+u(\theta+\Delta)}.
\end{align*}
On the right, we recognize the Laplace transform of $\GammaDist(1,\theta+\Delta)$, a.k.a.~$(\theta+\Delta)\Exp(1)$. 
\end{proof}

The next 3 lemmas will be used in the last step of the proof (covariance estimation lower bound).


\begin{lemma}
\label{lem:rotated-projector-moments}
Let $E \in \Gr(p,n)$, let~$P$ be an orthogonal projector of rank $r$, and write
$
        \rho :=\Tr(P_{E^\perp}P).
$
Let $U_{E^{\vphantom\perp}}$ (resp.~$U_{E^\perp}$) be Haar on the orthogonal group of $E$ (resp.~$E^\perp$), and set
$
        U:=U_{E^{\vphantom\perp}} \oplus U_{E^\perp}.
$
Then
\[
        \E[UPU^\top]
        =\frac{r-\rho}{p}P_{E^{\vphantom\perp}}+\frac{\rho}{n-p}P_{E^\perp}.
\]
Moreover, in terms of~$\widetilde P:=UPU^\top$ and $A:=\E[\widetilde P]$, we have
$
        \E[(\widetilde P-A)^2]
        = A-A^2\preceq A .
$
\end{lemma}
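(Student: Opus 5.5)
To prove the first identity, I would use invariance. Set $A:=\E[UPU^\top]$. Since $U$ commutes with $P_E$ and $P_{E^\perp}$, and since Haar measure is invariant, $A$ commutes with every block-orthogonal $V=V_E\oplus V_{E^\perp}$: indeed $VUPU^\top V^\top$ has the same law as $UPU^\top$.

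\begin{itemize}
\item A matrix commuting with all of $O(E)\oplus O(E^\perp)$ must have the form $aP_E+bP_{E^\perp}$. (When $p$ or $n-p$ equals $1$, the sign flips $\pm1$ still kill the off-diagonal blocks.)
\item The scalars are fixed by traces. We have $\Tr(P_{E^\perp}A)=\E\Tr(P_{E^\perp}UPU^\top)=\Tr(U^\top P_{E^\perp}U P)=\Tr(P_{E^\perp}P)=\rho$, because $U$ commutes with $P_{E^\perp}$. This gives $b(n-p)=\rho$.
\item Similarly $ap=\Tr(P_EP)=r-\rho$.
\end{itemize}

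Degenerate cases such as $p=0$ or $p=n$ are covered by the convention that an empty block contributes nothing.

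For the second-moment claim, I would use that $\widetilde P$ is itself a projector, so $\widetilde P^2=\widetilde P$. Then
\[
\E[(\widetilde P-A)^2]=\E[\widetilde P^2]-\E[\widetilde P]A-A\E[\widetilde P]+A^2=A-A^2 .
\]
Because $A=aP_E+bP_{E^\perp}$ is PSD, we have $A^2\succeq 0$, and therefore $A-A^2\preceq A$. Also $0\le a,b\le 1$, since $A$ is an average of projectors and so $0\preceq A\preceq I$; hence $A-A^2\succeq0$, which is consistent.

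The only mildly delicate step is the Schur-type commutant argument. If it needs care, I would verify it directly: conjugating by $I_E\oplus(-I_{E^\perp})$ shows the off-diagonal blocks of $A$ vanish. Conjugating by arbitrary orthogonal maps within $E$ shows the $E$-block commutes with $O(E)$, so it is a scalar multiple of $P_E$, because $O(p)$ acts irreducibly on $\R^p$. The same argument applies to the $E^\perp$-block.
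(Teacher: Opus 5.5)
Your proposal is correct and follows essentially the same route as the paper. The paper likewise uses invariance of $A$ under block-orthogonal maps to get $A=aP_E+bP_{E^\perp}$, then identifies $a,b$ by taking traces against $P_E$ and $P_{E^\perp}$, and uses $\widetilde P^2=\widetilde P$ to obtain $\E[(\widetilde P-A)^2]=A-A^2\preceq A$. Your extra justification of the commutant step (sign-flip conjugation followed by irreducibility of $O(p)$) simply fills in the ``simple linear algebra'' the paper leaves implicit.
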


\begin{proof}
By construction of~$U$,~$A = \E[UPU^\top]$ commutes with any orthogonal transformation preserving~$E$ and $E^\perp$.  
By simple linear algebra, this implies that~$A$ is block-diagonal in any orthonormal basis aligned with~$E$ and~$E^\perp$, with blocks~$a I_{p}$ and~$b I_{n-p}$; in other words,
$
        A = aP_{E^{\vphantom\perp}}+bP_{E^\perp}.
$
Multiplying by $P_{E^\perp}$ and taking the trace, we get~$\tr(AP_{E^\perp}) = b\,\tr(P_{E^\perp}) = b(n-p)$. 
On the other hand, using that~$U$ is orthogonal and~$P_{E^\perp}$ is block-diagonal with blocks~$0_{p \times p}$ and~$I_{n-p}$ in the same basis, we get
$
\tr(AP_{E^\perp}) = \Tr(P_{E^\perp}P)=\rho.
$
This gives~$b = \rho/(n-p)$; similarly~$a = (r-\rho)/p$, and the first claim follows.
Finally, using that~$\widetilde P$ is itself a projector, we get
$
        \E[(\widetilde P-A)^2]
        =\E[\widetilde P^2]-A^2
        =A-A^2
        \preceq A.
$
\end{proof}

The next lemma is a spectral-gap perturbation result in the spirit of the Davis--Kahan theorem. 

\begin{lemma}
\label{lem:gap-perturbation}
Let $P$ be an orthogonal projector, and let
$
        M=aP+b(I-P)
$
with the gap~$\Delta=a-b > 0$. 
Let $S=M+N$ be symmetric, with
$
        \|N\|_{\op}\le \Delta/6,
$
and let~$\widehat P$ is the spectral projector of $S$ onto its top $r=\rank(P)$ eigenvalues. 
Then
\[
        \|\widehat P-P\|_{\op}
        \le \frac{6\|N\|_{\op}}{\Delta}.
\]
\end{lemma}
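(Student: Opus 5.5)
We prove \cref{lem:gap-perturbation} with a Davis--Kahan style argument: we first locate the spectrum of $S$ by Weyl's inequality, and then control the cross blocks of $\widehat P$ through a Sylvester equation. Write $\epsilon:=\|N\|_{\op}\le\Delta/6$. The eigenvalues of $M$ are $a$ with multiplicity $r$ and $b$ with multiplicity $n-r$, where $n$ is the ambient dimension. By Weyl's inequality, the top $r$ eigenvalues of $S$ lie in $[a-\epsilon,a+\epsilon]$ and the remaining ones lie in $[b-\epsilon,b+\epsilon]$. These two intervals are separated by $\Delta-2\epsilon\ge\tfrac{2}{3}\Delta>0$, so $\widehat P$ is well defined, and it is the spectral projector of $S$ onto the eigenvalues in $[a-\epsilon,a+\epsilon]$.

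The key step is to bound $\|(I-P)\widehat P\|_{\op}$. Set $X:=(I-P)\widehat P$. Since $S\widehat P=\widehat P S\widehat P$, left-multiplying by $(I-P)$ and using $(I-P)M=b(I-P)$ gives
\[
b X + (I-P)N\widehat P = X\,(\widehat P S\widehat P).
\]
The right-hand side is $X$ multiplied by an operator whose spectrum on $\mathrm{range}(\widehat P)$ lies in $[a-\epsilon,a+\epsilon]$. Hence $X(\widehat P S\widehat P - bI)=(I-P)N\widehat P$, where $\widehat P S\widehat P-bI$ restricted to $\mathrm{range}(\widehat P)$ has all eigenvalues at least $\Delta-\epsilon$. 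Inverting this restriction, which is legitimate because $X=X\widehat P$, yields
\[
\|X\|_{\op}\le \frac{\epsilon}{\Delta-\epsilon}\le \frac{6\epsilon}{5\Delta}.
\]
Running the symmetric argument with $Y:=P(I-\widehat P)$ and the lower cluster gives $\|Y\|_{\op}\le \epsilon/(\Delta-\epsilon)$ as well.

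It remains to pass from these cross blocks to $\|\widehat P-P\|_{\op}$. For two projectors we have the decomposition $\widehat P-P=(I-P)\widehat P-P(I-\widehat P)=X-Y$. Moreover, $X$ and $Y$ have orthogonal row spaces and orthogonal column spaces: the range of $X$ lies in $\mathrm{range}(I-P)$, the range of $Y$ lies in $\mathrm{range}(P)$, and the row spaces lie in $\mathrm{range}(\widehat P)$ and $\mathrm{range}(I-\widehat P)$ respectively. Consequently $\|X-Y\|_{\op}=\max(\|X\|_{\op},\|Y\|_{\op})$. In any case the crude triangle inequality already gives $\|X\|_{\op}+\|Y\|_{\op}\le 2\epsilon/(\Delta-\epsilon)\le \tfrac{12}{5}\epsilon/\Delta\le 6\epsilon/\Delta$, so the constant $6$ has ample slack.

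The main obstacle is the Sylvester inversion step, specifically justifying that the restricted operator is invertible with the stated gap. This is handled cleanly by the Weyl localization carried out at the start.
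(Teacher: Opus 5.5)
Your proof is correct, but it takes a genuinely different route from the paper's.

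\textbf{The paper's route.} It uses the Riesz formula. Both $P$ and $\widehat P$ are written as contour integrals of resolvents over the circle of radius $\Delta/3$ centred at $a$. The resolvent identity $(zI-S)^{-1}-(zI-M)^{-1}=(zI-S)^{-1}N(zI-M)^{-1}$ is then applied. The integrand is bounded uniformly by $\frac{6}{\Delta}\cdot\|N\|_{\op}\cdot\frac{3}{\Delta}$ over a contour of length $2\pi\Delta/3$, and this is exactly where the constant $6$ comes from.

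\textbf{Your route.} You localize the spectrum of $S$ by Weyl's inequality. You then use the two-level structure $(I-P)M=b(I-P)$, $PM=aP$ to turn the invariance $S\widehat P=\widehat P S\widehat P$ into Sylvester equations for the cross blocks $X=(I-P)\widehat P$ and $Y=P(I-\widehat P)$. Finally you invert on $\mathrm{range}(\widehat P)$, respectively $\mathrm{range}(I-\widehat P)$. This is a Davis--Kahan $\sin\Theta$ argument.

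\textbf{Verification.} All your steps check out:
\begin{itemize}
\item The identity $X(\widehat P S\widehat P-b\widehat P)=(I-P)N\widehat P$ holds, and $\widehat P S\widehat P-b\widehat P$ has eigenvalues at least $\Delta-\|N\|_{\op}$ on $\mathrm{range}(\widehat P)$, so the inversion is legitimate.
\item The decomposition $\widehat P-P=X-Y$ is correct.
\item The orthogonality claim holds: $X^\top Y=\widehat P(I-P)P(I-\widehat P)=0$ and $Y^\top X=0$, while $X^\top X$ and $Y^\top Y$ live on the complementary subspaces $\mathrm{range}(\widehat P)$ and $\mathrm{range}(I-\widehat P)$. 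Hence $\|X-Y\|_{\op}=\max(\|X\|_{\op},\|Y\|_{\op})$.
\end{itemize}
So you actually obtain the sharper bound $\|\widehat P-P\|_{\op}\le \|N\|_{\op}/(\Delta-\|N\|_{\op})\le \tfrac{6}{5}\|N\|_{\op}/\Delta$. Even the crude triangle-inequality version gives the constant $\tfrac{12}{5}$.

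\textbf{Comparison.} Your argument is purely real and elementary, and it gives a better constant. Its simplicity relies on $M$ acting as a scalar on $\mathrm{range}(P)$ and on $\mathrm{range}(I-P)$. For a general $M$ with a gap you would need the full Davis--Kahan Sylvester-equation bound. The paper's contour argument is a one-line computation that treats $\widehat P-P$ as a whole, with no block bookkeeping. Its price is the lossier constant that comes from bounding the resolvents uniformly along the contour. Both arguments rest on the same Weyl localization of the top-$r$ cluster; you make it explicit, while the paper uses it implicitly.
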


\begin{proof}
Let~$\Gamma$ be the circular contour of radius~$\Delta/3$ centered at~$a$. 
Clearly, any~$z \in \Gamma$ is at the distance at least~$\Delta/3$ from any eigenvalue of~$M$.
On the other hand, since~$\|N\|_{\op} \le \Delta/6 [< \Delta/4]$, the top-$r$ spectral cluster of $S=M+N$ is obtained by perturbing the~$a$-part of the spectrum of~$M$; 
moreover, it remains inside $\Gamma$, and any~$z \in \Gamma$ is at the distance at least
$\Delta\min(1/3-1/6, \, 2/3 - 1/6) = \Delta/6$
from any eigenvalue of~$S$.
Now, combining the Riesz formula for spectral projectors~\cite{rudin1991functional},
\[
        P=\frac{1}{2\pi i}\int_\Gamma (zI-M)^{-1}\,dz,
        \qquad
        \widehat P=\frac{1}{2\pi i}\int_\Gamma (zI-S)^{-1}\,dz,
\]
with the resolvent identity
$
        (zI-S)^{-1}-(zI-M)^{-1}
        =(zI-S)^{-1}N(zI-M)^{-1}
$
for~$z\in\Gamma$, we get
\begin{align*}
        \|\widehat P-P\|_{\op}
        \le
        \frac{|\Gamma|}{2\pi}
        \sup_{z\in\Gamma}\|(zI-S)^{-1}\|_{\op}
        \|N\|_{\op}
        \sup_{z\in\Gamma}\|(zI-M)^{-1}\|_{\op}  
        \le
        \frac{\Delta}{3} \cdot \frac{6}{\Delta} \|N\|_{\op} \frac{3}{\Delta}
        \le \frac{6\|N\|_{\op}}{\Delta}.
        \qquad\qedhere
\end{align*}
\end{proof}

Finally, we prove a lower bound on the local entropy of the Grassmannian.
For any fixed $r$-dimensional subspace $E_0$ and radius $\nu\geq 0$, we denote by $B_r(E_0,\nu)=\{E:\dim(E)=r, \|P_E-P_{E_0}\|_{\op}\leq \nu\}$ the ball centered at $E_0$ of radius $\nu$ within the space of $r$-dimensional subspaces. Similarly, we denote the sphere centered at $E_0$ of radius $\nu$ by $S_r(E_0,\nu)=\{E:\dim(E)=r, \|P_E-P_{E_0}\|_{\op}= \nu\}$.

\begin{lemma}[Local Grassmannian volume ratio]\label{lemma:ratio_volumes_bound}
    Fix $E\in\Gr(r,d)$ and $\beta\in[0,1]$. Then,
    \begin{equation*}
        \frac{\Vol(B_r(E,2\beta))}{\Vol(B_r(E,\beta))} \geq 2^{r(d-r)}.
    \end{equation*}
\end{lemma}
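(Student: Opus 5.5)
We would parametrize a neighbourhood of $E$ in $\Gr(r,d)$ by graphs of linear maps and reduce the volume comparison to a scaling argument in a flat $r(d-r)$-dimensional coordinate space. Fix an orthonormal basis adapted to $E\oplus E^\perp$. Every $F\in\Gr(r,d)$ with $\|P_F-P_E\|_{\op}<1$ is the graph $F_X=\{x+Xx: x\in E\}$ of a unique linear map $X:E\to E^\perp$, that is, a $(d-r)\times r$ matrix. If the singular values of $X$ are $\tan\theta_i$, where the $\theta_i$ are the principal angles between $E$ and $F_X$, then $\|P_{F_X}-P_E\|_{\op}=\sin\theta_{\max}=\sigma/\sqrt{1+\sigma^2}$ with $\sigma=\|X\|_{\op}$. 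So for $\beta<1$ the ball $B_r(E,\beta)$ is exactly the image of the operator-norm ball $\{X:\|X\|_{\op}\le \beta/\sqrt{1-\beta^2}\}$. The Haar (invariant) measure on $\Gr(r,d)$, written in these coordinates, has the known density $\det(I+X^\top X)^{-d/2}$ with respect to Lebesgue measure on $\R^{(d-r)\times r}$.

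The plan is then to write
\[
V(s):=\int_{\|X\|_{\op}\le s}\det(I+X^\top X)^{-d/2}\,dX ,
\]
so that $\Vol(B_r(E,\beta))=V(\beta/\sqrt{1-\beta^2})$ and $\Vol(B_r(E,2\beta))=V(2\beta/\sqrt{1-4\beta^2})$ when $2\beta<1$. Let $s_1=\beta/\sqrt{1-\beta^2}$. Since $2\beta/\sqrt{1-4\beta^2}\ge 2s_1$, we have $\Vol(B_r(E,2\beta))\ge V(2s_1)$. Changing variables $X=2Y$ gives
\[
V(2s_1)=2^{r(d-r)}\int_{\|Y\|_{\op}\le s_1}\det(I+4Y^\top Y)^{-d/2}\,dY .
\]
This is where the difficulty lies: the weight $\det(I+4Y^\top Y)^{-d/2}$ is smaller than $\det(I+Y^\top Y)^{-d/2}$, so plain scaling loses exactly the factor we need. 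The density is non-uniform, and we must show the loss is compensated.

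To handle this we would not compare pointwise. Instead we would use a monotonicity, or log-concavity, argument in the radial variable. In polar coordinates on matrices, $V(s)=\int_0^s \rho^{N-1}\phi(\rho)\,d\rho$ with $N=r(d-r)$, where $\phi(\rho)=\int_{\|U\|_{\op}=1}\det(I+\rho^2U^\top U)^{-d/2}\,d\sigma(U)$ is decreasing. The claimed bound is then equivalent to $V(2s)\ge 2^N V(s)$, i.e.\ to $s\mapsto V(s)/s^N$ being ``doubling-increasing''. That property fails for a decreasing $\phi$, so the naive route does not close.

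We therefore expect the lemma must be read in terms of the principal angles rather than $\tan$. The better coordinate is the angle parametrization $\Theta=\arctan$ of the singular values, in which $\|P_F-P_E\|_{\op}=\sin\theta_{\max}$ and the density in the singular-value angles is
\[
\prod_{i<j}|\sin^2\theta_i-\sin^2\theta_j|\prod_i \sin^{d-2r}\theta_i\cos\theta_i ,
\]
up to the Jacobian. Substituting $u_i=\sin\theta_i\in[0,\beta]$, the density becomes $\prod_{i<j}|u_i^2-u_j^2|\prod_i u_i^{d-2r}(1-u_i^2)^{-1/2}\cdot(\ldots)$. Here the factor $(1-u^2)^{-1/2}$ is increasing in $u$, and every other factor is homogeneous of total degree $r(d-r)-r$ in $u$. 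The map $u\mapsto 2u$ sends the cube $[0,\beta]^r$ onto $[0,2\beta]^r$, multiplies the homogeneous part by $2^{r(d-r)-r}$ and the Lebesgue measure by $2^r$, and does not decrease the increasing factor. This yields the ratio $\ge 2^{r(d-r)}$ directly, including the edge cases $\beta\ge 1/2$, where $B_r(E,2\beta)$ is the whole space. The main obstacle is verifying this explicit density formula for the joint law of the sines of the principal angles; we would cite it from the Jacobi/MANOVA ensemble literature and check that the exponents give total homogeneity exactly $r(d-r)$.
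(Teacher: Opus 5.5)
Your final argument (the last paragraph) is the paper's proof. The paper cites the same principal-angle density from the Jacobi-ensemble literature, written in the variables $\lambda_i=\sin^2\theta_i$ as $\prod_{i<j}|\lambda_i-\lambda_j|\prod_i\lambda_i^{(d-2r-1)/2}(1-\lambda_i)^{-1/2}$. It separates the part homogeneous of degree $r(d-r-2)/2$ from the nondecreasing factor $\prod_i(1-\lambda_i)^{-1/2}$ and rescales $\lambda\mapsto4\lambda$. Your $u_i=\sin\theta_i$, $u\mapsto 2u$ version is the same computation: your $u$-density is exactly the image of this one under $\lambda_i=u_i^2$, and your count (degree $r(d-r)-r$, plus $2^r$ from $du$) is correct. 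The $\tan$-coordinate attempt in your first two paragraphs is a dead end and can be deleted. The $\theta$-density you write carries a stray factor $\cos\theta_i$; the correct one is $\prod_{i<j}|\sin^2\theta_i-\sin^2\theta_j|\prod_i\sin^{d-2r}\theta_i$. This is harmless, since either version gives a homogeneous part times a factor nondecreasing in each $u_i$.

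The genuine error is your claim that the scaling also covers ``the edge cases $\beta\ge 1/2$''. The density is supported on $[0,1]^r$. Once $\beta>1/2$, the dilated cube $[0,2\beta]^r$ leaves that support, so the integrand at $2v$ vanishes whenever some $2v_i>1$. There the pointwise bound $f(2v)\ge 2^{r(d-r)-r}f(v)$ that drives the argument is false. This cannot be repaired, because the statement is false near $\beta=1$. Indeed $\|P_E-P_F\|_{\op}\le 1$ for all $E,F$, so $B_r(E,1)=B_r(E,2)=\Gr(r,d)$, and the ratio at $\beta=1$ equals $1<2^{r(d-r)}$ whenever $0<r<d$. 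Thus your argument, like the paper's (which rescales without mentioning the support), proves the lemma only for $\beta\in(0,1/2]$. The hypothesis should be restricted to that range, which costs nothing since the lemma is only invoked with $\beta\le\alpha/40$. One last detail: the cited density presupposes $d\ge 2r$; otherwise $2r-d$ angles vanish identically and the exponent $d-2r$ is negative. For $2r>d$, pass to orthogonal complements, using $\|P_{E^\perp}-P_{F^\perp}\|_{\op}=\|P_E-P_F\|_{\op}$ and the invariance of $r(d-r)$ under $r\mapsto d-r$.
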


\begin{proof}
    Let $F$ be a Haar uniform subspace in $\mathrm{Gr}(r,d)$. We denote by $ \theta_1\geq\ldots\geq\theta_r\geq 0$ the principal angles between $E$ and $F$. Then,
    \begin{equation*}
        \frac{\Vol(B_r(E,\beta))}{\Vol(\mathrm{Gr}(r,d))} = \Pbb[\|P_E-P_F\|_{\op}\leq \beta] = \Pbb[\max_{i\in[r]}\sin^2\theta_i \leq \beta^2].
    \end{equation*}
    Denote $\lambda_i = \sin^2\theta_i$ for $i\in[r]$. 
    From \cite{absil2006largest}, 
    their probability density function is
    \begin{equation*}
        \mathrm{dens}(\lambda_1,\ldots,\lambda_r)\propto  \prod_{i<j}|\lambda_i-\lambda_j| \prod_{i\in[r]}  \lambda_i^{\frac{n-2r-1}{2}} (1-\lambda_i)^{-1/2}.
    \end{equation*}
    In particular, this density has a homogeneous part of degree $\binom{r}{2}+ r\frac{n-2r-1}{2} = \frac{r(n-r-2)}{2}$, while the second part is the product of terms $(1-\lambda_i)^{-1/2}$ which is increasing in $\lambda_i$. Hence, this shows that
    \begin{equation*}
         \frac{\Vol(B_r(E,2\beta))}{\Vol(B_r(E,\beta))} \geq 4^{\frac{r(n-r-2)}{2}} 4^r = 2^{r(n-r)}.
    \qedhere
    \end{equation*}
\end{proof}


\subsection{From Bandit PCA to subspace discovery}
\label{sec:regret-to-discovery}

We now carry out the first of the two reductions outlined in Section~\ref{sec:lb}.
To this end, we first introduce the general {\em subspace discovery problem}, in which a learner may query sequentially an intensity oracle to some gain matrix where the main signal is the projector $P_E$ of rank $p\leq d/2$, and the goal is to find $k\in[p]$ orthonormal vectors approximately in $E$.


\begin{figure}[H]
\begin{mdframed}
\begin{quote}
\begin{center}{\bf Subspace discovery problem} 
\end{center}
Sample independently a subspace $E\sim\mathrm{Unif}(\Gr(p,d))$, and $a,b\sim\mathrm{Unif}([0,\alpha])$. We refer to $(E,a,b)$ as the sampled \emph{instance}. The instance data~$(E,a,b)$ is hidden from the learner, and is used to generate the hidden matrix
\begin{equation*}
    H=(1+a)P_E-\frac{p}{d}bP_{E^\perp}.
\end{equation*}
The learner may sequentially (adaptively) query a~{\em $\nu$-intensity oracle}:
\begin{equation}
\label{eq:subspace-discovery-oracle}
w \in \Rbb^d \quad \mapsto \quad Y(w) \sim \dotp{I+\nu H}{ww^\top} \Exp(1),
\end{equation}
each query having a fresh source of randomness.
The learner's task is to output~$k$ orthonormal vectors $u_1,\ldots,u_k \in \R^d$ such that
\begin{equation}
\label{eq:subspace-discovery-condition}
\|P_E (u_i)\|^2 \ge 1-5\alpha, \qquad i \in [k].
\end{equation}
We denote this problem $\mathrm{Disc}(d,p,\alpha,\nu,k)$.\footnotemark
\end{quote}
\end{mdframed}
\end{figure}
\footnotetext{Note that the probability here is over the overall randomness, including that of sampling the instance~$(E,a,b)$. 
If we think of~$\mathrm{Disc}(d,p,\alpha,\nu,k)$ as a {\em class} of problems parameterized by~$d,p,\alpha,k$, along with the~$\nu$-discovery oracle, then the minimal number~$m$ of queries to guarantee~\eqref{eq:subspace-discovery-condition} corresponds to the {smoothed} oracle complexity of this class.}

Note that the sampling of the instance data~$(E,a,b)$ matches that of the Bandit PCA adversary defined in Algorithm~\ref{alg:adversary} (see line~\ref{line:adversary-data}). Further, note that the exponential sampling of the intensity oracle matches the sampling interface in lines~\ref{line:gain-sampling}--\ref{line:issue-reward} of the Bandit PCA adversary from Algorithm~\ref{alg:adversary}---it is implemented in Algorithm~\ref{alg:gain_sampling} and its correctness is checked in Lemma~\ref{lem:sampling}. 
The proposition below gives our first reduction.

\begin{proposition}
\label{prop:regret-discovery}
For some universal constant $c_\nu>0$, one has the following. Let $\alpha\in[0,\tfrac{1}{10}]$.
Suppose that~$\Algo$ is a Bandit PCA algorithm that attains the expected regret
\begin{equation}
\label{eq:low-regret-assumption}
        \E[\Reg_T(\Algo)]\le \frac{r\nu \alpha}{32d} T
\end{equation}
when interacting with the adversary in~\cref{alg:adversary} for parameters $(d,p,\alpha,\nu,J_{\max} = \ceil{\tfrac{\alpha p}{16}},T)$, where
\begin{equation}
\label{eq:gamma-min}
    \nu > \nu_{\min}(d,p,\alpha,T)
    :=c_\nu d\paren{\sqrt{\frac{L_T}{\alpha pT}} + \frac{L_T}{T}}.
\end{equation}
Additionally, suppose provided that $J_{\max}\geq 40$.
Then, for any $\eta\in(0,1]$, there is an algorithm for the subspace discovery problem $\mathrm{Disc}(d,p,\alpha,\nu,\ceil{J_{\max}/6})$ that uses at most $T+O(\frac{p\log (p/\eta)}{\alpha\min(\nu,1)^2})$ queries to the $\nu$-intensity oracle and has the following guarantees. (1) It succeeds with probability at least $1/4$ and (2) with probability at least $1-\eta$ it either abstains or outputs successful vectors.
\end{proposition}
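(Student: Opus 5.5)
The plan is to turn $\Algo$ into a discovery algorithm in three stages. First, \emph{simulate} the adversary of \cref{alg:adversary} exactly with $T$ oracle queries. Second, show that low regret forces the simulated adversary to register all $J_{\max}$ discovery events, and that a constant fraction of the rounded directions are well aligned with $E$. Third, \emph{certify} the candidate directions with a few extra oracle queries, so that whatever is output is correct with probability $\ge 1-\eta$.

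\textbf{Simulation.} The simulator keeps the explored subspace $W_j$, which is a function of the learner's queries and the rewards only. In round $t$ with $j<J_{\max}$, the learner proposes $w_t$. We need a sample of $\langle G_t,w_tw_t^\top\rangle$, whose law by \cref{lem:sampling} is $\frac{r}{d}\langle \overline G_t,w_tw_t^\top\rangle\Exp(1)$. Write $v:=P_{W_j^\perp}w_t$. Then
\[
\langle \overline G_t,w_tw_t^\top\rangle=\|v\|^2+\nu v^\top Hv+\|P_{W_j}w_t\|^2=\langle I+\nu H,vv^\top\rangle+\|P_{W_j}w_t\|^2 .
\]
So we query the oracle at $v$, apply the augmentation kernel of \cref{lem:augmentation} with $\Delta=\|P_{W_j}w_t\|^2$, and rescale by $r/d$. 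When $j\ge J_{\max}$ we have $\overline G_t=I$, so the reward $\frac{r}{d}\|w_t\|^2\Exp(1)$ can be sampled with no query at all. The trigger rule and the randomized rounding use only the transcript, so the joint law of the simulated transcript equals that of the real game. This costs at most $T$ queries.

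\textbf{Low regret implies discoveries and good directions.} Take the comparator $u\sim\Unif(S^{d-1}\cap E)$, which is independent of the learner. While $j<J_{\max}$ we have $\dim W_j<\alpha p/16+1$, so $\E\|P_E P_{W_j^\perp}u\|^2\ge 1-\alpha/8$. Hence the comparator's expected per-round gain is at least $\frac{r}{d}\bigl(1+\nu(1-\alpha/4)\bigr)$. During any epoch that has not yet triggered, the learner's running average reward lies below the threshold $\frac{r}{d}(1+\nu(1-\alpha))+C_{\mathrm{adv}}\frac{r}{p}(\cdots)$. To control this I would use a martingale (Freedman-type) bound on the running averages of the $\frac rd\langle\overline G_s,w_sw_s^\top\rangle\Exp(1)$ terms, with a union over dyadic epoch lengths; this is where $L_T$ enters and why $\nu>\nu_{\min}$ makes the confidence slack at most $\frac{r}{d}\nu\alpha/8$ on average. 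If fewer than $J_{\max}$ discoveries occur, the regret is then at least of order $\frac{r\nu\alpha}{8d}T$. Together with \eqref{eq:low-regret-assumption} and Markov's inequality, all $J_{\max}$ discoveries happen with probability $\ge 1/2$ (up to constants).

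Next I would examine the rounded directions. At a trigger, the average over the epoch of
\[
(1+a)\|P_EP_{W_j^\perp}w_s\|^2-b\tfrac pd\|P_{E^\perp}P_{W_j^\perp}w_s\|^2
\]
is at least about $1-\alpha$, after converting rewards to means by the same concentration. The dithering $a,b\sim\Unif([0,\alpha])$ rules out gaming the threshold with mass outside $E$. A Markov argument over the uniform index $\boldsymbol i$ then shows that, with probability $\ge 1/2$, the normalized new Gram–Schmidt direction $\tilde u_{j+1}:=P_{W_j^\perp}w_{\boldsymbol i}/\|P_{W_j^\perp}w_{\boldsymbol i}\|$ satisfies $\|P_E\tilde u_{j+1}\|^2\ge 1-4\alpha$. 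Across the $J_{\max}\ge 40$ epochs, a Chernoff/martingale bound then gives at least $\lceil J_{\max}/6\rceil$ good orthonormal directions with probability $\ge 1/4$ overall. I expect this step to be the main obstacle: relating the threshold, with its random $a,b$ and the normalization $\|P_{W_j^\perp}w_{\boldsymbol i}\|$ that may be small, to alignment of the \emph{orthogonalized} vector.

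\textbf{Certification.} For each of the at most $p$ candidates $\tilde u_j$, estimate $\langle I+\nu H,\tilde u_j\tilde u_j^\top\rangle$ by averaging $O\bigl(\log(p/\eta)/(\alpha\min(\nu,1)^2)\bigr)$ oracle calls, using exponential concentration. Keep $\tilde u_j$ only if the estimate exceeds $1+\nu(1-4.5\alpha)$. With probability $\ge 1-\eta$, every kept vector satisfies $\|P_E\tilde u_j\|^2\ge1-5\alpha$ and every vector with alignment $\ge 1-4\alpha$ is kept; this uses $\alpha\le 1/10$ to separate the two cases. Output any $\lceil J_{\max}/6\rceil$ kept vectors, or abstain if there are fewer. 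This gives guarantee (2) directly and, combined with the previous step, guarantee (1), using $T+O\bigl(\frac{p\log(p/\eta)}{\alpha\min(\nu,1)^2}\bigr)$ queries in total.
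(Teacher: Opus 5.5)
Your overall route matches the paper's. You simulate the adversary exactly through the augmentation kernel with $\Delta_t=1-\|P_{W_j^\perp}w_t\|^2$, and use a comparator uniform on $S^{d-1}\cap E$ together with the time-uniform Bernstein bound to force all $J_{\max}$ triggers. You then apply Markov over the rounding index with a martingale count of good epochs, and finish with verification by repeated queries.

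Three small corrections. Once $j=J_{\max}$ the adversary plays $\overline G_t=0$, not $I$; this is harmless. The normalization you flag as the main obstacle is not one. Since $\langle H,qq^\top\rangle\le(1+\alpha)\|q\|^2$, a bound $\langle H,qq^\top\rangle\ge 1-3\alpha$ survives division by $\|q\|^2\le 1$, which only increases the Rayleigh quotient. Finally, the dithering $a,b$ plays no role in this proposition; it is needed later, in the posterior argument for covariance estimation.

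The genuine gap is in the certification step. You measure candidate quality by the alignment $\|P_Ez\|^2$: good means $\ge 1-4\alpha$, success means $\ge 1-5\alpha$. The oracle, however, measures $\langle H,zz^\top\rangle=(1+a)\|P_Ez\|^2-b\frac{p}{d}\|P_{E^\perp}z\|^2$ with $a\in[0,\alpha]$ unknown.

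With your threshold $1+\nu(1-4.5\alpha)$, a kept vector only satisfies $\langle H,zz^\top\rangle\ge 1-4.5\alpha-\epsilon$. Hence $\|P_Ez\|^2\ge(1-4.5\alpha-\epsilon)/(1+a)$, which for $a$ near $\alpha$ is about $1-5.5\alpha$. For $\alpha=1/40$ this gives $0.866<0.875$, so guarantee (2) fails.

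No fixed acceptance threshold at accuracy $\Theta(\alpha\nu)$ rescues the alignment-based formulation. Alignment exactly $1-4\alpha$ with $(a,b)=(0,\alpha)$ gives Rayleigh quotient $1-4\alpha-4\alpha^2p/d$. Alignment just below $1-5\alpha$ with $(a,b)=(\alpha,0)$ gives up to $(1+\alpha)(1-5\alpha)=1-4\alpha-5\alpha^2$. These differ by only $O(\alpha^2)$, yet a fixed threshold must accept the first and reject the second.

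The fix, which is what the paper does, is to keep the oracle-measurable Rayleigh quotient as the criterion throughout:
\begin{itemize}
\item Your Markov step actually yields $\langle H,zz^\top\rangle\ge 1-3\alpha$ for good candidates.
\item Accept when the empirical mean, estimated to accuracy $\alpha\nu/2$, exceeds $1+\nu(1-\frac{7}{2}\alpha)$.
\item Then good candidates pass, and any accepted vector has $\langle H,zz^\top\rangle\ge 1-4\alpha$, hence $\|P_Ez\|^2\ge\frac{1-4\alpha}{1+\alpha}\ge 1-5\alpha$.
\end{itemize}

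Relatedly, your per-candidate sample size is too small. Resolving a mean gap of order $\alpha\nu$ under exponential noise of scale up to $1+2\nu$ needs $N\asymp\log(p/\eta)/(\alpha^2\min(\nu,1)^2)$ queries per candidate, not $1/\alpha$. The total still fits the stated budget $O\bigl(p\log(p/\eta)/(\alpha\min(\nu,1)^2)\bigr)$, because there are at most $J_{\max}=O(\alpha p)$ candidates, not $p$.
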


\begin{proof}

The proof proceeds in three steps.
In the first step, we simulate the adaptive adversary of Algorithm~\ref{alg:adversary}, making use of the sampling augmentation trick (Lemma~\ref{lem:augmentation}). 
In the second step, we show that the low regret of~$\Algo$, as per the premise of~\cref{prop:regret-discovery}, implies a large number of triggered epochs (corresponding to the ``if''-clause in line~\ref{line:trigger-rule} of Algorithm~\ref{alg:adversary}).
In the final step, we use~$\Algo$ to construct an algorithm~$\Disc$ for the subspace discovery problem. 

\paragraph{Step 1: Simulation of the adaptive adversary.}
Let $\Algo$ be a Bandit PCA algorithm interacting with the adversary in Algorithm~\ref{alg:adversary} and satisfying~\eqref{eq:low-regret-assumption}. 
Let us simulate this interaction, yet performing an intermediate orthogonality verification step and then using~\cref{lem:augmentation} to produce the same distribution of gain matrices as the one in~\cref{alg:adversary}. 

Namely, consider the following procedure with access to~$\Algo$ and the~$\nu$-oracle~\eqref{eq:subspace-discovery-oracle}. (Line numbers refer to those in Algorithm~\ref{alg:adversary}, and internal variables are also in correspondence to those in Algorithm~\ref{alg:adversary}.)

\begin{figure}[H]
\begin{mdframed}
\begin{itemize}
\item {\bf Initialization:} inputs and initialization are the same as in Algorithm~\ref{alg:adversary} (line~\ref{line:adversary-init}). 
Additionally, initialize the candidate list~$\cU \leftarrow \{\emptyset\}$.
\item {\bf Rounds}~$t \in [T]$:
\begin{enumerate}
\item
Receive~$\Algo$'s query~$w_t \in S^{d-1}$.
\item
{\bf If}~$j = J_{\max}$ {\bf then} issue~$y_t = 0$ as the reward~$\ell_t^2$ to~$\Algo$ (cf.~line~\ref{line:expected_gain}).
\item
{\bf Else}:
    \begin{enumerate}[label=(\roman*)]
    \item Compute~$q_t:=P_{W_j^\perp}w_t$ and~$\Delta_t := 1 - \|q_t\|^2$. \label{step:orthogonality-check}
    \item Query the~$\nu$-intensity oracle~\eqref{eq:subspace-discovery-oracle} at~$q_t$ and receive 
    \[
    X_t \sim \dotp{I+\nu H}{q_tq_t^\top} \Exp(1).
    \]
    \item Sample first~$N_t | X_t \sim \operatorname{Poisson}(X_t/\Delta_t)$, then~$Y_t | N_t \sim \GammaDist(N_t+1,\Delta_t)$. \label{step:emulation}
    \item Issue~$y_t = \frac{r}{d} Y_t$ as the reward~$\ell_t^2$ to~$\Algo$.
    \end{enumerate}
\item
Proceed to lines~\ref{line:trigger-rule}--\ref{line:new-epoch-start}. If the new epoch starts ("if" condition in line~\ref{line:trigger-rule} is true), then add the normalized rounded vector~$u_j:=P_{W_j^\perp}(w_{\hat i})$, where $\boldsymbol{i}$ is produced in line~\ref{line:rounding} to the candidate list:
\[
\cU \leftarrow \cU \cup \{ u_j/\|u_j\| \}.
\]
By convention, if $u_j=0$, $u_j/\|u_j\|$ is any arbitrary unit vector in $W_j^\perp$.
\end{enumerate}
\item {\bf Verification:} for each candidate $z \in\mathcal \cU$, take
$N=C_{\rm ver} \frac{\log (p/\eta)}{\alpha^2\min(\nu,1)^2} $
queries of~$\nu$-intensity oracle~\eqref{eq:subspace-discovery-oracle} at $z$, average them, and accept $z$ if the average is at least
$
        1+\nu(1-\tfrac{7}{2}\alpha).
$
\item {\bf Output:} if there are at least $\lceil \frac{J_{\max}}{6}\rceil$ accepted candidates, return $\lceil \frac{J_{\max}}{6}\rceil$ of them arbitrarily.
\end{itemize}
\end{mdframed}
\caption{Simulation process used in the proof of Proposition~\ref{prop:regret-discovery}. }
\label{fig:simulation}
\end{figure}


Crucially, in this process, we access the hidden data~$E,a,b$ (and the derived~$H$) only through the intensity oracle~\eqref{eq:subspace-discovery-oracle}; this is possible thanks to the sample augmentation trick invoked in step~\ref{step:emulation}.
Note also that the computation in step~\ref{step:orthogonality-check} uses~$W_j$, but the corresponding variable in Algorithm~\ref{alg:adversary} is {\em not} hidden from~$\Algo$.
Finally, by Lemma~\ref{lem:augmentation} this simulation is consistent, in the following exact sense.
\begin{lemma}
\label{lem:simulation}
The transcript~$(w_t, \ell_t^2)_{t \in [T]}$ generated by the above process has the same distribution as the one obtained when~$\Algo$ interacts with the adaptive adversary in Algorithm~\ref{alg:adversary}.
\end{lemma}

\begin{proof}
By Lemma~\ref{lem:augmentation}, conditionally on~$(w_t,\Hcal_{t-1})$ where~$\Hcal_{t-1}$ is the filtration corresponding to the first~$t-1$ rounds, the measurement~$Y_t$ produced in step~\ref{step:emulation} is distributed exponentially with scale
\[
\Delta_t + q_t^\top (I + \nu H) q_t = \Delta_t + \| q_t \|^2 + \nu w_t^\top P_{W_j^\perp} H P_{W_j^\perp} w_t = w_t^\top \left(I + \nu P_{W_j^\perp} H P_{W_j^\perp}\right) w_t.
\]
Taking into account lines~\ref{line:expected_gain}--\ref{line:gain-sampling} of Algorithm~\ref{alg:adversary}, we conclude that~$y_t = \frac{r}{d} Y_t$ has the same distribution (conditionally on~$w_t,\mathcal{F}_{t-1}$) as~$\ell_t^2$ of Algorithm~\ref{alg:adversary}. From this moment, we can proceed by induction.
\end{proof}


\paragraph{Step 2: Low regret of~$\Algo$ implies many completed epochs.}
We shall invoke Lemma~\ref{lem:time-uniform-exp} on the sequence~$y_t$, with filtration~$(w_{t},\Hcal_{t})$ and conditional expectations
\[
\mu_t:=\E[y_t|\Hcal_{t-1},w_t],
\]
with~$K=J_{\max}$, with~$L =\log(32eJ_{\max})+\log\log(eT)$
(cf.~line~\ref{line:adversary-init} of Algorithm~\ref{alg:adversary}), $\tfrac{1}{32}$ as violation probability, and
$\mu_\star=2\frac{r}{p}$.
Note that the latter is possible since
$\mu_t=\frac{r}{d}w_t^\top\overline G_t w_t
        \le \frac{r}{d}\|\overline G_t\|_{\op}
        \le\frac{(1+(1+\alpha)\nu)r}{d} \leq 2\frac{r}{p},
$ where in the last inequality we used $\nu \le \frac{d}{p}\leq\frac{d}{r}$. We denote by ~$\cE$ the corresponding high-probability event in the proof of Lemma~\ref{lem:time-uniform-exp}.

Now, recall the rule triggering the start of a new epoch in line~\ref{line:trigger-rule} of Algorithm~\ref{alg:adversary}:
\begin{equation}
\label{eq:trigger-rule}
        \frac{1}{t - \tau_j} \sum_{s=\tau_j+1}^{t} \ell_s^2
        \ge
        \frac{r}{d}
        (1+\nu(1-\alpha))+ C_{\mathrm{adv}} \frac{r}{p}\paren{\sqrt{\frac{L_T}{t-\tau_j}} + \frac{L_T}{t-\tau_j}},
\end{equation}
for some universal constant~$C_{\mathrm{adv}} > 0$.
To bound the regret, consider a random comparator~$u$ such that, conditionally on the subspace~$E$,~$u$ is distributed uniformly on the unit sphere of~$E$, independently from anything else. 
\newcommand{\comp}{\textsf{comp}}
Writing~$\E_{\comp}$ for the conditional expectation over~$u \sim \Unif(S(E))$, at epoch~$j$ 
\[
\begin{aligned}
\E_{\comp}[u^\top \overline G_t u] 
= 1 + \nu \E_{\comp} \left[u^\top P_{W^\perp} H P_{W^\perp} u \right] 
&= 1 + \nu \left\langle P_{W^\perp} H P_{W^\perp}, \E_{\comp} [u u^\top]\right\rangle \\
&= 1 + \frac{\nu}{p} \tr \left(P_{W^\perp} H P_{W^\perp} P_E \right). 
\end{aligned}
\]
Plugging in~$H = (1+a) P_E - b\frac{p}{d} P_{E^\perp}$, we get
\begin{equation}
\label{eq:comp-main-expansion}
\frac{1}{p} \tr \left( P_{W^\perp} H P_{W^\perp} P_E \right) 
= \frac{1+a}{p} \tr \left( P_{W^\perp} P_E P_{W^\perp} P_E \right) - \frac{b}{d} \tr \left(P_{W^\perp} P_{E^\perp} P_{W^\perp} P_E \right). 
\end{equation}
Now, let us express the two traces in terms of~$\tr(P_E) = p$ and the traces of~$P_W P_E$ and~$P_W P_E P_W P_E$.
For the first one, we get 
\begin{align}
\tr \left( P_{W^\perp} P_E P_{W^\perp} P_E \right) 
&= \tr \left( (I-P_W) P_E (I-P_W) P_E \right) \notag \\
&= p - 2\tr (P_W P_E) + \tr \left( P_W P_E P_W P_E \right) \label{eq:comp-first-trace-expansion} \\
&\ge p - 2\tr (P_W P_E), \label{eq:comp-first-trace-inequality}
\end{align}
and for the second one
\begin{align}
\tr \left(P_{W^\perp} P_{E^\perp} P_{W^\perp} P_E \right) 
&\;= \tr \left(P_{W^\perp} P_E \right) - \tr \left(P_{W^\perp} P_E P_{W^\perp} E \right) \notag\\
&\;= p - \tr \left(P_{W} P_E \right) - \tr \left(P_{W^\perp} P_E P_{W^\perp} E \right) \notag\\
&\stackrel{\eqref{eq:comp-first-trace-expansion}}{=} 
    \tr (P_{W} P_E) - \tr \left( P_W P_E P_W P_E \right) \notag \\
&\;\le\; \tr (P_{W} P_E). \label{eq:comp-second-trace-inequality}
\end{align}
Plugging the inequalities in~\eqref{eq:comp-first-trace-inequality} and~\eqref{eq:comp-second-trace-inequality} in~\eqref{eq:comp-main-expansion}, we get:
\begin{align*}
\frac{1}{p} \tr \left( P_{W^\perp} H P_{W^\perp} P_E \right)  
&\ge \frac{1+a}{p} (p - 2\tr (P_W P_E)) - \frac{b}{d} \tr (P_{W} P_E) \\
&\overset{(i)}{\geq} (1+a)\paren{1-\frac{\alpha}{4}}-\frac{\alpha b}{8}.
\end{align*}
In $(i)$ we used the fact $\tr (P_W P_E) \le \tr (P_W) = j \le J_{\max} \le\frac{\alpha p}{8}$.

Fix $t\in[T]$ and denote by $\Hcal_t$ the history up to that round included. In summary, taking the expectation over $a$ and $b$, we obtained
\begin{align}
    \E_\comp \Ebb[u^\top G_t u \mid E,\Hcal_{t-1}] &= \frac{r}{d}\sqb{1+\nu\E_\comp \Ebb[u^\top P_{W_t^\perp} H_t P_{W_t^\perp} u \mid E,\Hcal_{t-1}]} \notag\\
    &\geq \frac{r}{d} \sqb{1+\nu\paren{\paren{1+\frac{\alpha}{2}}\paren{1-\frac{\alpha}{4}} - \frac{\alpha^2}{16} }}\geq \frac{r}{d}(1+\nu(1+\alpha/5)). \label{eq:comparator-reward}
\end{align}
In the last inequality we used $\alpha\leq 1/10$.

On the other hand, let us upper-bound the conditional mean reward of~$\Algo$'s sequence~$(w_{t})_{t \in [T]}$ on the event~$\cE$. 
Consider one completed epoch: $\mathcal I_j =\{\tau_j+1,\ldots,\tau_{j+1}\}$.
All proper prefixes of this epoch, i.e.~all~$\{ \tau_j+1, \dots, t\}$ with~$\tau_j + 1 \le t < \tau_{j+1}$, failed the test~\eqref{eq:trigger-rule} and we denote by $n_j:=\tau_{j+1}-\tau_j-1$ their number. 
Moreover, by Lemma~\ref{lem:time-uniform-exp}, on~$\cE$ we have, simultaneously over all~$j \in [J_T-1]$, that
\begin{equation}
\label{eq:MDS-bound}
\frac{1}{n_j} \left| \sum_{s=\tau_j+1}^{\tau_{j+1}-1} (\mu_s-y_s) \right|
\le 2C\frac{r}{p}\left(\sqrt{\frac{L}{n_j}}+\frac{L}{n_j}\right).
\end{equation}
Combining~\eqref{eq:MDS-bound} with the negation of~\eqref{eq:trigger-rule} for~$t = \tau_{j+1}$, and taking~$C_{\mathrm{adv}}$ larger than~$\max(2C\log(16),1)$ in~\eqref{eq:MDS-bound}, we get
\[
        \frac1{n_j}\sum_{s=\tau_j+1}^{\tau_{j+1}-1}\mu_s
        \le
        \frac{r}{d}
        (1+\nu(1-\alpha))+2C_{\mathrm{adv}}\frac{r}{p}\left(\sqrt{\frac{L_T}{n_j}}+\frac{L_T}{n_j}\right).
\]
For the triggering round itself ($t = \tau_{j+1}$), we can simply bound~$\mu_t\le \mu_\star=2$.  
The final epoch~$j = J_T$ is treated in the same way (without the final round).
As the result, summing over~$j \in [J_T]$ and invoking Cauchy--Schwarz in the form
\[
\sum_{j \in [J_{\max}]} \sqrt{n_j} 
\le \sqrt{ J_{\max} \sum_{j \in [J_{\max}]} n_j }
\le \sqrt{J_{\max} T} ,
\]
we get
\begin{align}
\label{eq:learner-mean-bound}
        \sum_{t \in [T]} \mu_t
        &\le
        \frac{r}{d} 
         (1+\nu(1-\alpha))T  +  2C_{\mathrm{adv}}\frac{r}{p}\paren{\sqrt{J_{\max}T L_T}+J_{\max}L_T+J_{\max}}.
\end{align}
Since $J_{\max} =\ceil{\tfrac{\alpha p}{16}}$, the assumption in~\eqref{eq:gamma-min} implies (for large enough absolute constant~$c_\nu > 0$) that 
\begin{equation}
\label{eq:stat-margin-bound}
        2C_{\mathrm{adv}}\left(\sqrt{J_{\max}T L_T}+J_{\max}L_T+J_{\max}\right)
        \le \frac{p\nu\alpha}{5d}  T.
\end{equation}
Now, define the event $\cB := \{J_T \ge J_{\max}\}$.
We are about to show a lower bound on~$\Pp(\cE \cap \cB).$
To that end, by combining the comparator reward lower bound~\eqref{eq:comparator-reward} with the upper bound on the learner's regret from~\eqref{eq:learner-mean-bound}--\eqref{eq:stat-margin-bound},
we get
\begin{align*}
    \Ebb_\comp \Ebb\sqb{\sum_{t\in[T]}( u^\top G_t u -\mu_t)} \geq \frac{r\nu\alpha}{d} T\cdot \paren{\Pbb[\Ecal\cap\Bcal^c] - \frac{4}{5}(1-\Pbb[\Ecal\cap\Bcal^c])}.
\end{align*}
Here we used the fact that $\mu_t\leq \frac{r}{d} w_t^\top \overline G_t w_t \leq \frac{r}{d}(1+\nu(1+\alpha))$. In particular, if $\Pbb[\Ecal\cap\Bcal^c]\geq \tfrac{1}{2}-\frac{1}{32}$, we have
\begin{equation*}
    \Reg_T > \frac{r\nu\alpha}{32d}T,
\end{equation*}
contradicting \cref{eq:low-regret-assumption}. As a result, we obtained
\begin{equation}
\label{eq:many-triggers-strong}
\Pp(\cE \cap \cB) = \Pp(\cE) - \Pp(\cE \cap \cB^c) \ge 1-\frac{1}{32}-\paren{\frac{1}{2}-\frac{1}{32}} =\frac{1}{2}.
\end{equation}

\paragraph{Step 3: Completed epochs yield verifiable discoveries by~$\Disc$.}
We now analyze the candidate list $\cU$ formed by the simulation procedure (see~\cref{fig:simulation}), as well as its final verification step.
First, we show that on the event~$\cE$, every triggered epoch gives a constant-probability rounded candidate whose Rayleigh quotient with respect to~$H$ is close to 1.
Second, we show that the verification step allows to identify such candidates with high probability.

Fix a completed epoch~$\mathcal{I} = \mathcal I_j = \{\tau_j + 1, \dots \tau_{j+1}\}$ and let~$W=W_j$ be its explored subspace. 
Put
\[
        q_s:=P_{W^\perp}w_s,
        \qquad
        Q_{\mathcal I}:=\frac1{|\mathcal I|}\sum_{s\in\mathcal I}q_s^\vphtop q_s^\top.
\]
Since~$\mathcal{I}$ was completed, \eqref{eq:trigger-rule} holds at its final round.
On~$\cE$, since~$C_{\mathrm{adv}}$ in~\eqref{eq:trigger-rule} is larger than~$2C$ in~\eqref{eq:MDS-bound},
\[
        \frac1{|\mathcal I|}\sum_{s\in\mathcal I}\mu_s
        =\frac{r}{d}\left(1+\nu\langle H,Q_{\mathcal I}\rangle\right)
        \ge
        \frac{r}{d}\left(1+\nu(1-\alpha)\right),
\]
whence
\begin{equation}
\label{eq:average-certificate}
        \langle H, Q_{\mathcal I}\rangle\ge 1-\alpha.
\end{equation}

Now, let $\boldsymbol{i}_j \sim \Unif(\mathcal{I})$ be the random index used by the simulation procedure in~\cref{fig:simulation} for this epoch (see~line~\ref{line:rounding} of Algorithm~\ref{alg:adversary}). 
Since
$
        \langle H,q_s^\vphtop q_s^\top\rangle
        \le (1+\alpha)\|q_s\|^2
        \le 1+\alpha
$
for each~$s\in\mathcal I$, the deficits
\[
        \xi_s^{(j)} := (1+\alpha)-\langle H,q_s^\vphtop q_s^\top\rangle
\]
are nonnegative, and \eqref{eq:average-certificate} gives $\Ebb_{\boldsymbol{i}_j \sim \Unif(\mathcal{I})}[\xi^{(j)}_{\boldsymbol{i}_j}]\leq 2\alpha$. We now introduce the indicator
\begin{equation*}
    \zeta_j:=\1\set{\xi^{(j)}_{\boldsymbol{i}_j} \leq 2\Ebb_{\boldsymbol{i}_j \sim \Unif(\mathcal{I})}[\xi^{(j)}_{\boldsymbol{i}_j}]}.
\end{equation*}
Note that under $\Ecal$ for any completed epoch $\Ical_j$, the previous arguments show that if $\zeta_j=1$ then $\dotp{H}{q_{\boldsymbol{i}_j}q_{\boldsymbol{i}_j}^\top}\geq 1-3\alpha$. In particular, the corresponding rounded vector $z=q_{\boldsymbol{i}_j}/\|q_{\boldsymbol{i}_j}\|$ satisfies
\begin{equation}
\label{eq:good-rounded-direction}
        \langle H,z^\vphtop z^\top\rangle
        =
        \frac{\langle H,q_{\boldsymbol{i}}^\vphtop q_{\boldsymbol{i}}^\top\rangle}{\|q_{\boldsymbol{i}}\|^2}
        \ge 1-3\alpha.
\end{equation}
Next, by Markov's inequality, the sequence $(\zeta_j-\tfrac{1}{2})_j$ is a submartingale increment sequence. Therefore, Azuma-Hoeffding's inequality implies that the event
\begin{equation*}
    \Ccal:=\set{\sum_{j\in[J_T]} \zeta_j \geq \frac{J_T}{2}-\frac{J_{\max}}{3}}
\end{equation*}
has probability at least $1-e^{-J_{\max}/18}\geq 1-\tfrac{1}{8}$ by assumption on $J_{\max}$. Note that under $\Ecal\cap\Bcal\cap\Ccal$, we showed that \cref{eq:good-rounded-direction} holds for at least $\tfrac{1}{2}J_T-\tfrac{1}{3}J_{\max}=\tfrac{1}{6}J_{\max}$ of the candidates $z\in\Ucal$.

It remains to check that the verification step in the simulation procedure (Fig.~\ref{fig:simulation}) accepts the good candidates and rejects the bad ones, simultaneously over the entire~$\cU$.
For a fixed $z \in \cU$, define
\[
        \mu(z):=1+\nu \langle H,z^\vphtop z^\top\rangle.
\]
Each~$Y_i (z)$ has law $\mu(z)\Exp(1)$, where~$\mu(z) \leq 1+2\nu$. 
The fixed-time exponential Bernstein bound gives, for
$N=C_{\rm ver}\paren{\frac{\max(1,\nu)}{\alpha\nu}}^2 \log (p/\eta)$
repeated queries~$Y_1(z),\dots Y_N(z)$ with a sufficiently large constant~$C_{\rm ver}$, 
\begin{equation}
\label{eq:verification-fixed-candidate}
        \Pp\left\{
        \left|\frac{1}{N}\sum_{i\in[N]}Y_i(z)-\mu(z)\right|
        >\frac{\alpha\nu}{2}
        \right\}
        \le \frac{\eta}{d}.
\end{equation}
Together with the union bound, since there can be at most $J_T\leq J_{\max}\leq p$ candidate vectors, on an event $\Dcal$ of probability at least $1-\eta$, the events from \cref{eq:verification-fixed-candidate} hold for all candidates~$z\in\Ucal$ simultaneously.

Next, recall that the acceptance threshold in the simulation procedure is
$1+\nu(1-\tfrac{7}{2}\alpha)$. Hence, under $\Dcal$, whenever $z\in\Ucal$ satisfies $\langle H,z^\vphtop z^\top\rangle \geq 1-3\alpha$, this candidate is accepted since $\mu(z) \geq 1+\nu(1-3\alpha)-\alpha\nu/2$. Conversely, if a candidate $z\in\Ucal$ is accepted, under $\Dcal$ we must have $\mu(z)\geq 1+\nu(1-\tfrac{7}{2}\alpha) - \alpha\nu/2 = 1+\nu(1-4\alpha)$.

In summary, under $\Dcal$ which has probability at least $1-\eta$, the procedure either abstains or outputs $\ceil{J_{\max}/6}$ orthonormal vectors $(u_i)_{i\in[k]}$ satisfying
\begin{equation*}
    \|P_E(u_i)\|^2 \geq \frac{\dotp{H}{u_iu_i^\top}}{1+a} \geq \frac{1-4\alpha}{1+\alpha} \geq 1-5\alpha,
\end{equation*}
and hence it succeeds.

Additionally, under $\Ecal\cap\Bcal\cap\Ccal\cap\Dcal$, at least $\ceil{J_{\max}/6}$ candidates in $\Ucal$ satisfy \cref{eq:good-rounded-direction} and hence the above arguments imply that at least these candidates are accepted, and in turn the procedure $\Disc$ succeeds. 
Combining \eqref{eq:many-triggers-strong} and the failure probability for $\Ccal$ and $\Dcal$, this success probability for the procedure $\Disc$ is at least
\[
        \frac12-\frac18-\eta = \frac38-\eta.
\]
Without loss of generality we may assume $\eta\leq \frac{1}{8}$ and hence the success probability is at least~$\frac{1}{4}$. The simulation stage uses at most $T$ oracle queries, and the verification stage uses at most
$J_{\max} N=O(\alpha p N)$
additional queries.  This proves the proposition.
\end{proof}

\subsection{From subspace discovery to covariance estimation}
\label{sec:disc-to-cov}

For simplicity, from now we fix the value $\alpha=1/40$. We recall that in the subspace discovery problem, the learner needs to find $m\in[p]$ vectors within the unknown space $E$. To simplify the analysis, we further reduce the problem to the case when the learner needs to fully recover the subspace $E$. This gives the following simplified problem.

\begin{figure}[H]
\begin{mdframed}
\begin{quote}
\begin{center}
\textbf{Covariance estimation problem} 
\end{center}
As in the subspace discovery problem, sample independently $E\sim\mathrm{Unif}(\mathrm{Gr}(p,d))$, and $a,b\sim \mathrm{Unif}([0,\alpha])$. Define $H=(1+a)P_E-\frac{p}{d}b P_{E^\perp}$.
The learner has access to the same $\nu$-intensity oracle which, when queried at~$w\in\Rbb^d$, returns a sample 
\[
Y(w)\sim \dotp{I+\nu H}{ww^\top} \mathrm{Exp}(1).
\]

The learner's task is to output a subspace $\widehat E$ of dimension $p$ such that
\begin{equation*}
    \|P_{\widehat E}-P_E\|_{\op} \leq \beta,
\end{equation*}
for some fixed precision parameter $\beta>0$, with sequential access to the $\nu$-intensity oracle.\footnotemark{}
We denote this problem as $\mathrm{CovEst}(d,p,\nu,\beta)$.
\end{quote}
\end{mdframed}
\end{figure}
\footnotetext{Here, we have the same remark (regarding smoothed complexity) as in the case of the subspace discovery problem.}

We reduce the subspace discovery problem to the covariance estimation problem below.

\begin{lemma}\label{lemma:reduction_to_case_k_tilde_d}
    There is a universal constant $c_n>0$ such that the following holds. Fix $k\in[p]$ and $\beta\in(0,1]$. Suppose that there is an algorithm for the subspace discovery problem $\mathrm{Disc}(d,p,\alpha,\nu,k)$ with $m$ oracle queries, which succeeds with probability at least $q$, and with probability at least $1-\eta$ it either abstains or outputs successful vectors. Then there is an algorithm for the covariance estimation problem $\mathrm{CovEst}(d,p,\nu,\beta)$ with at most $nm$ queries where $n\leq c_n\frac{p\log d}{\beta^2 kq}$, which succeeds with probability at least $2/3-n\eta$.
\end{lemma}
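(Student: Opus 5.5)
Run the discovery algorithm $n$ times independently. In each run the instance is presented under a random rotation that fixes every direction already found, so the run is forced to supply fresh, uniformly placed directions of $E$. We then aggregate the accepted outputs into an empirical second-moment matrix and read off $\widehat E$ as its top-$p$ eigenspace via \cref{lem:gap-perturbation}.

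\textbf{Step 1 (symmetrization).} Before each run, draw a Haar rotation $U$. Replace each query $w$ of the discovery algorithm by $U^\top w$ before passing it to the true oracle. Since $U^\top E\sim\Unif(\Gr(p,d))$ independently of $(a,b)$, every run faces a correctly distributed instance $U^\top E$, using only $m$ oracle calls. The run's output vectors $u_i$ are then mapped back to $Uu_i$, which are near $E$. By Haar invariance, conditionally on $E$ and on success, the law of the output $k$-frame is invariant under rotations preserving $E$ and $E^\perp$. This is exactly the setting of \cref{lem:rotated-projector-moments}. We apply it with $P$ the projector onto the returned $k$-frame, which has $\rho=\tr(P_{E^\perp}P)\le 5\alpha k$ by \eqref{eq:subspace-discovery-condition}. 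Its conditional mean is then
$$A=\frac{k-\rho}{p}P_E+\frac{\rho}{d-p}P_{E^\perp},$$
so the spectral gap is at least $(1-5\alpha)k/p-5\alpha k/(d-p)\gtrsim k/p$ for $\alpha=1/40$ and $p\le d/2$.

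\textbf{Step 2 (aggregation and concentration).} Let $S=\sum_{j\le n} \1[\text{run } j \text{ outputs}]\,\widetilde P_j$, and let $\widehat E$ be the top-$p$ eigenspace of $S$. The abstention pattern is awkward. By the second guarantee of the discovery algorithm, with probability at least $1-n\eta$ every non-abstaining run outputs valid vectors, and we work on that event. Conditionally on the outputting/abstaining pattern, the summands in $S$ are independent with the symmetric law from Step 1, but their common mean shrinks to (number of successes)$\cdot A$. The number of successful runs $N_s$ is at least $qn/2$ with probability at least $9/10$ by Chernoff, once $n\gtrsim 1/q$.

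Matrix Bernstein then applies with summand norm $\le 1$ and variance proxy $\preceq N_sA$ (\cref{lem:rotated-projector-moments}), where $\|A\|_{\op}\le k/p$. It gives
$$\Bigl\|S-N_sA\Bigr\|_{\op}\lesssim \sqrt{N_s\tfrac{k}{p}\log d}+\log d$$
with probability at least $9/10$. The gap is of order $N_s k/p$, so \cref{lem:gap-perturbation} yields
$$\|P_{\widehat E}-P_E\|_{\op}\lesssim \sqrt{\frac{p\log d}{N_s k}}+\frac{p\log d}{N_s k}\le \beta$$
as soon as $N_s\gtrsim p\log d/(\beta^2 k)$, i.e. $n\asymp p\log d/(\beta^2kq)$. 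A union bound over the failure events gives success probability at least $2/3-n\eta$.

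\textbf{Main obstacle.} The hardest part is Step 1's conditional-law claim. Conditioning on success and validity could in principle bias the run's output toward particular directions inside $E$. We rescue the argument by noting that success is an event invariant under the symmetry group, because the rotated instance law is invariant. We must also check that $\rho$ varies across runs without changing the block form of $A$; it does not, since the \cref{lem:rotated-projector-moments} bound holds pointwise in $\rho$. Averaging over $\rho$ therefore still gives block form with gap $\ge (1-10\alpha)k/p$, which is all \cref{lem:gap-perturbation} needs.
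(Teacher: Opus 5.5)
Your proposal is essentially the paper's proof. You Haar-symmetrize each of the $n$ independent runs so that, given $E$, the returned frame is invariant under $E$-preserving rotations. You sum the projectors of the outputting runs, get a block-scalar conditional mean with gap $\gtrsim n'k/p$ from \cref{lem:rotated-projector-moments}, and finish with matrix Bernstein, \cref{lem:gap-perturbation} and a union bound. Your opening remark about rotations that fix already-found directions does not describe what Step~1 actually does and should be dropped.

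One point that neither your write-up nor the paper's handles. All runs query the same oracle and therefore share the same $(a,b)$, so their success indicators are independent only conditionally on $(a,b)$, each with probability $q(a,b)$ whose average is at least $q$. Consequently your Chernoff claim that $N_s\ge qn/2$ with probability $9/10$ can fail when $q(a,b)$ is spread out. Restricting to $\{q(a,b)\ge q/2\}$, which has probability at least $q/2$, repairs the step. It only gives a success probability of order $q$ (minus $n\eta$) rather than $2/3-n\eta$, but this is still enough downstream once $\beta_0$ is taken small.
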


The main intuition for this result is that up to applying a uniform rotation, a subspace discovery algorithm approximately finds a uniformly distributed $k$-dimensional subspace of $E$. Repeating this procedure $\tilde O(p/k)$ times then recovers the full $r$-dimensional space $E$.

\begin{proof}
    Fix an algorithm $\texttt{alg}$ for the subspace discovery problem with $k\in[p]$, $m$ queries and success probability $q$. We start by making the output of the algorithm invariant by $E$-preserving rotations.
    Specifically, we consider the algorithm which first samples a uniform Haar rotation $R$ then emulates $\texttt{alg}$ up to the rotation $R$. Specifically, Whenever the procedure makes a query $w$ to the, we query the vector $Rw$ and return the observed scalar to the original algorithm. If the procedure outputs $z_1,\ldots,z_k$, the symmetrized procedure outputs $R^{-1}z_1,\ldots,R^{-1}z_k$. We denote by $\widetilde{\texttt{alg}}$ this procedure.

    By construction, $\widetilde{\texttt{alg}}$ emulates the run of $\texttt{alg}$ for the instance with the subspace $\widetilde E:=R(E)$, which is still distributed as a uniform $p$-dimensional subspace conditionally on $E$. Hence, the output of $\widetilde{\texttt{alg}}$ enjoys the same average-case guarantees as $\texttt{alg}$.

    Next, fix any $E$-preserving rotation $R_0$, that is $R_0(E)=E$. Further, since $R_0^{-1}(E)=E$, running $\texttt{alg}$ with the rotation $R$ or $RR_0^{-1}$ yields exactly the same stochastic output $z_1,\ldots,z_k$. Hence, when using rotation $R$ and $RR_0^{-1}$ respectively, the final output of $\widetilde{\texttt{alg}}$ is $R^{-1}(z_i)$ for $i\in[k]$ and $R_0R^{-1}(z_i)$ for $i\in[k]$ respectively. Since $RR_0^{-1}$ is still distributed as a uniform rotation, this shows that the final output of $\widetilde{\texttt{alg}}$ is stochastically invariant by $R_0$.

    \paragraph{Construction of the covariance estimation procedure.} We are now ready to construct a procedure for the covariance estimation problem. Specifically, we sequentially run $\widetilde{\texttt{alg}}$ for $n=\ceil{\tfrac{2^{10}C_1^2p \log d}{\beta^2 kq}}$ independent times, where $C_1\geq 1$ is a universal constant that will be specified later. We then focus on the runs that produced an output: we denote by $n'$ their number and denote by $Z_i:=[z_{i,1},\ldots,z_{i,k}]$ their outputs ($i\in[n']$). Finally, we run PCA on the combined covariance matrix
    \begin{equation*}
        \sum_{i\in[n']} Z_iZ_i^\top
    \end{equation*}
    and output the span of its top $r$ eigenvectors $\widehat E$. By construction, the procedure uses $\leq nm$ queries.

    \paragraph{Performance analysis.} Throughout, we will reason conditionally on $E$, since $\widetilde{\texttt{alg}}$ conditionally enjoys the same guarantees as $\texttt{alg}$. By the union bound and the guarantee of $\texttt{alg}$, under an event $\Ecal$ of probability at least $1-n\eta$, all outputs $Z_i=[z_{i,1},\ldots,z_{i,k}]$ for $i\in[n']$ are successful. This implies that under $\Ecal$ for all $i\in[n']$, one has
    \begin{equation*}
        \tr(P_{E^\perp}Z_iZ_i^\top) = \sum_{j\in[k]} \|P_{E^\perp}(z_{i,j})\|^2 \leq 5\alpha k.
    \end{equation*}
    Additionally, by the guarantee on $\texttt{alg}$, each independent run succeeds with probability at least $q$. Hence, $n'$ stochastically dominates a Binomial $Y\sim \mathrm{Binom}(n,q)$. In particular, on an event $\Fcal$ of probability at least $1-e^{-qn/8}$ one has $n'\geq qn/2$.

    From the rotation invariance of $\widetilde{\texttt{alg}}$ proved above, the outputs $Z_i$ for $i\in[n']$ are stochastically identical to $R_iZ_i$ for $i\in[n']$ where $R_i$ are independent uniform $E$-invariant rotations. Specifically, $R_i=R_{i,E}R_{i,E^\perp}$ where $R_{i,E}$ is a uniform rotation of $E$ and $R_{i,E^\perp}$ is an independent uniform rotation of $E^\perp$. Note that these rotations $R_i$ for $i\in[n']$ are without loss of generality independent of the events $\Ecal,\Fcal$ and all other random variables except $E$.

    We therefore reason conditionally on $\Hcal:=\sigma(n',Z_1,\ldots,Z_{n'})$ and we suppose that $\Ecal,\Fcal$ hold. Hence, in the next steps, all expectations are meant with respect to $R_i$ for $i\in[n']$ only. Denote $P_i:=(R_iZ_i)(R_iZ_i)^\top$ and $S:=\sum_{i\in[n']}P_i$. Then, by Lemma~\ref{lem:rotated-projector-moments}, we have
    \begin{equation*}
        \Ebb[S\mid \Hcal]=\sum_{i\in[n']} \Ebb[P_i\mid\Hcal] = aP_E + bP_{E^\perp},
    \end{equation*}
    where
    \begin{equation*}
        a = \frac{kn'}{p} -\frac{1}{p}\sum_{i\in[n']} \tr(P_{E^\perp}Z_iZ_i^\top) \quad \text{and} \quad b=\frac{1}{d-p}\sum_{i\in[n']} \tr(P_{E^\perp}Z_iZ_i^\top).
    \end{equation*}
    Hence, under $\Ecal\cap\Fcal$, the gap between the eigenvalues in $E$ and $E^\perp$ is precisely
    \begin{equation}\label{eq:gap_two_subspaces}
        a-b \geq \frac{kn'}{p} - \frac{5\alpha kn'}{p} - \frac{5\alpha kn'}{d-p}\geq \frac{kn'}{2p}=:\Delta,
    \end{equation}
    since $\alpha= 1/40$.
    Now note that conditionally on $\Hcal$, the random matrices $P_{F_i}-\Ebb[P_{F_i}\mid\Hcal]$ for $i\in[n']$ are independent and operator norm at most $1$. Also,
    \begin{equation*}
        \sum_{i=1}^{n'}\mathrm{Var}[P_i \mid\Hcal] \preceq \sum_{i=1}^{n'} \Ebb[P_i\mid\Hcal] \preceq \frac{kn'}{p} P_E + \frac{5\alpha kn'}{d-p} P_{E^\perp}\preceq \frac{kn'}{p}I. 
    \end{equation*}
    Hence, the matrix Bernstein inequality in Lemma~\ref{lem:tropp-bernstein} implies that with probability $1-1/8$ conditionally on $\Hcal$ one has for some universal constant $C_1$ (defined within \cref{lem:tropp-bernstein}):
    \begin{equation*}
        \norm{S - (aP_E+bP_{E^\perp}) }_{\op} \leq C_1\paren{\sqrt{\frac{kn'}{p}\log d} + \log d} \overset{(i)}{\leq} \frac{\beta\Delta}{6}.
    \end{equation*}
    In $(i)$ we used $\Fcal$ and the definition of $n$. We denote by $\Gcal$ this event. Then, under this event, \cref{lem:gap-perturbation} applied to $M=aP_E+bP_{E^\perp}$ shows that 
    \begin{equation*}
        \|P_E-P_F\|_{\op} \leq \beta.
    \end{equation*}
    Altogether, this shows that the procedure succeeds under $\Ecal\cap\Fcal\cap\Gcal$ which has probability at least $1-n\eta-e^{-qn/8}-1/8\geq 2/3-n\eta$. This ends the proof.
\end{proof}

\subsection{Lower bound for covariance estimation}
\label{sec:lb-cov}

In the last step of the proof we show a query lower bound for the covariance estimation problem, using ideas inspired from \cite{chen2023does}.

\begin{lemma}\label{lemma:query_lower_bound_subspace_estimation}
    There is a universal constant $\beta_0\in(0,1]$ such that the following holds. Fix $p\in[32\log d, \frac{d}{2}]$. For any $\nu\in[0,\frac{d}{p}]$, any algorithm for the covariance estimation problem $\mathrm{CovEst}(d,p,\nu,\beta_0)$ uses at least $\beta_0\frac{p^2 d}{\min(\nu,1)^2}$ queries or succeeds with probability at most $1/3$.
\end{lemma}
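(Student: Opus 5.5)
The plan is a local posterior-mass (equivalently, local Fano) argument, following the sketch in \cref{sec:lb_proof-sketch} and the strategy of \cite{chen2023does}. Fix an algorithm for $\mathrm{CovEst}(d,p,\nu,\beta_0)$ that makes $m$ adaptive queries, and let $\tau_m=(w_s,Y_s)_{s\le m}$ denote its transcript. Since $E\sim\Unif(\Gr(p,d))$, Bayes' rule writes the success probability as
\[
\Ebb\bigl[\pi_m(B_p(\widehat E,\beta_0))\bigr],
\]
where $\pi_m$ is the posterior of $E$ given $\tau_m$ (with $a,b$ marginalized out) and $\widehat E$ is $\tau_m$-measurable. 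It therefore suffices to show that, on average over transcripts, the posterior cannot put mass more than $1/3$ on any ball of radius $\beta_0$ unless $m\gtrsim p^2d/\min(\nu,1)^2$.

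\textbf{Step 1 (prior volume ratio).} By \cref{lemma:ratio_volumes_bound}, the prior mass of $B_p(F,2\beta)$ exceeds that of $B_p(F,\beta)$ by a factor at least $2^{p(d-p)}\ge 2^{pd/2}$, uniformly in the center $F$. Equivalently, one can pack $2^{\Omega(pd)}$ pairwise $\beta$-separated subspaces inside a ball of radius $2\beta$. This is the ``entropy budget'' that the queries must overcome.

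\textbf{Step 2 (information per query).} I would compare the likelihood of $\tau_m$ under $E$ with that under a perturbed subspace $E'$ drawn from a rotation-invariant distribution on the $2\beta$-sphere around $E$, for instance $E'=\exp(\beta K)E$ with $K$ a random skew matrix supported on the $E\times E^\perp$ block. A single observation $Y\sim\mu_E(w)\Exp(1)$ has
\[
\mathrm{KL}\bigl(\mu_E\Exp(1)\,\big\|\,\mu_{E'}\Exp(1)\bigr)\lesssim \frac{(\mu_E-\mu_{E'})^2}{\mu_E^2},
\qquad \mu_E(w)=\dotp{I+\nu H}{ww^\top}.
\]
For unit $w$ one has $\mu_E(w)\ge 1-\alpha\nu p/d\ge 1/2$, and $\mu_E(w)\lesssim \max(1,\nu)$. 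Moreover $\mu_E-\mu_{E'}=\nu(1+a)\,w^\top(P_E-P_{E'})w$ plus a term of relative order $p/d$ coming from the $bP_{E^\perp}$ part. Averaging over the random perturbation, the quantity $\Ebb_K[(w^\top(P_E-P_{E'})w)^2]$ is of order $\beta^2\|P_Ew\|^2\|P_{E^\perp}w\|^2$ times a dimensional factor. Because $E'$ is averaged isotropically, this bound is uniform over the adaptively chosen $w$. The target is the per-query bound $O\bigl(\beta^2\nu^2/(p\max(1,\nu)^2)\bigr)$ on the decrease of the log posterior-mass ratio between $B(E,\beta)$ and $B(E,2\beta)$. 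The chain rule for KL then handles adaptivity: the sum over $s\le m$ of conditional KLs, each bounded by the sup over $w$, gives a total of $O\bigl(m\beta^2\nu^2/(p\max(1,\nu)^2)\bigr)$. The role of the dithering variables $a,b$ is to smooth the likelihood so that this second-order KL bound is legitimate uniformly over the history; after marginalizing them, the likelihood ratios stay bounded.

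\textbf{Step 3 (conclusion).} By a change of measure (or a local Fano inequality over the packing from Step 1), success with probability greater than $1/3$ forces the cumulative information to be at least $\tfrac12\log\bigl(\Vol(B(2\beta_0))/\Vol(B(\beta_0))\bigr)-O(1)\gtrsim pd$. Combining with Step 2, this gives
\[
m\gtrsim \frac{p^2d\,\max(1,\nu)^2}{\beta_0^2\nu^2}\ \ge\ \beta_0\frac{p^2d}{\min(\nu,1)^2},
\]
where the last inequality holds because $\max(1,\nu)/\nu=1/\min(\nu,1)$ and $\beta_0\le 1$. The hypothesis $p\ge 32\log d$ is used to absorb the $O(1)$ and union-bound terms into $pd$. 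The hypothesis $\nu\le d/p$ keeps the $bP_{E^\perp}$ contribution to $\mu_E$ bounded, so that $\mu_E(w)$ stays in $[\tfrac12, O(\max(1,\nu))]$.

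\textbf{Main obstacle.} The hard part is Step 2, namely getting the sharp per-query factor $1/p$ rather than a cruder $1/d$ or $O(1)$ bound, uniformly over adaptively chosen $w$. I would first try to compute the fourth moment $\Ebb_K[(x^\top K y)^2]$ for a Gaussian off-diagonal perturbation $K$, normalized so that $\|P_E-P_{E'}\|_{\op}\asymp\beta$, and check that it yields $\beta^2/p$ after using $\|x\|^2+\|y\|^2=1$. If that computation only gives the weaker rate, I would fall back to a posterior-tracking argument in the style of \cite{chen2023does}, following the posterior on $E$ query by query.
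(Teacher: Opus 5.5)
The gap is in Step~2, which is where your argument needs the $1/p$. You claim that $\Ebb_K[(w^\top(P_E-P_{E'})w)^2]$ is of order $\beta^2\|P_Ew\|^2\|P_{E^\perp}w\|^2$ (times a dimensional factor). That estimate keeps only the first-order term $\beta[K,P_E]$ of $P_{E'}$, and the second-order term has nonzero mean. Take any law of $E'$ invariant under rotations fixing $E$ and $E^\perp$. By \cref{lem:rotated-projector-moments}, $\Ebb[P_{E'}]=(1-\rho)P_E+\frac{\rho p}{d-p}P_{E^\perp}$ with $\rho=\Ebb\tr(P_{E'}P_{E^\perp})/p$. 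Moreover $\rho\asymp\beta^2$, both for $E'=\exp(\beta K)E$ normalized so that $\|P_{E'}-P_E\|_{\op}\asymp\beta$ and for the uniform law on $B_p(E,2\beta)$. Hence for a unit $w\in E$,
\[
\Ebb\bigl[(w^\top(P_E-P_{E'})w)^2\bigr]\ge\rho^2\asymp\beta^4,
\]
even though $\|P_{E^\perp}w\|=0$. Your chain-rule step bounds each conditional KL by a supremum over $w$, and your formula $\mathrm{KL}(\mu_E\Exp(1)\,\|\,\mu_{E'}\Exp(1))$ treats $a,b$ as fixed. Under those conditions the best per-query bound is of order $\nu^2\beta^4/\max(1,\nu)^2$. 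This exceeds the target $\nu^2\beta^2/(p\max(1,\nu)^2)$ once $p\gtrsim\beta_0^{-2}$, so the conclusion degrades to $m\gtrsim pd/(\beta_0^4\min(\nu,1)^2)$, a factor $p$ short. If you instead marginalize $a,b$, the responses are no longer conditionally independent given $E$, and the per-query KL and chain-rule bound do not apply as written.

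The missing idea is to use the dithering variables to cancel this bias exactly; that is their actual role, not keeping likelihood ratios bounded. The paper pairs each $E\in B_p(E_0,2\beta)$ with $(a,b)=(a_\rho,b_\rho)$ solving \eqref{eq:definition_adjustments}, so that $\Ebb_F[H_{F,a,b}]=H_{E_0,a_0,b_0}$ exactly, as in \eqref{eq:identity_mean_ok}. After Jensen, the log-normalizer term is nonnegative by concavity of $\log$ and the first-order part of the linear term cancels, so only a genuine variance remains. The $1/p$ then comes from the within-$E_0$ fluctuation $s^4\Var_x[x^\top P_Ex]\le\frac{2}{p}\|P_{E_0}P_EP_{E_0}-cP_{E_0}\|_{\op}^2\lesssim\beta^2/p$, with $x$ uniform on $S^{d-1}\cap E_0$. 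The cross term you focus on is only $O(\beta^2/d)$.

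This reparametrization has costs you would also have to handle:
\begin{itemize}
\item $(a_0,b_0)$ must be restricted to $[20\beta,\alpha-20\beta]^2$ so the adjusted parameters stay in the prior's support, which costs $O(\beta/\alpha)$ in success probability.
\item One picks up the Jacobian of the map $(a_0,b_0)\mapsto(a_\rho,b_\rho)$.
\end{itemize}
Separately, getting $\max(1,\nu)^2$ in the denominator requires $1+\nu\|P_F(w)\|^2\gtrsim1+\nu\|P_{E_0}(w)\|^2$ (the paper's event $\Ecal$) together with $\nu\le d/p$. The bound $\mu_E(w)\ge1/2$ alone only gives $m\gtrsim p^2d/\nu^2$ when $\nu>1$.

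Your Step~3 is fine in spirit if done as a change of measure through the coupling $E'\sim\Unif(B_p(E,2\beta))$. It can replace the paper's integration of the posterior-ratio bound over $B_p(\tilde E_0,\beta)$, once the same $(a,b)$ reparametrization is carried along.
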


\begin{proof}
    We suppose that $\beta\leq \alpha/40$ throughout this proof.
    We denote by $w_1,\ldots,w_m$ the queries made by the algorithm and $Y_1,\ldots,Y_m$ the corresponding oracle responses for $m\geq 1$ queries. Without loss of generality, we may assume $w_1,\ldots,w_m\in S^{d-1}$ since the oracle gives the same information irrespective of the norm of the query (except if the query is $0$ in which case no information is released).
    We recall the notations $B_p(E_0,\nu)$ and $S_p(E_0,\nu)$ respectively for the ball and sphere within $\mathrm{Gr}(p,d)$ centered at $E_0$ and of radius $\nu$. For convenience, we will drop the indices $p$ throughout this proof. Note that $S(E_0,\nu)$ is invariant by any rotation preserving $E_0$ and $E_0^\perp$. Note that for any $E\in S(E_0,\nu)$ we have
    \begin{equation}\label{eq:relation_to_rho_variable}
        \tr(P_EP_{E_0^\perp}) = \tr((P_E-P_{E_0})P_{E_0^\perp}) \leq \|P_E-P_{E_0}\|_*\overset{(i)}{\leq} 2p\|P_E-P_{E_0}\|_{\op}\leq 2p\nu,
    \end{equation}
    where in $(i)$ we used the fact that $P_E-P_{E_0}$ has rank at most $2p$.

    Next, fix a subspace $E$ of dimension $p$ and denote $\rho:=\tr(P_EP_{E_0^\perp})/p\in[0,1]$. Let $R$ be a uniform rotation that preserves $E_0$ and $E_0^\perp$ and consider the random variable $F=RE$. From \cref{lem:rotated-projector-moments}, we have
    \begin{equation}
        \Ebb[P_{F}] = (1-\rho)P_{E_0} + \frac{\rho p}{d-p} P_{E_0^\perp}.
    \end{equation}
    For convenience, we introduce the notation $H_{E,a,b}:=(1+a)P_E - \frac{p}{d} bP_{E^\perp}=(1+a+\frac{p}{d}b)P_E-\frac{p}{d}bI$. Then, for any $a,b\in\Rbb$, the previous inequality implies
    \begin{equation}\label{eq:identity_mean_ok}
        \Ebb_F[H_{F,a,b}] = H_{E_0,a_0,b_0},
    \end{equation}
    where
    \begin{equation}\label{eq:definition_adjustments}
        a_0:=a+ \rho\paren{1+a+\frac{p}{d}b},\quad \text{and} \quad b_0:=b-\frac{d\rho}{d-p}\paren{1+a+\frac{p}{d}b}.
    \end{equation}
    In particular, since $S(E_0,\nu)$ is invariant by $E_0,E_0^\perp$-preserving rotations, \cref{eq:identity_mean_ok} also holds if $F\sim\mathrm{Unif}(S(E_0,\nu))$ for any $\nu>0$.

    We now fix $a_0,b_0\in[20\beta,\alpha-20\beta]$ and for any $\rho \in[0,4\beta]$ we denote by $a_\rho$ and $b_\rho$ the quantities $a$ and $b$ satisfying \cref{eq:definition_adjustments} for the parameters $a_0,b_0,\rho$. An important remark is that we still have $a_\rho,b_\rho\in[0,\alpha]$. Indeed, $\rho(1+2\alpha),\frac{d\rho}{d-p}(1+2\alpha)\leq 5\rho\leq 20\beta$ and as a result, using \cref{eq:definition_adjustments} we have $|a_\rho-a_0|,|b_\rho-b_0|\leq 20\beta$. More precisely, by continuity, for any $b\in[0,\alpha]$ we can check that there is $a(b)\in[a_0-20\beta,a_0]$ which satisfies the first equality of \cref{eq:definition_adjustments} (recall that $a_0-20\beta\geq 0$ by assumption). Note that $a(b)$ is also continuous for $b\in[0,\alpha]$. Using the continuity of $a(b)$ we can then check that there also exists $b\in[b_0,b_0+20\beta]$ (recall that $b_0+20\beta\leq \alpha$) for which $a(b)$ and $b$ satisfy both constraints of \cref{eq:definition_adjustments}, ending the proof of the claim $a_\rho,b_\rho\in[0,\alpha]$.

    Next, we introduce the random variables
    \begin{equation*}
        E\sim\mathrm{Unif}(B(E_0,2\beta))\quad \text{and}\quad  (a,b)=(a_\rho,b_\rho), \quad \text{where} \quad \rho:=\frac{\tr(P_EP_{E_0^\perp})}{p}.
    \end{equation*}
    Importantly, by \cref{eq:relation_to_rho_variable} we have $\rho\leq 4\beta$ and as a result, the previous paragraph implies that we always have $a,b\in[0,\alpha]$.
    We aim to study the posterior placed on the ball $B(E_0,2\beta)$ after the learner makes all queries. To do so, we denote by $p(E,\alpha_E,\alpha_{E^\perp}\mid w_{1:m},Y_{1:m})$ the posterior density distribution for these variables after making all queries and observing their response. Note that initially, by construction the distribution is uniform on $\mathrm{Gr}(p,d)\times[0,\alpha]^2$. 
    Therefore,
    \begin{equation}
        p(E,a,b\mid w_{1:m},Y_{1:m}) \propto \prod_{i\in[m]} p(Y_i\mid E,a,b,w_i) \overset{(i)}{\propto} \prod_{i\in[m]} \frac{\exp(-Y_i/(1 +\nu\dotp{H_{E,a,b}}{w_iw_i^\top}))}{1+\nu\dotp{H_{E,a,b}}{w_iw_i^\top}}\label{eq:posterior_calculation_new_proof}
    \end{equation}
    In $(i)$ we used $\|w_i\|=1$ and the definition of the response $Y_i$. Then, by Jensen's inequality,
\begin{align}
    &\log \Ebb_{E,a,b}\sqb{  \frac{ p(E,a,b\mid w_{1:m},Y_{1:m}) }{p(E_0,a_0,b_0\mid w_{1:m},Y_{1:m})}} \notag\\
    &\geq \Ebb_{E,a,b}\sqb{ \log \frac{ p(E,a,b\mid w_{1:m},Y_{1:m}) }{p(E_0,a_0,b_0\mid w_{1:m},Y_{1:m})}}\notag \\
    &\overset{(i)}{=}  \sum_{i\in[m]} Y_i\cdot \Ebb_{E,a,b}\sqb{\frac{1}{1+\nu\dotp{H_{E_0,a_0,b_0}}{w_iw_i^\top} }-\frac{1}{1+\nu\dotp{H_{E,a,b}}{w_iw_i^\top}}} \notag\\
        &\quad\, -\sum_{i\in[m]} \Ebb_{E,a,b}\sqb{\log\frac{1+\nu\dotp{H_{E,a,b}}{w_iw_i^\top}}{1+\nu\dotp{H_{E_0,a_0,b_0}}{w_iw_i^\top}}} \notag\\
    &\overset{(ii)}{\geq}  \sum_{i\in[m]} \frac{\nu Y_i}{1+\nu\dotp{H_{E_0,a_0,b_0}}{w_iw_i^\top}}\cdot \Ebb_{E,a,b}\sqb{\frac{\dotp{H_{E,a,b}-H_{E_0,a_0,b_0}}{w_iw_i^\top}}{1+\nu\dotp{H_{E,a,b}}{w_iw_i^\top} }} \notag\\
    &\overset{(iii)}{=} -\sum_{i\in[m]} \frac{\nu^2 Y_i}{(1+\nu\dotp{H_{E_0,a_0,b_0}}{w_iw_i^\top})^2}\cdot \Ebb_{E,a,b}\sqb{\frac{\dotp{H_{E,a,b}-H_{E_0,a_0,b_0}}{w_iw_i^\top}^2}{1+\nu\dotp{H_{E,a,b}}{w_iw_i^\top} }} \notag\\
    &\overset{(iv)}{\geq} -2\nu^2 \sum_{i\in[m]} \frac{Y_i}{1+\nu \dotp{H_{E_0,a_0,b_0}}{w_iw_i^\top}}\cdot \Ebb_{E,a,b}\sqb{\frac{\dotp{H_{E,a,b}-H_{E_0,a_0,b_0}}{w_iw_i^\top}^2}{(1+\nu \|P_E(w_i)\|^2)(1+\nu\|P_{E_0}(w_i)\|^2)}} . \label{eq:max_progress_made}
\end{align}
In $(i)$ we used \cref{eq:posterior_calculation_new_proof}.
In $(ii)$ we used the concavity of $\log$ and the fact that $\Ebb_F[H_{F,a,b}]=H_{E_0,a_0,b_0}$ by \cref{eq:identity_mean_ok} to fully remove the second sum. In $(iii)$, we again used \cref{eq:identity_mean_ok} and in $(iv)$ we used $(1-\nu\frac{p}{d}\alpha)^2\geq 1/2$.

\begin{lemma}\label{lemma:tighter_computations}
    Fix $d$ sufficiently large. With the definitions and assumptions detailed above, for any unit vector $w\in S^{d-1}$, we have
    \begin{equation*}
        \Ebb_{E,a,b}\sqb{\frac{\dotp{H_{E,a,b}-H_{E_0,a_0,b_0}}{ww^\top}^2}{(1+\nu \|P_E(w)\|^2)(1+\nu\|P_{E_0}(w)\|^2)}} \leq \frac{c_V \beta^2}{p\max(1,\nu)^2},
    \end{equation*}
    for some universal constant $c_V>0$.
\end{lemma}

For convenience, for any subspace $E_0$ and parameters $a_0,b_0\in[0,\alpha]$, we define
\begin{equation*}
    Z_{1:m}(E_0,a_0,b_0):= \frac{1}{m}\sum_{i\in[m]} \frac{Y_i}{1+\nu \dotp{H_{E_0,a_0,b_0}}{w_iw_i^\top}}.
\end{equation*}
Then, \cref{eq:max_progress_made} and \cref{lemma:tighter_computations} together show that
\begin{equation}\label{eq:lb_progress_final}
    \log \Ebb_{E,a,b}\sqb{  \frac{ p(E,a,b\mid w_{1:m},Y_{1:m}) }{p(E_0,a_0,b_0\mid w_{1:m},Y_{1:m})}} \geq -2c_V \frac{\beta^2\nu^2 m}{p\max(1,\nu)^2} Z_{1:m}(E_0,a_0,b_0).
\end{equation}

We recall that by construction we had $(a,b)=(a_\rho,b_\rho)$ where $\rho=\tr(P_EP_{E_0^\perp})/p=:\rho(E,E_0)$. That is, we can rewrite \cref{eq:definition_adjustments} as
\begin{equation*}
        \begin{bmatrix}
a_\rho\\
b_\rho
\end{bmatrix} := (I+\rho A)^{-1} 
\paren{\begin{bmatrix}
    a_0\\
    b_0
\end{bmatrix}
+\rho e_0},\quad \text{where} \quad A:=\begin{bmatrix}
    1 &\frac{p}{d}\\
    -\frac{d}{d-p} & -\frac{p}{d-p}
\end{bmatrix},\quad e_0:=\begin{bmatrix}
    -1\\
    \frac{d}{d-p}
\end{bmatrix}.
    \end{equation*}
For notational convenience we will write $a_{\rho,a_0,b_0}$ and $b_{\rho,a_0,b_0}$ for these quantities $a$ and $b$ to specify which parameters they are constructed with. We then integrate \cref{eq:lb_progress_final} over all
\begin{equation*}
    (E_0,a_0,b_0)\in \Rcal(\tilde E_0,\beta):= B(\tilde E_0,\beta) \cap\set{Z_{1:m}(E_0,a_0,b_0)\leq 5} \cap \{a_0,b_0\in [20\beta,\alpha-20\beta]\},
\end{equation*} 
which gives
\begin{align*}
    &\int_{(E_0,a_0,b_0)\in \Rcal(\tilde E_0,\beta)} p(E_0,a_0,b_0\mid w_{1:m},Y_{1:m})
    \\
    &\leq e^{\frac{10c_V\beta^2\nu^2 m}{p\max(1,\nu)^2}} \int_{(E_0,a_0,b_0)\in \Rcal(\tilde E_0,\beta)}  \int_{E\in B(E_0,2\beta)} \frac{p(E,a_{\rho(E,E_0),a_0,b_0},b_{\rho(E,E_0),a_0,b_0}\mid w_{1:m},Y_{1:m})}{\Vol(B(E_0,2\beta))}\\
    &= e^{\frac{10c_V\beta^2\nu^2 m}{p\max(1,\nu)^2}} \int_{E\in B(\tilde E_0,3\beta)} \int_{\substack{(E_0,a_0,b_0)\in \Rcal(\tilde E_0,\beta)\\ E_0\in B(E,2\beta)}} \frac{p(E,a_{\rho(E,E_0),a_0,b_0},b_{\rho(E,E_0),a_0,b_0}\mid w_{1:m},Y_{1:m})}{\Vol(B(\tilde E_0,2\beta))} \\
    &\overset{(i)}{\leq} e^{\frac{10c_V\beta^2\nu^2 m}{p\max(1,\nu)^2}} \int_{E\in B(\tilde E_0,3\beta)} \int_{\substack{E_0\in B(\tilde E_0,\beta)\\ a,b\in[0,\alpha]}} \frac{ |\det(I+\rho(E,E_0) A)|}{\Vol(B(\tilde E_0,2\beta))} p(E,a,b\mid w_{1:m},Y_{1:m})\\
    &\overset{(ii)}{\leq} e^{\frac{10c_V\beta^2\nu^2 m}{p\max(1,\nu)^2}} (1+20\beta)^d \frac{\Vol(B(\tilde E_0,\beta))}{\Vol(B(\tilde E_0,2\beta))} \int_{\substack{E\in B(\tilde E_0,3\beta)\\ a,b\in[0,\alpha]}} p(E,a,b\mid w_{1:m},Y_{1:m}) \\
    &\overset{(iii)}{\leq} \exp\paren{\frac{10c_V\beta^2\nu^2 m}{p\max(1,\nu)^2} + 20\beta d - p(d-p)\log 2} \leq \exp\paren{\frac{10c_V\beta^2\nu^2 m}{p\max(1,\nu)^2} -\frac{pd}{8}}.
\end{align*}
In $(i)$ we used the property that $a_{\rho,a_0,b_0},b_{\rho,a_0,b_0}\in[0,\alpha]$ when $\rho\leq 4\beta$ and $a_0,b_0\in[20\beta,\alpha-20\beta]$ as proved above. In $(ii)$ we used $\|I+ \rho(E,E_0) A\|_{\op}\leq 1+4\beta \|A\|_{\op}\leq 1+20\beta$. In $(iii)$ we used the fact that $\int_{E,a,b} p(E,a,b\mid w_{1:m},Y_{1:m})=1$ when integrating over the full support, and \cref{lemma:ratio_volumes_bound} to lower bound the ratio of the volumes $B(\tilde E_0,\beta)$.

We now go back to the estimation problem when $E$ is sampled as a Haar $p$-dimensional subspace. We denote by $\widehat E$ the output of the algorithm given the queries $w_{1:m}$ and responses $Y_{1:m}$. Then, the previous inequality shows that if $m\leq \frac{p^2 d}{160c_V\beta^2\min(\nu,1)^2}$, one has
\begin{align*}
    &\Pbb\sqb{\{\|P_{\widehat E}-P_E\|_{\op}\leq \beta\}\cap \{Z_{1:m}(E,a,b)\le 5\}} \\
    &\leq \Pbb[a_0\notin[20\beta,\alpha-20\beta]] + \Pbb[b_0\notin [20\beta,\alpha-20\beta]]  + e^{pd/16-pd/8}\\
    &\leq \frac{80\beta}{\alpha} + e^{-pd/16} = 2^{12}\beta + e^{-pd/16}.
\end{align*}

Next, recall that by definition, at each iteration $i\in[m]$, the variables $Y_i/(1+\nu\dotp{H_{E,a,b}}{w_iw_i^\top})$ are distributed as exponentials $\mathrm{Exp}(1)$ independent from the past history. Hence, standard Chernoff bounds imply
\begin{equation*}
    \Pbb[Z_{1:m}(E,a,b)>5] \leq e^{-2m}.
\end{equation*}
In summary, the algorithm succeeds with probability at most
\begin{equation*}
    \Pbb[\|P_{\widehat E}-P_E\|_{\op}\leq \beta] \leq 2^{12}\beta + e^{-pd/16} + e^{-2m}.
\end{equation*}
Taking $\beta$ sufficiently small and $d$ sufficiently large ends the proof.
\end{proof}

\begin{proof}[Proof of \cref{lemma:tighter_computations}]
    For convenience, we denote by $A(E)$ the quantity within the expectation which we aim to bound.
    Let $R$ be a uniform random rotation preserving $E_0,E_0^\perp$, that is, $R=UV$ where $U$ (resp. $V$) is a Haar random rotation of $E_0$ (resp. $E_0^\perp$). By construction, the distribution of $E$ is invariant by $R$, hence $E$ and $F:=RE$ share the same distribution. We also denote $s:=\|P_{E_0}(w)\|$, and define
    \begin{equation*}
        x:=\frac{1}{s}RP_{E_0}(w) = U\frac{P_{E_0}(w)}{\|P_{E_0}(w)\|} \quad \text{and} \quad y:=\frac{1}{\sqrt{1-s^2}}RP_{E_0^\perp}(w) = V\frac{P_{E_0^\perp}(w)}{\|P_{E_0^\perp}(w)\|},
    \end{equation*}
    where by default if $P_{E_0}(w)=0$, the normalized vector $\frac{P_{E_0}(w)}{\|P_{E_0}(w)\|}$ is arbitrary within $S^{d-1}\cap E_0$ and similarly for $y$.
    In particular, $x\sim\mathrm{Unif}(S^{d-1}\cap E_0)$ and $y\sim\mathrm{Unif}(S^{d-1}\cap E_0^\perp)$ are independent. In the following computations, all expectations are meant only with respect to the random rotation $R$, that is, conditionally on $E$. Then,
    \begin{align}
        &\dotp{H_{F,a,b}-H_{E_0,a_0,b_0}}{ww^\top}^2 \notag\\
        &\overset{(i)}{=} \paren{1+a+\frac{p}{d}b}^2 (\dotp{P_F}{ww^\top} - \Ebb[\dotp{P_F}{ww^\top}] )^2 \notag \\
        &\overset{(ii)}{\leq} 4 (s^4 (x^\top P_E x-\Ebb_x[x^\top P_E x])^2 + 4s^2(x^\top P_E y)^2  + (y^\top P_E y-\Ebb_y[y^\top P_E y])^2 ), \label{eq:upper_bound_numerator}
    \end{align}
    In $(i)$ we used \cref{eq:identity_mean_ok} and in $(ii)$ we used the identity $(u+v+w)^2\leq 3(u^2+v^2+w^2)$ and $a,b\in[0,\alpha]$. 
    
    We next lower bound the denominator. Note that if $\nu s^2\leq 16$ then we directly have
    \begin{equation}\label{eq:lower_bound_denominator}
        1+\nu\|P_F(w)\|^2 \geq 1 \geq \frac{1}{17}(1+\nu s^2).
    \end{equation}
    Otherwise, this implies $s^2 > \frac{16}{\nu} \geq \frac{16p}{d}$ since $\nu\in[0,\frac{d}{p}]$. We will then use the following event
    \begin{equation*}
        \Ecal:=\set{\|P_E(y)\|^2\leq  \frac{p}{d}}.
    \end{equation*}
    Then, under $\Ecal$ one has $\|P_E(y)\|\leq s/4$. Therefore,
    \begin{align*}
        \|P_F(w)\|^2 &= s^2x^\top P_Ex +2s\sqrt{1-s^2} x^\top P_E y + (1-s^2)y^\top P_E y \\
        &\geq s^2(1-\|P_E-P_{E_0}\|_{\op})^2 - 2s\|P_E (y)\| \\
        &\overset{(i)}{\geq} s(3s/4-2\|P_E(y)\|) \overset{(ii)}{\geq} \frac{s^2}{4}.
    \end{align*}
    In $(i)$ we used $\|P_E-P_{E_0}\|_{\op}\leq 2\beta\leq \alpha/10$ and in $(ii)$ we used $\Ecal$.
    
    In summary, under $\Ecal$, \cref{eq:lower_bound_denominator} always holds. Hence, together with the upper bound \cref{eq:upper_bound_numerator} we obtained
    \begin{align*}
        \Ebb_R \sqb{A(F) \1[\Ecal]}  \leq 80 \cdot \frac{s^4\Var_x[x^\top P_E x] + s^2\Ebb_{x,y}[(x^\top P_E y)^2] + \Var_y[y^\top P_E y]}{(1+\nu s^2)^2}.
    \end{align*}

We can further compute explicitly each term as follows
\begin{align*}
    \mathrm{Var}_x[x^\top P_E x] &= \frac{2\|P_{E_0}P_EP_{E_0}- P_{E_0}\tr(P_EP_{E_0})/p\|_F^2}{p(p+2)}  \\
    &\leq \frac{2}{p}\|P_{E_0}P_EP_{E_0}- P_{E_0}\tr(P_EP_{E_0})/p\|_{\op}^2\\ 
    &\overset{(i)}{\leq} \frac{2}{p}(\|P_E-P_{E_0}\|_{\op} + |1-\tr(P_EP_{E_0})/p|)^2 \leq \frac{8}{p}\|P_E-P_{E_0}\|_{\op}^2 \leq \frac{32\beta^2}{p},
\end{align*}
where in $(i)$ we used $\tr((P_{E_0}-P_E)P_{E_0})\leq p\|P_{E_0}-P_E\|_{\op}$.
Similarly, we have
\begin{align*}
    \mathrm{Var}_y[y^\top P_E y]&\leq\frac{2\|P_{E_0^\perp}P_EP_{E_0^\perp}\|_F^2}{(d-p)(d-p+2)} \leq \frac{8}{d^2}\|P_{E_0^\perp}(P_E-P_{E_0})P_{E_0^\perp}\|_F^2 \overset{(i)}{\leq}  \frac{32p}{d^2}\|P_E-P_{E_0}\|_{\op}^2 \leq  \frac{2^7\beta^2 p}{d^2},
\end{align*}
where in $(i)$ we noted that $P_{E_0^\perp}(P_E-P_{E_0})P_{E_0^\perp}$ has rank at most $2p$.
Finally,
\begin{align*}
    \Ebb_{x,y}[(x^\top P_E y)^2]&=\frac{\|P_{E_0}P_EP_{E_0^\perp}\|_F^2}{p(d-p)} \leq \frac{2}{pd}\|P_{E_0}(P_E-P_{E_0})P_{E_0^\perp}\|_F^2 \leq \frac{2^5\beta^2}{d}.
\end{align*}
Putting together the last four inequalities implies
\begin{align}
    \Ebb_R \sqb{A(F)}  &\leq 2^{12}\frac{\beta^2 p}{d^2} \cdot \frac{(1+ds^2/p)^2}{(1+\nu s^2)^2}+ \Ebb_R[\|H_{F,a,b}-H_{E_0,a_0,b_0}\|_{\op}^2 \1[\Ecal^c]] \notag\\
    &\leq 2^{14}\frac{\beta^2 p}{d^2}\paren{\frac{1+d/p}{1+\nu}}^2 + 20\beta^2 \Pbb[\Ecal^c], \label{eq:variance_estimate_bound}
\end{align}
where in the last inequality we used $(1+a+b)^2\leq 5/4$ and $\|P_F-P_{E_0}\|_{\op}\leq 2\beta$. 

We next upper bound the probability $\Pbb[\Ecal^c]$. Note that 
\[
\|P_E(y)\|^2 = y^\top P_{E_0^\perp}P_EP_{E_0^\perp} y\leq \|P_{E_0^\perp}P_EP_{E_0^\perp}\|_{\op} \|P_{\tilde E}(y)\|^2
\]
where $\tilde E$ is the span of the $p$ eigenvectors of $P_{E_0^\perp}P_EP_{E_0^\perp}$ which has rank at most $p$. Notably, since $y\sim\mathrm{Unif}(S^{d-1}\cap E_0^\perp)$, we have $\|P_{\tilde E}(y)\|^2\sim \mathrm{Beta}(\frac{p}{2},\frac{\dim(E_0^\perp)-p}{2})$. Also, $\|P_{E_0^\perp}P_EP_{E_0^\perp}\|_{\op}\leq \|P_E-P_{E_0}\|_{\op}\leq 2\beta$. Altogether, this shows that
$\|P_E(y)\|^2 \preceq_{st}  2\beta\cdot \mathrm{Beta}(\frac{p}{2},\frac{d-p}{2})$.
Therefore, using \cref{lem:beta-tail-local} and the fact that $\beta\leq\alpha/40$ and $p\geq 32\log d$, we have
\begin{equation*}
    \Pbb[\Ecal^c] \leq \Pbb_{Y\sim \mathrm{Beta}\paren{\frac{p}{2},\frac{d-p}{2}}}\sqb{Y>\frac{p}{2\beta d}}  \leq \frac{2}{d^2}.
\end{equation*}
Plugging this estimate within \cref{eq:variance_estimate_bound} gives the desired result:
\begin{equation*}
    \Ebb_R[A(F)] \leq \frac{c_V\beta^2}{p\max(1,\nu)^2},
\end{equation*}
for some universal constant $c_V\geq 1$. Taking the expectation over $E$ and recalling that $\Ebb[A(E)]=\Ebb_{E,R}[A(F)]$ ends the proof.
\end{proof}

\subsection{Completing the proof of Theorem~\ref{thm:lb-intro}}
\label{sec:lb-final}

We are now ready to prove our main regret lower bound for online bandit PCA against the constructed randomized instance.

\begin{theorem}\label{thm:lower_bound_online_PCA}
    There are universal constants $c_0,c_1,c_2>0$ such that the following holds. Fix $d$ sufficiently large, a sparsity integer $r\in[c_1\log d,d/2]$, and $T\in[c_1 d\log d, e^{d^2}]$. If $T<c_0^2r^2d/\log d$, we pose $\nu=\frac{d}{r}$ and $p=r$. Otherwise, we define
    \begin{equation*}
        \nu:=c_0\sqrt{\frac{d^3}{T\log d}}\quad \text{and} \quad p:= \floor{\frac{d}{\max(\nu,2)}}.
    \end{equation*}
    Then, these parameters satisfy $\nu\in[0,\frac{d}{p}]$ and $r\leq p\leq d/2$. Further, on the corresponding constructed randomized instance, any algorithm $alg$ satisfies
    \begin{equation*}
        \Ebb[\Reg_T(alg)] \geq c_2 \min\paren{\sqrt{\frac{r^2 dT}{\log d}},T}.
    \end{equation*}
\end{theorem}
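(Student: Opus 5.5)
We argue by contradiction, chaining the three reductions: Bandit PCA to subspace discovery (\cref{prop:regret-discovery}), subspace discovery to covariance estimation (\cref{lemma:reduction_to_case_k_tilde_d}), and the query lower bound for covariance estimation (\cref{lemma:query_lower_bound_subspace_estimation}). Fix $\alpha=1/40$ and $J_{\max}=\ceil{\alpha p/16}$, and run the adversary of \cref{alg:adversary} with the stated $(\nu,p)$. Suppose an algorithm $alg$ has expected regret at most $\frac{r\nu\alpha}{32d}T$ against this instance. We will show $\frac{r\nu\alpha}{d}T\gtrsim\min(r\sqrt{dT/\log d},T)$ in both regimes, which yields the claimed bound.

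\textbf{Parameter checks.} First verify $r\le p\le d/2$ and $\nu\le d/p$. In the regime $T<c_0^2r^2d/\log d$ this is immediate. Otherwise $\nu=c_0\sqrt{d^3/(T\log d)}\le d/r$, so $p=\floor{d/\max(\nu,2)}\ge r$ and $\nu p\le d$. Next check the hypotheses of \cref{prop:regret-discovery}: $J_{\max}\ge40$ follows from $p\ge r\ge c_1\log d$ with $c_1$ large. For $\nu>\nu_{\min}$, note $\nu_{\min}\asymp d\sqrt{L_T/(pT)}+dL_T/T$ with $L_T\lesssim\log d+\log\log T\lesssim\log d$, using $T\le e^{d^2}$. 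In the second regime $p\asymp d/\nu$, so $\nu^2\gtrsim \nu d L_T/T$ reduces to $\nu\gtrsim dL_T/T$, which holds because $\nu^2\asymp d^3/(T\log d)$ exceeds $d^2\log^2d/T^2$ for $T\ge c_1 d\log d$. In the first regime $p=r$ and $\nu=d/r$, so we need $d^2/r^2\gtrsim d^2\log d/(rT)$, which holds since $T\gtrsim r\log d$ there; the $dL_T/T$ term is handled the same way.

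\textbf{Chaining.} \cref{prop:regret-discovery} gives a discovery algorithm with $k=\ceil{J_{\max}/6}\asymp p$, success probability $q=1/4$, failure parameter $\eta$, and $m=T+O(p\log(p/\eta)/\min(\nu,1)^2)$ queries. Apply \cref{lemma:reduction_to_case_k_tilde_d} with $\beta=\beta_0$. It takes $n\lesssim\log d$ repetitions, since $p/k$ is constant, and succeeds with probability $2/3-n\eta>1/3$ once $\eta=1/(c\log d)$. \cref{lemma:query_lower_bound_subspace_estimation} then forces $nm\ge\beta_0p^2d/\min(\nu,1)^2$. Because $p\le d/2$ and $\log(p/\eta)\lesssim\log d$, the verification overhead $p\log d/\min(\nu,1)^2$ is negligible against $p^2d/(\log d\,\min(\nu,1)^2)$. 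We therefore get
\[
T\gtrsim \frac{p^2d}{\log d\,\min(\nu,1)^2}.
\]
One caveat: the statement of \cref{lemma:query_lower_bound_subspace_estimation} uses $\min(\nu,1)^2$ in the denominator. Its proof gives per-query information $\beta^2\nu^2/(p\max(1,\nu)^2)$, which yields the bound $p^2d\max(1,\nu)^2/\nu^2$. I would use that form, which is $\ge p^2d$.

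\textbf{Contradiction in each regime.} In the second regime ($\nu\ge2$ for large $d$), $p\asymp d/\nu$, so the inequality reads $T\gtrsim d^3/(\nu^2\log d)=T/c_0^2$. This is false for $c_0$ small. Hence the regret is at least $\frac{r\nu\alpha}{32d}T\asymp r\sqrt{dT/\log d}$. In the first regime, $p=r$ and $\nu=d/r\ge2$, so the requirement becomes $T\gtrsim r^2d/\log d$, which contradicts $T<c_0^2r^2d/\log d$. The regret is then at least $\frac{r\nu\alpha}{32d}T=\alpha T/32\gtrsim\min(r\sqrt{dT/\log d},T)$.

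\textbf{Main obstacle.} The delicate part is the constant bookkeeping. The constants $c_0$ and $c_1$ must be chosen so that three things hold simultaneously: $\nu>\nu_{\min}$, the verification overhead stays negligible, and the covariance lower bound contradicts $T$. The choice of $\eta\asymp1/\log d$ must also keep $n\eta$ small without inflating $\log(p/\eta)$. I would fix $\beta=\beta_0$ first, then $c_0$ small relative to $\beta_0$ and the reduction constants, and finally $c_1$ large.
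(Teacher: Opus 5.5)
Your proposal follows the paper's proof essentially step for step. It uses the same parameter checks and the same chain \cref{prop:regret-discovery} $\to$ \cref{lemma:reduction_to_case_k_tilde_d} $\to$ \cref{lemma:query_lower_bound_subspace_estimation}, with $\eta\asymp 1/\log d$ and $n\lesssim\log d$. It derives a contradiction in each regime and fixes the constants in the same order. You treat the verification overhead as a separately negligible term rather than bounding $m\le 2T$, but that is an inessential variation.

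One slip needs fixing: the second regime does \emph{not} force $\nu\ge 2$. Since $T$ ranges up to $e^{d^2}$, $\nu=c_0\sqrt{d^3/(T\log d)}$ drops below $2$ once $T>c_0^2d^3/(4\log d)$, and below $1$ for larger $T$. These $T$ do lie in the second regime, because $c_0^2r^2d/\log d\le c_0^2d^3/(4\log d)$. There one has $p=\lfloor d/2\rfloor$, not $p\asymp d/\nu$.

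Both your $\nu_{\min}$ check and your final contradiction should instead use $p\ge d/(4\max(\nu,1))$ together with $\min(\nu,1)\max(\nu,1)=\nu$. This gives $p^2d/\min(\nu,1)^2\gtrsim d^3/\nu^2=T\log d/c_0^2$ uniformly in $\nu$, which is exactly the paper's last display. For $\nu<2$, the $\nu_{\min}$ condition reduces to $c_0^2d^2\gtrsim L_T\log d$, which holds for large $d$. The paper avoids this sub-case by using only $p\ge r\ge c_1\log d$ in that check.

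Relatedly, your ``caveat'' is not an issue: $\nu^2/\max(1,\nu)^2=\min(\nu,1)^2$, so the lemma's statement and its proof agree. You should not weaken the bound to $p^2d$, however. In the $\nu<1$ sub-case, the factor $1/\min(\nu,1)^2=1/\nu^2$ is precisely what produces the contradiction.
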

\begin{proof}
    Let $c_0\in(0,1)$ be a constant whose value will be fixed later. We first check that the constraints $r\leq p\leq d/2$ and $\nu\in[0,\frac{d}{p}]$ for the constructed instance are met. These constraints are immediate if $T<c_0^2r^2d/\log d$ by construction since $p=r\leq d/2$.
    We now turn to the case $T\geq c_0^2r^2d/\log d$. By construction, we have $p\leq \frac{d}{\nu}$ or equivalently, $\nu\leq \frac{d}{p}$. 
    Next, by definition of $\nu$ and since $T\geq c_0^2r^2d/\log d$ we have $\nu\leq \frac{d}{r}$ hence $r\leq \frac{d}{\nu}$. We also have $r\leq \frac{d}{2}$ by assumption. Altogether this shows that $r\leq p$ since $r$ is also an integer. Finally, $p\leq d/2$ is immediate by construction.

    In the next step of the proof we check that the assumptions for applying 
    \cref{prop:regret-discovery} are met. First, we can take $c_1>0$ to be a sufficiently large constant so that $J_{\max}=\ceil{\alpha p/16}\geq \ceil{\alpha r/16}\geq 40$. Next, by assumption on $T$, we have $\log\log(eT)\leq 3\log d$ and in particular, $L_T\leq 4\log(ed)$ since $p\leq d$.

    Next, in the case when $T\geq c_0^2r^2d/\log d$, we can take $c_1\geq 2c_\nu \max(c_0,1/c_0)$ sufficiently large so that $p\geq r\geq c_1\log d$ and $L_T\leq 4\log(ed)$ imply for $d$ sufficiently large:
    \begin{equation*}
        \nu_{\min}(d,p,\alpha,T) \leq \frac{c_0d}{2\sqrt T} + \frac{ 4d \log(ed)}{T} \leq \frac{\nu}{2} + \frac{\nu}{2}=\nu.
    \end{equation*}
    Next, in the case when $T<c_0^2r^2d/\log d$ we can take $c_1$ sufficiently large (depending on $c_0$) so that $T\geq c_1d\log d$ and $L_T\leq 4\log(ed)$ imply $\nu_{\min}(d,p,\alpha,T) \leq \frac{d}{p}=\nu$. In summary, in all cases we have $J_{\max}\geq 40$ and $\nu\in[\nu_{\min}(d,p,\alpha,T),\frac{d}{p}]$. We are now ready to apply the various reductions. 

    Suppose that there exists an algorithm $alg$ that achieves expected regret at most $\frac{\alpha r\nu}{32d}T$ on the constructed randomized instance, then \cref{prop:regret-discovery} shows that there is an algorithm for the subspace discovery problem with $k=\ceil{J_{\max}/6}=\Theta(p)$ which (1) uses $m:=T+c_Q\frac{p\log (p\log d)}{\min(\nu,1)^2}$ queries for some universal constant $c_Q>0$, (2) succeeds with probability at least $1/4$, and (3) either abstains or outputs successful vectors with probability at least $1-o(1/\log d)$. Note that if $T\geq c_0^2r^2d/\log d$ we have $\frac{p\log (p\log d)}{\min(\nu,1)^2}\leq\max(c_0^2\frac{\log d}{d^2}T,2d\log d)$ hence $m\leq 2T$ for $d$ and $c_1$ sufficiently large constants since $r\geq c_1\log d$ ($c_1$ may depend on $c_0$). Next if $T<c_0^2r^2d/\log d$ we have $\frac{p\log (p\log d)}{\min(\nu,1)^2} = p\log (p\log d) \leq 2d\log d$. Hence choosing $c_1\geq 4$, in all cases we have $m\leq 2T$.
    
    In turn, \cref{lemma:reduction_to_case_k_tilde_d} shows that there is an algorithm for the covariance estimation problem with $\beta=\beta_0$ that uses at most $m':=c_3T\log d$ queries and succeeds with probability at least $2/3-o(1)\geq 1/2$, for some universal constant $c_3>0$ (independent of $c_0$). Then, \cref{lemma:query_lower_bound_subspace_estimation} gives
    \begin{equation*}
        c_3T\log d=m'\geq \beta_0\frac{p^2 d}{\min(\nu,1)^2} \geq \beta_0 \frac{d^3}{8\min(\nu,1)^2\max(\nu,1)^2} =\beta_0 \frac{ d^3}{8\nu^2} = \frac{\beta_0}{8c_0^2}T\log d.
    \end{equation*}
    We then choose $c_0<\sqrt{\beta_0/(8c_3)}$ to reach a contradiction. This shows that any algorithm $alg$ suffers regret at least $\frac{\alpha r\nu}{32d} T$ on this randomized instance. This corresponds to $\Omega(\sqrt{\frac{r^2d}{\log d}T})$ regret when $T\geq c_0^2r^2d/\log d$ and $\Omega(T)$ regret otherwise, which ends the proof.
\end{proof}

To obtain the desired regret lower bounds for rank-$r$ online bandit PCA from \cref{thm:lb-intro}, it suffices to rescale the previous randomized instance.

\begin{proof}[Proof of \cref{thm:lb-intro}]
    Fix $r\in[d]$ and $T\in[c_1 d\log d, e^{d^2}]$ as in \cref{thm:lower_bound_online_PCA}. We suppose $r\geq c_1\log d$ for now and define $p:=\min(r,d/2)$. We consider the randomized instance from \cref{thm:lower_bound_online_PCA} for these parameters $r$ and $p$ and horizon $T$. By 
    \cref{lemma:bound_operator_norm_overall}, the event
    \begin{equation*}
        \Ecal:=\set{\max_{t\in[T]}\|G_t\|_{\op}\leq 3c_G \log (eT)}
    \end{equation*}
    has probability at least $1-T^{-2}$. We then rescale the instance by a corresponding factor: if at time $t$ the instance selected $G_t$, we select $\widetilde G_t:=G_t/(3c_G\log (eT))$ if $\|G_t\|_{\op}\leq 3c_G\log (eT)$ and we select $\widetilde G_t:=0$ otherwise. 
    By construction, for all $t\in[T]$ we have $\|\widetilde G_t\|_{\op}\leq 1$. Also, by construction, we always have
    \begin{equation*}
        \mathrm{rank}(\widetilde G_t)\leq \mathrm{rank}(G_t) \leq r.
    \end{equation*}
    Last, under $\Ecal$, this instance exactly coincides with rescaled gain matrices $G_t$ and hence achieves the same performance under this event (note that the returned losses are also simply their rescaling by the same factor). In the following, we use indices $\widetilde G_{1:T}$ or $G_{1:T}$ to emphasize whether we consider the regret of the algorithm when using the gain matrices $\widetilde G_{1:T}$ or $G_{1:T}$. Then,
    \begin{align*}
        \abs{\Ebb_{\widetilde G_{1:T}}[\Reg_T(alg)] - \frac{\Ebb_{ G_{1:T}}[\Reg_T(alg)]}{3c_G\log (eT)}  } &\leq \Ebb\sqb{\1[\Ecal^c]\sum_{t\in[T]}(\|G_t\|_{\op}+\|\widetilde G_t\|_{\op})}\\
        &\leq 2T\Ebb\sqb{ \max_{t\in[T]}\|G_t\|_{\op} \cdot \1[\max_{t\in[T]}\|G_t\|_{\op}>3c_G\log (eT)]} \\
        &\overset{(i)}{\leq} 20c_G\frac{\log (eT)}{T}.
    \end{align*}
    In $(i)$ we used \cref{lemma:bound_operator_norm_overall}. Together with the lower bound on $\Ebb_{ G_{1:T}}[\Reg_T(alg)]$ from \cref{thm:lower_bound_online_PCA} this proves the desired regret lower bound $\Omega(\frac{1}{\log (eT)}\min(\sqrt{\frac{r^2d T}{\log d}},T))$ on the gain matrices $\widetilde G_{1:T}$.

    \paragraph{Edge cases $T\geq e^{d^2}$ or $T\leq c_1d\log d$ or $r\leq c_1\log d$.} In these cases, we can directly use the naive lower bound $\Omega(\min(\sqrt{dT},T))$ which holds even for multi-armed bandits, equivalently, when the matrices $G_t$ have fixed known eigenvectors. Under the corresponding randomized instance we therefore have
    \begin{equation*}
        \Ebb_{G_{1:T}}[\Reg_T(alg)] \gtrsim \min(\sqrt{dT},T) \gtrsim \frac{\min(\sqrt{r^2dT/\log d},T)}{\log (eT)},
    \end{equation*}
    where in the last inequality we used (1) $\log T\geq d^2$ in the case $T\geq e^{d^2}$, (2) $r\leq c_1\log d$ in that case, or (3) $\min(\sqrt{dT},T)\gtrsim T$ in the case $T\leq c_1 d\log d$. Note that in the classical mutli-armed bandit lower bound instance, the rewards can be made $1$-sparse, which corresponds to all gain matrices having rank $1$ in the online bandit PCA formulation. Hence, the above regret lower bound implies the desired result.
\end{proof}

\section{Concentration inequalities}
\label{sec:concentration_inequalities}

We will use the following tail bound for 
the beta distribution.

\begin{lemma}[Beta tail bound]
\label{lem:beta-tail-local}
Let $Y\sim\Beta(a,b)$ with $a,b\ge1$.  There is a universal constant $c>0$ such that, for any $t\geq 4$,
\begin{equation}
\label{eq:deviation_beta_random_var}
        \Pbb\sqb{Y\ge \frac{a}{a+b}t}
        \le e^{-b/8} + e^{-a(t-2)/8}.
\end{equation}
In particular, 
if $Y\sim\Beta(\frac{p}{2},\frac{d-p}{2})$,~$p\ge32\log d$, and $\beta\le 1/16$, then
$
        \Pbb\{Y>\frac{p}{4\beta d}\}\le 2d^{-2}
$
for $d$ sufficiently large.
\end{lemma}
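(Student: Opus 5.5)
We will prove the first bound by splitting $Y$ into its two Gamma components: $Y = X_a/(X_a+X_b)$ with $X_a\sim\GammaDist(a,1)$ and $X_b\sim\GammaDist(b,1)$ independent. On the event $\{X_b\ge b/2\}\cap\{X_b + X_a \ge (a+b)/2\}$, which for the purpose of an upper bound can be replaced by the simpler event $\{X_a+X_b\ge (a+b)/2\}$, we have $Y\le 2X_a/(a+b)$. Hence
\[
\Pbb\sqb{Y\ge \tfrac{a}{a+b}t}\le \Pbb\sqb{X_a+X_b<\tfrac{a+b}{2}}+\Pbb\sqb{X_a\ge \tfrac{at}{2}}.
\]
A cleaner version uses $X_a+X_b\ge X_b$, so that
\[
\Pbb\sqb{Y\ge \tfrac{a}{a+b}t}\le \Pbb\sqb{X_b<\tfrac{b}{2}}+\Pbb\sqb{X_a\ge \tfrac{at}{2}\cdot\tfrac{b}{a+b}}.
\]
However, this loses a factor when $a\gtrsim b$, so we keep the first split with the sum. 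Note that $X_a+X_b\sim\GammaDist(a+b,1)$.

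Both remaining terms are standard Gamma Chernoff bounds. For the lower tail, $\Pbb[\GammaDist(n,1)<n/2]\le \exp(-n(\tfrac12-\log 2\cdot 1+ \dots))$. Concretely, $\psi(\lambda)=-n\log(1+\lambda)+\lambda n/2$ is minimized at $\lambda=1$, giving $\exp(-n(\log 2-\tfrac12))\le e^{-n/8}\le e^{-b/8}$ with $n=a+b\ge b$. For the upper tail we have $\Pbb[X_a\ge as]\le \exp(-a(s-1-\log s))$. With $s=t/2\ge 2$, we use $s-1-\log s\ge (s-1)/4\ge (t-2)/8$, which holds for $s\ge 2$ since $\log s\le \tfrac34(s-1)$ there. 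This gives $e^{-a(t-2)/8}$, and the main display follows. The only mildly delicate point is checking these elementary numeric inequalities; the constant $1/8$ leaves comfortable slack.

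For the corollary, we apply the bound with $a=p/2$, $b=(d-p)/2$, so that $a/(a+b)=p/d$ and the threshold $\tfrac{p}{4\beta d}$ corresponds to $t=1/(4\beta)\ge 4$ because $\beta\le 1/16$. The first term is $e^{-(d-p)/16}$. Since the lemma is invoked with $p\le d/2$, this is at most $e^{-d/32}\le d^{-2}$ for $d$ large. The second term is $e^{-p(t-2)/16}$ with $t-2\ge 2$, so it is at most $e^{-p/8}\le e^{-4\log d}\le d^{-2}$ using $p\ge 32\log d$. Summing gives $2d^{-2}$. If $p>d/2$ is allowed we would need the first term separately, but in all uses $p\le d/2$, and we will state this assumption explicitly.
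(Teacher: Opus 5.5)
Your proof is correct and is essentially the paper's: write $Y$ as a Gamma ratio, bound $\Pbb[Y\ge \frac{at}{a+b}]$ by a lower-tail event for the denominator plus the event $\{X_a\ge at/2\}$, and finish with Gamma Chernoff bounds. The numerics match the paper's ($\log 2-\tfrac12\ge\tfrac18$, and $s-1-\log s\ge (s-1)/4$ for $s\ge 2$).

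The one difference is the denominator event. You lower-bound $X_a+X_b\sim\GammaDist(a+b,1)$. The paper lower-bounds $X_b$ alone and uses monotonicity of $x/(x+z)$: if $X_a<at/2$ and $X_b>b/2$, then $Y<\frac{at}{at+b}\le\frac{at}{a+b}$. Your split gives the slightly stronger first term $e^{-(a+b)/8}$.

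That stronger term means you should not add the hypothesis $p\le d/2$ to the corollary. With $a+b=d/2$ it equals $e^{-d/16}\le d^{-2}$ for every $p$, so the statement holds exactly as written. Alternatively, if $p\ge d/4$ then $\frac{p}{4\beta d}\ge\frac{4p}{d}\ge 1$, so the probability is zero.
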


\begin{proof}
Represent $Y=X/(X+Z)$ with independent $X\sim\GammaDist(a,1)$ and $Z\sim\GammaDist(b,1)$.  If $Y\ge u$, then either $X\ge at/2$ or $Z\le b/2$, and~\eqref{eq:deviation_beta_random_var} follows by the standard Chernoff bound for~$\GammaDist(n,1)$. The specialization follows by plugging in~$a=b=d/2$ and the bounds on~$d,\beta$.
\end{proof}

The following lemma is a version of the matrix Bernstein inequality (see~\cite[Theorem 1.6]{tropp2012user}).

\begin{lemma}
\label{lem:tropp-bernstein}
Let $X_1,\ldots,X_N \in \mathbb{S}^{d}_+$ be independent, mean-zero, such that
$
        \|\sum_{i \in [N]} \E[X_i^2]\|_{\op}\le \sigma^2
$
and~$\|X_i\|_{\op}\le L$ a.s.
Then there is a universal constant~$C > 0$ such that
\[
        \Pr\Bigg\{
        \Bigg\|\sum_{i \in [N]} X_i\Bigg\|_{\op}
        \ge
        C\left(\sqrt{\sigma^2(u+\log d)}+L(u+\log d)\right)
        \Bigg\}
        \le e^{-u} \qquad \forall u \ge 0.
\]
\end{lemma}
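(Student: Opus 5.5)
We handle the matrix Bernstein statement by reducing it to Tropp's Theorem 1.6 in \cite{tropp2012user}. That theorem states that for independent, mean-zero, self-adjoint $d\times d$ random matrices with $\|X_i\|_{\op}\le L$ almost surely and variance parameter $\sigma^2=\|\sum_i\E[X_i^2]\|_{\op}$, we have for every $s\ge 0$
\[
\Pr\Bigg\{\Bigg\|\sum_{i\in[N]}X_i\Bigg\|_{\op}\ge s\Bigg\}\le 2d\exp\paren{\frac{-s^2/2}{\sigma^2+Ls/3}}.
\]
(The hypothesis "$\mathbb{S}^d_+$" in the statement cannot literally mean PSD, since mean-zero PSD matrices vanish. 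We read it as symmetric, which is what the application in \cref{lemma:reduction_to_case_k_tilde_d} uses.) The plan is to invert this tail bound at level $e^{-u}$ and absorb the factor $2d$ into the logarithm.

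\textbf{Step 1: choice of $s$.} Fix $u\ge0$ and set $v:=u+\log(2d)$, so that it suffices to make the exponent at least $v$. Take
\[
s=\sqrt{2\sigma^2 v}+\tfrac{2}{3}Lv\cdot 2 .
\]

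\textbf{Step 2: verify $s^2/2\ge v(\sigma^2+Ls/3)$.} Split into two cases. If $\sigma^2\ge Ls/3$, the right-hand side is at most $2v\sigma^2$, and $s^2/2\ge 2\sigma^2 v$ follows from $s\ge 2\sqrt{\sigma^2 v}$ after slightly enlarging the first constant. Otherwise the right-hand side is at most $2vLs/3$, and $s^2/2\ge 2vLs/3$ holds because $s\ge 4Lv/3$. Enlarging the constants slightly covers both cases, so the probability is at most $2d\,e^{-v}=e^{-u}$.

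\textbf{Step 3: absorb the constants.} Since $\log(2d)\le 2\log d$ for $d\ge2$, we get $v\le 2(u+\log d)$, which yields the stated form with a universal constant $C$. For $d=1$ the $\log d$ term vanishes, but $\log 2$ can be absorbed provided $u+\log d$ is bounded below. Alternatively, one can assume $d\ge2$ by embedding into dimension $2$, which only changes the constant.

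None of these steps is a real obstacle; the proof is routine. The only point requiring care is the bookkeeping of the case split in Step 2 together with the $d=1$ edge case in Step 3.
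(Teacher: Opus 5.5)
Your proposal is correct and is essentially the paper's approach. The paper gives no separate proof: it simply cites Tropp's Theorem~1.6 (self-adjoint case, prefactor $2d$) and leaves the inversion at level $e^{-u}$ implicit. Your reading of $\mathbb{S}^d_+$ as ``symmetric'' is also the intended one, since the lemma is applied to centered projectors $P_i-\E[P_i\mid\Hcal]$.

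One correction to your Step~3: the embedding trick for $d=1$ does not work. Embedding into dimension~$2$ replaces $\log d=0$ by $\log 2$ in the threshold, and that additive term cannot be absorbed into $C$ as $u\to0$. In fact the bound as stated is false for $d=1$ and small $u>0$. Take $N=1$, $X_1=\pm1$ with equal probability and $\sigma=L=1$; once $C(\sqrt{u}+u)\le1$, the left-hand side equals $1>e^{-u}$. So the lemma genuinely needs $d\ge2$, or $u$ bounded below, which your other remark correctly identifies. This is harmless for the paper, which only uses the lemma with large $d$ and $u=\log 8$.
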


The next lemma gives a Bernstein-type martingale bound, corresponding to our situation.

\begin{lemma}[Time-uniform Bernstein bound]
\label{lem:time-uniform-exp}
Let $(\mathcal F_t)_{t\ge0}$ be a filtration.  For $t \in [T]$, let
$
        X_t=\mu_t Z_t,
$
where $\mu_t$ is $\mathcal F_{t-1}$-measurable, $\mu_t \in [0,\mu_\star]$, and $Z_t | \mathcal F_{t-1} \sim  \Exp(1)$. 
Fix a positive integer $K$ and let~$\tau_0,\tau_1,\ldots,\tau_K$ be stopping times with values in $\{0\} \cup [T]$ 
and for $j \in \{0\} \cup [K]$ and $n\le T-\tau_j$, set
\[
        S_{j,n}:=\sum_{t=\tau_j+1}^{\tau_j+n}(X_t-\mu_t).
\]
Then there is a universal constant $C>0$ such that, for every $\delta\in(0,1)$, with probability at least~$1-\delta$ the following inequality holds simultaneously over all $0 \le j \le K$ and~$1\le n\le T-\tau_j$:
\begin{equation}
\label{eq:time-uniform-exp}
        \frac{|S_{j,n}|}{n}
        \le
        C\mu_\star\bigg(\sqrt{\frac{L}{n}}+\frac{L}{n}\bigg),\quad \text{where} \quad L:=\log\frac{eK}{\delta}+\log\log(eT).
\end{equation}
\end{lemma}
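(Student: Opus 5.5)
The approach is to combine a Bernstein-type supermartingale with a peeling argument over dyadic window lengths, and then take a union bound over the $K+1$ stopping times. First fix one $j$. Restart the clock at $\tau_j$, i.e. consider the filtration $\mathcal G_n := \mathcal F_{\tau_j+n}$. Since $\tau_j$ is a stopping time, the shifted increments $D_n := X_{\tau_j+n}-\mu_{\tau_j+n}$ (set to $0$ once $\tau_j+n>T$) form a martingale difference sequence with respect to $(\mathcal G_n)$. Conditionally on $\mathcal G_{n-1}$, the variable $D_n$ is $\mu(Z-1)$ with $Z\sim\Exp(1)$ and $\mu\le\mu_\star$. Its moment generating function is explicit:
\[
\log \E[e^{\lambda D_n}\mid \mathcal G_{n-1}] = -\lambda\mu-\log(1-\lambda\mu)\le \lambda^2\mu_\star^2 \quad\text{for } |\lambda|\le 1/(2\mu_\star).
\]
This bound holds for both signs of $\lambda$. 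Consequently $M_n(\lambda):=\exp(\lambda S_{j,n}-\lambda^2\mu_\star^2 n)$ is a nonnegative supermartingale with $M_0=1$.

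Next, peel over epochs $n\in[2^k,2^{k+1})$ for $k=0,\dots,\lfloor\log_2 T\rfloor$; there are at most $O(\log(eT))$ of them. For each $k$, use Ville's maximal inequality (or Doob's inequality applied to $M_n$ at the stopping index) with a fixed $\lambda_k=\min\bigl(\sqrt{L/(2^k\mu_\star^2)},\,1/(2\mu_\star)\bigr)$ and with both signs. This gives, with probability at least $1-2\delta'$, that for all $n<2^{k+1}$
\[
|S_{j,n}|\le \lambda_k\mu_\star^2 2^{k+1}+\log(1/\delta')/\lambda_k .
\]
Now choose $\delta'=\delta/(2(K+1)(\log_2(eT)+1))$, so that $\log(1/\delta')\lesssim L$ with $L=\log(eK/\delta)+\log\log(eT)$. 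For $n\in[2^k,2^{k+1})$ the right-hand side is at most $C\mu_\star(\sqrt{nL}+L)$. When $\lambda_k$ takes the first branch, each term is $\lesssim \mu_\star\sqrt{2^kL}$. When it takes the capped branch, $L\ge 2^k/4$, and the bound becomes $\lesssim\mu_\star(2^k+L)\lesssim\mu_\star L$. Dividing by $n$ gives \eqref{eq:time-uniform-exp}.

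Finally, take a union bound over $j\in\{0,\dots,K\}$ and over the $O(\log(eT))$ dyadic blocks. This accounts for the factor $(K+1)\log(eT)$ absorbed into $L$ and yields overall probability at least $1-\delta$.

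The main subtlety is the randomness of $\tau_j$. The plan is to apply the optional-stopping and shift argument cleanly, that is, to verify that the shifted process is still adapted and that the conditional law of $Z_{\tau_j+n}$ remains $\Exp(1)$. This follows from the strong Markov-type property of the conditionally specified sequence: on $\{\tau_j=s\}\in\mathcal F_s$, the later $Z_t$ keep their conditional laws. The cap $|\lambda|\le 1/(2\mu_\star)$, forced by the heavy exponential tail, is exactly what produces the linear $L/n$ term.
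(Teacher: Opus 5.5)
Your proposal is correct and follows essentially the same route as the paper: the explicit MGF bound for $\mu_t(Z_t-1)$ with $|\lambda|\le (2\mu_\star)^{-1}$, an exponential supermartingale restarted at each stopping time $\tau_j$, Ville's inequality on dyadic blocks $2^k\le n<2^{k+1}$, and a union bound over $j$, the blocks, and both signs. The differences are cosmetic: you handle short windows by capping $\lambda_k$ at $(2\mu_\star)^{-1}$ inside the same Ville argument, whereas the paper treats the case $x_q>2^q/8$ separately, and you use one per-block level $\delta'$ instead of the paper's $q$-dependent levels $x_q$.
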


\begin{proof}
For all~$|\lambda|\le (2\mu_\star)^{-1}$, the conditional moment generating function of $D_t:=X_t-\mu_t$ satisfies
\[
\begin{aligned}
        \E\bigl[\exp(\lambda D_t)\mid \mathcal F_{t-1}\bigr]
        &=\frac{\exp(-\lambda\mu_t)}{1-\lambda\mu_t}  
        \le \exp(2\lambda^2\mu_t^2)
        \le \exp(2\lambda^2\mu_\star^2).
\end{aligned}
\]
This remains true if we replace~$\lambda$ by~$-\lambda$.  Hence, for each fixed sign $\sigma\in\{-1,1\}$ and admissible $\lambda$,
\[
        M_n^{\sigma,\lambda}
        :=\exp\bigg(
        \sigma\lambda\sum_{t=1}^nD_t
        -2\lambda^2\mu_\star^2 n
        \bigg)
\]
is a nonnegative supermartingale.  
By the optional stopping theorem for supermartingales (for example,~Theorem 28 in~\cite{dellacherie2011probabilities}), for every stopping time $\tau_j$, the restarted process
\[
        M_{j,n}^{\sigma,\lambda}
        :=\exp\left(
        \sigma\lambda S_{j,n}
        -2\lambda^2\mu_\star^2 n
        \right),
        \qquad n\ge0,
\]
is a nonnegative supermartingale (initialized from $1$).
Now, fix $j$ and a dyadic block $2^q\le n<2^{q+1}$, where $q=\{0\} \cup [Q]$ with $Q:=\lceil\log_2(eT)\rceil$.  Let
\[
        x_q:=\log \left(\frac{4(K+1)(q+2)^2}{\delta}\right).
\]
First let~$x_q\le 2^q/8$.  Taking
$
        \lambda_q:=\frac{1}{4\mu_\star}\sqrt{\frac{x_q}{2^q}}
        \le \frac{1}{2\mu_\star}
$
and applying Ville's inequality to $M_{j,n}^{\sigma,\lambda_q}$, we get
\begin{align}
\label{eq:Ville-1}
\Pp\left\{
        \sup_{2^q\le n<2^{q+1}}\sigma S_{j,n}
        >16\mu_\star\sqrt{2^q x_q}
        \right\}                                      
\le 
        \exp\left(
        -16\lambda_q \mu_\star\sqrt{2^q x_q}
        +2\lambda_q^2\mu_\star^2 2^{q+1}
        \right)
        \le e^{-x_q} .
\end{align}
If $x_q>2^q/8$, we instead use the deterministic bound
$
        \sup_{2^q\le n<2^{q+1}}|S_{j,n}|
        \le \sum_{t=\tau_j+1}^{\tau_j+2^{q+1}}(X_t+\mu_t)
$
together with the exponential-supermartingale argument at the fixed value
$\lambda=(4\mu_\star)^{-1}$.  
This gives
\begin{equation}
\label{eq:Ville-2}
        \Pp\left\{
        \sup_{2^q\le n<2^{q+1}}\sigma S_{j,n}
        >C\mu_\star x_q
        \right\}
        \le e^{-x_q}
\end{equation}
after adjusting the constant.
Combining~\eqref{eq:Ville-1}--\eqref{eq:Ville-2} and increasing $C$ once again, we obtain for every
$j,q$ and both signs
\[
        \Pp\left\{
        \sup_{2^q\le n<2^{q+1}} |S_{j,n}|
        >C\mu_\star\bigl(\sqrt{2^q x_q}+x_q\bigr)
        \right\}
        \le 2e^{-x_q}.
\]
Taking the union bound over $j \in \{0\} \cup [K]$ and~$q \in \{0\} \cup [Q]$ and both signs, we bound the probability of the violation event as
\[
        \sum_{j=0}^K\sum_{q=0}^Q2e^{-x_q}
        \le
        \sum_{j=0}^K\sum_{q=0}^\infty
        \frac{\delta}{2(K+1)(q+2)^2}
        \le \delta .
\]
On the complementary event, if $2^q\le n<2^{q+1}$, then $2^q\le n$ and
$x_q\le L$. 
Thus,~\eqref{eq:time-uniform-exp} is proved up to changing the universal constant $C$.
\end{proof}

\bibliographystyle{alpha}
\bibliography{refs_arxiv}

\end{document}